\providecommand{\tabularnewline}{\\}
\providecommand{\algorithmname}{Algorithm}
\theoremstyle{plain}
\newtheorem{thm}{\protect\theoremname}
  \theoremstyle{definition}
  \newtheorem{defn}[thm]{\protect\definitionname}
  \theoremstyle{remark}
  \newtheorem{rem}[thm]{\protect\remarkname}
  \theoremstyle{plain}
  \newtheorem{lem}[thm]{\protect\lemmaname}
  \theoremstyle{plain}
  \newtheorem*{thm*}{\protect\theoremname}
\definecolor{header_color}{rgb}{0.74,0.88,0.91}
\definecolor{even_color}{rgb}{0.9,0.9,0.9}
\definecolor{subheader_color}{rgb}{0.85,0.93,0.95}
\definecolor{childheader_color}{rgb}{1.0,0.93,0.87}
\definecolor{ccolor_best}{rgb}{1.0,0.9,0.9}
\definecolor{ccolor_wrong}{rgb}{1.0,0.85,0.85}
\newcolumntype{x}[1]{>{\centering\arraybackslash}p{#1}}
  \providecommand{\definitionname}{Definition}
  \providecommand{\lemmaname}{Lemma}
  \providecommand{\remarkname}{Remark}
  \providecommand{\theoremname}{Theorem}
\providecommand{\theoremname}{Theorem}
\begin{document}
\newcommand{\sidenote}[1]{\marginpar{\small \emph{\color{Medium}#1}}}

\global\long\def\se{\hat{\text{se}}}

\global\long\def\interior{\text{int}}

\global\long\def\boundary{\text{bd}}

\global\long\def\ML{\textsf{ML}}

\global\long\def\GML{\mathsf{GML}}

\global\long\def\HMM{\mathsf{HMM}}

\global\long\def\support{\text{supp}}

\global\long\def\new{\text{*}}

\global\long\def\stir{\text{Stirl}}

\global\long\def\mA{\mathcal{A}}

\global\long\def\mB{\mathcal{B}}

\global\long\def\mF{\mathcal{F}}

\global\long\def\mK{\mathcal{K}}

\global\long\def\mH{\mathcal{H}}

\global\long\def\normal{\mathcal{N}}

\global\long\def\mX{\mathcal{X}}

\global\long\def\mZ{\mathcal{Z}}

\global\long\def\mS{\mathcal{S}}

\global\long\def\Ical{\mathcal{I}}

\global\long\def\mT{\mathcal{T}}

\global\long\def\Pcal{\mathcal{P}}

\global\long\def\dist{d}

\global\long\def\HX{\entro\left(X\right)}
 \global\long\def\entropyX{\HX}

\global\long\def\HY{\entro\left(Y\right)}
 \global\long\def\entropyY{\HY}

\global\long\def\HXY{\entro\left(X,Y\right)}
 \global\long\def\entropyXY{\HXY}

\global\long\def\mutualXY{\mutual\left(X;Y\right)}
 \global\long\def\mutinfoXY{\mutualXY}

\global\long\def\given{\mid}

\global\long\def\gv{\given}

\global\long\def\goto{\rightarrow}

\global\long\def\asgoto{\stackrel{a.s.}{\longrightarrow}}

\global\long\def\pgoto{\stackrel{p}{\longrightarrow}}

\global\long\def\dgoto{\stackrel{d}{\longrightarrow}}

\global\long\def\lik{\mathcal{L}}

\global\long\def\logll{\mathit{l}}

\global\long\def\vectorize#1{\mathbf{#1}}

\global\long\def\vt#1{\mathbf{#1}}

\global\long\def\gvt#1{\boldsymbol{#1}}

\global\long\def\idp{\ \bot\negthickspace\negthickspace\bot\ }
 \global\long\def\cdp{\idp}

\global\long\def\das{}

\global\long\def\id{\mathbb{I}}

\global\long\def\idarg#1#2{\id\left\{  #1,#2\right\}  }

\global\long\def\iid{\stackrel{\text{iid}}{\sim}}

\global\long\def\bzero{\vt 0}

\global\long\def\bone{\mathbf{1}}

\global\long\def\boldm{\boldsymbol{m}}

\global\long\def\bff{\vt f}

\global\long\def\bx{\boldsymbol{x}}

\global\long\def\bl{\boldsymbol{l}}

\global\long\def\bu{\boldsymbol{u}}

\global\long\def\bo{\boldsymbol{o}}

\global\long\def\bh{\boldsymbol{h}}

\global\long\def\bs{\boldsymbol{s}}

\global\long\def\bz{\boldsymbol{z}}

\global\long\def\xnew{y}

\global\long\def\bxnew{\boldsymbol{y}}

\global\long\def\bX{\boldsymbol{X}}

\global\long\def\tbx{\tilde{\bx}}

\global\long\def\by{\boldsymbol{y}}

\global\long\def\bY{\boldsymbol{Y}}

\global\long\def\bZ{\boldsymbol{Z}}

\global\long\def\bU{\boldsymbol{U}}

\global\long\def\bv{\boldsymbol{v}}

\global\long\def\bn{\boldsymbol{n}}

\global\long\def\bV{\boldsymbol{V}}

\global\long\def\bI{\boldsymbol{I}}

\global\long\def\bw{\vt w}

\global\long\def\balpha{\gvt{\alpha}}

\global\long\def\bbeta{\gvt{\beta}}

\global\long\def\bmu{\gvt{\mu}}

\global\long\def\btheta{\boldsymbol{\theta}}

\global\long\def\blambda{\boldsymbol{\lambda}}

\global\long\def\bgamma{\boldsymbol{\gamma}}

\global\long\def\bpsi{\boldsymbol{\psi}}

\global\long\def\bphi{\boldsymbol{\phi}}

\global\long\def\bpi{\boldsymbol{\pi}}

\global\long\def\bomega{\boldsymbol{\omega}}

\global\long\def\bepsilon{\boldsymbol{\epsilon}}

\global\long\def\btau{\boldsymbol{\tau}}

\global\long\def\bxi{\boldsymbol{\xi}}

\global\long\def\realset{\mathbb{R}}

\global\long\def\realn{\realset^{n}}

\global\long\def\integerset{\mathbb{Z}}

\global\long\def\natset{\integerset}

\global\long\def\integer{\integerset}

\global\long\def\natn{\natset^{n}}

\global\long\def\rational{\mathbb{Q}}

\global\long\def\rationaln{\rational^{n}}

\global\long\def\complexset{\mathbb{C}}

\global\long\def\comp{\complexset}

\global\long\def\compl#1{#1^{\text{c}}}

\global\long\def\and{\cap}

\global\long\def\compn{\comp^{n}}

\global\long\def\comb#1#2{\left({#1\atop #2}\right) }

\global\long\def\nchoosek#1#2{\left({#1\atop #2}\right)}

\global\long\def\param{\vt w}

\global\long\def\Param{\Theta}

\global\long\def\meanparam{\gvt{\mu}}

\global\long\def\Meanparam{\mathcal{M}}

\global\long\def\meanmap{\mathbf{m}}

\global\long\def\logpart{A}

\global\long\def\simplex{\Delta}

\global\long\def\simplexn{\simplex^{n}}

\global\long\def\dirproc{\text{DP}}

\global\long\def\ggproc{\text{GG}}

\global\long\def\DP{\text{DP}}

\global\long\def\ndp{\text{nDP}}

\global\long\def\hdp{\text{HDP}}

\global\long\def\gempdf{\text{GEM}}

\global\long\def\rfs{\text{RFS}}

\global\long\def\bernrfs{\text{BernoulliRFS}}

\global\long\def\poissrfs{\text{PoissonRFS}}

\global\long\def\grad{\gradient}
 \global\long\def\gradient{\nabla}

\global\long\def\partdev#1#2{\partialdev{#1}{#2}}
 \global\long\def\partialdev#1#2{\frac{\partial#1}{\partial#2}}

\global\long\def\partddev#1#2{\partialdevdev{#1}{#2}}
 \global\long\def\partialdevdev#1#2{\frac{\partial^{2}#1}{\partial#2\partial#2^{\top}}}

\global\long\def\closure{\text{cl}}

\global\long\def\cpr#1#2{\Pr\left(#1\ |\ #2\right)}

\global\long\def\var{\text{Var}}

\global\long\def\Var#1{\text{Var}\left[#1\right]}

\global\long\def\cov{\text{Cov}}

\global\long\def\Cov#1{\cov\left[ #1 \right]}

\global\long\def\COV#1#2{\underset{#2}{\cov}\left[ #1 \right]}

\global\long\def\corr{\text{Corr}}

\global\long\def\sst{\text{T}}

\global\long\def\SST{\sst}

\global\long\def\ess{\mathbb{E}}

\global\long\def\Ess#1{\ess\left[#1\right]}

\newcommandx\ESS[2][usedefault, addprefix=\global, 1=]{\underset{#2}{\ess}\left[#1\right]}

\global\long\def\fisher{\mathcal{F}}

\global\long\def\bfield{\mathcal{B}}
 \global\long\def\borel{\mathcal{B}}

\global\long\def\bernpdf{\text{Bernoulli}}

\global\long\def\betapdf{\text{Beta}}

\global\long\def\dirpdf{\text{Dir}}

\global\long\def\gammapdf{\text{Gamma}}

\global\long\def\gaussden#1#2{\text{Normal}\left(#1, #2 \right) }

\global\long\def\gauss{\mathbf{N}}

\global\long\def\gausspdf#1#2#3{\text{Normal}\left( #1 \lcabra{#2, #3}\right) }

\global\long\def\multpdf{\text{Mult}}

\global\long\def\poiss{\text{Pois}}

\global\long\def\poissonpdf{\text{Poisson}}

\global\long\def\pgpdf{\text{PG}}

\global\long\def\wshpdf{\text{Wish}}

\global\long\def\iwshpdf{\text{InvWish}}

\global\long\def\nwpdf{\text{NW}}

\global\long\def\niwpdf{\text{NIW}}

\global\long\def\studentpdf{\text{Student}}

\global\long\def\unipdf{\text{Uni}}

\global\long\def\transp#1{\transpose{#1}}
 \global\long\def\transpose#1{#1^{\mathsf{T}}}

\global\long\def\mgt{\succ}

\global\long\def\mge{\succeq}

\global\long\def\idenmat{\mathbf{I}}

\global\long\def\trace{\mathrm{tr}}

\global\long\def\argmax#1{\underset{_{#1}}{\text{argmax}} }

\global\long\def\argmin#1{\underset{_{#1}}{\text{argmin}\ } }

\global\long\def\diag{\text{diag}}

\global\long\def\norm{}

\global\long\def\spn{\text{span}}

\global\long\def\vtspace{\mathcal{V}}

\global\long\def\field{\mathcal{F}}
 \global\long\def\ffield{\mathcal{F}}

\global\long\def\inner#1#2{\left\langle #1,#2\right\rangle }
 \global\long\def\iprod#1#2{\inner{#1}{#2}}

\global\long\def\dprod#1#2{#1 \cdot#2}

\global\long\def\norm#1{\left\Vert #1\right\Vert }

\global\long\def\entro{\mathbb{H}}

\global\long\def\entropy{\mathbb{H}}

\global\long\def\Entro#1{\entro\left[#1\right]}

\global\long\def\Entropy#1{\Entro{#1}}

\global\long\def\mutinfo{\mathbb{I}}

\global\long\def\relH{\mathit{D}}

\global\long\def\reldiv#1#2{\relH\left(#1||#2\right)}

\global\long\def\KL{KL}

\global\long\def\KLdiv#1#2{\KL\left(#1\parallel#2\right)}
 \global\long\def\KLdivergence#1#2{\KL\left(#1\ \parallel\ #2\right)}

\global\long\def\crossH{\mathcal{C}}
 \global\long\def\crossentropy{\mathcal{C}}

\global\long\def\crossHxy#1#2{\crossentropy\left(#1\parallel#2\right)}

\global\long\def\breg{\text{BD}}

\global\long\def\lcabra#1{\left|#1\right.}

\global\long\def\lbra#1{\lcabra{#1}}

\global\long\def\rcabra#1{\left.#1\right|}

\global\long\def\rbra#1{\rcabra{#1}}

\global\long\def\model{\text{AVM}}

\editor{Koby Crammer}

\title{Approximation Vector Machines\\
for Large-scale Online Learning}

\author{\name{T}rung Le\thanks{Part of this work was performed while the author was affiliated with
the HCM University of Education.} \email trung.l@deakin.edu.au \\
 \addr Centre for Pattern Recognition and Data Analytics, Australia\\
\AND \name{T}u Dinh Nguyen \email tu.nguyen@deakin.edu.au\\
\addr Centre for Pattern Recognition and Data Analytics, Australia\\
\AND \name{V}u Nguyen\email v.nguyen@deakin.edu.au\\
\addr Centre for Pattern Recognition and Data Analytics, Australia\\
\AND\name{D}inh Phung \email dinh.phung@deakin.edu.au \\
 \addr Centre for Pattern Recognition and Data Analytics, Australia}

\maketitle
\begin{abstract}
One of the most challenging problems in kernel online learning is
to bound the model size and to promote model sparsity. Sparse models
not only improve computation and memory usage, but also enhance the
generalization capacity \textendash{} a principle that concurs with
the law of parsimony. However, inappropriate sparsity modeling may
also significantly degrade the performance. In this paper, we propose
Approximation Vector Machine ($\model$), a model that can simultaneously
encourage sparsity and safeguard its risk in compromising the performance.
In an online setting context, when an incoming instance arrives, we
approximate this instance by one of its neighbors whose distance to
it is less than a predefined threshold. Our key intuition is that
since the newly seen instance is expressed by its nearby neighbor
the optimal performance can be analytically formulated and maintained.
We develop theoretical foundations to support this intuition and further
establish an analysis for the common loss functions including Hinge,
smooth Hinge, and Logistic (i.e., for the classification task) and
$\ell_{1}$, $\ell_{2}$, and $\varepsilon$-insensitive (i.e., for
the regression task) to characterize the gap between the approximation
and optimal solutions. This gap crucially depends on two key factors
including the frequency of approximation (i.e., how frequent the approximation
operation takes place) and the predefined threshold. We conducted
extensive experiments for classification and regression tasks in batch
and online modes using several benchmark datasets. The quantitative
results show that our proposed $\model$ obtained comparable predictive
performances with current state-of-the-art methods while simultaneously
achieving significant computational speed-up due to the ability of
the proposed $\model$ in maintaining the model size. 
\end{abstract}
\begin{keywords}kernel, online learning, large-scale machine learning,
sparsity, big data, core set, stochastic gradient descent, convergence
analysis \end{keywords}

\section{Introduction\label{sec:Introduction}}

In modern machine learning systems, data usually arrive continuously
in stream. To enable efficient computation and to effectively handle
memory resource, the system should be able to adapt according to incoming
data. Online learning represents a family of efficient and scalable
learning algorithms for building a predictive model incrementally
from a sequence of data examples \citep{rosenblatt58a,Zinkevich03}.
In contrast to the conventional learning algorithms \citep{joachims1999,libsvm},
which usually require a costly procedure to retrain the entire dataset
when a new instance arrives, online learning aims to utilize the new
incoming instances to improve the model given the knowledge of the
correct answers to previous processed data (and possibly additional
available information), making them suitable for large-scale online
applications wherein data usually arrive sequentially and evolve rapidly. 

The seminal line of work in online learning, referred to as \textit{linear
online learning} \citep{rosenblatt58a,Crammer06onlinepassive-aggressive,Dredze2008},
aims at learning a linear predictor in the input space. The crucial
limitation of this approach lies in its over-simplified linear modeling
choice and consequently may fail to capture non-linearity commonly
seen in many real-world applications. This motivated the works in
\textit{kernel-based online learning} \citep{Freund1999:LMC:337859.337869,Kivinen2004}
in which a linear model in the feature space corresponding with a
nonlinear model in the input space, hence allows one to cope with
a variety of data distributions.

One common issue with \textit{kernel-based online learning approach},
also known as the \emph{curse of kernelization}, is that the model
size (i.e., the number of vectors with non-zero coefficients) may
grow linearly with the data size accumulated over time, hence causing
computational problem and potential memory overflow \citep{Steinwart:2003,wang2012}.
Therefore in practice, one might prefer kernel-based online learning
methods with guaranty on a limited and bounded model size. In addition,
enhancing model sparsity is also of great interest to practitioners
since this allows the generalization capacity to be improved; and
in many cases leading to a faster computation. However, encouraging
sparsity needs to be done with care since an inappropriate sparsity-encouraging
mechanism may compromise the performance. To address the curse of
kernelization, budgeted approaches \citep{Crammer04onlineclassification,DekelSS05,CavallantiCG07,Wang2010,wang2012,le_duong_dinh_nguyen_nguyen_phung_uai16budgeted,le_nguyen_phung_aistats16nonparametric}
limits the model size to a predefined budget $B$. Specifically, when
the current model size exceeds this budget, a budget maintenance strategy
(e.g., removal, projection, or merging) is triggered to recover the
model size back to the budget $B$. In these approaches, determining
a suitable value for the predefined budget in a principled way is
important, but challenging, since setting a small budget makes the
learning faster but may suffer from underfitting, whereas a large
budget makes the model fit better to data but may dramatically slow
down the training process. An alternative way to address the curse
of kernelization is to use random features \citep{Rahimi07randomfeatures}
to approximate a kernel function \citep{Lin2014,Lu_2015large,le_nips_2016}.
For example, \citet{Lu_2015large} proposed to transform data from
the input space to the random-feature space, and then performed SGD
in the feature space. However, in order for this approach to achieve
good kernel approximation, excessive number of random features is
required which could lead to a serious computational issue. To reduce
the impact number of random features, \citep{le_nips_2016} proposed
to distribute the model in dual space including the original feature
space and the random feature space that approximates the first space.
\begin{figure}[h]
\begin{centering}
\includegraphics[width=14cm]{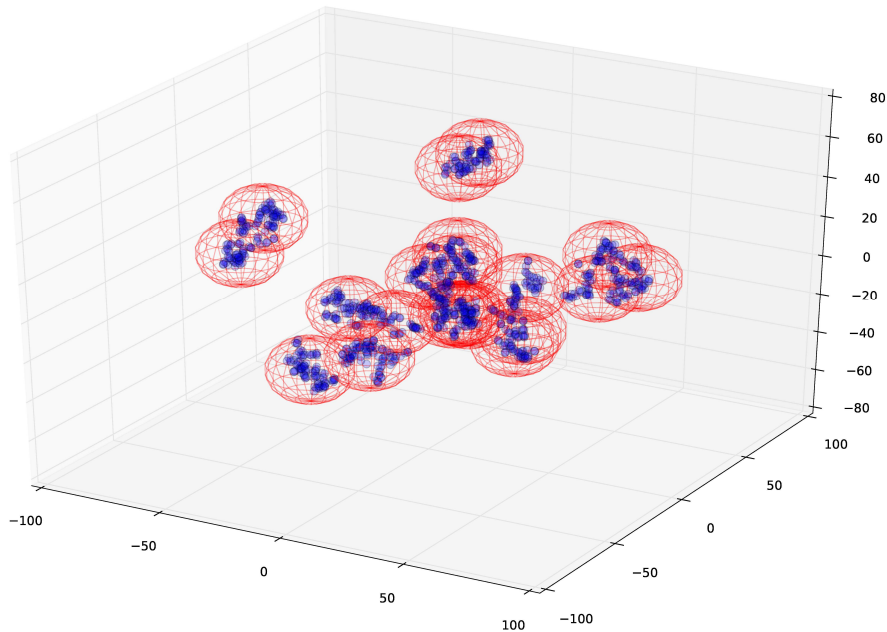}
\par\end{centering}
\caption{An illustration of the hypersphere coverage for $1,000$ data samples
which locate in 3D space. We cover this dataset using hyperspheres
with the diameter $\delta=7.0$, resulting in $20$ hypersphere cells
as shown in the figure (cf. Sections (\ref{subsec:Construction-of--Coverage},\ref{sec:Experiments})).
All data samples in a same cell are approximated by a core point in
this cell. The model size is therefore significantly reduced from
1,000 to 20.\protect\footnotemark \label{fig:a9a_coverage}}
\end{figure}

\footnotetext{In fact, we used a subset of the dataset a9a which has 123 features. We then project all data points onto 3D using t-SNE. We note that the t-SNE does not do clustering, it only reduces the dimensionality into 3D for visualization while trying to preserve the local properties of the data.}

In this paper, we propose \emph{Approximation Vector Machine} ($\model$)
to simultaneously encourage model sparsity\footnote{Model sparsity can be computed as the ratio of the model size and
the number of vectors received so far.} while preserving the model performance. Our model size is theoretically
proven to be bounded regardless of the data distribution and data
arrival order. To promote sparsity, we introduce the notion of $\delta$\textit{-coverage}
which partitions the data space into overlapped cells whose diameters
are defined by $\delta$ (cf. Figure \ref{fig:a9a_coverage}). This
coverage can be constructed in advance or on the fly. Our experiment
on the real datasets shows that the coverage can impressively boost
sparsity; for example with dataset \emph{KDDCup99} of $4,408,589$
instances, our model size is $115$ with $\delta=3$ (i.e., only $115$
cells are required); with dataset \emph{airlines }of $5,336,471$
instances, our model size is $388$ with $\delta=1$. 

In an online setting context, when an incoming instance arrives, it
can be approximated with the corresponding core point in the cell
that contains it. Our intuitive reason is that when an instance is
approximated by an its nearby core point, the performance would be
largely preserved. We further developed rigorous theory to support
this intuitive reason. In particular, our convergence analysis (covers
six popular loss functions, namely Hinge, smooth Hinge, and Logistic
for classification task and $\ell_{2}$, $\ell_{1}$, and $\varepsilon$-insensitive
for regression task) explicitly characterizes the gap between the
approximate and optimal solutions. The analysis shows that this gap
crucially depends on two key factors including the cell diameter $\delta$
and the approximation process. In addition, the cell parameter $\delta$
can be used to efficiently control the trade-off between sparsity
level and the model performance. We conducted extensive experiments
to validate the proposed method on a variety of learning tasks, including
classification in batch mode, classification and regression in online
mode on several benchmark large-scale datasets. The experimental results
demonstrate that our proposed method maintains a comparable predictive
performance while simultaneously achieving an order of magnitude speed-up
in computation comparing with the baselines due to its capacity in
maintaining model size. We would like to emphasize at the outset that
unlike budgeted algorithms (e.g., \citep{Crammer04onlineclassification,DekelSS05,CavallantiCG07,Wang2010,wang2012,le_duong_dinh_nguyen_nguyen_phung_uai16budgeted,le_nguyen_phung_aistats16nonparametric}),
our proposed method is nonparametric in the sense that the number
of core sets grow with data on demand, hence care should be exercised
in practical implementation.

The rest of this paper is organized as follows. In Section \ref{sec:Related-Work},
we review works mostly related to ours. In Section \ref{sec:Primal-and-Dual},
we present the primal and dual forms of Support Vector Machine (SVM)
as they are important background for our work. Section \ref{sec:Problem-Setting}
formulates the proposed problem. In Section \ref{sec:SGD}, we discuss
the standard SGD for kernel online learning with an emphasis on the
\textit{curse of kernelization}. Section \ref{sec:Approximation-Vector-Machines}
presents our proposed AVM with full technical details. Section \ref{sec:Loss-Function}
devotes to study the suitability of loss functions followed by Section
\ref{sec:Multiclass-Setting} where we extend the framework to multi-class
setting. Finally, in Section \ref{sec:Experiments}, we conduct extensive
experiments on several benchmark datasets and then discuss experimental
results as well as their implications. In addition, all supporting
proof is provided in the appendix sections.

\section{Related Work \label{sec:Related-Work}}

One common goal of online kernel learning is to bound the model size
and to encourage  sparsity. Generally, research in this direction
can be broadly reviewed into the following themes.

\textit{Budgeted Online Learning}. This approach limits the model
size to a predefined budget $B$. When the model size exceeds the
budget, a budget maintenance strategy is triggered to decrement the
model size by one. Three popular budget maintenance strategies are
\emph{removal}, \emph{projection}, and \emph{merging}. In the removal
strategy, the most redundant support vector is simply eliminated.
In the projection strategy, the information of the most redundant
support vector is conserved through its projection onto the linear
span of the remaining support vectors. The merging strategy first
selects two vectors, and then merges them into one before discarding
them. Forgetron \citep{DekelSS05} is the first budgeted online learning
method that employs the removal strategy for the budget maintenance.
At each iteration, if the classifier makes a mistake, it conducts
a three-step update: (i) running the standard Perceptron \citep{rosenblatt58a}
update; (ii) shrinking the coefficients of support vectors with a
scaling factor; and (iii) removing the support vector with the smallest
coefficient. Randomized Budget Perceptron (RBP) \citep{CavallantiCG07}
randomly removes a support vector when the model size overflows the
budget. Budget Perceptron \citep{Crammer04onlineclassification} and
Budgeted Passive Aggressive (BPA-S)\citep{Wang2010} attempt to discard
the most redundant support vector (SV). \citet{Orabona2009} used
the projection to automatically discover the model size. The new vector
is added to the support set if its projection onto the linear span
of others in the feature space exceeds a predefined threshold, or
otherwise its information is kept through the projection. Other works
involving the projection strategy include Budgeted Passive Aggressive
Nearest Neighbor (BPA-NN) \citep{Wang2010,wang2012}. The merging
strategy was used in some works \citep{Wang09,wang2012}.

\textit{Random Features}. The idea of random features was proposed
in \citep{Rahimi07randomfeatures}. Its aim is to approximate a shift-invariant
kernel using the harmonic functions. In the context of online kernel
learning, the problem of model size vanishes since we can store the
model directly in the random features. However, the arising question
is to determine the appropriate number of random features $D$ to
sufficiently approximate the real kernel while keeping this dimension
as small as possible for an efficient computation. \citet{Lin2014}
investigated the number of random features in the online kernel learning
context. Recently, \citet{Lu_2015large} proposed to run stochastic
gradient descent (SGD) in the random feature space rather than that
in the real feature space. The theory accompanied with this work shows
that with a high confidence level, SGD in the random feature space
can sufficiently approximate that in the real kernel space. Nonetheless,
in order to achieve good kernel approximation in this approach, excessive
number of random features is required, possibly leading to a serious
computational issue. To reduce the impact of the number of random
features to learning performance, \citep{le_nips_2016} proposed to
store core vectors in the original feature space, whilst storing remaining
vectors in the random feature space that sufficiently approximates
the first space.

\textit{Core Set}. This approach utilizes a core set to represent
the model. This core set can be constructed on the fly or in advance.
Notable works consist of the Core Vector Machine (CVM) \citep{Tsang05corevector}
and its simplified version, the Ball Vector Machine (BVM) \citep{Tsang2007}.
The CVM was based on the achievement in computational geometry \citep{Badoiu02optimalcore-sets}
to reformulate a variation of $\ell_{2}$-SVM as a problem of finding
minimal enclosing ball (MEB) and the core set includes the points
lying furthest away the current centre of the current MEB. Our work
can be categorized into this line of thinking. However, our work is
completely different to \citep{Tsang05corevector,Tsang2007} in the
mechanism to determine the core set and update the model. In addition,
the works of \citep{Tsang05corevector,Tsang2007} are not applicable
for the online learning.

\section{Primal and Dual Forms of Support Vector Machine \label{sec:Primal-and-Dual}}

Support Vector Machine (SVM) \citep{cortes1995support} represents
one of the state-of-the-art methods for classification. Given a training
set $\mathcal{D}=\left\{ \left(x_{1},y_{1}\right),\ldots,\left(x_{N},y_{N}\right)\right\} $,
the data instances are mapped to a feature space using the transformation
$\Phi\left(.\right)$, and then SVM aims to learn an optimal hyperplane
in the feature space such that the margin, the distance from the closest
data instance to the hyperplane, is maximized. The optimization problem
of SVM can be formulated as follows
\begin{align}
 & \,\min_{\bw,b}\left(\frac{\lambda}{2}\norm{\bw}^{2}+\frac{1}{N}\sum_{i=1}^{N}\xi_{i}\right)\label{eq:primal_constrained}\\
\text{s.t.}: & \,y_{i}\left(\transp{\bw}\Phi\left(x_{i}\right)+b\right)\geq1-\xi_{i},\,i=1,...,N\nonumber \\
 & \,\xi_{i}\geq0,\,i=1,...,N\nonumber 
\end{align}
where $\lambda>0$ is the regularization parameter, $\Phi\left(.\right)$
is the transformation from the input space to the feature space, and
$\boldsymbol{\xi}=\left[\xi_{i}\right]_{i=1}^{N}$ is the vector of
slack variables.

Using Karush-Kuhn-Tucker theorem, the above optimization problem is
transformed to the \emph{dual form} as follows
\begin{align*}
 & \,\min_{\boldsymbol{\alpha}}\left(\frac{1}{2}\transp{\boldsymbol{\alpha}}Q\boldsymbol{\alpha}-\transp{\boldsymbol{e}}\boldsymbol{\alpha}\right)\\
\text{s.t.}: & \,\boldsymbol{\transp y\alpha}=0\\
 & \,0\leq\alpha_{i}\leq\frac{1}{\lambda N},\,i=1,...,N
\end{align*}
where $Q=\left[y_{i}y_{j}K\left(x_{i},x_{j}\right)\right]_{i,j=1}^{N}$
is the Gram matrix, $K\left(x,x'\right)=\transp{\Phi\left(x\right)}\Phi\left(x'\right)$
is a kernel function, $\boldsymbol{e}=\left[1\right]_{N\times1}$
is the vector of all 1, and $\boldsymbol{y}=\transp{\left[y_{i}\right]_{i=1,...,N}}$. 

The dual optimization problem can be solved using the solvers \citep{joachims1999,libsvm}.
However, the computational complexity of the solvers is over-quadratic
\citep{shalev2008svm} and the dual form does not appeal to the online
learning setting. To scale up SVM and make it appealing to the online
learning, we rewrite the constrained optimization problem in Eq. (\ref{eq:primal_constrained})
in the \emph{primal form} as follows
\begin{equation}
\min_{\bw}\left(\frac{\lambda}{2}\norm{\bw}^{2}+\frac{1}{N}\sum_{i=1}^{N}l\left(\bw;x_{i},y_{i}\right)\right)\label{eq:primal}
\end{equation}
where $l\left(\bw;x,y\right)=\max\left(0,1-y\transp{\bw}\Phi\left(x\right)\right)$\footnote{We can eliminate the bias $b$ by simply adjusting the kernel.}
is Hinge loss. 

In our current interest, the advantages of formulating the optimization
problem of SVM in the primal form as in Eq. (\ref{eq:primal}) are
at least two-fold. First, it encourages the application of SGD-based
method to propose a solution for the online learning context. Second,
it allows us to extend Hinge loss to any appropriate loss functions
(cf. Section \ref{sec:Loss-Function}) to enrich a wider class of
problems that can be addressed.

\section{Problem Setting \label{sec:Problem-Setting}}

We consider two following optimization problems for batch and online
settings respectively in Eqs. (\ref{eq:ObjectiveFunction}) and (\ref{eq:online})
\begin{flalign}
\underset{\bw}{\min}\,f\left(\bw\right) & \triangleq\frac{\lambda}{2}{\normalcolor \norm{\bw}}^{2}+\mathbb{E}_{\left(x,y\right)\sim\mathbb{P}_{N}}\left[l\left(\bw;x,y\right)\right]\nonumber \\
 & \triangleq\frac{\lambda}{2}\norm{\bw}^{2}+\frac{1}{N}\sum_{i=1}^{N}l\left(\bw;x_{i},y_{i}\right)\label{eq:ObjectiveFunction}\\
\underset{\bw}{\min}\,f\left(\bw\right) & \triangleq\frac{\lambda}{2}\norm{\bw}^{2}+\mathbb{E}_{\left(x,y\right)\sim\mathbb{P}_{\mathcal{X},\mathcal{Y}}}\left[l\left(\bw;x,y\right)\right]\label{eq:online}
\end{flalign}

\noindent where $l\left(\bw;x,y\right)$ is a \emph{convex} loss function,
$\mathbb{P}_{\mathcal{X},\mathcal{Y}}$ is the joint distribution
of $\left(x,y\right)$ over $\mathcal{X}\times\mathcal{Y}$ with the
data domain $\mathcal{X}$ and the label domain $\mathcal{Y}$, and
$\mathbb{P}_{N}$ specifies the empirical distribution over the training
set $\mathcal{D}=\left\{ \left(x_{1},y_{1}\right),\ldots,\left(x_{N},y_{N}\right)\right\} $.
Furthermore, we assume that the \textit{\emph{convex}} loss function
$l\left(\bw;x,y\right)$ satisfies the following property: there exists
two positive numbers $A$ and $B$ such that $\norm{l^{'}\left(\bw;x,y\right)}\leq A\norm{\bw}^{1/2}+B,\,\forall\bw,x,y$.
As demonstrated in Section \ref{sec:Loss-Function}, this condition
is valid for all common loss functions. Hereafter, for given any function
$g(\bw)$, we use the notation $g^{'}\left(\bw_{0}\right)$ to denote
the gradient (or any sub-gradient) of $g\left(.\right)$ w.r.t $\bw$
evaluated at $\bw_{0}$. 

It is clear that given a fixed $\bw$, there exists a random variable
$g$ such that $\mathbb{E}\left[g\mid\bw\right]=f^{'}\left(\bw\right)$.
In fact, we can specify $g=\lambda\bw+l^{'}\left(\bw;x_{t},y_{t}\right)$
where $\left(x_{t},y_{t}\right)\sim\mathbb{P}_{\mathcal{X},\mathcal{Y}}\text{ or \ensuremath{\mathbb{P}_{N}}}$.
We assume that a \emph{positive semi-definite} (p.s.d.) and \emph{isotropic}
(iso.) kernel \citep{Rasmussen:2005} is used, i.e., $K\left(x,x^{'}\right)=k\left(\norm{x-x^{'}}^{2}\right)$,
where $k:\,\mathcal{X}\goto\mathbb{R}$ is an appropriate function.
Let $\Phi\left(.\right)$ be the feature map corresponding the kernel
(i.e., $K\left(x,x^{'}\right)=\transp{\Phi\left(x\right)}\Phi\left(x^{'}\right)$).
To simplify the convergence analysis, without loss of generality we
further assume that $\norm{\Phi\left(x\right)}^{2}=K\left(x,x\right)=1,\forall x\in\mathcal{X}$.
Finally, we denote the optimal solution of optimization problem in
Eq. (\ref{eq:ObjectiveFunction}) or (\ref{eq:online}) by $\bw^{*}$,
that is, $\bw^{*}=\text{argmin}_{\bw}\,f\left(\bw\right)$.

\section{Stochastic Gradient Descent Method \label{sec:SGD}}

We introduce the standard kernel stochastic gradient descent (SGD)
in Algorithm \ref{alg:SGD} wherein the standard learning rate $\eta_{t}=\frac{1}{\lambda t}$
is used \citep{Shalev-Shwartz:2007,shalev2011pegasos}. Let $\alpha_{t}$
be a scalar such that $l^{'}\left(\bw_{t};x_{t},y_{t}\right)=\alpha_{t}\Phi\left(x_{t}\right)$
(we note that this scalar exists for all common loss functions as
presented in Section \ref{sec:Loss-Function}). It is apparent that
at the iteration $t$ the model $\bw_{t}$ has the form of $\bw_{t}=\sum_{i=1}^{t}\alpha_{i}^{\left(t\right)}\Phi\left(x_{i}\right)$.
The vector $x_{i}$ ($1\leq i\leq t)$ is said to be a \emph{support
vector} if its coefficient $\alpha_{i}^{\left(t\right)}$ is nonzero.
The model is represented through the support vectors, and hence we
can define the model size to be $\norm{\balpha^{\left(t\right)}}_{0}$
and model sparsity as the ratio between the current model size and
$t$ (i.e., $\norm{\balpha^{\left(t\right)}}_{0}/t$). Since it is
likely that $\alpha_{t}$ is nonzero (e.g., with Hinge loss, it happens
if $x_{t}$ lies in the margins of the current hyperplane), the standard
kernel SGD algorithm is vulnerable to the curse of kernelization,
that is, the model size, is almost linearly grown with the data size
accumulated over time \citep{Steinwart:2003}. Consequently, the computation
gradually becomes slower or even infeasible when the data size grows
rapidly.

\begin{algorithm}[h]
\caption{Stochastic Gradient Descent algorithm.\label{alg:SGD}}

\textbf{Input:} $\lambda$, $\text{p.s.d. kernel }K\left(.,.\right)=\transp{\Phi\left(\cdot\right)}\Phi\left(\cdot\right)$
\begin{algor}[1]
\item [{{*}}] $\bw_{1}=\bzero$
\item [{for}] $t=1,2,\ldots T$ 
\item [{{*}}] Receive $\left(x_{t},y_{t}\right)$\hspace*{\fill} //$\left(x_{t},y_{t}\right)\sim\mathbb{P}_{\mathcal{X},\mathcal{Y}}$
\textit{or} $\ensuremath{\mathbb{P}_{N}}$
\item [{{*}}] $\eta_{t}=\frac{1}{\lambda t}$
\item [{{*}}] $g_{t}=\lambda\bw_{t}+l^{'}\left(\bw_{t};x_{t},y_{t}\right)=\lambda\bw_{t}+\alpha_{t}\Phi\left(x_{t}\right)$
\item [{{*}}] $\bw_{t+1}=\bw_{t}-\eta_{t}g_{t}=\frac{t-1}{t}\bw_{t}-\eta_{t}\alpha_{t}\Phi\left(x_{t}\right)$ 
\item [{endfor}]~
\end{algor}
\textbf{Output}: $\overline{\bw}_{T}=\frac{1}{T}\sum_{t=1}^{T}\bw_{t}$
or $\bw_{T+1}$
\end{algorithm}

\section{Approximation Vector Machines for Large-scale Online Learning \label{sec:Approximation-Vector-Machines}}

In this section, we introduce our proposed Approximation Vector Machine
($\model$) for online learning. The main idea is that we employ an
overlapping partition of sufficiently small cells to cover the data
domain, i.e., $\mathcal{X}$ or $\Phi\left(\mathcal{X}\right)$; when
an instance arrives, we approximate this instance by a corresponding
core point in the cell that contains this instance. Our intuition
behind this approximation procedure is that since the instance is
approximated by its neighbor, the performance would not be significantly
compromised while gaining significant speedup. We start this section
with the definition of $\delta$-coverage, its properties and connection
with the feature space. We then present $\model$ and the convergence
analysis. 

\subsection{$\delta$-coverage over a domain \label{subsec:coverage}}

To facilitate our technical development in sequel, we introduce the
notion of $\delta$-coverage in this subsection. We first start with
the usual definition of a diameter for a set.
\begin{defn}
(\emph{diameter}) Given a set $A$, the diameter of this set is defined
as $D\left(A\right)=\underset{x,x^{'}\in A}{\sup}||x-x^{'}||$. This
is the maximal pairwise distance between any two points in $A$.
\end{defn}
Next, given a domain $\mathbb{\mathcal{X}}$ (e.g., the data domain,
input space) we introduce the concept of $\delta$-coverage for $\mathcal{X}$
using a collection of sets.
\begin{defn}
($\delta$\emph{-coverage}) The collection of sets $\mathcal{P}=\left(P_{i}\right)_{i\in I}$
is said to be an $\delta$-coverage of the domain $\mathcal{X}$ iff
$\mathcal{X}\subset\cup_{i\in I}P_{i}$ and $D\left(P_{i}\right)\leq\delta,\forall i\in I$
where $I$ is the index set (not necessarily discrete) and each element
$P_{i}\in\mathcal{P}$ is further referred to as a \emph{cell}. Furthermore
if the index set $I$ is finite, the collection $\mathcal{P}$ is
called a \textit{finite} $\delta$-coverage.
\end{defn}

\begin{defn}
(\emph{core set, core point}) Given an $\delta$-coverage $\mathcal{P}=\left(P_{i}\right)_{i\in I}$
over a given domain $\mathcal{X}$, for each $i\in I$, we select
an arbitrary point $c_{i}$ from the cell $P_{i}$, then the collection
of all $c_{i}$ (s) is called the \emph{core set} $\mathcal{C}$ of
the $\delta$-coverage $\mathcal{P}$.  Each point $c_{i}\in\mathcal{C}$
is further referred to as a \textit{core point}.

We show that these definitions can be also extended to the feature
space with the mapping $\Phi$ and kernel $K$ via the following theorem.
\end{defn}
\begin{thm}
\label{thm:coverage_image}Assume that the p.s.d. and isotropic kernel
$K(x,x^{'})=k\left(||x-x^{'}||^{2}\right)$, where $k\left(.\right)$
is a monotonically continuous decreasing function with $k\left(0\right)=1$,
is examined and $\Phi\left(.\right)$ is its induced feature map.
If $\mathcal{P}=\left(P_{i}\right)_{i\in I}$ is an $\delta$-coverage
of the domain $\mathcal{X}$ then $\Phi\left(\mathcal{P}\right)=\left(\Phi\left(\mathcal{P}_{i}\right)\right)_{i\in I}$
is \textbf{\emph{also}} an $\delta_{\Phi}$-coverage of the domain
$\Phi\left(\mathcal{X}\right)$, where $\delta_{\Phi}=\sqrt{2\left(1-k\left(\delta^{2}\right)\right)}$
is a monotonically increasing function and $\underset{\delta\goto0}{\lim}\,\delta_{\Phi}=0$. 
\end{thm}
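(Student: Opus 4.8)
The plan is to verify the two defining properties of a $\delta_\Phi$-coverage directly, exploiting the isotropy of the kernel together with the normalization $\norm{\Phi(x)}^2 = K(x,x) = 1$ assumed in the problem setting. Nothing deep is needed; the work is a short computation plus two appeals to the monotonicity and continuity of $k$.

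First I would dispatch the covering property, which is essentially formal: since $\mathcal{X} \subset \cup_{i\in I} P_i$ and $\Phi$ is a function, applying $\Phi$ to both sides yields $\Phi(\mathcal{X}) \subset \cup_{i\in I}\Phi(P_i)$, so $\Phi(\mathcal{P})$ indeed covers $\Phi(\mathcal{X})$. The index set $I$ plays no role here and may remain arbitrary.

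The substantive step is the diameter bound. I would fix $i \in I$ and take any two points of $\Phi(P_i)$; by definition these are of the form $\Phi(x)$ and $\Phi(x')$ with $x, x' \in P_i$. Expanding the squared distance via $K(x,x') = \transp{\Phi(x)}\Phi(x')$ and using the normalization gives
\[
\norm{\Phi(x) - \Phi(x')}^2 = K(x,x) - 2K(x,x') + K(x',x') = 2\bigl(1 - k(\norm{x-x'}^2)\bigr).
\]
Since $x, x' \in P_i$ we have $\norm{x-x'} \le D(P_i) \le \delta$, hence $\norm{x-x'}^2 \le \delta^2$, and because $k$ is monotonically decreasing this forces $k(\norm{x-x'}^2) \ge k(\delta^2)$, so $\norm{\Phi(x)-\Phi(x')}^2 \le 2(1 - k(\delta^2)) = \delta_\Phi^2$. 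Taking the supremum over $x, x' \in P_i$ gives $D(\Phi(P_i)) \le \delta_\Phi$ for every $i$, which is exactly what is required.

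Finally, I would read off the two regularity claims about $\delta \mapsto \delta_\Phi$ from the hypotheses on $k$: since $\delta \mapsto \delta^2$ is increasing on $[0,\infty)$ and $k$ is decreasing, $\delta \mapsto 1 - k(\delta^2)$ is increasing, so $\delta_\Phi = \sqrt{2(1-k(\delta^2))}$ is increasing; and as $\delta \to 0$, continuity of $k$ with $k(0)=1$ gives $k(\delta^2) \to 1$, whence $\delta_\Phi \to 0$. I do not expect a genuine obstacle; the only things to be careful about are that the diameter of $\Phi(P_i)$ involves only images of points of $P_i$ (so the kernel identity applies verbatim) and that the normalization $K(x,x)=1$ is precisely what collapses the squared distance to $2(1-k(\norm{x-x'}^2))$ — without it the isotropic argument would be clumsier. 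The p.s.d.\ hypothesis enters only implicitly, guaranteeing that a feature map with $K = \transp{\Phi}\Phi$ exists at all.
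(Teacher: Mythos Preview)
Your proposal is correct and follows essentially the same route as the paper's proof: expand $\norm{\Phi(x)-\Phi(x')}^2$ via the kernel, use the normalization $K(x,x)=1$ to reduce it to $2(1-k(\norm{x-x'}^2))$, and invoke monotonicity of $k$ to get the bound $\delta_\Phi^2$. If anything, your write-up is more complete than the paper's, which omits the (trivial) covering property and does not spell out the monotonicity of $\delta\mapsto\delta_\Phi$.
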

In particular, the Gaussian kernel given by $K(x,x^{'})=\exp(-\gamma\norm{x-x^{'}}^{2})$
is a p.s.d. and iso. kernel and $\delta_{\Phi}=\sqrt{2\left(1-\exp\left(-\gamma\delta^{2}\right)\right)}$.
Theorem \ref{thm:coverage_image} further reveals that the image of
an $\delta$-coverage in the input space is an $\delta_{\Phi}$-coverage
in the feature space and when the diameter $\delta$ approaches $0$,
so does the induced diameter $\delta_{\Phi}$. For readability, the
proof of this theorem is provided in Appendix \ref{sec:appxA}. 

We have further developed methods and algorithms to efficiently construct
$\delta$-coverage, however to maintain the readability, we defer
this construction to Section \ref{subsec:Construction-of--Coverage}.

\subsection{Approximation Vector Machines\label{subsec:avm}}

We now present our proposed Approximation Vector Machine ($\model$)
for online learning. In an online setting, instances arise on the
fly and we need an efficient approach to incorporate incoming instances
into the learner. Different from the existing works (cf. Section \ref{sec:Related-Work}),
our approach is to construct an $\delta$-coverage $\mathcal{P}=(P_{i})_{i\in I}$
over the input domain $\mathcal{X}$, and for each incoming instance
$x$ we find the cell $P_{i}$ that contains this instance and  approximate
this instance by a core point $c_{i}\in P_{i}$. The coverage $\mathcal{P}$
and core set $\mathcal{C}$ can either be constructed in advance or
on the fly as presented in Section \ref{subsec:Construction-of--Coverage}.

In Algorithm \ref{alg:AVM}, when receiving an incoming instance $\left(x_{t},y_{t}\right)$,
we compute the scalar $\alpha_{t}$ such that $\alpha_{t}\Phi\left(x_{t}\right)=l^{'}\left(\bw_{t};x_{t},y_{t}\right)$
(cf. Section \ref{sec:Loss-Function}) in Step 5. Furthermore at Step
7 we introduce a Bernoulli random variable $Z_{t}$ to govern the
approximation procedure. This random variable could be either statistically
independent or dependent with the incoming instances and the current
model. In Section \ref{subsec:exp_hyperparam}, we report on different
settings for $Z_{t}$ and how they influence the model size and learning
performance. Our findings at the outset is that, the naive setting
with $\mathbb{P}\left(Z_{t}=1\right)=1,\,\forall t$ (i.e., always
performing approximation) returns the sparsest model while obtaining
comparable learning performance comparing with the other settings.
Moreover, as shown in Steps 9 and 11, we only approximate the incoming
data instance by the corresponding core point (i.e., $c_{i_{t}}$)
if $Z_{t}=1$. In addition, if $Z_{t}=1$, we find a cell that contains
this instance in Step 8. It is worth noting that the $\delta$-coverage
and the cells are constructed on the fly along with the data arrival
(cf. Algorithms \ref{alg:sphere_coverage} and \ref{alg:rec_coverage}).
In other words, the incoming data instance might belong to an existing
cell or a new cell that has the incoming instance as its core point
is created.

Furthermore to ensure that $\norm{\bw_{t}}$ is bounded for all $t\geq1$
in the case of $\ell_{2}$ loss, if $\lambda\leq1$ then we project
$\bw_{t}-\eta_{t}h_{t}$ onto the hypersphere with centre origin and
radius $y_{\text{max}}\lambda^{-1/2}$, i.e., $\mathcal{B}\left(\bzero,y_{\text{max}}\lambda^{-1/2}\right)$.
Since it can be shown that with $\ell_{2}$ loss the optimal solution
$\bw^{*}$ lies in $\mathcal{B}\left(\bzero,y_{\text{max}}\lambda^{-1/2}\right)$
(cf. Theorem \ref{thm:bound_w*} in Appendix \ref{sec:appxC}), this
operation could possibly result in a faster convergence. In addition,
by reusing the previous information, this operation can be efficiently
implemented. Finally, we note that with $\ell_{2}$ loss and $\lambda>1$,
we do not need to perform a projection to bound $\norm{\bw_{t}}$
since according to Theorem \ref{thm:bound_wT} in Appendix \ref{sec:appxC},
$\norm{\bw_{t}}$ is bounded by $\frac{y_{\text{max}}}{\lambda-1}$.
Here it is worth noting that we have defined $y_{\text{max}}=\max_{y\in\mathcal{Y}}\left|y\right|$
and this notation is only used in the analysis for the regression
task with the $\ell_{2}$ loss. 
\begin{algorithm}[H]
\caption{Approximation Vector Machine.\label{alg:AVM}}

\textbf{Input:} $\lambda$, p.s.d. \& iso. $K\left(.,.\right)=\transp{\Phi\left(\cdot\right)}\Phi\left(\cdot\right)$,
$\delta$-coverage $\mathcal{P}=(P_{i})_{i\in I}$\hspace*{\fill} 
\begin{algor}[1]
\item [{{*}}] $\bw_{1}=0$
\item [{for}] $t=1,\dots,T$ 
\item [{{*}}] Receive $\left(x_{t},y_{t}\right)$\hspace*{\fill} //$\left(x_{t},y_{t}\right)\sim\mathbb{P}_{\mathcal{X},\mathcal{Y}}$
\textit{or} $\ensuremath{\mathbb{P}_{N}}$
\item [{{*}}] $\eta_{t}=\frac{1}{\lambda t}$
\item [{{*}}] $l^{'}\left(\bw_{t};x_{t},y_{t}\right)=\alpha_{t}\Phi\left(x_{t}\right)$\hspace*{\fill}
//cf. Section \ref{sec:Loss-Function}
\item [{{*}}] Sample a Bernoulli random variable $Z_{t}$
\item [{if}] $Z_{t}=1$
\item [{{*}}] Find $i_{t}\in I$ such that $x_{t}\in P_{i_{t}}$
\item [{{*}}] $h_{t}=\lambda\bw_{t}+\alpha_{t}\Phi\left(c_{i_{t}}\right)$
\hspace*{\fill}//\textit{do approximation}
\item [{else}]~
\item [{{*}}] $h_{t}=\lambda\bw_{t}+\alpha_{t}\Phi\left(x_{t}\right)$
\item [{endif}]~
\item [{if}] $\ell_{2}\,\text{loss is used}\,\text{\textbf{and}}\,\lambda\leq1$
\item [{{*}}] $\bw_{t+1}=\prod_{\mathcal{B}\left(\bzero,y_{\text{max}}\lambda^{-1/2}\right)}\left(\bw_{t}-\eta_{t}h_{t}\right)$
\item [{else}]~
\item [{{*}}] $\bw_{t+1}=\bw_{t}-\eta_{t}h_{t}$
\item [{endif}]~
\item [{endfor}]~
\end{algor}
\textbf{Output:} $\overline{\bw}_{T}=\frac{\sum_{t=1}^{T}\bw_{t}}{T}$
or $\bw_{T+1}$
\end{algorithm}

In what follows, we present the theoretical results for our proposed
AVM including the convergence analysis for a general convex or smooth
loss function and the upper bound of the model size under the assumption
that the incoming instances are drawn from an arbitrary distribution
and arrive in a random order. 

\subsubsection{Analysis for Generic Convex Loss Function}

We start with the theoretical analysis for Algorithm \ref{alg:AVM}.
The \textit{decision of approximation }(i.e., the random variable
$Z_{t}$) could be statistically independent or dependent with the
current model parameter $\bw_{t}$ and the incoming instance $\left(x_{t},y_{t}\right)$.
For example, one can propose an algorithm in which the \textit{decision
of approximation} is performed iff the confidence level of the incoming
instance w.r.t the current model is greater than $1$, i.e., $y_{t}\transp{\bw_{t}}\Phi\left(x_{t}\right)\geq1$.
We shall develop our theory to take into account all possible cases. 

Theorem \ref{thm:regret1} below establishes an upper bound on the
regret under the possible assumptions of the statistical relationship
among the decision of approximation, the data distribution, and the
current model. Based on Theorem \ref{thm:regret1}, in Theorem \ref{thm:wT_confidence}
we further establish an inequality for the error incurred by a single-point
output with a high confidence level.
\begin{thm}
\label{thm:regret1}Consider the running of Algorithm \ref{alg:AVM}
where $\left(x_{t},y_{t}\right)$ is uniformly sampled from the training
set $\mathcal{D}$ or the joint distribution $\mathbb{P}_{\mathcal{X},\mathcal{Y}}$,
the following statements hold

i) If $Z_{t}$ and $\bw_{t}$ are independent for all $t$ (i.e.,
the decision of approximation only depends on the data distribution)
then
\[
\mathbb{E}\left[f\left(\overline{\bw}_{T}\right)-f\left(\bw^{*}\right)\right]\leq\frac{H\left(\log\left(T\right)+1\right)}{2\lambda T}+\frac{\delta_{\Phi}M^{1/2}W^{1/2}}{T}\sum_{t=1}^{T}\mathbb{P}\left(Z_{t}=1\right)^{1/2}
\]
where $H,\,M,\,W$ are positive constants.

ii) If $Z_{t}$ is independent with both $\left(x_{t},y_{t}\right)$
and $\bw_{t}$ for all $t$ (i.e., the decision of approximation is
independent with the current hyperplane and the data distribution)
then
\[
\mathbb{E}\left[f\left(\overline{\bw}_{T}\right)-f\left(\bw^{*}\right)\right]\leq\frac{H\left(\log\left(T\right)+1\right)}{2\lambda T}+\frac{\delta_{\Phi}M^{1/2}W^{1/2}}{T}\sum_{t=1}^{T}\mathbb{P}\left(Z_{t}=1\right)
\]

iii) In general, we always have
\[
\mathbb{E}\left[f\left(\overline{\bw}_{T}\right)-f\left(\bw^{*}\right)\right]\leq\frac{H\left(\log\left(T\right)+1\right)}{2\lambda T}+\delta_{\Phi}M^{1/2}W^{1/2}
\]
\end{thm}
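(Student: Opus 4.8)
The plan is to run the standard SGD regret analysis for $\lambda$-strongly convex objectives, but carry along the bias introduced by the approximation step (replacing $\Phi(x_t)$ by $\Phi(c_{i_t})$ when $Z_t=1$) as an explicit perturbation term. First I would set $g_t = \lambda\bw_t + \alpha_t\Phi(x_t)$ (the exact stochastic subgradient) and $h_t = \lambda\bw_t + \alpha_t\Phi(c_{i_t})^{Z_t}$ (the one actually used by Algorithm \ref{alg:AVM}), so that $h_t = g_t + Z_t\,\alpha_t\left(\Phi(c_{i_t})-\Phi(x_t)\right)$. Writing $\Delta_t \triangleq Z_t\,\alpha_t\left(\Phi(c_{i_t})-\Phi(x_t)\right)$, Theorem \ref{thm:coverage_image} gives $\norm{\Phi(c_{i_t})-\Phi(x_t)} \le \delta_\Phi$, so $\norm{\Delta_t} \le \delta_\Phi\,|\alpha_t|\,Z_t$. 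A uniform bound $|\alpha_t| \le M^{1/2}$ on the loss-gradient scalar (valid for all six loss functions by Section \ref{sec:Loss-Function}, using $\norm{\Phi(x)}=1$ and the $A\norm{\bw}^{1/2}+B$ growth condition together with the boundedness of $\norm{\bw_t}$ established in Appendix \ref{sec:appxC}) yields $\norm{\Delta_t}\le \delta_\Phi M^{1/2} Z_t$.

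Next I would expand $\norm{\bw_{t+1}-\bw^*}^2 \le \norm{\bw_t - \eta_t h_t - \bw^*}^2$ (the projection in the $\ell_2$ case only helps, since $\bw^* \in \mathcal{B}(\bzero,y_{\max}\lambda^{-1/2})$ by Theorem \ref{thm:bound_w*}), giving the usual recursion
\[
\norm{\bw_{t+1}-\bw^*}^2 \le \norm{\bw_t-\bw^*}^2 - 2\eta_t\inner{g_t}{\bw_t-\bw^*} - 2\eta_t\inner{\Delta_t}{\bw_t-\bw^*} + \eta_t^2\norm{h_t}^2 .
\]
Rearranging and using strong convexity of $f$ (so that $\inner{f'(\bw_t)}{\bw_t-\bw^*} \ge f(\bw_t)-f(\bw^*) + \frac{\lambda}{2}\norm{\bw_t-\bw^*}^2$), taking conditional expectations so $\mathbb{E}[g_t\mid\bw_t]=f'(\bw_t)$, summing over $t=1,\dots,T$ with $\eta_t=\frac{1}{\lambda t}$, and telescoping, the strong-convexity terms cancel against the $\norm{\bw_t-\bw^*}^2$ coefficients in the standard way, the $\sum \eta_t^2\norm{h_t}^2$ term contributes $\le \frac{H(\log T+1)}{\lambda}$ (with $H$ absorbing the uniform bound on $\norm{h_t}^2$), and Jensen's inequality converts $\frac{1}{T}\sum\mathbb{E}[f(\bw_t)]$ into $\mathbb{E}[f(\overline\bw_T)]$. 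This leaves the single extra term $\frac{1}{T}\sum_{t=1}^T \mathbb{E}\,\bigl|\inner{\Delta_t}{\bw_t-\bw^*}\bigr| \le \frac{\delta_\Phi M^{1/2}}{T}\sum_{t=1}^T \mathbb{E}[Z_t\norm{\bw_t-\bw^*}]$, and with $\norm{\bw_t-\bw^*}\le W^{1/2}$ (again from the boundedness results in Appendix \ref{sec:appxC}) this is at most $\frac{\delta_\Phi M^{1/2}W^{1/2}}{T}\sum_t \mathbb{E}[Z_t]$ — which, since $Z_t$ is Bernoulli, equals $\frac{\delta_\Phi M^{1/2}W^{1/2}}{T}\sum_t \mathbb{P}(Z_t=1)$. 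Bounding $\mathbb{P}(Z_t=1)\le 1$ immediately gives part (iii).

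For the sharper bounds in (i) and (ii) the idea is to not throw away the cancellation of the cross term $\mathbb{E}[\inner{\Delta_t}{\bw_t-\bw^*}]$ against the stochastic gradient noise. When $Z_t$ is independent of $\bw_t$ (case i), I would write $\mathbb{E}[Z_t\norm{\bw_t-\bw^*}] = \mathbb{P}(Z_t=1)\,\mathbb{E}[\norm{\bw_t-\bw^*}]$ and then control $\mathbb{E}[\norm{\bw_t-\bw^*}]$ more carefully: the telescoped recursion actually bounds $\sum_t \mathbb{E}\norm{\bw_t-\bw^*}^2$, and by Cauchy–Schwarz $\sum_t \mathbb{P}(Z_t=1)\mathbb{E}\norm{\bw_t-\bw^*} \le \bigl(\sum_t \mathbb{P}(Z_t=1)\bigr)^{1/2}\bigl(\sum_t \mathbb{P}(Z_t=1)\mathbb{E}\norm{\bw_t-\bw^*}^2\bigr)^{1/2}$, and the second factor is $O(1)$ after dividing by appropriate constants, producing the $\sum_t \mathbb{P}(Z_t=1)^{1/2}$ shape; a cleaner route is to feed the $\Delta_t$ term back into the recursion using $2\eta_t\inner{\Delta_t}{\bw_t-\bw^*}\le \eta_t\rho\norm{\bw_t-\bw^*}^2 + \eta_t\rho^{-1}\norm{\Delta_t}^2$ and choose $\rho$ to keep strong convexity dominant, so the net contribution becomes $\sum_t \eta_t \mathbb{P}(Z_t=1)\delta_\Phi^2 M / \rho$, then optimize. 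For case (ii), independence of $Z_t$ from both $\bw_t$ and $(x_t,y_t)$ means one can first take expectation over $Z_t$ alone, replacing $\Delta_t$ by its mean $\mathbb{P}(Z_t=1)\,\alpha_t(\Phi(c_{i_t})-\Phi(x_t))$ before bounding, which removes one layer of worst-casing and gives the linear $\sum_t\mathbb{P}(Z_t=1)$ (without the square root) — the subtlety being that this term no longer needs Cauchy–Schwarz since the randomness in $Z_t$ is already integrated out. The main obstacle I anticipate is bookkeeping the exact constants $H,M,W$ consistently across the $\ell_2$-with-projection and the other five loss cases, and making the $\rho$-splitting in (i) tight enough to yield the stated square-root dependence rather than something weaker; the structural steps (gradient recursion, strong-convexity cancellation, Jensen) are routine.
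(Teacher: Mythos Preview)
Your overall skeleton---expand $\norm{\bw_{t+1}-\bw^*}^2$, use $\lambda$-strong convexity of $f$, telescope with $\eta_t=1/(\lambda t)$, bound $\sum\eta_t\norm{h_t}^2$ by $H(\log T+1)/\lambda$, and carry the approximation bias as the cross term $\mathbb{E}\langle\bw_t-\bw^*,Z_t\Delta_t\rangle$---is exactly what the paper does. The paper, however, works with \emph{second-moment} bounds (Lemmas \ref{lem:main_lemma} and \ref{lem:lem_W} give $\mathbb{E}[\alpha_t^2]\le M$ and $\mathbb{E}\norm{\bw_t-\bw^*}^2\le W$), not the almost-sure bounds $|\alpha_t|\le M^{1/2}$ and $\norm{\bw_t-\bw^*}\le W^{1/2}$ you invoke. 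Your pointwise bounds do hold for all six losses in Section \ref{sec:Loss-Function} (via Appendix \ref{sec:appxC}), so this is a legitimate alternative, but note that with pointwise bounds your argument for (iii) already gives
\[
\mathbb{E}\bigl|\langle Z_t\Delta_t,\bw_t-\bw^*\rangle\bigr|\le \delta_\Phi M^{1/2}W^{1/2}\,\mathbb{E}[Z_t]=\delta_\Phi M^{1/2}W^{1/2}\,\mathbb{P}(Z_t=1),
\]
which is the bound of (ii), \emph{without any independence assumption}. Since $\mathbb{P}(Z_t=1)\le\mathbb{P}(Z_t=1)^{1/2}$, part (i) follows immediately. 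So under your pointwise framing there is nothing left to prove for (i) and (ii) separately.

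Where your proposal actually goes wrong is the attempted direct derivation of the $\sum_t\mathbb{P}(Z_t=1)^{1/2}$ shape in (i). Your Cauchy--Schwarz on the sum, $\sum_t p_t\mathbb{E}\norm{\bw_t-\bw^*}\le\bigl(\sum_t p_t\bigr)^{1/2}\bigl(\sum_t p_t\mathbb{E}\norm{\bw_t-\bw^*}^2\bigr)^{1/2}$, does not produce $\sum_t p_t^{1/2}$; and the $\rho$-splitting route you sketch yields a $\sum_t\eta_t p_t$ term, not $\sum_t p_t^{1/2}$. The paper's route for (i) is much simpler and works with the second-moment bounds: apply Cauchy--Schwarz term-by-term as
\[
\bigl|\mathbb{E}\langle Z_t(\bw_t-\bw^*),\Delta_t\rangle\bigr|\le\mathbb{E}\bigl[Z_t^2\norm{\bw_t-\bw^*}^2\bigr]^{1/2}\mathbb{E}\bigl[\norm{\Delta_t}^2\bigr]^{1/2},
\]
then use independence of $Z_t$ and $\bw_t$ to factor $\mathbb{E}[Z_t^2\norm{\bw_t-\bw^*}^2]=\mathbb{P}(Z_t=1)\,\mathbb{E}\norm{\bw_t-\bw^*}^2\le\mathbb{P}(Z_t=1)W$, giving $\mathbb{P}(Z_t=1)^{1/2}W^{1/2}\delta_\Phi M^{1/2}$ directly. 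For (ii) the paper pulls $\mathbb{E}[Z_t]$ out first by full independence and then applies Cauchy--Schwarz to the remaining inner product, which matches your description for that case.
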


\begin{rem}
Theorem \ref{thm:regret1} consists of the standard convergence analysis.
In particular, if the approximation procedure is never performed,
i.e., $\mathbb{P}\left(Z_{t}=1\right)=0,\,\forall t$, we have the
regret bound $\mathbb{E}\left[f\left(\overline{\bw}_{T}\right)-f\left(\bw^{*}\right)\right]\leq\frac{H\left(\log\left(T\right)+1\right)}{2\lambda T}$.
\end{rem}

\begin{rem}
Theorem \ref{thm:regret1} further indicates that there exists an
error gap between the optimal and the approximate solutions. When
$\delta$ decreases to $0$, this gap also decreases to $0$. Specifically,
when $\delta=0$ (so does $\delta_{\Phi}$), any incoming instance
is approximated by itself and consequently, the gap is exactly $0$.
\end{rem}
\begin{thm}
\label{thm:wT_confidence}Let us define the gap by $d_{T}$, which
is $\frac{\delta_{\Phi}M^{1/2}W^{1/2}}{T}\sum_{t=1}^{T}\mathbb{P}\left(Z_{t}=1\right)^{1/2}$(if
$Z_{t}$ is independent with $\bw_{t}$), $\frac{\delta_{\Phi}M^{1/2}W^{1/2}}{T}\sum_{t=1}^{T}\mathbb{P}\left(Z_{t}=1\right)$
(if $Z_{t}$ is independent with $\left(x_{t},y_{t}\right)$ and $\bw_{t}$),
or $\delta_{\Phi}M^{1/2}W^{1/2}$. Let $r$ be any number randomly
picked from $\left\{ 1,2,\ldots,T\right\} $. With the probability
at least $\left(1-\delta\right)$, the following statement holds
\[
f\left(\bw_{r}\right)-f\left(\bw^{*}\right)\leq\frac{H\left(\log\left(T\right)+1\right)}{2\lambda T}+d_{T}+\Delta_{T}\sqrt{\frac{1}{2}\log\frac{1}{\delta}}
\]
where $\Delta_{T}=\underset{1\leq t\leq T}{\max}\left(f\left(\bw_{t}\right)-f\left(\bw^{*}\right)\right)$.
\end{thm}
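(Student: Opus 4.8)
The plan is to bootstrap the expectation bound from Theorem \ref{thm:regret1} into a high-probability statement about a single randomly drawn iterate, using a concentration inequality for bounded random variables. First I would observe that $f(\overline{\bw}_T) = f\left(\frac{1}{T}\sum_{t=1}^T \bw_t\right) \leq \frac{1}{T}\sum_{t=1}^T f(\bw_t)$ by convexity of $f$, so Theorem \ref{thm:regret1} already gives
\[
\frac{1}{T}\sum_{t=1}^{T}\mathbb{E}\left[f\left(\bw_t\right)-f\left(\bw^{*}\right)\right] \leq \frac{H\left(\log\left(T\right)+1\right)}{2\lambda T}+d_{T},
\]
i.e., the \emph{average} (over $t$ and over the randomness of the algorithm) of the per-iterate excess risk is controlled by the right-hand side. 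The key object is then the random variable $X \triangleq f(\bw_r) - f(\bw^{*})$, where $r$ is drawn uniformly from $\{1,\ldots,T\}$. Its conditional mean given the sample path is exactly $\frac{1}{T}\sum_{t=1}^T (f(\bw_t) - f(\bw^{*}))$, and it takes values in the interval $[0, \Delta_T]$ by the definition of $\Delta_T$ (note $f(\bw_t) - f(\bw^{*}) \geq 0$ since $\bw^{*}$ is the minimizer).

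Next I would apply Hoeffding's inequality (the one-sided "above the mean" form) to $X$, viewed as a single draw bounded in an interval of length $\Delta_T$: for any $\varepsilon > 0$,
\[
\mathbb{P}\left(X \geq \mathbb{E}[X] + \varepsilon\right) \leq \exp\left(-\frac{2\varepsilon^2}{\Delta_T^2}\right).
\]
Setting the right-hand side equal to $\delta$ and solving gives $\varepsilon = \Delta_T\sqrt{\tfrac{1}{2}\log\tfrac{1}{\delta}}$. Combining this with the bound $\mathbb{E}[X] \leq \frac{H(\log(T)+1)}{2\lambda T} + d_T$ from the previous step yields, with probability at least $1-\delta$,
\[
f\left(\bw_r\right)-f\left(\bw^{*}\right) \leq \frac{H\left(\log\left(T\right)+1\right)}{2\lambda T}+d_{T}+\Delta_{T}\sqrt{\frac{1}{2}\log\frac{1}{\delta}},
\]
which is exactly the claimed inequality. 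One should be a little careful about whether the probability is over $r$ alone (conditionally on the path) or jointly over $r$ and the algorithm's randomness; the cleanest route is to work with the joint probability space and apply Hoeffding to the sequence viewed appropriately, or alternatively to first take expectation over the path to replace $\mathbb{E}[X]$ by its upper bound and then treat $\Delta_T$ as the (path-dependent, but uniformly bounded by the deterministic arguments behind Theorem \ref{thm:regret1}) range — either way the stated form follows.

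The main obstacle I anticipate is the interplay between the two sources of randomness (the stochastic gradient path, which drives the $d_T$ term, and the uniform choice of $r$, which drives the $\Delta_T\sqrt{\tfrac12\log\tfrac1\delta}$ term): making the concentration step rigorous requires either conditioning on the path and noting that the Hoeffding bound holds path-wise with the path's own $\Delta_T$, then arguing the resulting statement is what is claimed; or instead invoking a martingale/total-expectation argument. A secondary technical point is justifying $X \geq 0$ and $X \leq \Delta_T$ uniformly, which is immediate once we note $\bw^{*}$ globally minimizes $f$ and $\Delta_T$ is defined as the running maximum; everything else is routine substitution of the already-established Theorem \ref{thm:regret1} bound.
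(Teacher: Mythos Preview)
Your approach is exactly the paper's: bound $\mathbb{E}[X]$ for $X=f(\bw_r)-f(\bw^{*})$ by the averaged regret inequality, then apply one-sided Hoeffding to the single bounded variable $X\in[0,\Delta_T]$ and solve for $\varepsilon$. One small correction: the convexity step you open with points the wrong way---from $f(\overline{\bw}_T)\leq \tfrac{1}{T}\sum_t f(\bw_t)$ and the Theorem~\ref{thm:regret1} bound on $\mathbb{E}[f(\overline{\bw}_T)-f(\bw^{*})]$ you cannot conclude the bound on $\tfrac{1}{T}\sum_t \mathbb{E}[f(\bw_t)-f(\bw^{*})]$; instead you should cite the intermediate per-step inequalities in the \emph{proof} of Theorem~\ref{thm:regret1} (the summed bounds before convexity is applied), which already give $\tfrac{1}{T}\sum_t \mathbb{E}[f(\bw_t)]-f(\bw^{*})\leq \tfrac{H(\log T+1)}{2\lambda T}+d_T$ directly.
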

We now present the convergence analysis for the case when we output
the $\alpha$-suffix average result as proposed in \citep{RakhlinSS12}.
With $0<\alpha<1$, let us denote
\[
\overline{\bw}_{T}^{\alpha}=\frac{1}{\alpha T}\sum_{t=\left(1-\alpha\right)T+1}^{T}\bw_{t}
\]

\noindent where we assume that the fractional indices are rounded
to their ceiling values.

Theorem \ref{thm:regret_alpha} establishes an upper bound on the
regret for the $\alpha$-suffix average case, followed by Theorem
\ref{thm:walphaT_confidence} which establishes an inequality for
the error incurred by a $\alpha$-suffix average output with a high
confidence level.
\begin{thm}
\label{thm:regret_alpha}Consider the running of Algorithm \ref{alg:AVM}
where $\left(x_{t},y_{t}\right)$ is uniformly sampled from the training
set $\mathcal{D}$ or the joint distribution $\mathbb{P}_{\mathcal{X},\mathcal{Y}}$,
the following statements hold

i) If $Z_{t}$ and $\bw_{t}$ are independent for all $t$ (i.e.,
the decision of approximation only depends on the data distribution)
then
\[
\mathbb{E}\left[f\left(\overline{\bw}_{T}^{\alpha}\right)-f\left(\bw^{*}\right)\right]\leq\frac{\lambda\left(1-\alpha\right)}{2\alpha}W_{T}^{\alpha}+\frac{\delta_{\Phi}M^{1/2}W^{1/2}}{\alpha T}\sum_{t=\left(1-\alpha\right)T+1}^{T}\mathbb{P}\left(Z_{t}=1\right)^{1/2}+\frac{H\log\left(1/\left(1-\alpha\right)\right)}{2\lambda\alpha T}
\]
where $H,M,W$ are positive constants and $W_{T}^{\alpha}=\mathbb{E}\left[\norm{\bw_{\left(1-\alpha\right)T+1}-\bw^{*}}^{2}\right]$.

ii) If $Z_{t}$ is independent with both $\left(x_{t},y_{t}\right)$
and $\bw_{t}$ for all $t$ (i.e., the decision of approximation is
independent with the current hyperplane and the data distribution)
then
\[
\mathbb{E}\left[f\left(\overline{\bw}_{T}^{\alpha}\right)-f\left(\bw^{*}\right)\right]\leq\frac{\lambda\left(1-\alpha\right)}{2\alpha}W_{T}^{\alpha}+\frac{\delta_{\Phi}M^{1/2}W^{1/2}}{\alpha T}\sum_{t=\left(1-\alpha\right)T+1}^{T}\mathbb{P}\left(Z_{t}=1\right)+\frac{H\log\left(1/\left(1-\alpha\right)\right)}{2\lambda\alpha T}
\]

iii) In general, we always have
\[
\mathbb{E}\left[f\left(\overline{\bw}_{T}^{\alpha}\right)-f\left(\bw^{*}\right)\right]\leq\frac{\lambda\left(1-\alpha\right)}{2\alpha}W_{T}^{\alpha}+\delta_{\Phi}M^{1/2}W^{1/2}+\frac{H\log\left(1/\left(1-\alpha\right)\right)}{2\lambda\alpha T}
\]
\end{thm}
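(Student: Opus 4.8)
The plan is to adapt the $\alpha$-suffix averaging argument of \citet{RakhlinSS12} to the perturbed update of Algorithm~\ref{alg:AVM}, in which the exact stochastic (sub)gradient $g_t=\lambda\bw_t+\alpha_t\Phi(x_t)$ is replaced by the approximate one $h_t=\lambda\bw_t+\alpha_t\Phi(c_{i_t})$ whenever $Z_t=1$. First I would write $h_t=g_t+Z_t\,\epsilon_t$ where $\epsilon_t=\alpha_t(\Phi(c_{i_t})-\Phi(x_t))$, so that by Theorem~\ref{thm:coverage_image} we have $\norm{\Phi(c_{i_t})-\Phi(x_t)}\le\delta_\Phi$ and hence $\norm{\epsilon_t}\le|\alpha_t|\,\delta_\Phi$; the bound $|\alpha_t|\le M^{1/2}$ (uniform in $t$, coming from the $\norm{l'(\bw;x,y)}\le A\norm{\bw}^{1/2}+B$ assumption together with the boundedness of $\norm{\bw_t}$, which is exactly what the projection step in Algorithm~\ref{alg:AVM} and Theorems~\ref{thm:bound_w*},~\ref{thm:bound_wT} guarantee) gives $\norm{\epsilon_t}\le M^{1/2}\delta_\Phi$. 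The per-step inequality is the standard one: since $f$ is $\lambda$-strongly convex and $\bw_{t+1}$ is the (projected) step along $h_t$, for any fixed comparator $\bw^*$,
\[
\norm{\bw_{t+1}-\bw^*}^2\le\norm{\bw_t-\bw^*}^2-2\eta_t\inner{h_t}{\bw_t-\bw^*}+\eta_t^2\norm{h_t}^2,
\]
and splitting $h_t=g_t+Z_t\epsilon_t$, taking conditional expectation so that $\mathbb{E}[g_t\mid\bw_t]=f'(\bw_t)$, and using strong convexity $\inner{f'(\bw_t)}{\bw_t-\bw^*}\ge f(\bw_t)-f(\bw^*)+\tfrac{\lambda}{2}\norm{\bw_t-\bw^*}^2$ yields, after rearranging with $\eta_t=1/(\lambda t)$,
\[
\mathbb{E}\!\left[f(\bw_t)-f(\bw^*)\right]\le\frac{\lambda(t-1)}{2}\mathbb{E}\norm{\bw_t-\bw^*}^2-\frac{\lambda t}{2}\mathbb{E}\norm{\bw_{t+1}-\bw^*}^2+\frac{H}{2\lambda t}+\mathbb{E}\!\left[Z_t\inner{\epsilon_t}{\bw^*-\bw_t}\right],
\]
where $H$ absorbs the uniform bound on $\norm{g_t}^2$ (again finite by the gradient-growth assumption and boundedness of $\norm{\bw_t}$).

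Next I would sum this inequality over the suffix $t=(1-\alpha)T+1,\dots,T$ only. The strongly-convex telescoping over a suffix is precisely the device of \citet{RakhlinSS12}: the telescoped boundary term is $\tfrac{\lambda(1-\alpha)T}{2}\mathbb{E}\norm{\bw_{(1-\alpha)T+1}-\bw^*}^2=\tfrac{\lambda(1-\alpha)T}{2}W_T^\alpha$, and $\sum_{t=(1-\alpha)T+1}^{T}\tfrac{1}{t}\le\log\!\big(1/(1-\alpha)\big)$. Dividing by $\alpha T$ and using Jensen's inequality $f(\overline\bw_T^\alpha)\le\tfrac{1}{\alpha T}\sum_{t=(1-\alpha)T+1}^T f(\bw_t)$ gives all three bounds except for the control of the \emph{approximation error term} $\tfrac{1}{\alpha T}\sum_{t}\mathbb{E}[Z_t\inner{\epsilon_t}{\bw^*-\bw_t}]$. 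For (iii) I bound it crudely by Cauchy--Schwarz: $|\inner{\epsilon_t}{\bw^*-\bw_t}|\le M^{1/2}\delta_\Phi\,W^{1/2}$ where $W$ is a uniform bound on $\norm{\bw^*-\bw_t}^2$ (finite, same reason), and $Z_t\le1$; this gives the $\delta_\Phi M^{1/2}W^{1/2}$ term with no probability factor.

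The delicate cases are (i) and (ii), and this is where I expect the main obstacle. In case (i), $Z_t$ is independent of $\bw_t$ but \emph{may depend on $(x_t,y_t)$}, hence $\epsilon_t$ and $Z_t$ are not independent of each other; the trick is to condition on $\bw_t$ first, write $\mathbb{E}[Z_t\inner{\epsilon_t}{\bw^*-\bw_t}\mid\bw_t]=\mathbb{E}[Z_t\inner{\epsilon_t}{\bw^*-\bw_t}\mid\bw_t]$, apply Cauchy--Schwarz in the form $\mathbb{E}[Z_t\norm{\epsilon_t}\mid\bw_t]\le(\mathbb{E}[Z_t\mid\bw_t])^{1/2}(\mathbb{E}[Z_t\norm{\epsilon_t}^2\mid\bw_t])^{1/2}$ — using $Z_t^2=Z_t$ — and then bound $\norm{\epsilon_t}^2\le M\delta_\Phi^2$ and $\mathbb{E}[Z_t\mid\bw_t]=\mathbb{P}(Z_t=1)$ (by the $\bw_t$-independence) to obtain the per-step contribution $\delta_\Phi M^{1/2}W^{1/2}\,\mathbb{P}(Z_t=1)^{1/2}$. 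Summing over the suffix and dividing by $\alpha T$ produces exactly the middle term of (i). In case (ii), $Z_t$ is additionally independent of $(x_t,y_t)$, so $\epsilon_t$ is independent of $Z_t$; then $\mathbb{E}[Z_t\inner{\epsilon_t}{\bw^*-\bw_t}\mid\bw_t]=\mathbb{P}(Z_t=1)\,\mathbb{E}[\inner{\epsilon_t}{\bw^*-\bw_t}\mid\bw_t]$ and a single Cauchy--Schwarz bounds the expectation by $\delta_\Phi M^{1/2}W^{1/2}$, giving the linear-in-$\mathbb{P}(Z_t=1)$ term. The only real care needed is the bookkeeping of \emph{which} quantity each of $H$, $M$, $W$ denotes and verifying that all of them are genuinely finite and uniform in $t$ — for the $\ell_2$ loss this relies on the projection onto $\mathcal{B}(\bzero,y_{\max}\lambda^{-1/2})$ (when $\lambda\le1$) and on Theorem~\ref{thm:bound_wT} (when $\lambda>1$), and for the other five losses on the boundedness of the subgradients together with a direct a~priori bound on $\norm{\bw_t}$ from the recursion; these are the same constants already used in Theorem~\ref{thm:regret1}, so I would simply invoke the bounds established there.
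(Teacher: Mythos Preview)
Your proposal is correct and follows essentially the same route as the paper: derive the per-step inequality from strong convexity and the projected update, telescope over the suffix $t=(1-\alpha)T+1,\ldots,T$ using $\eta_t=1/(\lambda t)$ to obtain the boundary term $\tfrac{\lambda(1-\alpha)T}{2}W_T^\alpha$, bound $\sum_{t} 1/t\le\log(1/(1-\alpha))$, apply Jensen, and control the approximation term case by case via Cauchy--Schwarz and the independence assumptions (the paper applies Cauchy--Schwarz on the full expectation rather than conditionally on $\bw_t$, which lets it use the \emph{expectation} bounds $\mathbb{E}[\alpha_t^2]\le M$ and $\mathbb{E}\norm{\bw_t-\bw^*}^2\le W$ directly instead of deterministic ones). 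One small correction: the constant $H$ in the paper bounds $\mathbb{E}\norm{h_t}^2$, not $\norm{g_t}^2$; this, together with the bounds on $M$ and $W$, is exactly the content of Lemmas~\ref{lem:main_lemma} and~\ref{lem:lem_W}, which the paper simply reuses from the proof of Theorem~\ref{thm:regret1}.
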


\begin{thm}
\label{thm:walphaT_confidence}Let us once again define the induced
gap by $d_{T}$, which is respectively $\frac{\lambda\left(1-\alpha\right)}{2\alpha}W_{T}^{\alpha}+\frac{\delta_{\Phi}M^{1/2}W^{1/2}}{\alpha T}\sum_{t=\left(1-\alpha\right)T+1}^{T}\mathbb{P}\left(Z_{t}=1\right)^{1/2}$(if
$Z_{t}$ is independent with $\bw_{t}$), $\frac{\lambda\left(1-\alpha\right)}{2\alpha}W_{T}^{\alpha}+\frac{\delta_{\Phi}M^{1/2}W^{1/2}}{\alpha T}\sum_{t=\left(1-\alpha\right)T+1}^{T}\mathbb{P}\left(Z_{t}=1\right)$
(if $Z_{t}$ is independent with $\left(x_{t},y_{t}\right)$ and $\bw_{t}$),
or $\frac{\lambda\left(1-\alpha\right)}{2\alpha}W_{T}^{\alpha}+\delta_{\Phi}M^{1/2}W^{1/2}$.
Let $r$ be any number randomly picked from $\left\{ \left(1-\alpha\right)T+1,2,\ldots,T\right\} $.
With the probability at least $\left(1-\delta\right)$, the following
statement holds
\[
f\left(\bw_{r}\right)-f\left(\bw^{*}\right)\leq\frac{H\log\left(1/\left(1-\alpha\right)\right)}{2\lambda\alpha T}+d_{T}+\Delta_{T}^{\alpha}\sqrt{\frac{1}{2}\log\frac{1}{\delta}}
\]
where $\Delta_{T}^{\alpha}=\underset{\left(1-\alpha\right)T+1\leq t\leq T}{\max}\left(f\left(\bw_{t}\right)-f\left(\bw^{*}\right)\right)$.
\end{thm}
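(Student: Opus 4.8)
To prove Theorem~\ref{thm:walphaT_confidence} I would mirror, with the averaging window narrowed to the $\alpha$-suffix $\{(1-\alpha)T+1,\dots,T\}$, the two-ingredient argument behind Theorem~\ref{thm:wT_confidence}: an in-expectation control of the suffix-averaged suboptimality, coming from Theorem~\ref{thm:regret_alpha}, married to a single-variable concentration inequality over the randomly chosen index $r$. First I would revisit the proof of Theorem~\ref{thm:regret_alpha} and note that the Jensen step $f(\overline{\bw}_{T}^{\alpha})\leq\frac{1}{\alpha T}\sum_{t=(1-\alpha)T+1}^{T}f(\bw_{t})$ is invoked only at the very last line; therefore the same chain of inequalities already delivers, writing $\bar{g}:=\frac{1}{\alpha T}\sum_{t=(1-\alpha)T+1}^{T}\bigl(f(\bw_{t})-f(\bw^{*})\bigr)$ for the suffix-averaged gap,
\[
\mathbb{E}[\bar{g}]\leq\frac{H\log\bigl(1/(1-\alpha)\bigr)}{2\lambda\alpha T}+d_{T},
\]
with $d_{T}$ exactly the gap term appearing in the statement.

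Next I would fix one run of Algorithm~\ref{alg:AVM}, so that the iterates $\bw_{(1-\alpha)T+1},\dots,\bw_{T}$ — and hence $\bar{g}$ and $\Delta_{T}^{\alpha}=\max_{(1-\alpha)T+1\leq t\leq T}\bigl(f(\bw_{t})-f(\bw^{*})\bigr)$ — are determined, and draw $r$ uniformly from the suffix index set. Since $\bw^{*}$ is a global minimizer of $f$, the number $f(\bw_{r})-f(\bw^{*})$ is a random variable supported in $[0,\Delta_{T}^{\alpha}]$ whose mean over $r$ equals $\bar{g}$; the Hoeffding/Chernoff bound for a single bounded variable then gives, for every $s>0$,
\[
\mathbb{P}_{r}\bigl(f(\bw_{r})-f(\bw^{*})-\bar{g}\geq s\bigr)\leq\exp\!\left(-\frac{2s^{2}}{(\Delta_{T}^{\alpha})^{2}}\right).
\]
Choosing $s=\Delta_{T}^{\alpha}\sqrt{\frac{1}{2}\log\frac{1}{\delta}}$ makes the right-hand side $\delta$, so with probability at least $1-\delta$ we have $f(\bw_{r})-f(\bw^{*})\leq\bar{g}+\Delta_{T}^{\alpha}\sqrt{\frac{1}{2}\log\frac{1}{\delta}}$; substituting the bound on $\bar{g}$ from the first step produces the claimed inequality.

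The main obstacle is the bookkeeping of which source of randomness the confidence level $1-\delta$ quantifies over. The concentration step is cleanest conditionally on a run, where $\Delta_{T}^{\alpha}$ is a genuine constant (the range of the bounded variable), whereas the estimate of Theorem~\ref{thm:regret_alpha} is an expectation over the run — it contains $W_{T}^{\alpha}$ and the probabilities $\mathbb{P}(Z_{t}=1)$ — so joining the two forces one to carry $\Delta_{T}^{\alpha}$ as the run-dependent coefficient it already is in the statement and to argue on the joint law of the pair $(\text{run},r)$ via the tower property, with care that $\bar{g}$ and $\Delta_{T}^{\alpha}$ co-vary across runs. A secondary but routine point is extracting the refined per-iterate inequality of the first step from the $\alpha$-suffix SGD recursion with $\eta_{t}=1/(\lambda t)$, correctly accounting for the extra approximation term $\alpha_{t}\bigl(\Phi(c_{i_{t}})-\Phi(x_{t})\bigr)$, of norm at most $|\alpha_{t}|\,\delta_{\Phi}$, by which the update direction $h_{t}$ differs from the true stochastic (sub)gradient; this is precisely what generates the $\delta_{\Phi}M^{1/2}W^{1/2}$ part of $d_{T}$.
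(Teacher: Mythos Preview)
Your proposal is correct and follows essentially the same route as the paper: extract the pre-Jensen suffix-average bound from the proof of Theorem~\ref{thm:regret_alpha} (the paper's Eqs.~(\ref{eq:regret4})--(\ref{eq:regret6})), then apply the single-variable Hoeffding bound to $X=f(\bw_{r})-f(\bw^{*})\in[0,\Delta_{T}^{\alpha}]$ with $r$ uniform on the suffix, and solve $\delta=\exp\bigl(-2\varepsilon^{2}/(\Delta_{T}^{\alpha})^{2}\bigr)$ for $\varepsilon$. If anything, you are more scrupulous than the paper about the point you flag as the ``main obstacle'': the paper simply takes $\mathbb{E}[X]$ over both the run and $r$ and invokes Hoeffding with the run-dependent range $\Delta_{T}^{\alpha}$ without further comment, whereas you correctly note that the clean way is to apply Hoeffding conditionally on the run and carry $\Delta_{T}^{\alpha}$ as a random coefficient, exactly as it appears in the statement.
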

\begin{rem}
Theorems \ref{thm:wT_confidence} and \ref{thm:walphaT_confidence}
concern with the theoretical warranty if rendering any single-point
output $\bw_{r}$ rather than the average outputs. The upper bound
gained in Theorem \ref{thm:walphaT_confidence} is tighter than that
gained in Theorem \ref{thm:wT_confidence} in the sense that the quantity
$\frac{H\log\left(1/\left(1-\alpha\right)\right)}{2\lambda\alpha T}+\Delta_{T}^{\alpha}\sqrt{\frac{1}{2}\log\frac{1}{\delta}}$
decreases faster and may decrease to $0$ when $T\goto+\infty$ given
a confidence level $1-\delta$.
\end{rem}

\subsubsection{Analysis for Smooth Loss Function}
\begin{defn}
A loss function $l\left(\bw;x,y\right)$ is said to be $\mu$-strongly
smooth w.r.t a norm $\norm .$ iff for all $\boldsymbol{u},\boldsymbol{v}$
and $\left(x,y\right)$ the following condition satisfies
\[
l\left(\boldsymbol{v};x,y\right)\leq l\left(\boldsymbol{u};x,y\right)+\transp{l^{'}\left(\boldsymbol{u};x,y\right)}\left(\boldsymbol{v}-\boldsymbol{u}\right)+\frac{\mu}{2}\norm{\boldsymbol{v}-\boldsymbol{u}}^{2}
\]
Another equivalent definition of $\mu$-strongly smooth function is
\[
\norm{l^{'}\left(\boldsymbol{u};x,y\right)-l^{'}\left(\boldsymbol{v};x,y\right)}_{*}\leq\mu\norm{\boldsymbol{v}-\boldsymbol{u}}
\]
where $\norm ._{*}$ is used to represent the dual norm of the norm
$\norm .$.

\noindent It is well-known that 
\end{defn}
\begin{itemize}
\item $\ell_{2}$ loss is $1$-strongly smooth w.r.t $\norm ._{2}$.
\item Logistic loss is $1$-strongly smooth w.r.t $\norm ._{2}$.
\item $\tau$-smooth Hinge loss \citep{DBLP:journals/jmlr/Shalev-Shwartz013}
is $\frac{1}{\tau}$-strongly smooth w.r.t $\norm ._{2}$.
\end{itemize}
\begin{thm}
\label{thm:smooth_loss}Assume that $\ell_{2}$, Logistic, or $\tau$-smooth
Hinge loss is used, let us denote $L=\frac{\lambda}{2}+1$, $\frac{\lambda}{2}+1$,
or $\frac{\lambda}{2}+\tau^{-1}$ respectively. Let us define the
gap by $d_{T}$ as in Theorem \ref{thm:walphaT_confidence}. Let $r$
be any number randomly picked from $\left\{ \left(1-\alpha\right)T+1,2,\ldots,T\right\} $.
With the probability at least $\left(1-\delta\right)$, the following
statement holds
\[
f\left(\bw_{r}\right)-f\left(\bw^{*}\right)\leq\frac{H\log\left(1/\left(1-\alpha\right)\right)}{2\lambda\alpha T}+d_{T}+\frac{LM_{T}^{\alpha}}{2}\sqrt{\frac{1}{2}\log\frac{1}{\delta}}
\]
where $M_{T}^{\alpha}=\underset{\left(1-\alpha\right)T+1\leq t\leq T}{\max}\norm{\bw_{t}-\bw^{*}}$.
\end{thm}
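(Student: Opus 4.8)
The plan is to deduce Theorem~\ref{thm:smooth_loss} from Theorem~\ref{thm:walphaT_confidence} by essentially a single substitution: the two statements are identical except that the bounded-difference (range) parameter $\Delta_{T}^{\alpha}=\max_{(1-\alpha)T+1\le t\le T}\bigl(f(\bw_{t})-f(\bw^{*})\bigr)$ that enters the Azuma--Hoeffding step in the proof of Theorem~\ref{thm:walphaT_confidence} is upgraded, under the smoothness hypothesis, to $\tfrac{L}{2}M_{T}^{\alpha}$. So I would first re-open that proof, write out the per-step inequality for Algorithm~\ref{alg:AVM} (the telescoping of $\norm{\bw_{t}-\bw^{*}}^{2}$, the approximation-error term controlled by $\delta_{\Phi}$ via Theorem~\ref{thm:coverage_image}, and the martingale term built from the deviations $\bigl\langle g_{t}-f'(\bw_{t}),\,\bw_{t}-\bw^{*}\bigr\rangle$), and pinpoint the single place where $\Delta_{T}^{\alpha}$ is used as the almost-sure bound on the martingale increments. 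Everything else --- the $\tfrac{H\log(1/(1-\alpha))}{2\lambda\alpha T}$ term, the definition of $d_{T}$, the $\alpha$-suffix bookkeeping --- then carries over unchanged.

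The substantive work is the smoothness-based control of $f(\bw_{t})-f(\bw^{*})$. First I would record that each of the three admissible losses is $\mu$-strongly smooth w.r.t.\ $\norm{\cdot}_{2}$ with $\mu=1,1,\tau^{-1}$ respectively (as listed just before the theorem), that strong smoothness is inherited by the average $\bw\mapsto\mathbb{E}_{(x,y)}[l(\bw;x,y)]$, and hence that $f(\bw)=\tfrac{\lambda}{2}\norm{\bw}^{2}+\mathbb{E}[l(\bw;x,y)]$ is $L$-strongly smooth for the stated $L$ --- this is the step where $L=\tfrac{\lambda}{2}+\mu$ must be pinned down from $\lambda$ and $\mu$. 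Since $\bw^{*}$ minimises the differentiable convex function $f$ we have $f'(\bw^{*})=0$, so the strong-smoothness inequality evaluated at $\bw_{t}$ against $\bw^{*}$ gives $f(\bw_{t})-f(\bw^{*})\le\tfrac{L}{2}\norm{\bw_{t}-\bw^{*}}^{2}$; taking the maximum over the suffix indices and using $\norm{\bw_{t}-\bw^{*}}\le M_{T}^{\alpha}$ then bounds $\Delta_{T}^{\alpha}$ by $\tfrac{L}{2}M_{T}^{\alpha}$ (one factor $\norm{\bw_{t}-\bw^{*}}$ retained, the other replaced by $M_{T}^{\alpha}$ or absorbed through the domain bound on $\norm{\bw_{t}-\bw^{*}}$ already in force in the $\ell_{2}$ case via the projection step of Algorithm~\ref{alg:AVM}). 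Substituting this into the Azuma--Hoeffding tail inequality inherited from Theorem~\ref{thm:walphaT_confidence} yields exactly the claimed bound.

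I expect the main obstacle to be the first paragraph rather than the second: one must verify that $\Delta_{T}^{\alpha}$ is genuinely the \emph{only} appearance of $\max_{t}\bigl(f(\bw_{t})-f(\bw^{*})\bigr)$ in the proof of Theorem~\ref{thm:walphaT_confidence}, so that the replacement is legitimate and does not silently reintroduce a $(M_{T}^{\alpha})^{2}$ dependence, and one must carry the strong-smoothness constant of $f$ through correctly, since the three losses contribute different values of $\mu$. The remaining steps --- establishing $L$-strong smoothness of $f$, the descent inequality at the optimum, and re-running the concentration bound --- are routine, and I would not write them out in full.
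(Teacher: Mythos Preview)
Your approach matches the paper's, which is even shorter than you anticipate: the paper simply notes that $L$-strong smoothness of $f$ and $f'(\bw^{*})=0$ give $f(\bw_{r})-f(\bw^{*})\le\tfrac{L}{2}\norm{\bw_{r}-\bw^{*}}^{2}$, asserts $\Delta_{T}^{\alpha}\le\tfrac{L}{2}M_{T}^{\alpha}$, and substitutes directly into the \emph{conclusion} of Theorem~\ref{thm:walphaT_confidence} --- no re-opening of that proof is needed (and incidentally the concentration step there is plain Hoeffding on the single variable $f(\bw_{r})-f(\bw^{*})$ over the uniformly drawn index $r$, not an Azuma--Hoeffding martingale argument). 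Your caution about a latent $(M_{T}^{\alpha})^{2}$ dependence is apt, since the smoothness inequality naturally yields $\Delta_{T}^{\alpha}\le\tfrac{L}{2}(M_{T}^{\alpha})^{2}$; the paper writes $\tfrac{L}{2}M_{T}^{\alpha}$ without further comment.
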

\begin{rem}
Theorem \ref{thm:smooth_loss} extends Theorem \ref{thm:walphaT_confidence}
for the case of smooth loss function. This allows the gap $\frac{H\log\left(1/\left(1-\alpha\right)\right)}{2\lambda\alpha T}+\frac{LM_{T}^{\alpha}}{2}\sqrt{\frac{1}{2}\log\frac{1}{\delta}}$
to be quantified more precisely regarding the discrepancy in the model
itself rather than that in the objective function. The gap $\frac{H\log\left(1/\left(1-\alpha\right)\right)}{2\lambda\alpha T}+\frac{LM_{T}^{\alpha}}{2}\sqrt{\frac{1}{2}\log\frac{1}{\delta}}$
could possibly decrease rapidly when $T$ approaches $+\infty$.

\begin{table}[H]
\begin{centering}
\begin{tabular}{|c|c|c|}
\hline 
\textbf{Algorithms} & \textbf{Regret} & \textbf{Budget}\tabularnewline
\hline 
Forgetron \citep{DekelSS05} & \emph{NA} & \emph{MB}\tabularnewline
PA-I, II \citep{Crammer06onlinepassive-aggressive} & \emph{NA} & \emph{NB}\tabularnewline
Randomized Budget Perceptron \citep{CavallantiCG07} & \emph{NA} & \emph{NB}\tabularnewline
Projection \citep{Orabona2009} & \emph{NA} & \emph{AB}\tabularnewline
Kernelized Pegasos \citep{shalev2011pegasos} & $\text{O}\left(\frac{\log\left(T\right)}{T}\right)$ & \emph{NB}\tabularnewline
Budgeted SGD \citep{wang2012} & $\text{O}\left(\frac{\log\left(T\right)}{T}\right)$ & \emph{MB}\tabularnewline
Fourier OGD \citep{Lu_2015large} & $\text{O}\left(\frac{1}{\sqrt{T}}\right)$ & \emph{MB}\tabularnewline
Nystrom OGD \citep{Lu_2015large} & $\text{O}\left(\frac{1}{\sqrt{T}}\right)$ & \emph{MB}\tabularnewline
AVM (average output)  & $\text{O}\left(\frac{\log\left(T\right)}{T}\right)$ & \emph{AB}\tabularnewline
AVM ($\alpha$-suffix average output)  & $\text{O}\left(\frac{1}{T}\right)$ & \emph{AB}\tabularnewline
\hline 
\end{tabular}
\par\end{centering}
\caption{Comparison on the regret bounds and the budget sizes of the kernel
online algorithms. On the column of budget size, NB stands for \emph{Not
Bound} (i.e., the model size is not bounded and learning method is
vulnerable to the curse of kernelization), MB stands for \emph{Manual
Bound} (i.e., the model size is manually bounded by a predefined budget),
and AB is an abbreviation of \emph{Automatic Bound} (i.e., the model
size is automatically bounded and this model size is automatically
inferred). \label{tab:comparison_rate}}
\end{table}
\end{rem}
To end this section, we present the regret bound and the obtained
budget size for our AVM(s) together with those of algorithms listed
in Table \ref{tab:comparison_rate}. We note that some early works
on online kernel learning mainly focused on the mistake rate and did
not present any theoretical results regarding the regret bounds.

\subsubsection{Upper Bound of Model Size\label{subsec:avm_upperbound}}

In what follows, we present the theoretical results regarding the
model size and sparsity level of our proposed AVM. Theorem \ref{thm:ModelSize}
shows that AVM offers a high level of freedom to control the model
size. Especially, if we use the always-on setting (i.e., $\mathbb{P}\left(Z_{t}=1\right)=1,\,\forall t$),
the model size is bounded regardless of the data distribution and
data arrival order.
\begin{thm}
\label{thm:ModelSize}Let us denote $\mathbb{P}\left(Z_{t}=1\right)=p_{t}$,
$\mathbb{P}\left(Z_{t}=0\right)=q_{t}$, and the number of cells generated
after the iteration $t$ by $M_{t}$. If we define the model size,
i.e., the size of support set, after the iteration $t$ by $S_{t}$,
the following statement holds
\[
\mathbb{E}\left[S_{T}\right]\leq\sum_{t=1}^{T}q_{t}+\sum_{t=1}^{T}p_{t}\mathbb{E}\left[M_{t}-M_{t-1}\right]\leq\sum_{t=1}^{T}q_{t}+\mathbb{E}\left[M_{T}\right]
\]
Specially, if we use some specific settings for $p_{t}$, we can bound
the model size $\mathbb{E}\left[S_{t}\right]$ accordingly as follows

\noindent i) If $p_{t}=1,\,\forall t$ then $\mathbb{E}\left[S_{T}\right]\leq\mathbb{E}\left[M_{T}\right]\leq|\mathcal{P}|$,
where $|\mathcal{P}|$ specifies the size of the partition $\mathcal{P}$,
i.e., its number of cells.

\noindent ii) If $p_{t}=\max\left(0,1-\frac{\beta}{t}\right),\,\forall t$
then $\mathbb{E}\left[S_{T}\right]\leq\beta\left(\log\left(T\right)+1\right)+\mathbb{E}\left[M_{T}\right]$.

\noindent iii) If $p_{t}=\max\left(0,1-\frac{\beta}{t^{\rho}}\right),\,\forall t$,
where $0<\rho<1$, then $\mathbb{E}\left[S_{T}\right]\leq\frac{\beta T^{1-\rho}}{1-\rho}+\mathbb{E}\left[M_{T}\right]$.

\noindent iv) If $p_{t}=\max\left(0,1-\frac{\beta}{t^{\rho}}\right),\,\forall t$,
where $\rho>1$, then $\mathbb{E}\left[S_{T}\right]\leq\beta\zeta\left(\rho\right)+\mathbb{E}\left[M_{T}\right]\leq\beta\zeta\left(\rho\right)+|\mathcal{P}|$,
where $\zeta\left(.\right)$ is $\zeta$- Riemann function defined
by the integral $\zeta\left(s\right)=\frac{1}{\Gamma\left(s\right)}\int_{0}^{+\infty}\frac{t^{s-1}}{e^{s}-1}dt$.
\end{thm}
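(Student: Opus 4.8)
The plan is to establish the master inequality $\mathbb{E}\left[S_{T}\right]\le\sum_{t=1}^{T}q_{t}+\mathbb{E}\left[M_{T}\right]$ by a direct bookkeeping of how the support set of $\bw_{t}$ evolves, and then to read off cases (i)--(iv) by elementary summation of the $q_{t}$ series together with the trivial bound $M_{T}\le|\mathcal{P}|$.

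First I would unroll the update. Ignoring the $\ell_{2}$-projection (which only rescales $\bw_{t}-\eta_{t}h_{t}$ and therefore can never enlarge its support), each step has the form $\bw_{t+1}=\frac{t-1}{t}\bw_{t}-\eta_{t}\alpha_{t}\Phi\left(v_{t}\right)$ with $v_{t}=c_{i_{t}}$ if $Z_{t}=1$ and $v_{t}=x_{t}$ if $Z_{t}=0$. Hence $\bw_{T+1}$ is a linear combination of the vectors in $\left\{ \Phi\left(x_{t}\right):Z_{t}=0,\,t\le T\right\} \cup\left\{ \Phi\left(c_{i_{t}}\right):Z_{t}=1,\,t\le T\right\} $, so $\support\left(\balpha^{\left(T+1\right)}\right)$ is contained in this union regardless of any cancellations among the decay factors. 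The first set has at most $\left|\left\{ t\le T:Z_{t}=0\right\} \right|$ distinct elements; the second consists only of core points of cells that have actually been generated by an approximation step, and the number of core points never exceeds the number of generated cells, which is $M_{T}$. This yields the \emph{deterministic} bound $S_{T}\le\left|\left\{ t\le T:Z_{t}=0\right\} \right|+M_{T}$, and taking expectations and using $\mathbb{E}\left[\left|\left\{ t\le T:Z_{t}=0\right\} \right|\right]=\sum_{t}q_{t}$ gives $\mathbb{E}\left[S_{T}\right]\le\sum_{t}q_{t}+\mathbb{E}\left[M_{T}\right]$. To obtain the sharper intermediate quantity $\sum_{t}p_{t}\mathbb{E}\left[M_{t}-M_{t-1}\right]$, I would instead work step by step: write $S_{T}=\sum_{t=1}^{T}\left(S_{t}-S_{t-1}\right)$ with $S_{0}=0$; a non-approximation step raises the support size by at most one, whereas an approximation step can raise it only when it generates a fresh cell, i.e. only when $Z_{t}=1$ and $M_{t}-M_{t-1}=1$. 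Taking conditional expectations given the history up to step $t$, then summing, telescoping $\sum_{t}\left(M_{t}-M_{t-1}\right)=M_{T}$ (with $M_{0}=0$), and using $p_{t}\le1$, one recovers the full chain $\mathbb{E}\left[S_{T}\right]\le\sum_{t}q_{t}+\sum_{t}p_{t}\mathbb{E}\left[M_{t}-M_{t-1}\right]\le\sum_{t}q_{t}+\mathbb{E}\left[M_{T}\right]$.

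For the special cases I would simply substitute the prescribed $p_{t}$, so that $q_{t}=1-p_{t}$ equals $0$, $\min\left(1,\beta/t\right)$, or $\min\left(1,\beta/t^{\rho}\right)$, and estimate $\sum_{t=1}^{T}q_{t}$ by comparison with an integral. Case (i): $q_{t}\equiv0$ and $M_{T}\le|\mathcal{P}|$ because distinct generated cells are distinct members of $\mathcal{P}$. Case (ii): $\sum_{t=1}^{T}\min\left(1,\beta/t\right)\le\beta+\beta\sum_{t=2}^{T}t^{-1}\le\beta\left(\log T+1\right)$. Case (iii) with $0<\rho<1$: $\sum_{t=1}^{T}\min\left(1,\beta/t^{\rho}\right)\le\beta\sum_{t=1}^{T}t^{-\rho}\le\beta\int_{0}^{T}x^{-\rho}dx=\frac{\beta T^{1-\rho}}{1-\rho}$. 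Case (iv) with $\rho>1$: $\sum_{t=1}^{T}\beta t^{-\rho}\le\beta\sum_{t=1}^{\infty}t^{-\rho}=\beta\zeta\left(\rho\right)$, and again $M_{T}\le|\mathcal{P}|$. Adding the corresponding estimate to $\mathbb{E}\left[M_{T}\right]$ in the master inequality gives each stated bound; these are one-line computations once the master inequality is in hand.

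The only delicate point — and where I expect the real work to be — is the per-step claim that an approximation step enlarges the support set \emph{only} when it generates a brand-new cell. Because the $\delta$-coverage is constructed on the fly, when $x_{t}$ falls into a cell that already exists its core point $c_{i_{t}}$ has occurred in the expansion of some earlier model, yet one must still rule out the possibility that $c_{i_{t}}$ re-enters the support with a nonzero coefficient after having had its coefficient driven to zero by intermediate decay and sign cancellation. The clean way to sidestep this entirely — which I would adopt to keep the argument fully rigorous — is to prove the master inequality through the deterministic set-containment bound of the second paragraph (the support of $\bw_{T+1}$ lies inside a fixed finite family whose cardinality is at most $\left|\left\{ t\le T:Z_{t}=0\right\} \right|+M_{T}$, no matter what cancellations occur), and to recover the refined intermediate form only afterwards using $M_{T}=\sum_{t}\left(M_{t}-M_{t-1}\right)$ and $p_{t}\le1$. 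Once this containment observation is accepted — together with the fact that the number of core points never exceeds the number of generated cells — the remainder of the proof is routine.
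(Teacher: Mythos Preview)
Your argument is correct for the inequality $\mathbb{E}\left[S_{T}\right]\le\sum_{t}q_{t}+\mathbb{E}\left[M_{T}\right]$ and for all four corollaries, and it is close in spirit to the paper's proof but reaches the main bound by a somewhat different route. The paper argues step by step: it introduces the indicator $N_{t}$ that $x_{t}$ lands in a new cell, writes the recursion $S_{t}\le S_{t-1}+N_{t}+(1-Z_{t})(1-N_{t})$, invokes independence of $Z_{t}$ and $N_{t}$ to factor the expectation, and telescopes. Your deterministic set-containment bound $S_{T}\le\left|\{t\le T:Z_{t}=0\}\right|+M_{T}$ is cleaner: it needs no independence assumption, and it genuinely sidesteps the re-entry issue you flag (which the paper simply does not address). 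The treatment of cases (i)--(iv) via integral comparison is the same in both.

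The one slip in your sketch is the claim that the sharper middle quantity $\sum_{t}p_{t}\,\mathbb{E}\left[M_{t}-M_{t-1}\right]$ can be ``recovered afterwards'' from the deterministic bound using $M_{T}=\sum_{t}(M_{t}-M_{t-1})$ and $p_{t}\le1$. That inequality runs the wrong direction: from $p_{t}\le1$ you get $\sum_{t}p_{t}\,\mathbb{E}\left[M_{t}-M_{t-1}\right]\le\mathbb{E}\left[M_{T}\right]$, not the reverse, so the middle term cannot be deduced from the rightmost one. To obtain the middle term you really do need the per-step accounting together with an independence assumption on $Z_{t}$, exactly as the paper does. Since (i)--(iv) only use the rightmost bound, this does not affect the substance of your proof.
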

\begin{rem}
We use two parameters $\beta$ and $\rho$ to flexibly control the
rate of approximation $p_{t}$. It is evident that when $\beta$ increases,
the rate of approximation decreases and consequently the model size
and accuracy increase. On the other hand, when $\rho$ increases,
the rate of approximation increases as well and it follows that the
model size and accuracy decreases. We conducted experiment to investigate
how the variation of these two parameters influence the model size
and accuracy (cf. Section \ref{subsec:exp_hyperparam}).
\end{rem}

\begin{rem}
\label{rem:bound_model}The items i) and iv) in Theorem \ref{thm:ModelSize}
indicate that if $\mathbb{P}\left(Z_{t}=1\right)=p_{t}=\max\left(0,1-\frac{\beta}{t^{\rho}}\right)$,
where $\rho>1$ or $\rho=+\infty$, then the model size is bounded
by $\beta\zeta\left(\rho\right)+|\mathcal{P}|$ (by convention we
define $\zeta\left(+\infty\right)=0$). In fact, the tight upper bound
is $\beta\zeta\left(\rho\right)+\mathbb{E}\left[M_{T}\right]$, where
$M_{T}$ is the number of unique cells used so far. It is empirically
proven that $M_{T}$ could be very small comparing with $T$ and $|\mathcal{P}|$.
In addition, since all support sets of $\bw_{t}\,(1\leq t\leq T)$
are all lain in the core set, if we output the average $\overline{\bw}_{T}=\frac{\sum_{t=1}^{T}\bw_{t}}{T}$
or $\alpha$-suffix average $\overline{\bw}_{T}^{\alpha}=\frac{1}{\alpha T}\sum_{t=\left(1-\alpha\right)T+1}^{T}\bw_{T}$,
the model size is still bounded.
\end{rem}

\begin{rem}
The items ii) and iii) in Theorem \ref{thm:ModelSize} indicate that
if $\mathbb{P}\left(Z_{t}=1\right)=p_{t}=\max\left(0,1-\frac{\beta}{t^{\rho}}\right)$,
where $0<\rho\leq1$ then although the model size is not bounded,
it would slowly increase comparing with $T$, i.e., $\log\left(T\right)$
or $T^{1-\rho}$ when $\rho$ is around $1$.
\end{rem}

\subsection{Construction of $\delta$-Coverage \label{subsec:Construction-of--Coverage}}

In this section, we return to the construction of $\delta$-coverage
defined in Section \ref{subsec:coverage} and present two methods
to construct a finite $\delta$-coverage. The first method employs
hypersphere cells (cf. Algorithm \ref{alg:sphere_coverage}) whereas
the second method utilizes the hyperrectangle cells (cf. Algorithm
\ref{alg:rec_coverage}). In these two methods, the cells in coverage
are constructed on the fly when the incoming instances arrive. Both
are theoretically proven to be a finite coverage.

\begin{algorithm}[h]
\caption{Constructing hypersphere $\delta$-coverage.\label{alg:sphere_coverage}}

\begin{algor}[1]
\item [{{*}}] $\mathcal{P}=\emptyset$
\item [{{*}}] $n=0$
\item [{for}] $t=1,2,\ldots$
\item [{{*}}] Receive $\left(x_{t},y_{t}\right)$
\item [{{*}}] $i_{t}=\text{argmin}{}_{k\leq n}\norm{x_{t}-c_{k}}$
\item [{if}] $\norm{x_{t}-c_{i_{t}}}\geq\delta/2$
\item [{{*}}] $n=n+1$
\item [{{*}}] $c_{n}=x_{t}$
\item [{{*}}] $i_{t}=n$
\item [{{*}}] $\mathcal{P}=\mathcal{P}\cup\left[\mathcal{B}\left(c_{n},\delta/2\right)\right]$
\item [{endif}]~
\item [{endfor}]~
\end{algor}
\end{algorithm}
\begin{algorithm}[H]
\caption{Constructing hyperrectangle $\delta$-coverage.\label{alg:rec_coverage}}

\begin{algor}[1]
\item [{{*}}] $\mathcal{P}=\emptyset$
\item [{{*}}] $a=\delta/\sqrt{d}$
\item [{{*}}] $n=0$
\item [{for}] $t=1,2,\ldots$
\item [{{*}}] Receive $\left(x_{t},y_{t}\right)$
\item [{{*}}] $i_{t}=0$
\item [{for}] $i=1$ to $n$
\item [{if}] $\norm{x_{t}-c_{i}}_{\infty}<a$
\item [{{*}}] $i_{t}=i$
\item [{{*}}] \textbf{break}
\item [{endif}]~
\item [{endfor}]~
\item [{if}] $i_{t}=0$
\item [{{*}}] $n=n+1$
\item [{{*}}] $c_{n}=x_{t}$
\item [{{*}}] $i_{t}=n$
\item [{{*}}] $\mathcal{P}=\mathcal{P}\cup\left[\mathcal{R}\left(c_{n},a\right)\right]$
\item [{endif}]~
\item [{endfor}]~
\end{algor}
\end{algorithm}

Algorithm \ref{alg:sphere_coverage} employs a collection of open
hypersphere cell $\mathcal{B}\left(c,R\right)$, which is defined
as $\mathcal{B}\left(c,R\right)=\left\{ x\in\mathbb{R}^{d}:\norm{x-c}<R\right\} $,
to cover the data domain. Similar to Algorithm \ref{alg:sphere_coverage},
Algorithm \ref{alg:rec_coverage} uses a collection of open hyperrectangle
$\mathcal{R}\left(c,a\right)$, which is given by $\mathcal{R}\left(c,a\right)=\left\{ x\in\mathbb{R}^{d}:\norm{x-c}_{\infty}<a\right\} $,
to cover the data domain.

Both Algorithms \ref{alg:sphere_coverage} and \ref{alg:rec_coverage}
are constructed in the common spirit: if the incoming instance $\left(x_{t},y_{t}\right)$
is outside all current cells, a new cell whose centre or vertex is
this instance is generated. It is noteworthy that the variable $i_{t}$
in these two algorithms specifies the cell that contains the new incoming
instance and is the same as itself in Algorithm \ref{alg:AVM}.

Theorem \ref{thm:finite_coverage} establishes that regardless of
the data distribution and data arrival order, Algorithms \ref{alg:sphere_coverage}
and \ref{alg:rec_coverage} always generate a finite $\delta$-coverage
which implies a bound on the model size of AVM. It is noteworthy at
this point that in some scenarios of data arrival, Algorithms \ref{alg:sphere_coverage}
and \ref{alg:rec_coverage} might not generate a coverage for the
entire space $\mathcal{X}$. However, since the generated sequence
$\left\{ x_{t}\right\} _{t}$ cannot be outside the set $\cup_{i}\,\mathcal{B}\left(c_{i},\delta\right)$
and $\cup_{i}\,\mathcal{R}\left(c_{i},\delta\right)$, without loss
of generality we can restrict $\mathcal{X}$ to $\cup_{i}\,\mathcal{B}\left(c_{i},\delta\right)$
or $\cup_{i}\,\mathcal{R}\left(c_{i},\delta\right)$ by assuming that
$\mathcal{X=}\cup_{i}\,\mathcal{B}\left(c_{i},\delta\right)$ or $\mathcal{X=}\cup_{i}\,\mathcal{R}\left(c_{i},\delta\right)$.
\begin{thm}
\label{thm:finite_coverage}Let us consider the coverages formed by
the running of Algorithms \ref{alg:sphere_coverage} and \ref{alg:rec_coverage}.
If the data domain $\mathcal{X}$ is compact (i.e., close and bounded)
then these coverages are all finite $\delta$-coverages whose sizes
are all dependent on the data domain $\mathcal{X}$ and independent
with the sequence of incoming data instances $\left(x_{t},y_{t}\right)$
received.
\end{thm}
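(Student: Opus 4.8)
The plan is to check the two defining properties of a finite $\delta$-coverage — a diameter bound on every cell, and finiteness of the collection via a bound that does not depend on the arrival order — and to note in passing that every received instance is covered. The diameter bound is immediate from the construction: in Algorithm~\ref{alg:sphere_coverage} each cell is an open ball $\mathcal{B}(c_n,\delta/2)$, whose $\ell_2$-diameter equals $\delta$, so $D(P_i)\le\delta$; in Algorithm~\ref{alg:rec_coverage} each cell is an open $\ell_\infty$-cube $\mathcal{R}(c_n,a)$, and computing its $\ell_2$-diameter is a one-line exercise in terms of $a$ and $d$ — the scaling of the side parameter $a$ with $\delta$ and $d$ chosen in the algorithm being exactly what forces this diameter to be at most $\delta$. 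For the covering property, note that by the update rule every received $x_t$ either lies within $\delta/2$ (resp.\ $\ell_\infty$-distance $a$) of some already-created centre, hence in that cell, or else spawns a new cell whose centre is $x_t$ itself, hence again lies in a cell; thus $\{x_t\}_t\subset\bigcup_i P_i$, and under the reduction noted just before the theorem ($\mathcal{X}=\bigcup_i\mathcal{B}(c_i,\delta)$ or $\mathcal{X}=\bigcup_i\mathcal{R}(c_i,\delta)$) the collection covers $\mathcal{X}$.

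The crux is finiteness with data-order independence, and it rests on one structural observation: \emph{the centres form a separated set}. In Algorithm~\ref{alg:sphere_coverage} a new centre $x_t$ is appended only when its distance to every existing centre is at least $\delta/2$, so the centres are pairwise at $\ell_2$-distance $\ge\delta/2$; in Algorithm~\ref{alg:rec_coverage} a new centre is appended only when its $\ell_\infty$-distance to every existing centre is at least $a=\delta/\sqrt d$, so the centres are pairwise at $\ell_2$-distance $\ge\delta/\sqrt d$. Write $\varepsilon$ for this separation constant, and observe that all centres lie in the bounded set $\mathcal{X}$. Since $\mathcal{X}$ is compact it is totally bounded, so it admits a \emph{finite} cover by open balls of radius $\varepsilon/2$; no two distinct centres can lie in a common ball of this cover, since by the triangle inequality their distance would then be $<\varepsilon$, contradicting the separation. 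Hence the number of centres — which equals the number of cells — is at most the $(\varepsilon/2)$-covering number $N(\mathcal{X},\varepsilon/2)$ of $\mathcal{X}$, a finite quantity determined by $\mathcal{X}$, $\delta$ and $d$ alone and making no reference to the sequence $\left(x_t,y_t\right)$. Together with the diameter and covering properties established above, this is precisely the assertion of Theorem~\ref{thm:finite_coverage}.

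I do not expect a genuine obstacle: the argument is a standard packing/covering-number estimate. The only delicate points are bookkeeping — getting the $\sqrt d$ factor in the hyperrectangle diameter right, and being explicit that ``depends only on $\mathcal{X}$'' refers to the \emph{upper bound} $N(\mathcal{X},\varepsilon/2)$ (the exact number of cells realised does still depend on the arrival order, only its bound does not). As an alternative to total boundedness, one could prove finiteness by a volume comparison — the balls $\mathcal{B}(c_i,\varepsilon/2)$ are pairwise disjoint and contained in the $(\varepsilon/2)$-neighbourhood of $\mathcal{X}$, which has finite Lebesgue measure — but the total-boundedness route is shorter and uses only the stated compactness hypothesis.
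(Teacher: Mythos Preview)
Your approach is essentially the paper's: show the centres are pairwise separated, use compactness of $\mathcal{X}$ to produce a finite cover by small balls, and pigeonhole so that each such ball contains at most one centre. The paper writes out only the hypersphere case and covers with balls of radius $\delta/2$; your choice of radius $\varepsilon/2=\delta/4$ is in fact what the triangle-inequality step requires (two points in a ball of radius $\delta/2$ can be up to distance $\delta$ apart, which does not contradict a $\delta/2$-separation), so your write-up quietly repairs a small slip in the paper's presentation.
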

\begin{rem}
Theorem \ref{thm:finite_coverage} also reveals that regardless of
the data arrival order, the model size of AVM is always bounded (cf.
Remark \ref{rem:bound_model}). Referring to the work of \citep{Cucker02onthe},
it is known that this model size cannot exceed $\left(\frac{4D\left(\mathcal{X}\right)}{\delta}\right)^{d}$.
However with many possible data arrival orders, the number of active
cells or the model size of AVM is significantly smaller than the aforementioned
theoretical bound.
\end{rem}

\subsection{Complexity Analysis}

We now present the computational complexity of our AVM(s) with the
hypersphere $\delta$-coverage at the iteration $t$. The cost to
find the hypersphere cell in Step 5 of Algorithm \ref{alg:AVM} is
$\text{O}\left(d^{2}M_{t}\right)$. The cost to calculate $\alpha_{t}$
in Step 6 of Algorithm \ref{alg:AVM} is $\text{O}\left(S_{t}\right)$
if we consider the kernel operation as a unit operation. If $\ell_{2}$
loss is used and $\lambda\leq1$, we need to do a projection onto
the hypersphere $\mathcal{B}\left(\bzero,y_{\text{max}}\lambda^{-1/2}\right)$
which requires the evaluation of the length of the vector $\bw_{t}-\eta_{t}h_{t}$
(i.e., $\norm{\bw_{t}-\eta_{t}h_{t}}$) which costs $S_{t}$ unit
operations using incremental implementation. Therefore, the computational
operation at the iteration $t$ of AVM(s) is either $\text{O}\left(d^{2}M_{t}+S_{t}\right)=\text{O}\left(\left(d^{2}+1\right)S_{t}\right)$
or $\text{O}\left(d^{2}M_{t}+S_{t}+S_{t}\right)=\text{O}\left(\left(d^{2}+2\right)S_{t}\right)$
(since $M_{t}\leq S_{t}$).

\section{Suitability of Loss Functions\label{sec:Loss-Function}}

We introduce six types of loss functions that can be used in our proposed
algorithm, namely Hinge, Logistic, $\ell_{2}$, $\ell_{1}$, $\varepsilon-$insensitive,
and $\tau$-smooth Hinge. We verify that these loss functions satisfying
the necessary condition, that is, $\norm{l^{'}\left(\bw;x,y\right)}\leq A\norm{\bw}^{1/2}+B$
for some appropriate positive numbers $A,B$ (this is required for
our problem formulation presented in Section \ref{sec:Problem-Setting}). 

For comprehensibility, without loss of generality, we assume that
$\norm{\Phi\left(x\right)}=K\left(x,x\right)^{1/2}=1,\,\forall x\in\mathcal{X}$.
At the outset of this section, it is noteworthy that for classification
task (i.e., Hinge, Logistic, and $\tau$-smooth Hinge cases), the
label $y$ is either $-1$ or $1$ which instantly implies $|y|=y^{2}=1$.
\begin{itemize}
\item \textbf{Hinge loss}
\begin{align*}
l\left(\bw;x,y\right) & =\max\left\{ 0,1-y\transp{\bw}\Phi\left(x\right)\right\} \\
l^{'}\left(\bw;x,y\right) & =-\mathbb{I}_{\left\{ y\transp{\bw}\Phi\left(x\right)\leq1\right\} }y\Phi\left(x\right)
\end{align*}
where $\mathbb{I}_{S}$ is the indicator function which renders $1$
if the logical statement $S$ is true and $0$ otherwise.

Therefore, by choosing $A=0,\,B=1$ we have 
\[
\norm{l^{'}\left(\bw;x,y\right)}\leq\norm{\Phi\left(x\right)}\leq1=A\norm{\bw}^{1/2}+B
\]

\item \textbf{$\ell_{2}$ loss}

In this case, at the outset we cannot verify that $\norm{l^{'}\left(\bw;x,y\right)}\leq A\norm{\bw}^{1/2}+B$
for all $\bw,x,y$. However, to support the proposed theory, we only
need to check that $\norm{l^{'}\left(\bw_{t};x,y\right)}\leq A\norm{\bw_{t}}^{1/2}+B$
for all $t\geq1$. We derive as follows
\begin{align*}
l\left(\bw;x,y\right) & =\frac{1}{2}\left(y-\transp{\bw}\Phi\left(x\right)\right)^{2}\\
l^{'}\left(\bw;x,y\right) & =\left(\transp{\bw}\Phi\left(x\right)-y\right)\Phi\left(x\right)
\end{align*}
\begin{flalign*}
\norm{l^{'}\left(\bw_{t};x,y\right)} & =\vert\transp{\bw_{t}}\Phi\left(x\right)+y\vert\norm{\Phi\left(x\right)}\leq\vert\transp{\bw_{t}}\Phi\left(x\right)\vert+y_{\text{max}}\\
 & \leq\norm{\Phi\left(x\right)}\norm{\bw_{t}}+y_{\text{max}}\leq A\norm{\bw_{t}}^{1/2}+B
\end{flalign*}
where $B=y_{\text{max}}$ and $A=\begin{cases}
y_{\text{max}}^{1/2}\lambda^{-1/4} & \text{if \ensuremath{\lambda\leq1}}\\
y_{\text{max}}^{1/2}\left(\lambda-1\right)^{-1/2} & \text{otherwise}
\end{cases}$.

Here we note that we make use of the fact that $\norm{\bw_{t}}\leq y_{\text{max}}\left(\lambda-1\right)^{-1}$
if $\lambda>1$ (cf. Theorem \ref{thm:bound_wT} in Appendix \ref{sec:appxC})
and $\norm{\bw_{t}}\leq y_{\text{max}}\lambda^{-1/2}$ otherwise (cf.
Line 12 in Algorithm \ref{alg:AVM} ).
\item \textbf{$\ell_{1}$ loss}
\begin{align*}
l\left(\bw;x,y\right) & =\vert y-\transp{\bw}\Phi\left(x\right)\vert\\
l^{'}\left(\bw;x,y\right) & =\text{sign}\left(\transp{\bw}\Phi\left(x\right)-y\right)\Phi\left(x\right)
\end{align*}

Therefore, by choosing $A=0,\,B=1$ we have
\[
\norm{l^{'}\left(\bw;x,y\right)}=\norm{\Phi\left(x\right)}\leq1=A\norm{\bw}^{1/2}+B
\]

\item \textbf{Logistic loss}
\begin{align*}
l\left(\bw;x,y\right) & =\log\left(1+\exp\left(-y\transp{\bw}\Phi\left(x\right)\right)\right)\\
l^{'}\left(\bw;x,y\right) & =\frac{-y\exp\left(-y\transp{\bw}\Phi\left(x\right)\right)\Phi\left(x\right)}{\exp\left(-y\transp{\bw}\Phi\left(x\right)\right)+1}
\end{align*}

Therefore, by choosing $A=0,\,B=1$ we have
\[
\norm{l^{'}\left(\bw;x,y\right)}<\norm{\Phi\left(x\right)}\leq1=A\norm{\bw}^{1/2}+B
\]

\item $\boldsymbol{\varepsilon}$-\textbf{insensitive loss}
\begin{align*}
l\left(\bw;x,y\right) & =\max\left\{ 0,\vert y-\transp{\bw}\Phi\left(x\right)\vert-\varepsilon\right\} \\
l^{'}\left(\bw;x,y\right) & =\mathbb{I}_{\left\{ \vert y-\transp{\bw}\Phi\left(x\right)\vert>\varepsilon\right\} }\text{sign}\left(\transp{\bw}\Phi\left(x\right)-y\right)\Phi\left(x\right)
\end{align*}

Therefore, by choosing $A=0,\,B=1$ we have
\[
\norm{l^{'}\left(\bw;x,y\right)}\leq\norm{\Phi\left(x\right)}\leq1=A\norm{\bw}^{1/2}+B
\]

\item \textbf{$\tau$-smooth Hinge loss} \citep{DBLP:journals/jmlr/Shalev-Shwartz013}
\begin{align*}
l\left(\bw;x,y\right) & =\begin{cases}
0 & \text{if}\,\,y\transp{\bw}\Phi\left(x\right)>1\\
1-y\transp{\bw}\Phi\left(x\right)-\frac{\tau}{2} & \text{if}\,\,y\transp{\bw}\Phi\left(x\right)<1-\tau\\
\frac{1}{2\tau}\left(1-y\transp{\bw}\Phi\left(x\right)\right)^{2} & \text{otherwise}
\end{cases}\\
l^{'}\left(\bw;x,y\right) & =-\mathbb{I}_{\left\{ y\transp{\bw}\Phi\left(x\right)<1-\tau\right\} }y\Phi\left(x\right)\\
 & +\tau^{-1}\mathbb{I}_{1-\tau\leq y\transp{\bw}\Phi\left(x\right)\leq1}\left(y\transp{\bw}\Phi\left(x\right)-1\right)y\Phi\left(x\right)
\end{align*}

Therefore, by choosing $A=0,\,B=2$, we have
\begin{align*}
\norm{l^{'}\left(\bw;x,y\right)} & \leq\left|y\right|\norm{\Phi\left(x\right)}+\tau^{-1}\left|y\right|\norm{\Phi\left(x\right)}\tau\leq2\\
 & =A\norm{\bw}^{1/2}+B
\end{align*}
\end{itemize}

\section{Multiclass Setting \label{sec:Multiclass-Setting}}

In this section, we show that our proposed framework could also easily
extend to the multi-class setting. We base on the work of \citep{Crammer2002}
for multiclass classification to formulate the optimization problem
in multi-class setting as
\[
\underset{W}{\min}\left(f\left(W\right)\triangleq\frac{\lambda}{2}\norm W_{2,2}^{2}+\frac{1}{N}\sum_{i=1}^{N}l\left(\transp{\bw_{y_{i}}}\Phi\left(x_{i}\right)-\transp{\bw_{z_{i}}}\Phi\left(x_{i}\right)\right)\right)
\]
where we have defined
\begin{gather*}
z_{i}=\argmax{j\neq y_{i}}\,\transp{\bw_{j}}\Phi\left(x_{i}\right),\\
W=\left[\bw_{1},\bw_{2},\ldots,\bw_{m}\right],\,\norm W_{2,2}^{2}=\sum_{j=1}^{m}\norm{\bw_{j}}^{2},\\
l\left(a\right)=\begin{cases}
\max\left(0,1-a\right) & \text{Hinge loss}\\
\log\left(1+e^{-a}\right) & \text{Logistic loss}
\end{cases}
\end{gather*}

For the exact update, at the $t$-th iteration, we receive the instance
$\left(x_{t},y_{t}\right)$ and modify $W$ as follows
\[
\bw_{j}^{(t+1)}=\begin{cases}
\frac{t-1}{t}\bw_{j}^{(t)}-\eta_{t}l^{'}\left(a\right)\Phi\left(x_{t}\right) & \text{if \ensuremath{j=y_{t}}}\\
\frac{t-1}{t}\bw_{j}^{(t)}+\eta_{t}l^{'}\left(a\right)\Phi\left(x_{t}\right) & \text{if \ensuremath{j=z_{t}}}\\
\frac{t-1}{t}\bw_{j}^{(t)} & \text{otherwise}
\end{cases}
\]
where $a=\transp{\bw_{y_{t}}}\Phi\left(x_{t}\right)-\transp{\bw_{z_{t}}}\Phi\left(x_{t}\right)$
and $l^{'}(a)=-\mathbb{I}_{\left\{ a<1\right\} }$ or $-1/\left(1+e^{a}\right)$.

The algorithm for Approximation Vector Machine with multiclass setting
proceeds as in Algorithm \ref{alg:MAVM}. 

\begin{algorithm}[H]
\caption{Multiclass Approximation Vector Machine.\label{alg:MAVM}}

\textbf{Input:} $\lambda$, p.s.d. \& iso. kernel $K\left(.,.\right)$,
$\delta$-coverage $\mathcal{P}=(P_{i})_{i\in I}$
\begin{algor}[1]
\item [{{*}}] $W_{1}=0$
\item [{for}] $t=1,\dots,T$ 
\item [{{*}}] Receive $\left(x_{t},y_{t}\right)$\hspace*{\fill} //$\left(x_{t},y_{t}\right)\sim\mathbb{P}_{\mathcal{X},\mathcal{Y}}$
\textit{or} $\ensuremath{\mathbb{P}_{N}}$
\item [{{*}}] $a=\transp{\bw_{y_{t}}}\Phi\left(x_{t}\right)-\underset{j\neq y_{t}}{\max}\,\transp{\bw_{j}}\Phi\left(x_{t}\right)$
\item [{{*}}] $W^{\left(t+1\right)}=\frac{t-1}{t}W^{\left(t\right)}$
\item [{{*}}] Sample a Bernoulli random variable $Z_{t}$
\item [{if}] $Z_{t}=1$
\item [{{*}}] Find $i_{t}\in I$ such that $x_{t}\in P_{i_{t}}$
\item [{{*}}] $\bw_{y_{t}}^{\left(t+1\right)}=\bw_{y_{t}}^{\left(t+1\right)}-\eta_{t}l^{'}\left(a\right)\Phi\left(c_{i_{t}}\right)$
\hspace*{\fill}//\textit{do approximation}
\item [{{*}}] $\bw_{z_{t}}^{\left(t+1\right)}=\bw_{z_{t}}^{\left(t+1\right)}+\eta_{t}l^{'}\left(a\right)\Phi\left(c_{i_{t}}\right)$
\item [{else}]~
\item [{{*}}] $\bw_{y_{t}}^{\left(t+1\right)}=\bw_{y_{t}}^{\left(t+1\right)}-\eta_{t}l^{'}\left(a\right)\Phi\left(x_{t}\right)$ 
\item [{{*}}] $\bw_{z_{t}}^{\left(t+1\right)}=\bw_{z_{t}}^{\left(t+1\right)}+\eta_{t}l^{'}\left(a\right)\Phi\left(x_{t}\right)$
\item [{endif}]~
\item [{endfor}]~
\end{algor}
 \textbf{Output} $\overline{W}^{\left(T\right)}=\frac{\sum_{t=1}^{T}W^{\left(t\right)}}{T}$
or $W^{\left(t+1\right)}$

\end{algorithm}

\section{Experiments\label{sec:Experiments}}

In this section, we conduct comprehensive experiments to quantitatively
evaluate the capacity and scalability of our proposed Approximation
Vector Machine ($\model$) on classification and regression tasks
under three different settings:
\begin{itemize}
\item \emph{Batch classification}\footnote{This setting is also known as offline classification.}:
the regular binary and multiclass classification tasks that follow
a standard validation setup, wherein each dataset is partitioned into
training set and testing set. The models are trained on the training
part, and then their discriminative capabilities are verified on the
testing part using classification accuracy measure. The computational
costs are commonly measured based on the training time.
\item \emph{Online classification}: the binary and multiclass classification
tasks that follow a purely online learning setup, wherein there is
no division of training and testing sets as in batch setting. The
algorithms sequentially receive and process a single data sample turn-by-turn.
When an individual data point comes, the models perform prediction
to compute the mistake rate first, then use the feature and label
information of such data point to continue their learning procedures.
Their predictive performances and computational costs are measured
basing on the average of mistake rate and execution time, respectively,
accumulated in the learning progress on the entire dataset.
\item \emph{Online regression}: the regression task that follows the same
setting of online classification, except the predictive performances
are measured based on the regression error rate accumulated in the
learning progress on the entire dataset.
\end{itemize}
Our main goal is to examine the scalability, classification and regression
capabilities of $\model$s by directly comparing with those of several
recent state-of-the-art batch and online learning approaches using
a number of datasets with a wide range of sizes. Our models are implemented
in Python with Numpy package. The source code and experimental scripts
are published for reproducibility\footnote{\href{https://github.com/tund/avm}{https://github.com/tund/avm}}.
In what follows, we present the data statistics, experimental setup,
results and our observations.

\subsection{Data statistics and experimental setup}

We use $11$ datasets whose statistics are summarized in Table~\ref{tab:data_stat}.
The datasets are selected in a diverse array of sizes in order to
clearly expose the differences among scalable capabilities of the
models. Five of which (\emph{year}, \emph{covtype}, \emph{poker, KDDCup99},
\emph{airlines}) are large-scale datasets with hundreds of thousands
and millions of data points, whilst the rest are ordinal-size databases.
Except the \emph{airlines}, all of the datasets can be downloaded
from LIBSVM\footnote{\href{https://www.csie.ntu.edu.tw/~cjlin/libsvmtools/datasets/}{https://www.csie.ntu.edu.tw/$\sim$cjlin/libsvmtools/datasets/}}
and UCI\footnote{\href{https://archive.ics.uci.edu/ml/datasets.html}{https://archive.ics.uci.edu/ml/datasets.html}}
websites.

\begin{table}[h]
\noindent \centering{}%
\begin{tabular}{|c|r|r|r|r|c|}
\hline 
\rowcolor{header_color}\textbf{Dataset} & \textbf{\#training} & \textbf{\#testing} & \textbf{\#features} & \textbf{\#classes} & \textbf{Source}\tabularnewline
\hline 
\emph{a9a} & $32,561$ & $16,281$ & $123$ & $2$ & UCI\tabularnewline
\emph{w8a} & $49,749$ & $14,951$ & $300$ & $2$ & LIBSVM\tabularnewline
\emph{cod-rna} & $59,535$ & $271,617$ & $8$ & $2$ & LIBSVM\tabularnewline
\emph{ijcnn1} & $49,990$ & $91,701$ & $22$ & $2$ & LIBSVM\tabularnewline
\emph{covtype} & $522,911$ & $58,101$ & $54$ & $7$ & LIBSVM\tabularnewline
\emph{poker} & $25,010$ & $1,000,000$ & $10$ & $10$ & UCI\tabularnewline
\emph{KDDCup99} & $4,408,589$ & $489,842$ & $41$ & $23$ & UCI\tabularnewline
\emph{airlines} & $5,336,471$ & $592,942$ & $8$ & $2$ & ASA\tabularnewline
\hline 
\hline 
\rowcolor{header_color}\textbf{Dataset} & \textbf{\#training} & \textbf{\#testing} & \textbf{\#features} & \textbf{value} & \textbf{Source}\tabularnewline
\hline 
\emph{casp} & $45,730$ & \textendash{} & $9$ & $\left[0,1\right]$ & UCI\tabularnewline
\emph{slice} & $53,500$ & \textendash{} & $384$ & $\left[0,1\right]$ & UCI\tabularnewline
\emph{year} & $515,345$ & \textendash{} & $90$ & $\left[0,1\right]$ & UCI\tabularnewline
\emph{airlines} & $5,929,413$ & \textendash{} & $8$ & $\realset^{+}$ & ASA\tabularnewline
\hline 
\end{tabular}\caption{Data statistics. \#training: number of training samples; \#testing:
number of testing samples.\label{tab:data_stat}}
\end{table}

The airlines dataset is provided by American Statistical Association
(ASA\footnote{The data can be downloaded from \href{http://stat-computing.org/dataexpo/2009/}{http://stat-computing.org/dataexpo/2009/}.}).
The dataset contains information of all commercial flights in the
US from October 1987 to April 2008. The aim is to predict whether
a flight will be delayed or not and how long in minutes the flight
will be delayed in terms of departure time. The departure delay time
is provided in the flight database. A flight is considered \emph{delayed}
if its delay time is above $15$ minutes, and \emph{non-delayed} otherwise.
The average delay of a flight in 2008 was of $56.3$ minutes. Following
the procedure of \citep{hensman_etal_uai13_Gaussian}, we further
process the data in two steps. First, we join the data with the information
of individual planes basing on their tail numbers in order to obtain
the manufacture year. This additional information is provided as a
supplemental data source on ASA website. We then extract $8$ features
of many available fields: the age of the aircraft (computed based
on the manufacture year), journey distance, airtime, scheduled departure
time, scheduled arrival time, month, day of week and month. All features
are normalized into the range $\left[0,1\right]$.

In batch classification experiments, we follow the original divisions
of training and testing sets in LIBSVM and UCI sites wherever available.
For \emph{KDDCup99}, \emph{covtype} and \emph{airlines} datasets,
we split the data into $90\%$ for training and $10\%$ for testing.
In online classification and regression tasks, we either use the entire
datasets or concatenate training and testing parts into one. The online
learning algorithms are then trained in a single pass through the
data. In both batch and online settings, for each dataset, the models
perform $10$ runs on different random permutations of the training
data samples. Their prediction results and time costs are then reported
by taking the average with the standard deviation of the results over
these runs.

For comparison, we employ some baseline methods that will be described
in the following sections. Their C++ implementations with Matlab interfaces
are published as a part of LIBSVM, BudgetedSVM\footnote{\href{http://www.dabi.temple.edu/budgetedsvm/index.html}{http://www.dabi.temple.edu/budgetedsvm/index.html}}
and LSOKL\footnote{\href{http://lsokl.stevenhoi.com/}{http://lsokl.stevenhoi.com/}}
toolboxes. Throughout the experiments, we utilize RBF kernel, i.e.,
$K\left(x,x^{'}\right)=\exp\left(-\gamma\left\Vert x-x^{'}\right\Vert ^{2}\right)$
for all algorithms including ours. We use hypersphere strategy to
construct the $\delta$-coverage (cf. Section~\ref{subsec:Construction-of--Coverage}),
due to its better performance than that of hyperrectangle approach
during model evaluation. All experiments are conducted using a Windows
machine with 3.46GHz Xeon processor and 96GB RAM.

\subsection{Model evaluation on the effect of hyperparameters\label{subsec:exp_hyperparam}}

In the first experiment, we investigate the effect of hyperparameters,
i.e., $\delta$-coverage diameter, sampling parameters $\beta$ and
$\rho$ (cf. Section~\ref{subsec:avm_upperbound}) on the performance
of $\model$s. Particularly, we conduct an initial analysis to quantitatively
evaluate the sensitivity of these hyperparameters and their impact
on the predictive accuracy and model size. This analysis provides
a heuristic approach to find the best setting of hyperparameters.
Here the $\model$ with Hinge loss is trained following the online
classification scheme using two datasets \emph{a9a} and \emph{cod-rna}.

To find the plausible range of coverage diameter, we use a heuristic
approach as follows. First we compute the mean and standard deviation
of pairwise Euclidean distances between any two data samples. Treating
the mean as the radius, the coverage diameter is then varied around
twice of this mean bounded by twice of the standard deviation. Fig.~\ref{fig:exp_hyperparam_delta_a9a}
and Fig.~\ref{fig:exp_hyperparam_delta_codrna} report the average
mistake rates and model sizes of $\model$s with respect to (w.r.t)
these values for datasets \emph{a9a} and \emph{cod-rna}, respectively.
Here we set $\beta=0$ and $\rho=1.0$. There is a consistent pattern
in both figures: the classification errors increase for larger $\delta$
whilst the model sizes decrease. This represents the trade-off between
model performance and model size via the model coverage. To balance
the performance and model size, in these cases, we can choose $\delta=7.0$
for \emph{a9a} data and $\delta=1.0$ for \emph{cod-rna} data.

\begin{figure}[h]
\noindent \begin{centering}
\subfloat[The effect of $\delta$-coverage diameter on the mistake rate and
model size.\label{fig:exp_hyperparam_delta_a9a}]{\noindent \begin{centering}
\includegraphics[width=0.42\textwidth]{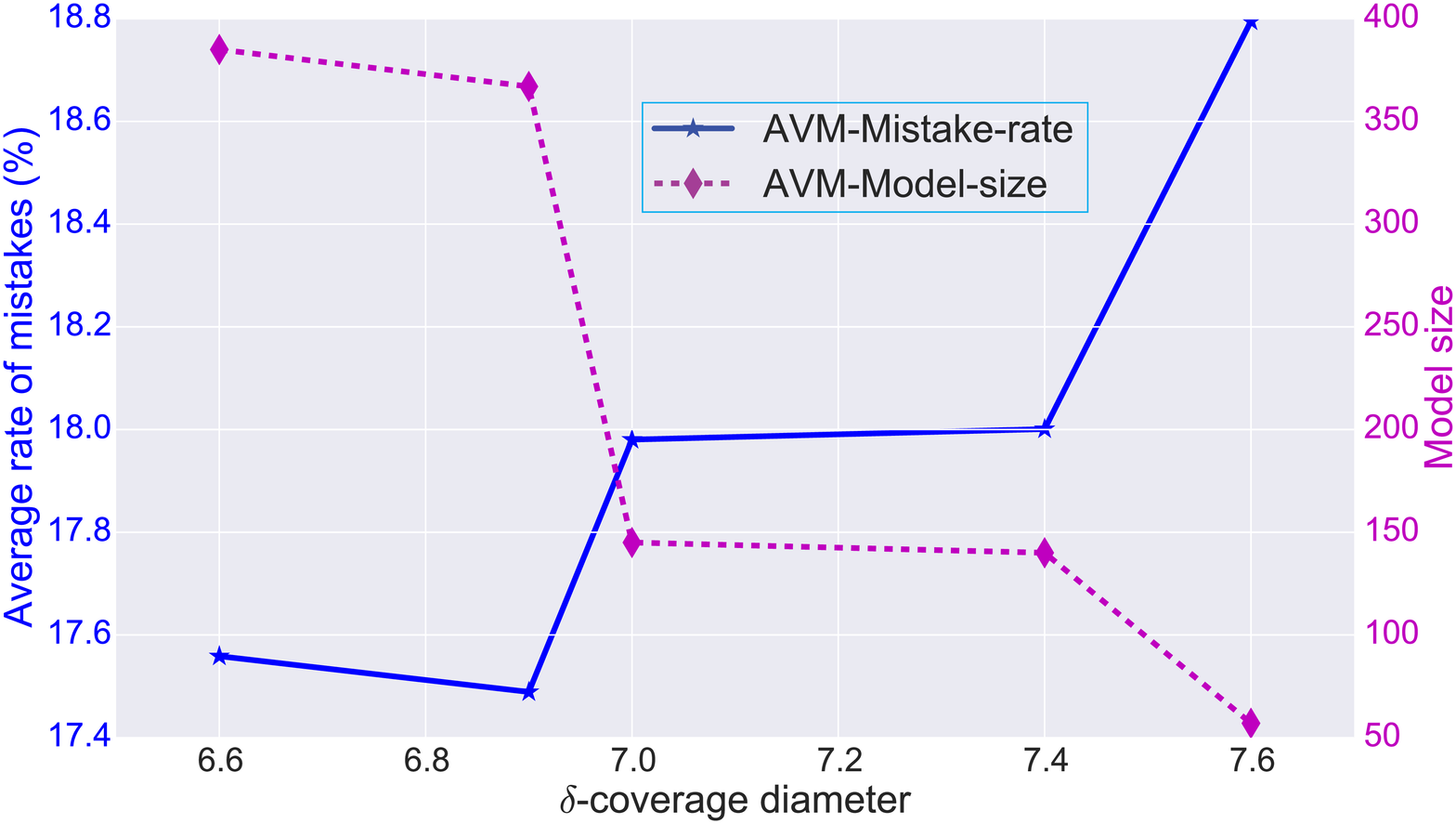}
\par\end{centering}
\noindent \centering{}}\hspace{0.02\textwidth}\subfloat[The effect of $\beta$ and $\rho$ on the classification mistake rate.
$\beta=0$ means always approximating.\label{fig:exp_hyperparam_beta_rho_a9a}]{\noindent \begin{centering}
\includegraphics[width=0.5\textwidth]{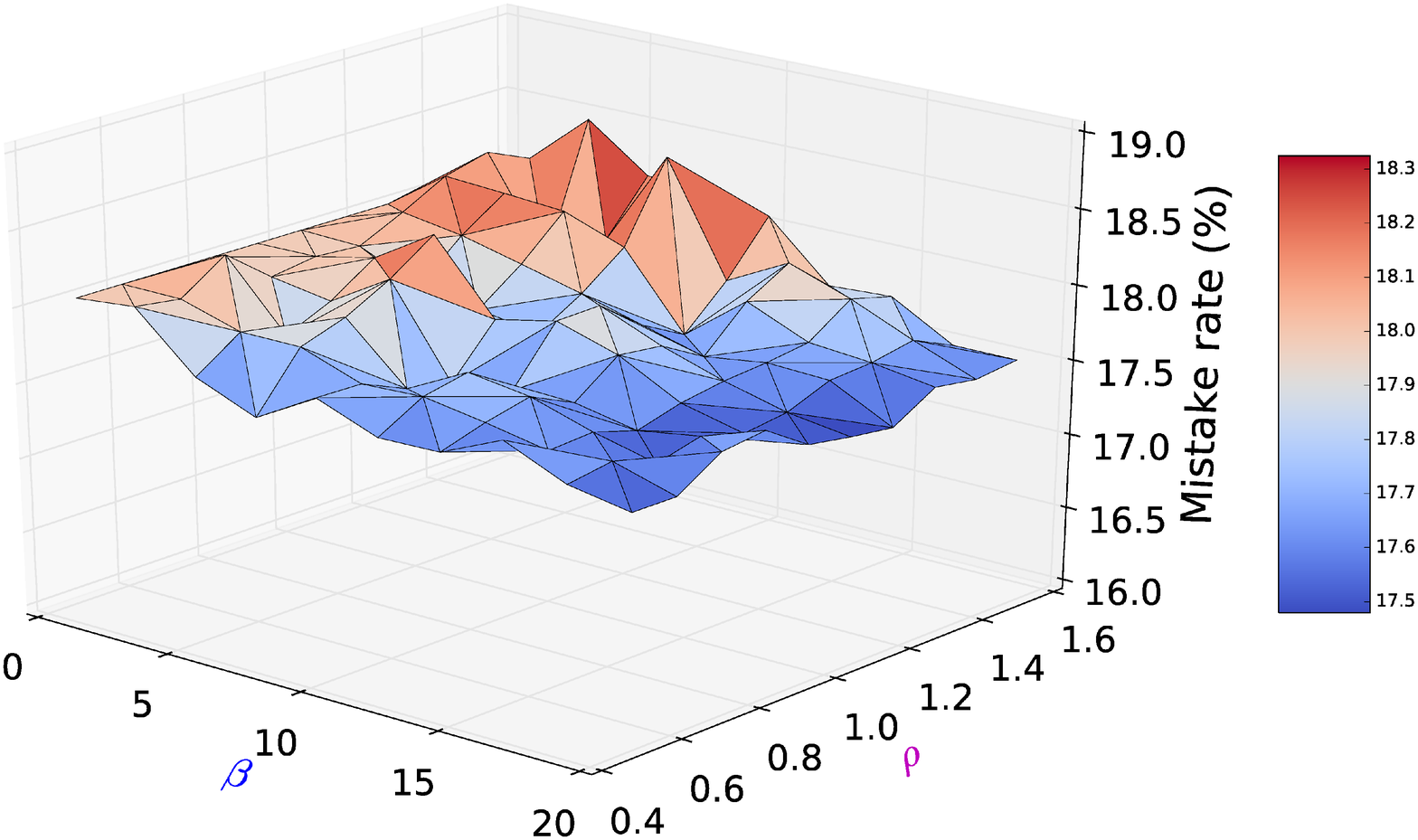}
\par\end{centering}
\noindent \centering{}}
\par\end{centering}
\noindent \centering{}\caption{Performance evaluation of $\protect\model$ with Hinge loss trained
using \emph{a9a} dataset with different values of hyperparameters.}
\end{figure}

\begin{figure}[h]
\noindent \begin{centering}
\subfloat[The effect of $\delta$-coverage diameter on the mistake rate and
model size.\label{fig:exp_hyperparam_delta_codrna}]{\noindent \begin{centering}
\includegraphics[width=0.42\textwidth]{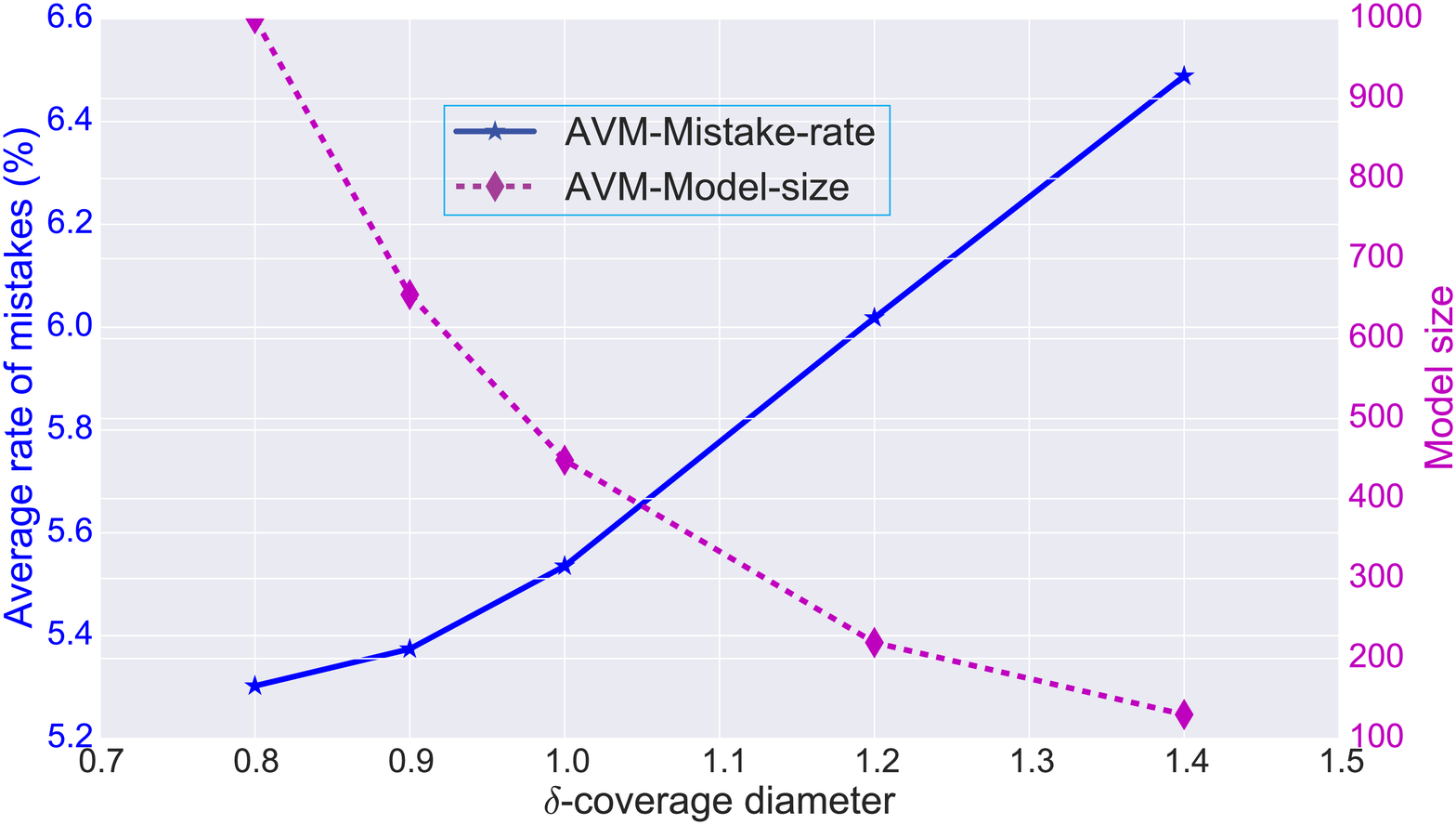}
\par\end{centering}
\noindent \centering{}}\hspace{0.02\textwidth}\subfloat[The effect of $\beta$ and $\rho$ on the classification mistake rate.
$\beta=0$ means always approximating.\label{fig:exp_hyperparam_beta_rho_codrna}]{\noindent \begin{centering}
\includegraphics[width=0.5\textwidth]{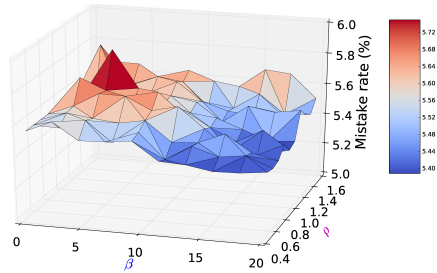}
\par\end{centering}
\noindent \centering{}}
\par\end{centering}
\noindent \centering{}\caption{Performance evaluation of $\protect\model$ with Hinge loss trained
using \emph{cod-rna} dataset with different values of hyperparameters.}
\end{figure}

Fixing the coverage diameters, we vary $\beta$ and $\rho$ in $10$
values monotonically increasing from $0$ to $10$ and from $0.5$
to $1.5$, respectively, to evaluate the classification performance.
The smaller $\beta$ and larger $\rho$ indicate that the machine
approximates the new incoming data more frequently, resulting in less
powerful prediction capability. This can be observed in Fig.~\ref{fig:exp_hyperparam_beta_rho_a9a}
and Fig.~\ref{fig:exp_hyperparam_beta_rho_codrna}, which depict
the average mistake rates in 3D as a function of these values for
dataset \emph{a9a} and \emph{cod-rna}. Here $\beta=0$ means that
the model always performs approximation without respect to the value
of $\rho$. From these visualizations, we found that the $\model$
with always-on approximation mode still can achieve fairly comparable
classification results. Thus we set $\beta=0$ for all following experiments.

\subsection{Batch classification\label{subsec:exp_batch_classification}}

We now examine the performances of $\model$s in classification task
following batch mode. We use eight datasets: \emph{a9a}, \emph{w8a},
\emph{cod-rna}, \emph{KDDCup99}, \emph{ijcnn1}, \emph{covtype}, \emph{poker}
and \emph{airlines} (delayed and non-delayed labels). We create two
versions of our approach: $\model$ with Hinge loss ($\model$-Hinge)
and $\model$ with Logistic loss ($\model$-Logit). It is noteworthy
that the Hinge loss is not a smooth function with undefined gradient
at the point that the classification confidence $yf\left(x\right)=1$.
Following the sub-gradient definition, in our experiment, we compute
the gradient given the condition that $yf\left(x\right)<1$, and set
it to $0$ otherwise.

\paragraph{Baselines.}

For discriminative performance comparison, we recruit the following
state-of-the-art baselines to train kernel SVMs for classification
in batch mode:
\begin{itemize}
\item LIBSVM: one of the most widely-used and state-of-the-art implementations
for batch kernel SVM solver \citep{libsvm}. We use the one-vs-all
approach as the default setting for the multiclass tasks;
\item LLSVM: low-rank linearization SVM algorithm that approximates kernel
SVM optimization by a linear SVM using low-rank decomposition of the
kernel matrix \citep{zhang_etal_aistats12_scaling};
\item BSGD-M: budgeted stochastic gradient descent algorithm which extends
the Pegasos algorithm \citep{shalev2011pegasos} by introducing a
merging strategy for support vector budget maintenance \citep{wang2012};
\item BSGD-R: budgeted stochastic gradient descent algorithm which extends
the Pegasos algorithm \citep{shalev2011pegasos} by introducing a
removal strategy for support vector budget maintenance \citep{wang2012};
\item FOGD: Fourier online gradient descent algorithm that applies the random
Fourier features for approximating kernel functions \citep{Lu_2015large};
\item NOGD: Nystrom online gradient descent (NOGD) algorithm that applies
the Nystrom method to approximate large kernel matrices \citep{Lu_2015large}.
\end{itemize}

\paragraph{Hyperparameters setting.}

There are a number of different hyperparameters for all methods. Each
method requires a different set of hyperparameters, e.g., the regularization
parameters ($C$ in LIBSVM, $\lambda$ in Pegasos and $\model$),
the learning rates ($\eta$ in FOGD and NOGD), the coverage diameter
($\delta$ in $\model$) and the RBF kernel width ($\gamma$ in all
methods). Thus, for a fair comparison, these hyperparameters are specified
using cross-validation on training subset.

Particularly, we further partition the training set into $80\%$ for
learning and $20\%$ for validation. For large-scale databases, we
use only $1\%$ of training set, so that the searching can finish
within an acceptable time budget. The hyperparameters are varied in
certain ranges and selected for the best performance on the validation
set. The ranges are given as follows: $C\in\left\{ 2^{-5},2^{-3},...,2^{15}\right\} $,
$\lambda\in\left\{ \nicefrac{2^{-4}}{N},\nicefrac{2^{-2}}{N},...,\nicefrac{2^{16}}{N}\right\} $,
$\gamma\in\left\{ 2^{-8},2^{-4},2^{-2},2^{0},2^{2},2^{4},2^{8}\right\} $,
$\eta\in\left\{ 16.0,8.0,4.0,2.0,0.2,0.02,0.002,0.0002\right\} $
where $N$ is the number of data points. The coverage diameter $\delta$
of $\model$ is selected following the approach described in Section~\ref{subsec:exp_hyperparam}.
For the budget size $B$ in NOGD and Pegasos algorithm, and the feature
dimension $D$ in FOGD for each dataset, we use identical values to
those used in Section~7.1.1 of \citep{Lu_2015large}.

\paragraph{Results.}

The classification results, training and testing time costs are reported
in Table~\ref{tab:exp_batch_classification}. Overall, the batch
algorithms achieve the highest classification accuracies whilst those
of online algorithms are lower but fairly competitive. The online
learning models, however, are much sparser, resulting in a substantial
speed-up, in which the training time costs and model sizes of $\model$s
are smallest with orders of magnitude lower than those of the standard
batch methods. More specifically, the LIBSVM outperforms the other
approaches in most of datasets, on which its training phase finishes
within the time limit (i.e., two hours), except for the \emph{ijcnn1}
data wherein its testing score is less accurate but very close to
that of BSGD-M. The LLSVM achieves good results which are slightly
lower than those of the state-of-the-art batch kernel algorithm. The
method, however, does not support multiclass classification. These
two batch algorithms \textendash{} LIBSVM and LLSVM could not be trained
within the allowable amount of time on large-scale datasets (e.g.,
\emph{airlines}), thus are not scalable.

Furthermore, six online algorithms in general have significant advantages
against the batch methods in computational efficiency, especially
when running on large-scale datasets. Among these algorithms, the
BSGD-M (Pegasos+merging) obtains the highest classification scores,
but suffers from a high computational cost. This can be seen in almost
all datasets, especially for the airlines dataset on which its learning
exceeds the time limit. The slow training of BSGD-M is caused by the
merging step with computational complexity $\mathcal{O}\left(B^{2}\right)$
($B$ is the budget size). By contrast, the BSGD-R (Pegasos+removal)
runs faster than the merging approach, but suffers from very high
inaccurate results due to its naive budget maintenance strategy, that
simply discards the most redundant support vector which may contain
important information.
\begin{table}[H]
\noindent \begin{centering}
\caption{Classification performance of our $\protect\model$s and the baselines
in batch mode. The notation $\left[\delta\mid S\mid B\mid D\right]$,
next to the dataset name, denotes the diameter $\delta$, the model
size $S$ of $\protect\model$-based models, the budget size $B$
of budgeted algorithms, and the number of random features $D$ of
FOGD, respectively. The accuracy is reported in percent $\left(\%\right)$,
the training time and testing time are in second. The best performance
is in \textbf{bold}. It is noteworthy that the LLSVM does not support
multiclass classification and we terminate all runs exceeding the
limit of two hours, therefore some results are unavailable.\label{tab:exp_batch_classification}}
\par\end{centering}
\noindent \centering{}\resizebox{1.0\textwidth}{!}{
\begin{tabular}{|c|r|r|r|r|r|r|}
\hline 
\textbf{\emph{\cellcolor{header_color}Dataset $\left[\delta\mid S\mid B\mid D\right]$}} & \multicolumn{3}{c|}{\cellcolor{header_color}\textbf{\emph{a9a $\left[7.0\mid135\mid1,000\mid4,000\right]$}}} & \multicolumn{3}{c|}{\cellcolor{header_color}\textbf{\emph{w8a $\left[13.0\mid131\mid1,000\mid4,000\right]$}}}\tabularnewline
\hline 
\cellcolor{header_color}\textbf{\emph{Algorithm}} & Train & Test & Accuracy & Train & Test & Accuracy\tabularnewline
\hline 
LIBSVM & 84.57 & 22.23 & \textbf{84.92} & 50.96 & 2.95 & \textbf{99.06}\tabularnewline
LLSVM & 50.73 & 8.73 & 83.00 & 92.19 & 10.41 & 98.64\tabularnewline
\hline 
BSGD-M & 232.59 & 2.88 & 84.76$\pm$0.16 & 264.70 & 5.16 & 98.17$\pm$0.07\tabularnewline
BSGD-R & 90.48 & 2.72 & 80.26$\pm$3.38 & 253.30 & 4.98 & 97.10$\pm$0.04\tabularnewline
FOGD & 15.99 & 2.87 & 81.15$\pm$5.05 & 32.16 & 3.55 & 97.92$\pm$0.38\tabularnewline
NOGD & 82.40 & 0.60 & 82.33$\pm$2.18 & 374.87 & 0.65 & 98.06$\pm$0.18\tabularnewline
\hline 
$\model$-Hinge & \textbf{4.96} & \textbf{0.25} & 83.55$\pm$0.50 & \textbf{11.84} & \textbf{0.52} & 96.87$\pm$0.28\tabularnewline
$\model$-Logit & 5.35 & \textbf{0.25} & 83.83$\pm$0.34 & 12.54 & \textbf{0.52} & 96.96$\pm$0.00\tabularnewline
\hline 
\hline 
\textbf{\emph{\cellcolor{header_color}Dataset $\left[\delta\mid S\mid B\mid D\right]$}} & \multicolumn{3}{c|}{\cellcolor{header_color}\textbf{\emph{cod-rna $\left[1.0\mid436\mid400\mid1,600\right]$}}} & \multicolumn{3}{c|}{\cellcolor{header_color}\textbf{\emph{ijcnn1 $\left[1.0\mid500\mid1,000\mid4,000\right]$}}}\tabularnewline
\hline 
\cellcolor{header_color}\textbf{\emph{Algorithm}} & Train & Test & Accuracy & Train & Test & Accuracy\tabularnewline
\hline 
LIBSVM & 114.90 & 85.34 & \textbf{96.39} & 38.63 & 11.17 & 97.35\tabularnewline
LLSVM & 20.17 & 19.38 & 94.16 & 40.62 & 54.22 & 96.99\tabularnewline
\hline 
BSGD-M & 90.62 & 5.66 & 95.67$\pm$0.21 & 93.05 & 6.13 & \textbf{97.69$\pm$0.11}\tabularnewline
BSGD-R & 19.31 & 5.48 & 66.83$\pm$0.11 & 41.70 & 7.07 & 90.90$\pm$0.18\tabularnewline
FOGD & 7.62 & 11.95 & 92.65$\pm$4.20 & 7.31 & 10.10 & 90.64$\pm$0.07\tabularnewline
NOGD & 9.81 & 3.24 & 91.83$\pm$3.35 & 21.58 & 3.68 & 90.43$\pm$1.22\tabularnewline
\hline 
$\model$-Hinge & \textbf{6.52} & \textbf{2.69} & 94.38$\pm$1.16 & \textbf{6.47} & 2.71 & 91.14$\pm$0.71\tabularnewline
$\model$-Logit & 7.03 & 2.86 & 93.10$\pm$2.11 & 6.86 & \textbf{2.67} & 91.19$\pm$0.95\tabularnewline
\hline 
\hline 
\textbf{\emph{\cellcolor{header_color}Dataset $\left[\delta\mid S\mid B\mid D\right]$}} & \multicolumn{3}{c|}{\emph{\cellcolor{header_color}}\textbf{\emph{covtype $\left[3.0\mid59\mid400\mid1,600\right]$}}} & \multicolumn{3}{c|}{\cellcolor{header_color}\textbf{\emph{poker $\left[12.0\mid393\mid1,000\mid4,000\right]$}}}\tabularnewline
\hline 
\textbf{\emph{\cellcolor{header_color}Algorithm}} & Train & Test & Accuracy & Train & Test & Accuracy\tabularnewline
\hline 
LIBSVM & \textendash{} & \textendash{} & \textendash{} & 40.03 & 932.58 & \textbf{57.91}\tabularnewline
LLSVM & \textendash{} & \textendash{} & \textendash{} & \textendash{} & \textendash{} & \textendash{}\tabularnewline
\hline 
BSGD-M & 2,413.15 & 3.75 & \textbf{72.26$\pm$0.16} & 414.09 & 123.57 & 54.10$\pm$0.22\tabularnewline
BSGD-R & 418.68 & 3.02 & 61.09$\pm$1.69 & 35.76 & 102.84 & 52.14$\pm$1.05\tabularnewline
FOGD & 69.94 & 2.45 & 59.34$\pm$5.85 & 9.61 & 101.29 & 46.62$\pm$5.00\tabularnewline
NOGD & 679.50 & 0.76 & 68.20$\pm$2.96 & 118.54 & 36.84 & 54.65$\pm$0.27\tabularnewline
\hline 
$\model$-Hinge & \textbf{60.27} & 0.26 & 64.31$\pm$0.37 & 3.86 & 8.21 & 55.49$\pm$0.13\tabularnewline
$\model$-Logit & 61.92 & \textbf{0.22} & 64.42$\pm$0.34 & \textbf{3.36} & \textbf{7.54} & 55.60$\pm$0.17\tabularnewline
\hline 
\hline 
\textbf{\emph{\cellcolor{header_color}Dataset $\left[\delta\mid S\mid B\mid D\right]$}} & \multicolumn{3}{c|}{\cellcolor{header_color}\textbf{\emph{KDDCup99 $\left[3.0\mid115\mid200\mid400\right]$}}} & \multicolumn{3}{c|}{\cellcolor{header_color}\textbf{\emph{airlines $\left[1.0\mid388\mid1,000\mid4,000\right]$}}}\tabularnewline
\hline 
\textbf{\emph{\cellcolor{header_color}Algorithm}} & Train & Test & Accuracy & Train & Test & Accuracy\tabularnewline
\hline 
LIBSVM & 4,380.58 & 661.04 & \textbf{99.91} & \textendash{} & \textendash{} & \textendash{}\tabularnewline
LLSVM & \textendash{} & \textendash{} & \textendash{} & \textendash{} & \textendash{} & \textendash{}\tabularnewline
\hline 
BSGD-M & 2,680.58 & 21.25 & 99.73$\pm$0.00 & \textendash{} & \textendash{} & \textendash{}\tabularnewline
BSGD-R & 1,644.25 & 14.33 & 39.81$\pm$2.26 & 4,741.68 & 29.98 & 80.27$\pm$0.06\tabularnewline
FOGD & 706.20 & 22.73 & 99.75$\pm$0.11 & 1,085.73 & 861.52 & 80.37$\pm$0.21\tabularnewline
NOGD & 3,726.21 & 3.11 & 99.80$\pm$0.02 & 3,112.08 & 18.53 & 74.83$\pm$0.20\tabularnewline
\hline 
$\model$-Hinge & \textbf{554.42} & \textbf{2.75} & 99.82$\pm$0.05 & \textbf{586.90} & 6.55 & \textbf{80.72$\pm$0.00}\tabularnewline
$\model$-Logit & 576.76 & 2.80 & 99.72$\pm$0.06 & 642.23 & \textbf{6.10} & \textbf{80.72$\pm$0.00}\tabularnewline
\hline 
\end{tabular}}
\end{table}

In terms of predictive performance, our proposed methods outperform
the recent advanced online learning algorithms \textendash{} FOGD
and NOGD in most scenarios. The $\model$-based models are able to
achieve slightly less accurate but fairly comparable results compared
with those of the state-of-the-art LIBSVM algorithm. In terms of sparsity
and speed, the $\model$s are the fastest ones in the training and
testing phases in all cases thanks to their remarkable smaller model
sizes. The difference between the training speed of our $\model$s
and that of two approaches varies across datasets. The gap is more
significant for datasets with higher dimensional feature spaces. This
is expected because the procedure to compute random features for each
data point of FOGD involves \emph{sin} and \emph{cos} operators which
are costly. These facts indicate that our proposed online kernel learning
algorithms are both efficient and effective in solving large-scale
kernel classification problems. Thus we believe that the $\model$
is the fast alternative to the existing SVM solvers for large-scale
classification tasks.

Finally, comparing two versions of $\model$s, it can be seen that
the discriminative performances of $\model$ with Logistic loss are
better than those of $\model$ with Hinge loss in most of datasets.
This is because the Logistic function is smoother than the Hinge function,
whilst the Hinge loss encourages sparsity of the model. The $\model$-Logit,
however, contains additional exponential operators, resulting in worse
training time.

\subsection{Online classification\label{subsec:exp_online_classification}}

The next experiment investigates the performance of the $\model$s
in online classification task where individual data point continuously
come turn-by-turn in a stream. Here we also use eight datasets and
two versions of our approach: $\model$ with Hinge loss ($\model$-Hinge)
and $\model$ with Logistic loss ($\model$-Logit) which are used
in batch classification setting (cf. Section~\ref{subsec:exp_batch_classification}). 

\paragraph{Baselines.}

We recruit the two widely-used algorithms \textendash{} Perceptron
and OGD for regular online kernel classification without budget maintenance
and 8 state-of-the-art budget online kernel learning methods as follows:
\begin{itemize}
\item Perceptron: the kernelized variant without budget of Perceptron algorithm
\citep{Freund1999:LMC:337859.337869};
\item OGD: the kernelized variant without budget of online gradient descent
\citep{Kivinen2004}.
\item RBP: a budgeted Perceptron algorithm using random support vector removal
strategy \citep{CavallantiCG07};
\item Forgetron: a kernel-based Perceptron maintaining a fixed budget by
discarding oldest support vectors \citep{DekelSS05};
\item Projectron: a Projectron algorithm using the projection strategy \citep{Orabona2009};
\item Projectron++: the aggressive version of Projectron algorithm \citep{Orabona2009};
\item BPAS: a budgeted variant of Passive-Aggressive algorithm with simple
SV removal strategy \citep{Wang2010};
\item BOGD: a budgeted variant of online gradient descent algorithm using
simple SV removal strategy \citep{Zhao2012};
\item FOGD and NOGD: described in Section~\ref{subsec:exp_batch_classification}.
\end{itemize}

\paragraph{Hyperparameters setting.}

For each method learning on each dataset, we follow the same hyperparameter
setting which is optimized in the batch classification task. For time
efficiency, we only include the fast algorithms FOGD, NOGD and $\model$s
for the experiments on large-scale datasets. The other methods would
exceed the time limit when running on such data.

\paragraph{Results.}

Fig.~\ref{fig:exp_online_classification_mistake_rate} and Fig.~\ref{fig:exp_online_classification_time_cost}
shows the relative performance convergence w.r.t classification error
and computation cost of the $\model$s in comparison with those of
the baselines. Combining these two figures, we compare the average
mistake rate and running time in Fig.~\ref{fig:exp_online_classification_mistake_vs_time}.
Table~\ref{tab:exp_online_classification} reports the final average
results in detailed numbers after the methods see all data samples.
It is worthy to note that for the four biggest datasets (\emph{KDDCup99},
\emph{covtype}, \emph{poker}, \emph{airlines}) that consist of millions
data points, we exclude the non-budgeted online learning algorithm
because of their substantially expensive time costs. From these results,
we can draw some observations as follows.

First of all, as can be seen from Fig.~\ref{fig:exp_online_classification_mistake_rate},
there are three groups of algorithms that have different learning
progresses in terms of classification mistake rate. The first group
includes the BOGD, Projectron and Forgetron that have the error rates
fluctuating at the beginning, but then being stable till the end.
In the meantime, the rates of the models in the second group, including
Perceptron, OGD, RBP, Projectron++ and BPAS, quickly saturate at a
plateau after these methods see a few portions, i.e., one-tenth to
two-tenth, of the data. By contrast, the last group includes the recent
online learning approaches \textendash{} FOGD, NOGD, and our proposed
ones \textendash{} $\model$-Hinge, $\model$-Logit, that regularly
perform better as more data points come. Exceptionally, for the dataset
\emph{w8a}, the classification errors of the methods in the first
group keep increasing after seeing four-tenth of the data, whilst
those of the last group are unexpectedly worse.

Second, Fig.~\ref{fig:exp_online_classification_mistake_vs_time}
plots average mistake rate against computational cost, which shows
similar patterns as in the our first observation. In addition, it
can be seen from Fig.~\ref{fig:exp_online_classification_time_cost}
that all algorithms have normal learning pace in which the execution
time is accumulated over the learning procedure. Only the Projectron++
is slow at the beginning but then performs faster after receiving
more data.

According to final results summarized in Table~\ref{tab:exp_online_classification},
the budgeted online approaches show efficacies with substantially
faster computation than the ones without budgets. This is more obvious
for larger datasets wherein the execution time costs of our proposed
models are several orders of magnitude lower than those of regular
online algorithms. This is because the coverage scheme of $\model$s
impressively boost their model sparsities, e.g., using $\delta=3$
resulting in $115$ core points for dataset \emph{KDDCup99} consisting
of $4,408,589$ instances, and using $\delta=1$ resulting in $388$
core points for dataset \emph{airlines} containing $5,336,471$ data
samples.

\begin{figure}[H]
\noindent \begin{centering}
\includegraphics[width=0.8\textwidth]{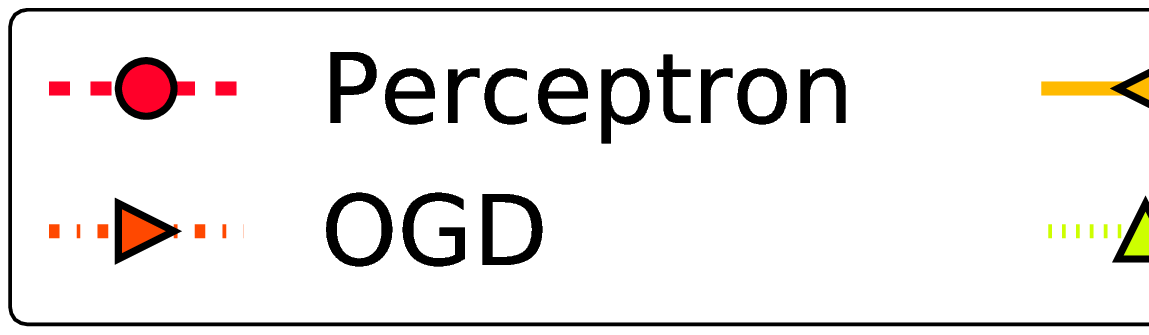}
\par\end{centering}
\noindent \begin{centering}
\subfloat[a9a]{\noindent \centering{}\includegraphics[width=0.45\textwidth]{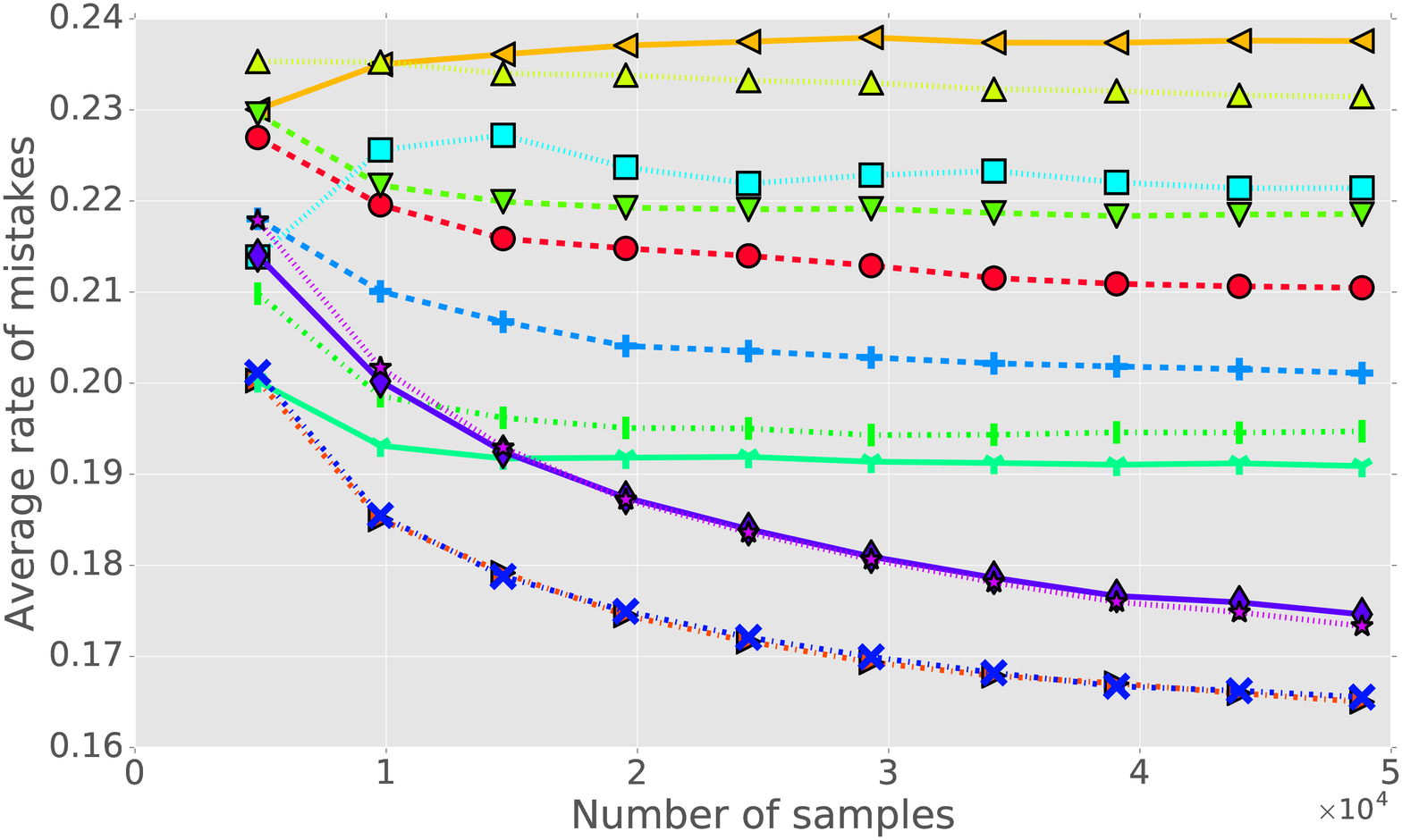}}\hspace{0.04\textwidth}\subfloat[w8a]{\noindent \centering{}\includegraphics[width=0.45\textwidth]{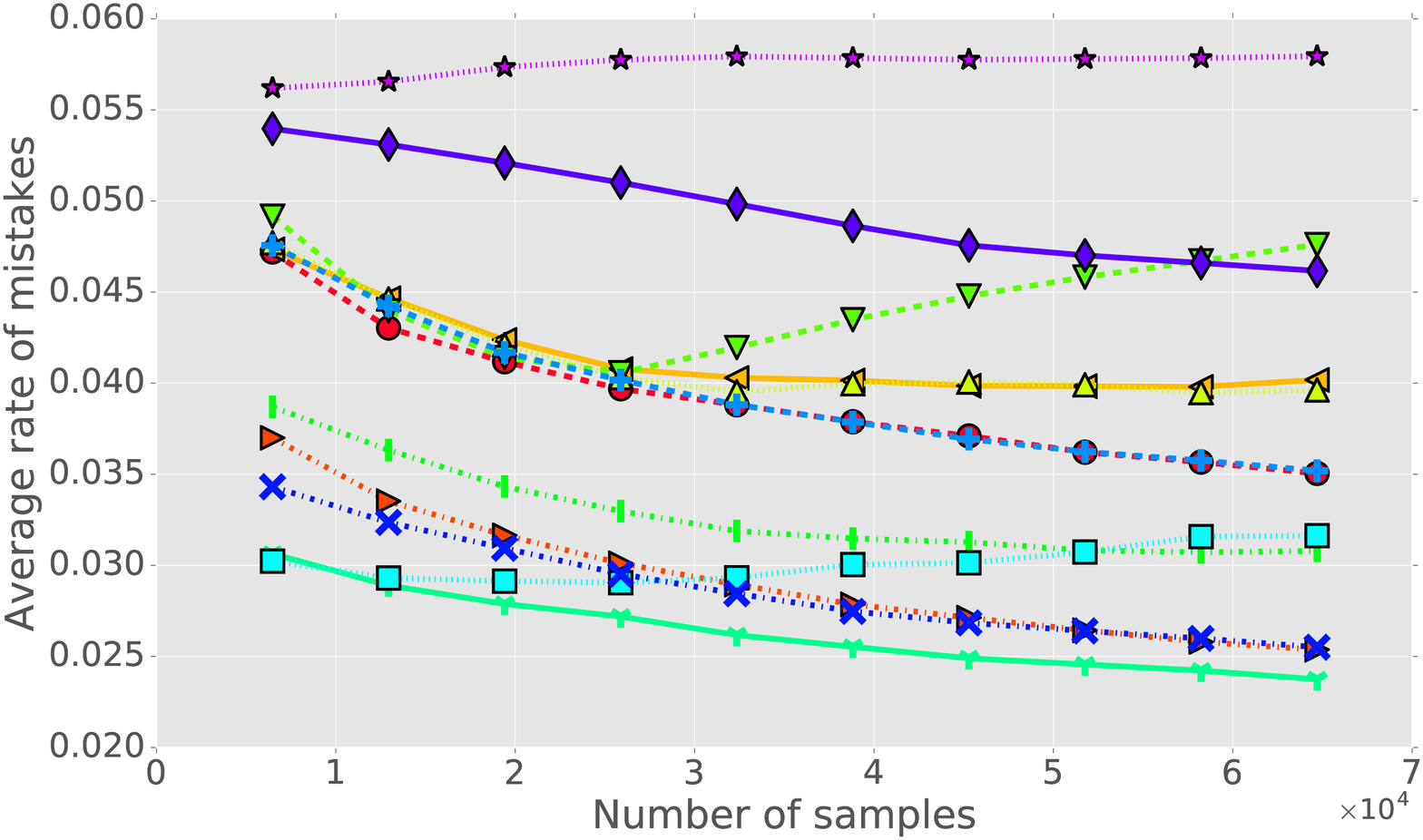}}\vspace{-2mm}
\par\end{centering}
\noindent \begin{centering}
\subfloat[cod-rna]{\noindent \centering{}\includegraphics[width=0.45\textwidth]{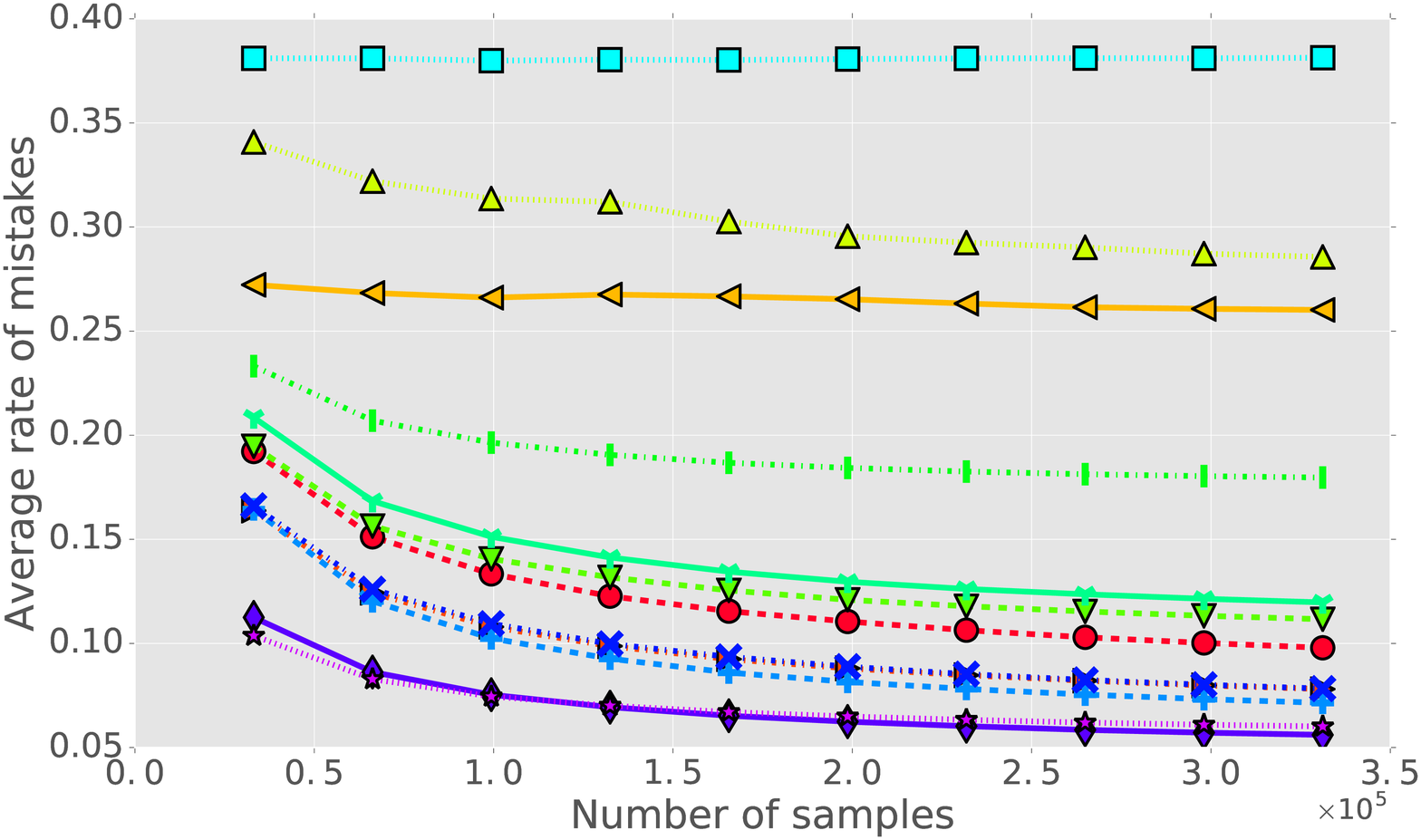}}\hspace{0.04\textwidth}\subfloat[ijcnn1]{\noindent \centering{}\includegraphics[width=0.45\textwidth]{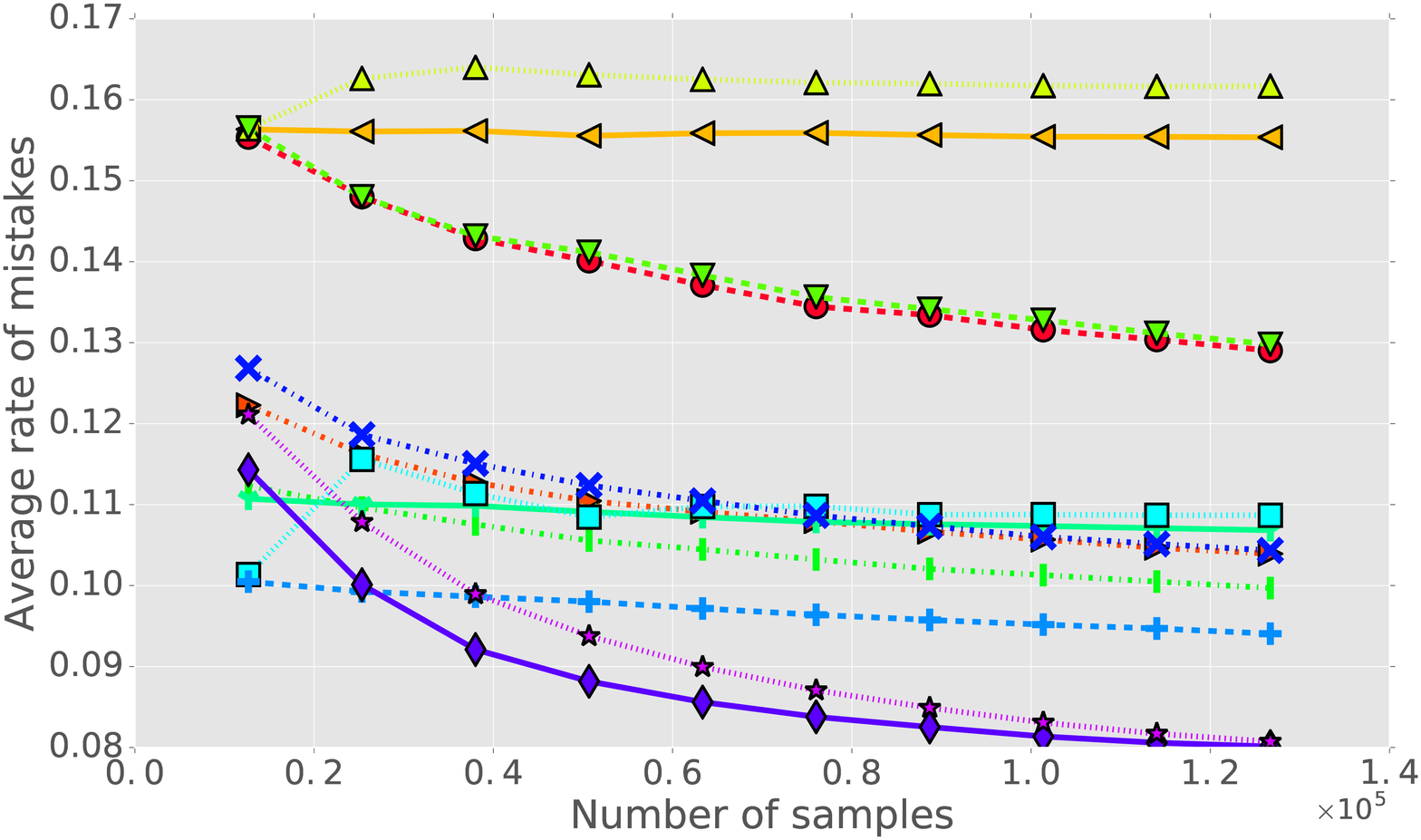}}\vspace{-2mm}
\par\end{centering}
\noindent \begin{centering}
\subfloat[KDDCup99]{\noindent \centering{}\includegraphics[width=0.45\textwidth]{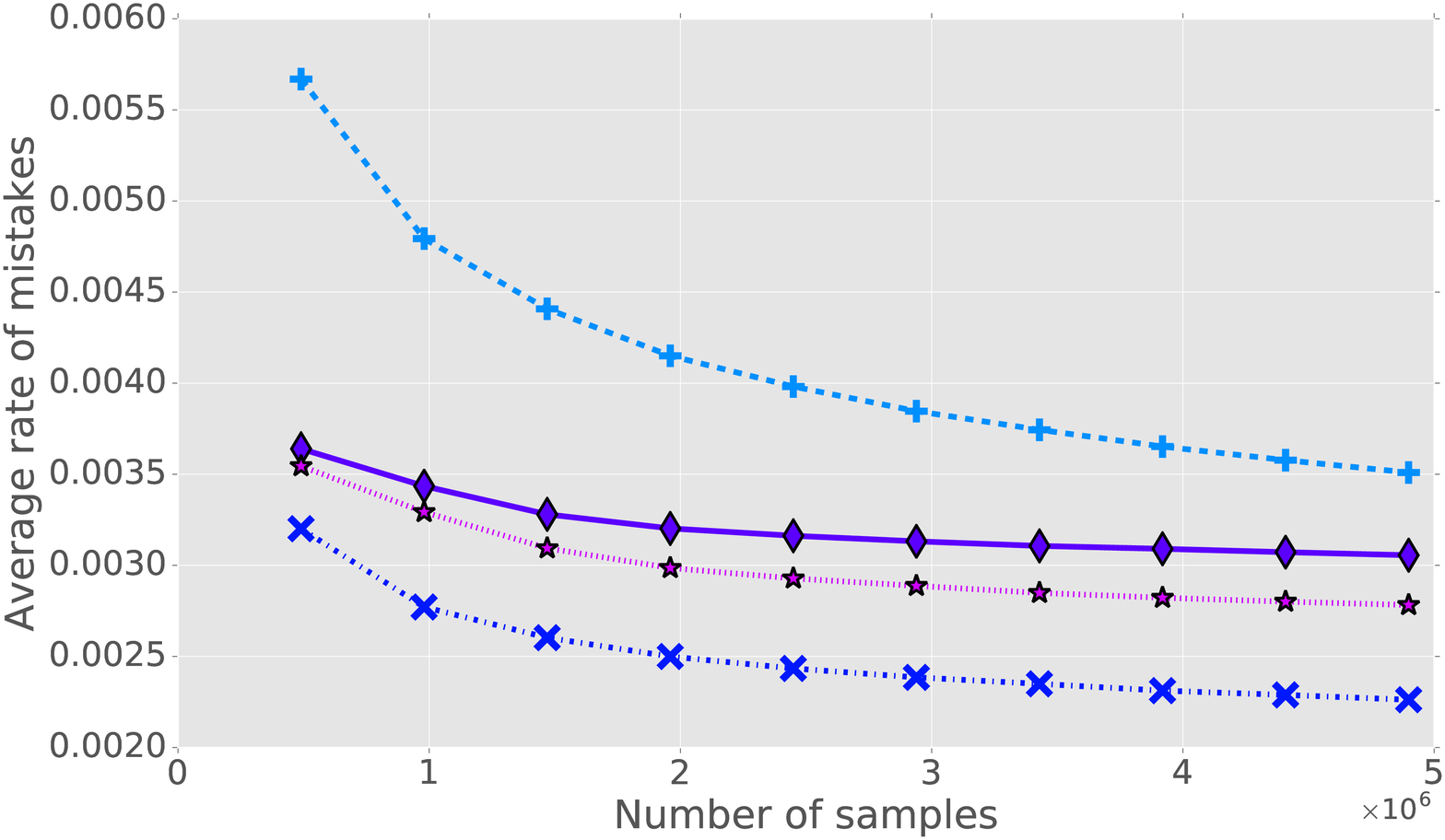}}\hspace{0.04\textwidth}\subfloat[covtype]{\noindent \centering{}\includegraphics[width=0.45\textwidth]{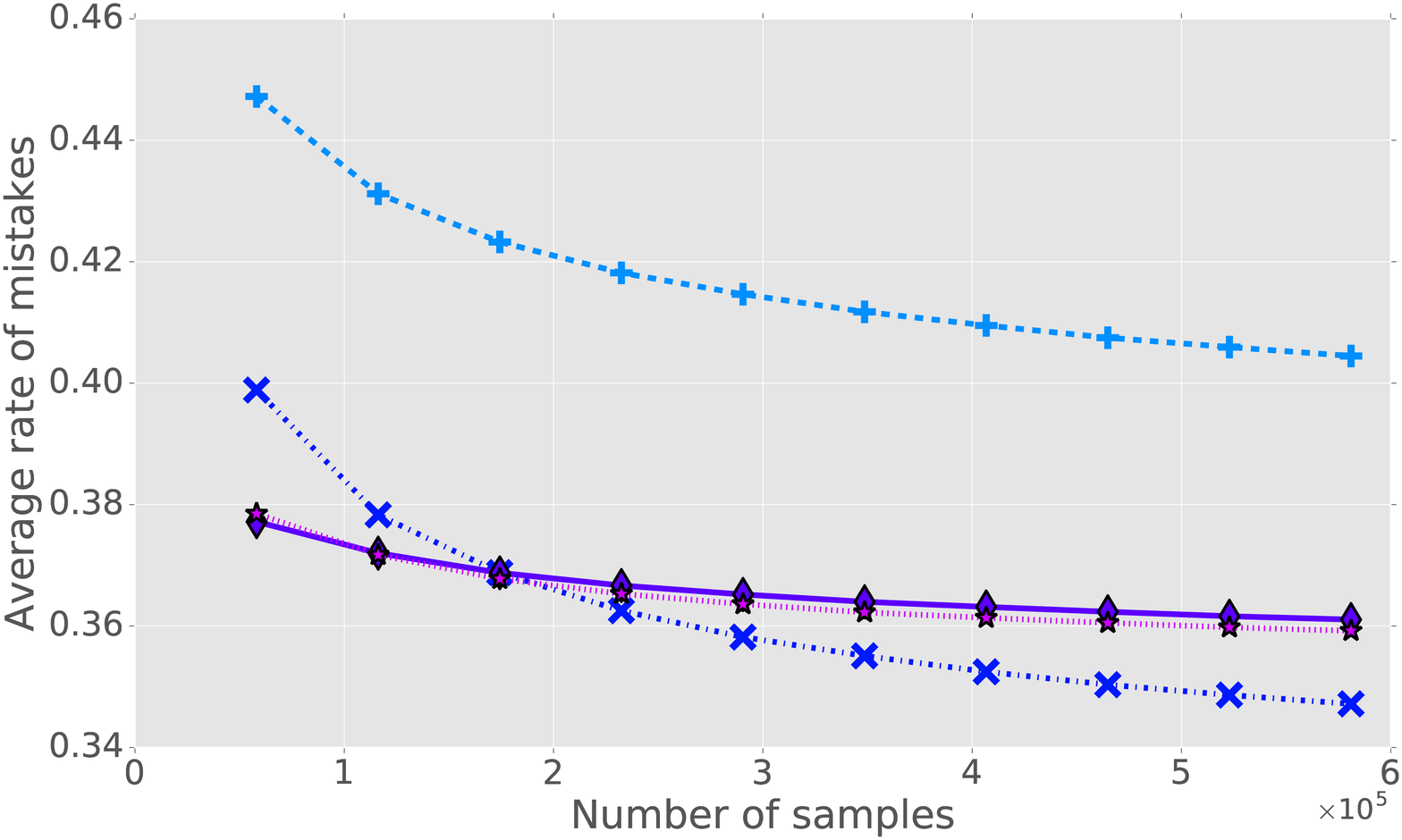}}\vspace{-2mm}
\par\end{centering}
\noindent \begin{centering}
\subfloat[poker]{\noindent \centering{}\includegraphics[width=0.45\textwidth]{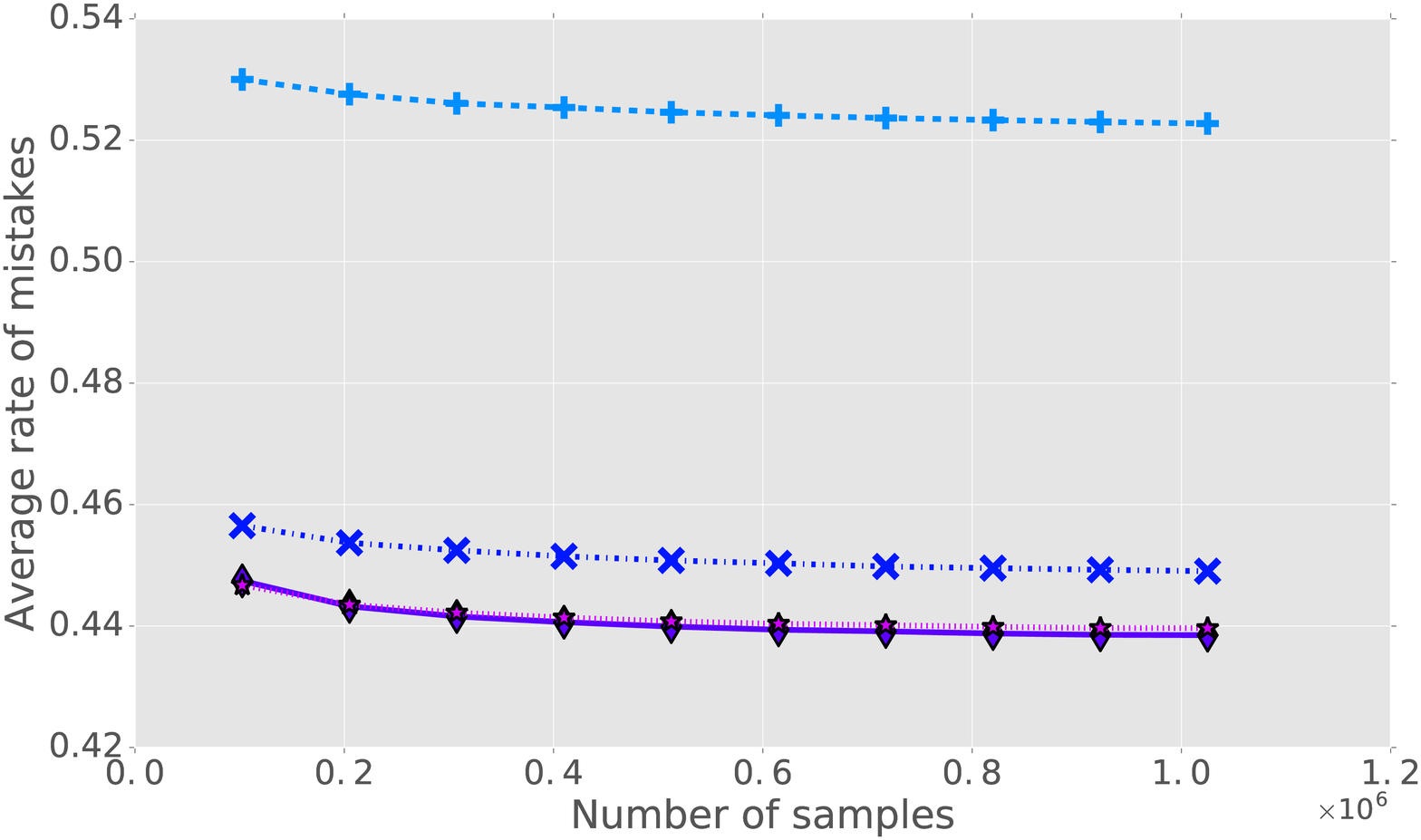}}\hspace{0.04\textwidth}\subfloat[airlines]{\noindent \centering{}\includegraphics[width=0.45\textwidth]{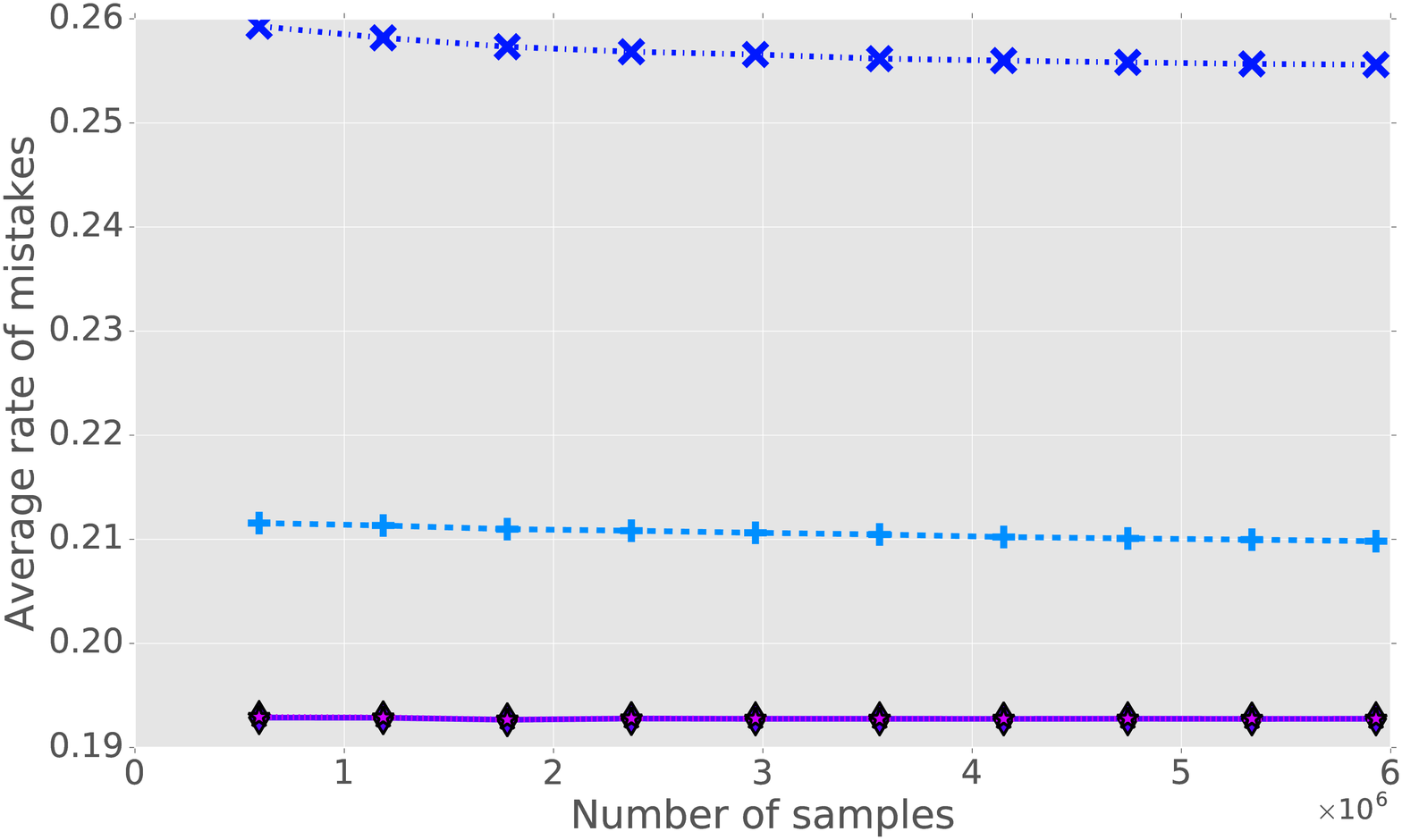}}
\par\end{centering}
\noindent \centering{}\caption{Convergence evaluation of online classification tasks: the average
rate of mistakes as a function of the number of samples seen by the
models. (Best viewed in colors).\label{fig:exp_online_classification_mistake_rate}}
\end{figure}
\begin{figure}[H]
\noindent \begin{centering}
\includegraphics[width=0.8\textwidth]{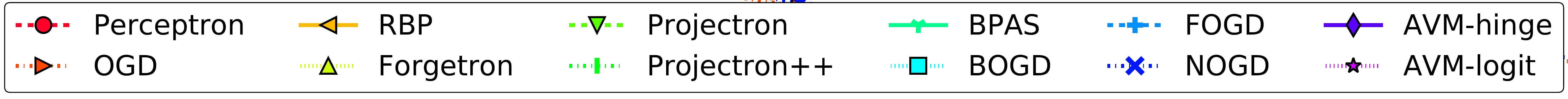}
\par\end{centering}
\noindent \begin{centering}
\subfloat[a9a]{\noindent \centering{}\includegraphics[width=0.45\textwidth]{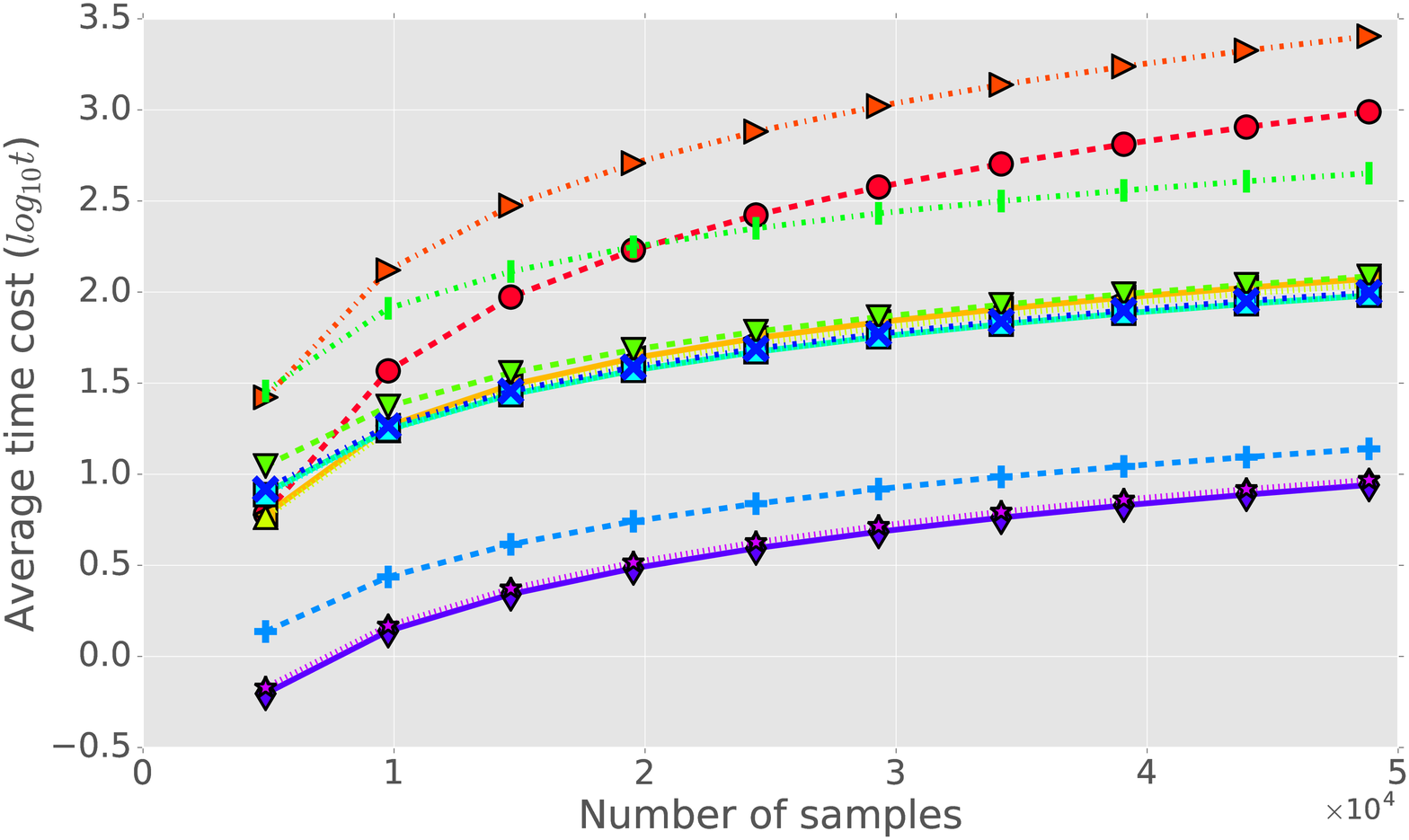}}\hspace{0.04\textwidth}\subfloat[w8a]{\noindent \centering{}\includegraphics[width=0.45\textwidth]{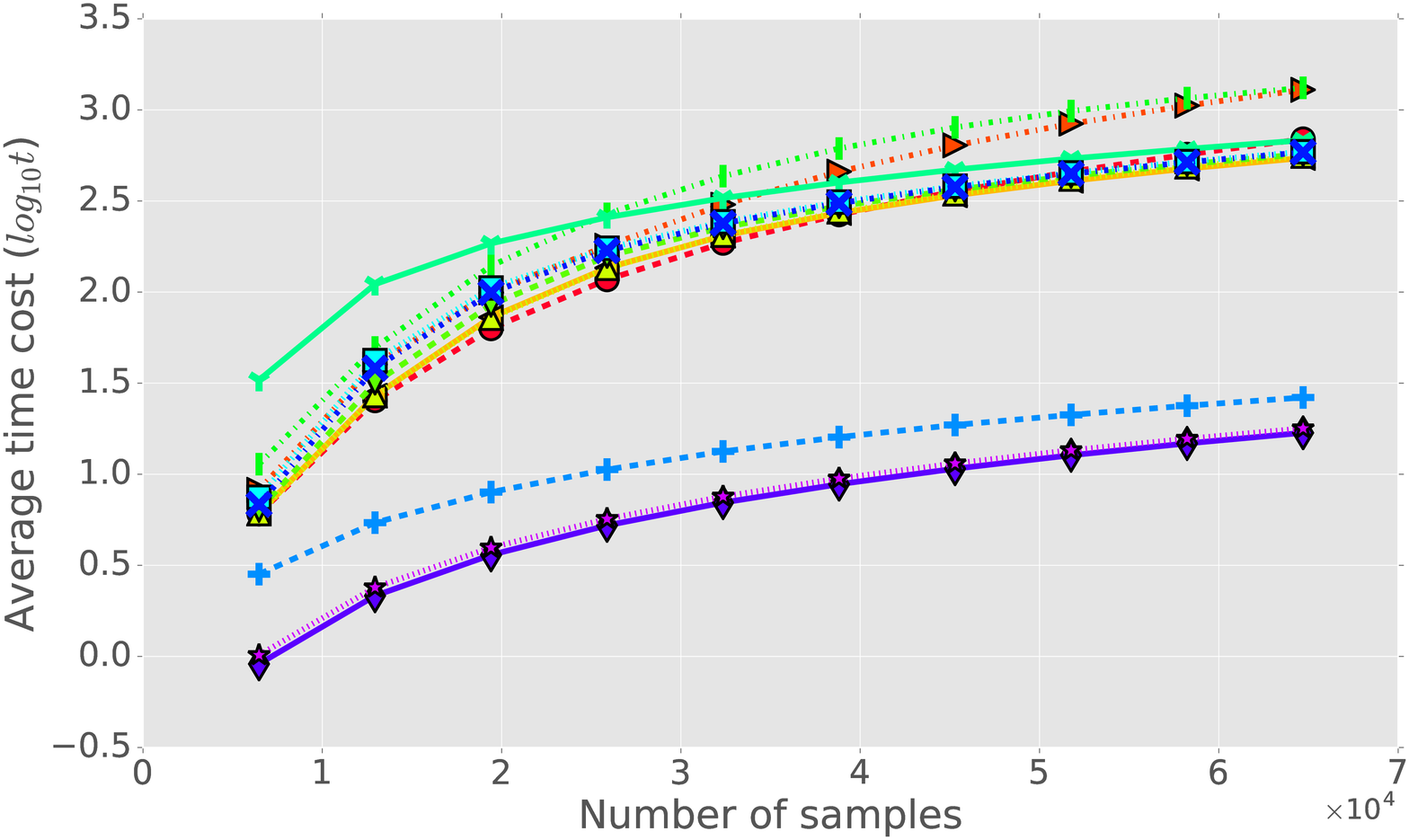}}\vspace{-2mm}
\par\end{centering}
\noindent \begin{centering}
\subfloat[cod-rna]{\noindent \centering{}\includegraphics[width=0.45\textwidth]{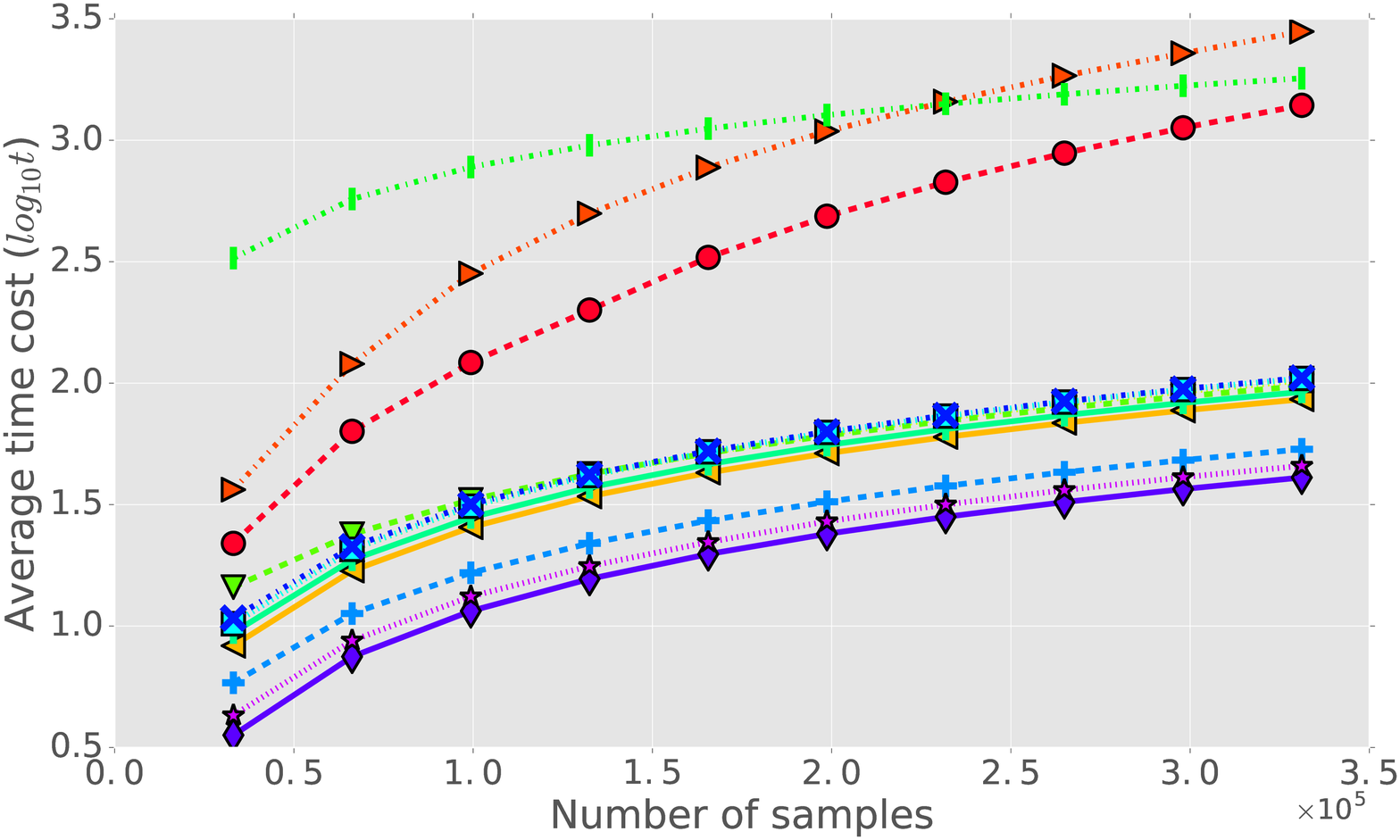}}\hspace{0.04\textwidth}\subfloat[ijcnn1]{\noindent \centering{}\includegraphics[width=0.45\textwidth]{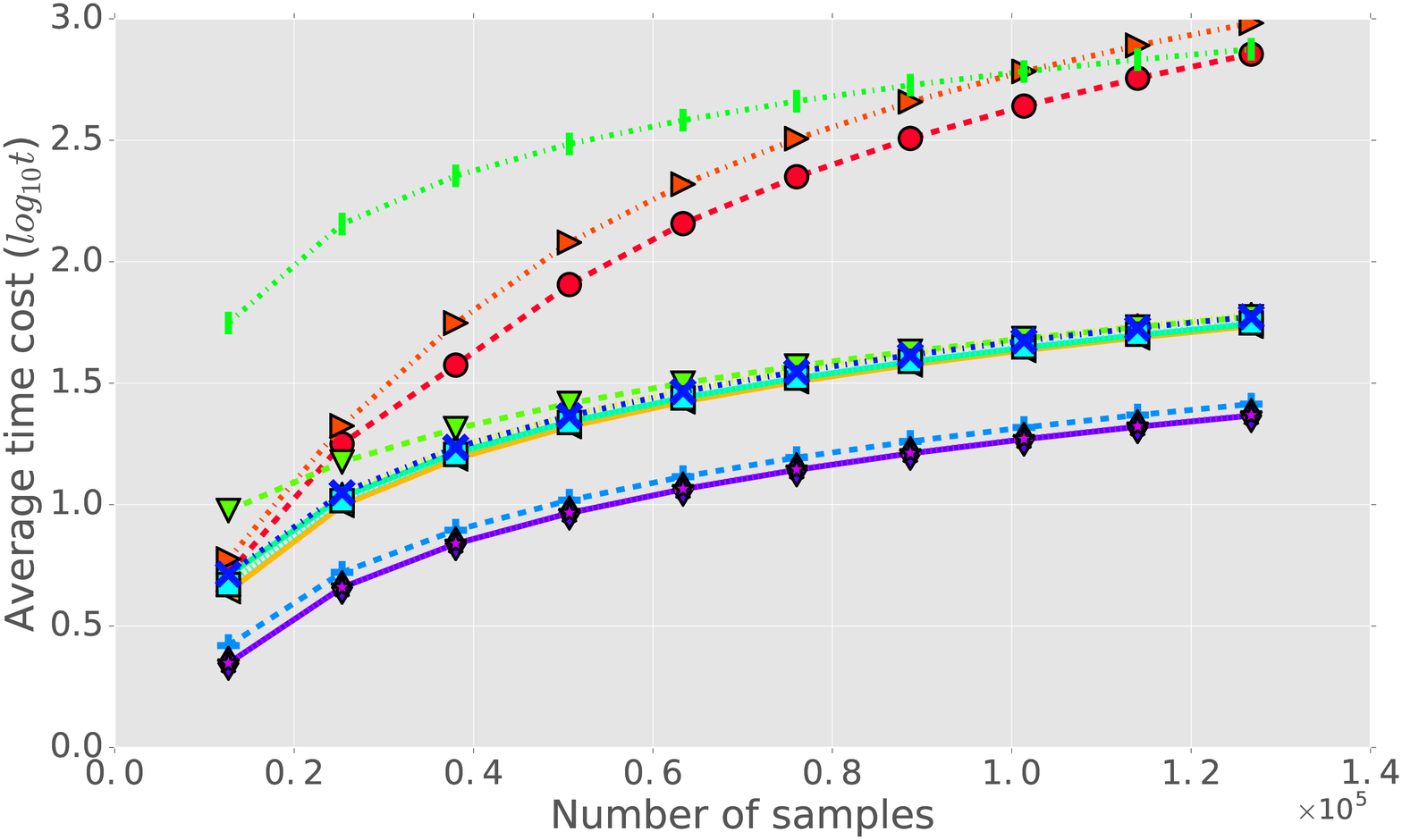}}\vspace{-2mm}
\par\end{centering}
\noindent \begin{centering}
\subfloat[KDDCup99]{\noindent \centering{}\includegraphics[width=0.45\textwidth]{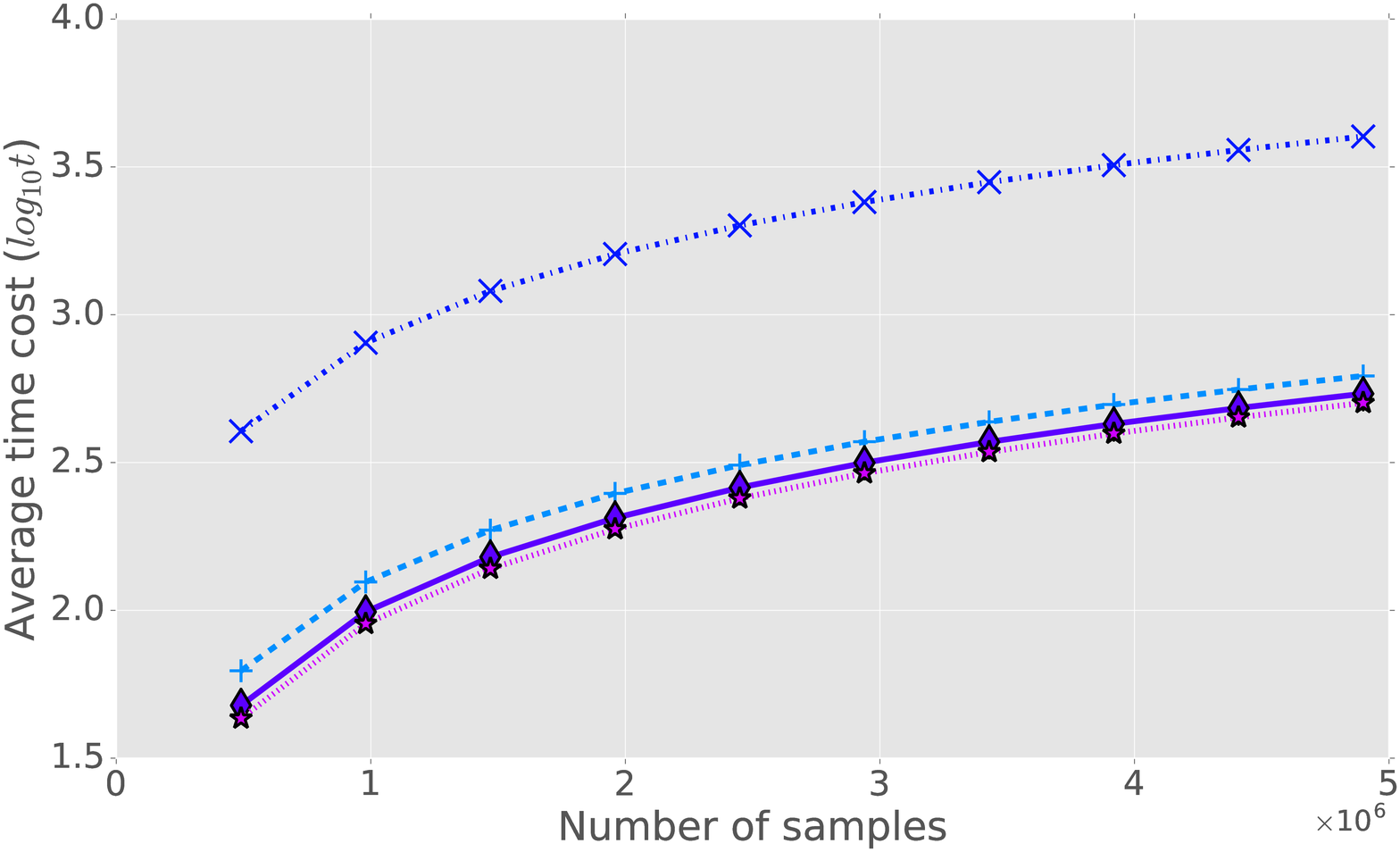}}\hspace{0.04\textwidth}\subfloat[covtype]{\noindent \centering{}\includegraphics[width=0.45\textwidth]{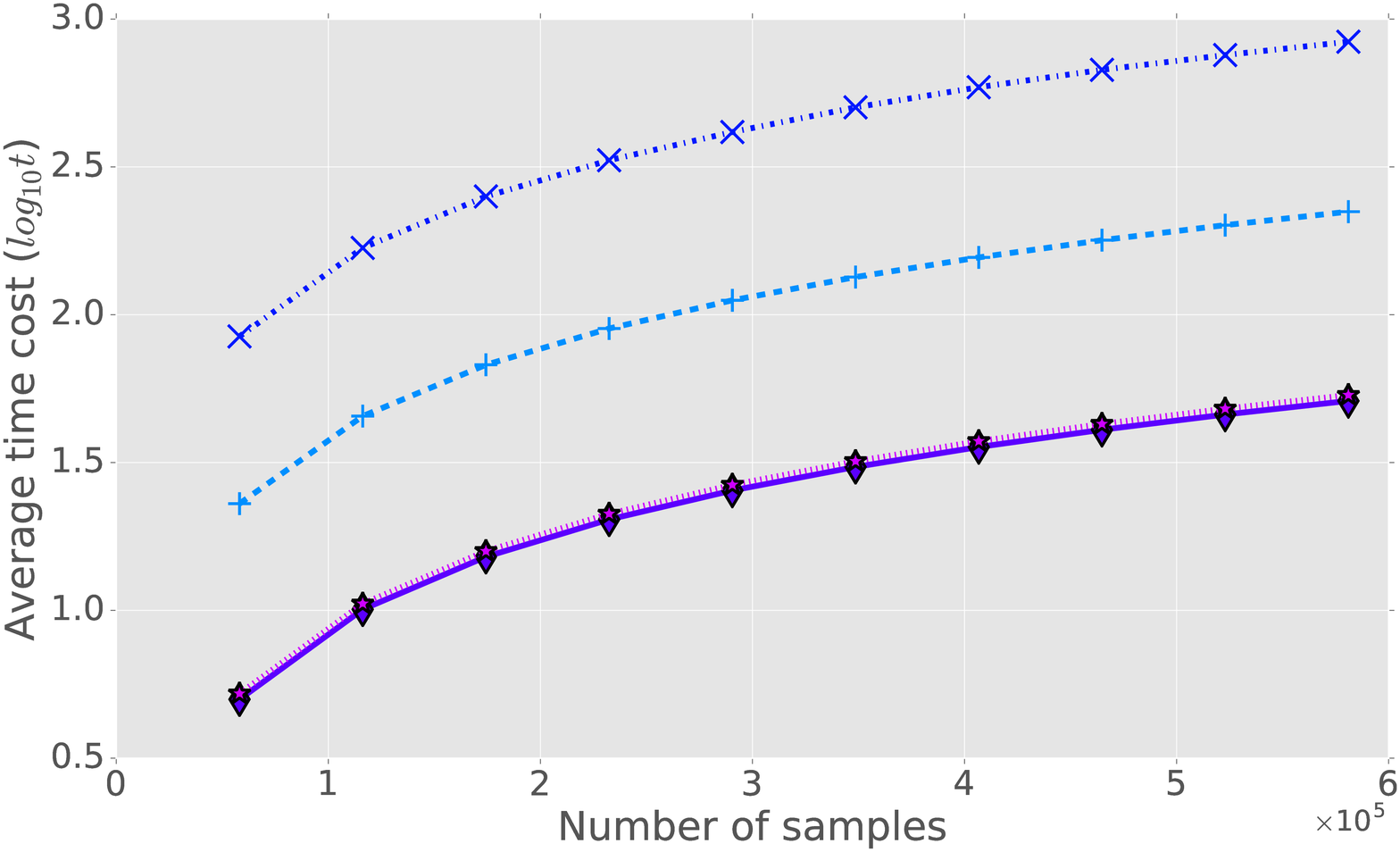}}\vspace{-2mm}
\par\end{centering}
\noindent \begin{centering}
\subfloat[poker]{\noindent \centering{}\includegraphics[width=0.45\textwidth]{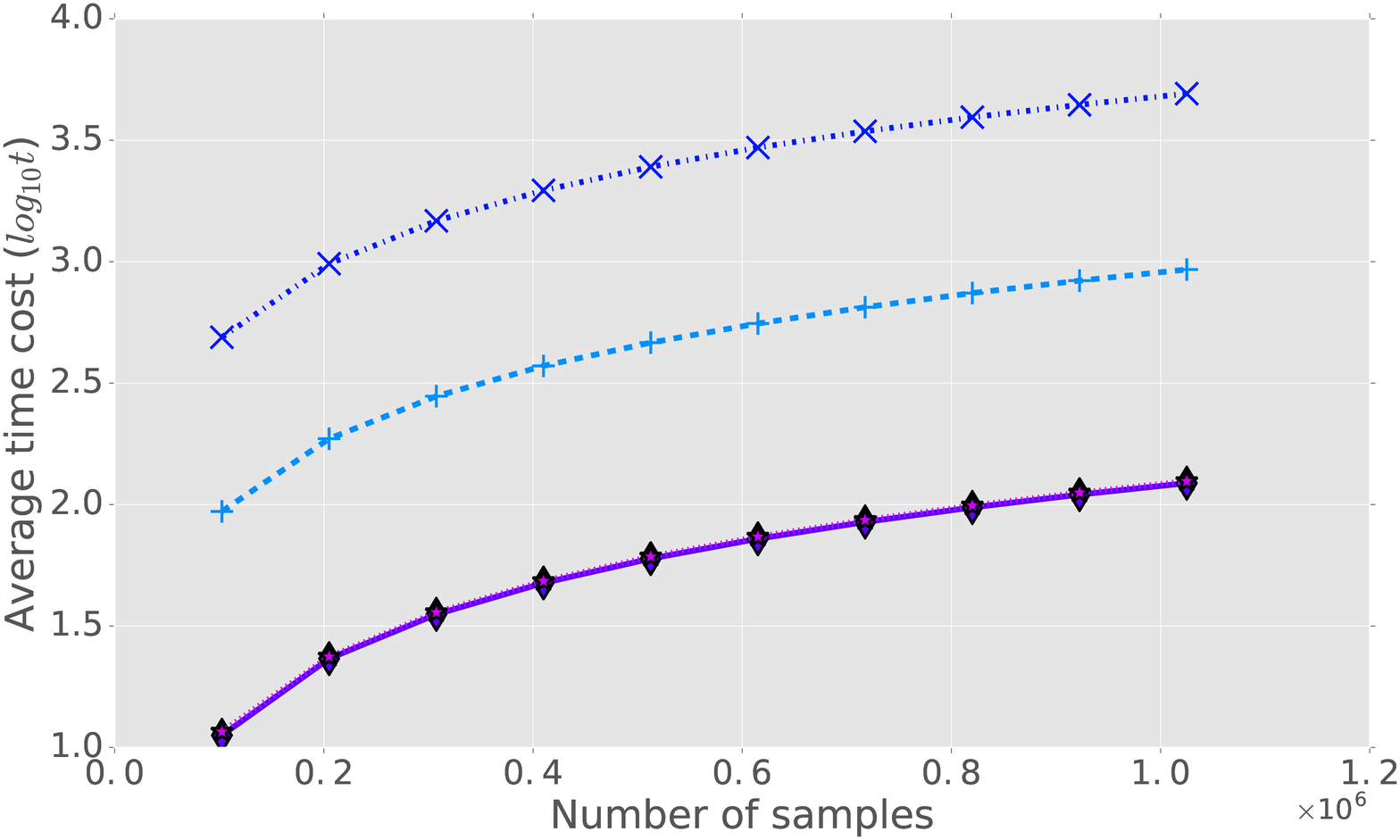}}\hspace{0.04\textwidth}\subfloat[airlines]{\noindent \centering{}\includegraphics[width=0.45\textwidth]{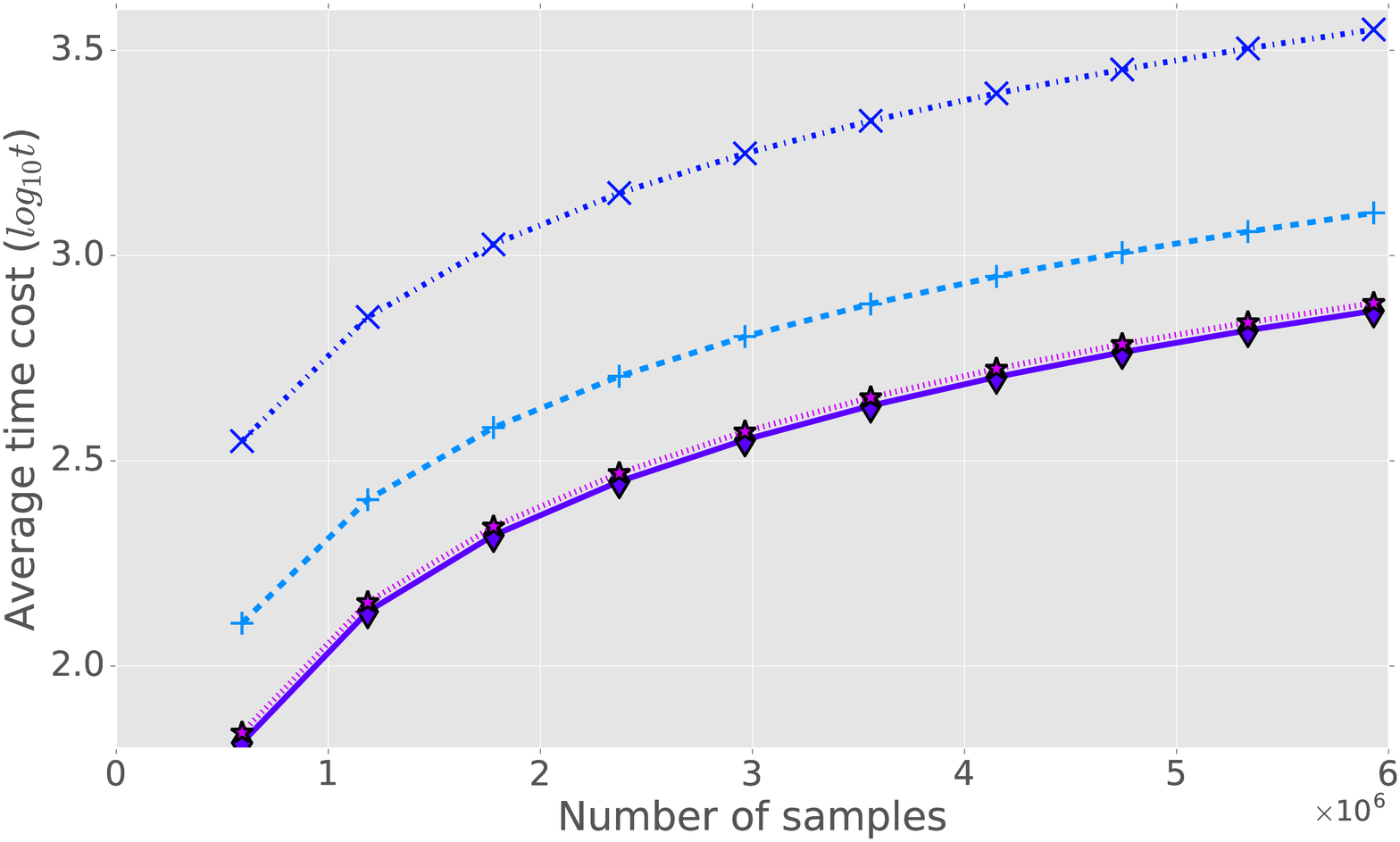}}
\par\end{centering}
\noindent \centering{}\caption{Convergence evaluation of online classification task: the average
time costs (seconds shown in the logarithm with base 10) as a function
of the number of samples seen by the models. (Best viewed in colors).\label{fig:exp_online_classification_time_cost}}
\end{figure}

\noindent 
\begin{figure}[H]
\noindent \begin{centering}
\includegraphics[width=0.8\textwidth]{figs/legend}
\par\end{centering}
\noindent \begin{centering}
\subfloat[a9a]{\noindent \centering{}\includegraphics[width=0.45\textwidth]{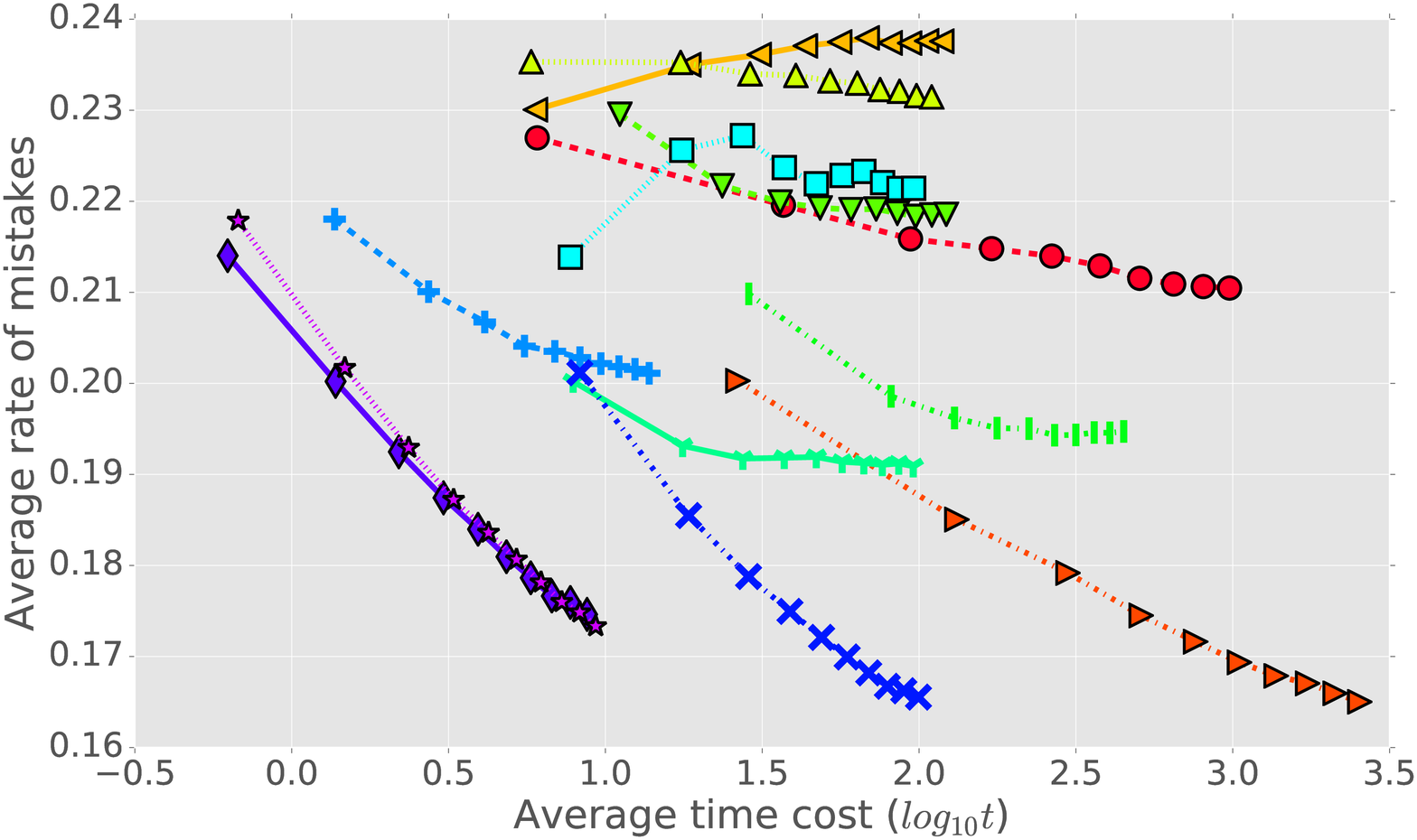}}\hspace{0.04\textwidth}\subfloat[w8a]{\noindent \centering{}\includegraphics[width=0.45\textwidth]{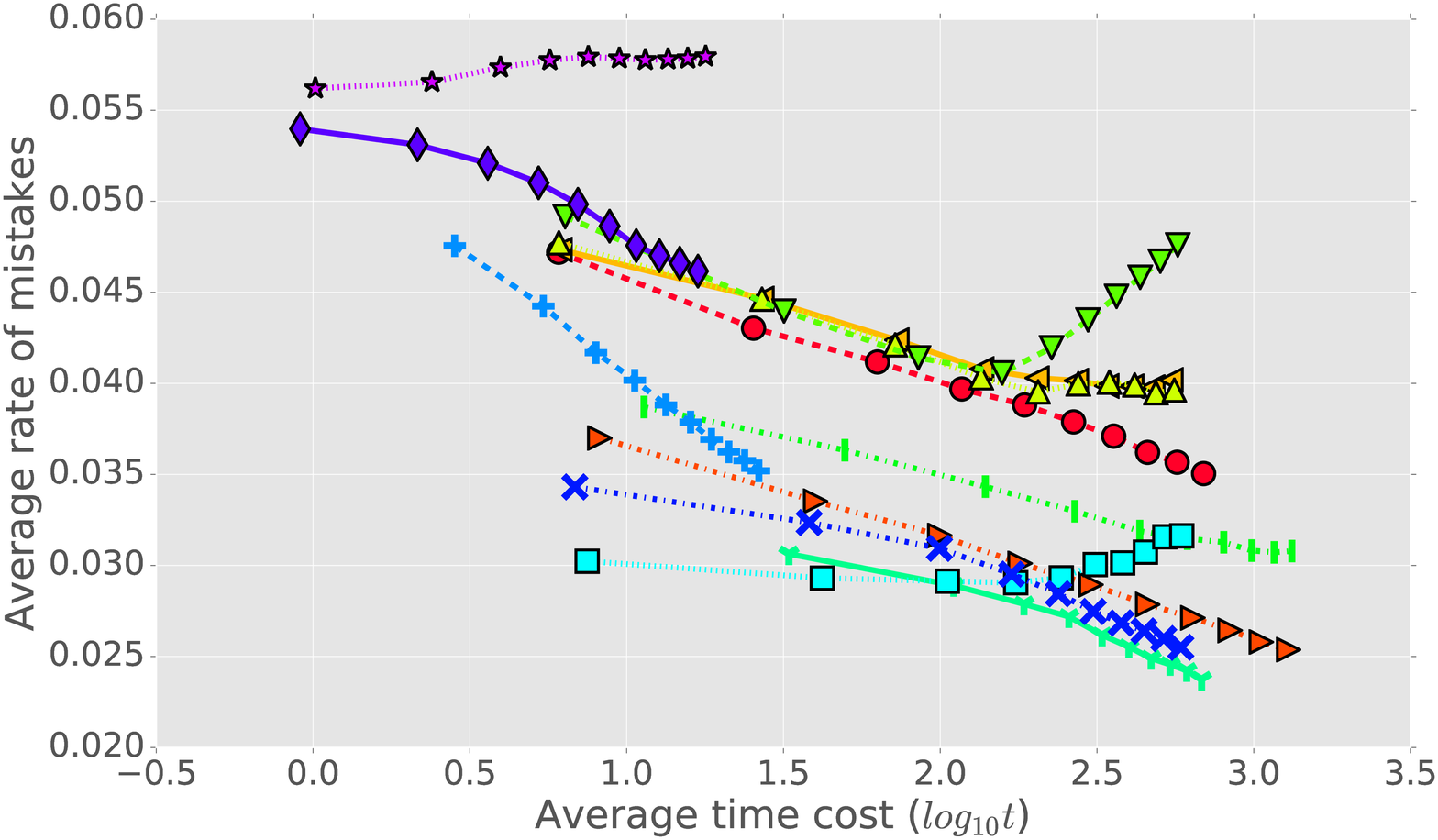}}\vspace{-2mm}
\par\end{centering}
\noindent \begin{centering}
\subfloat[cod-rna]{\noindent \centering{}\includegraphics[width=0.45\textwidth]{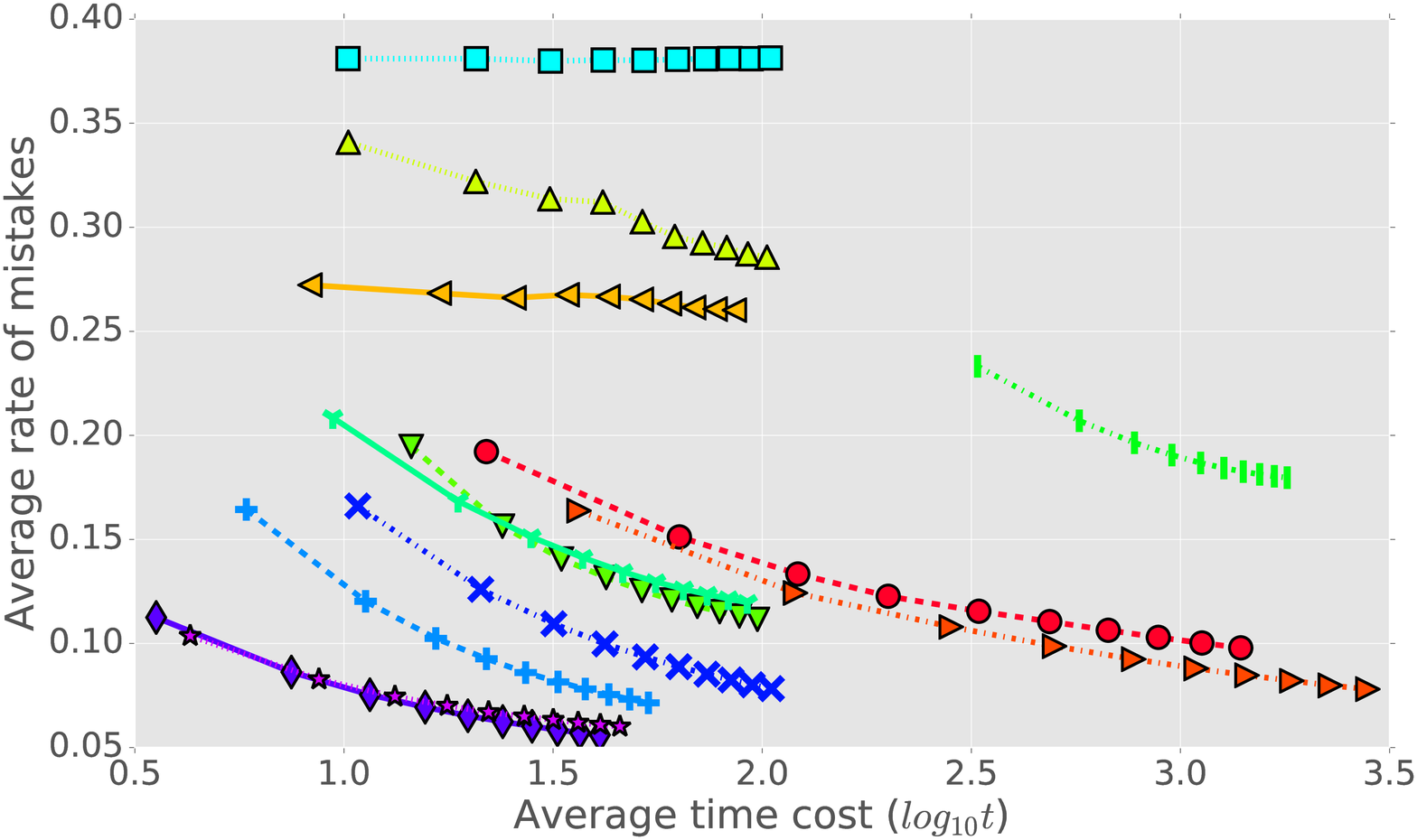}}\hspace{0.04\textwidth}\subfloat[ijcnn1]{\noindent \centering{}\includegraphics[width=0.45\textwidth]{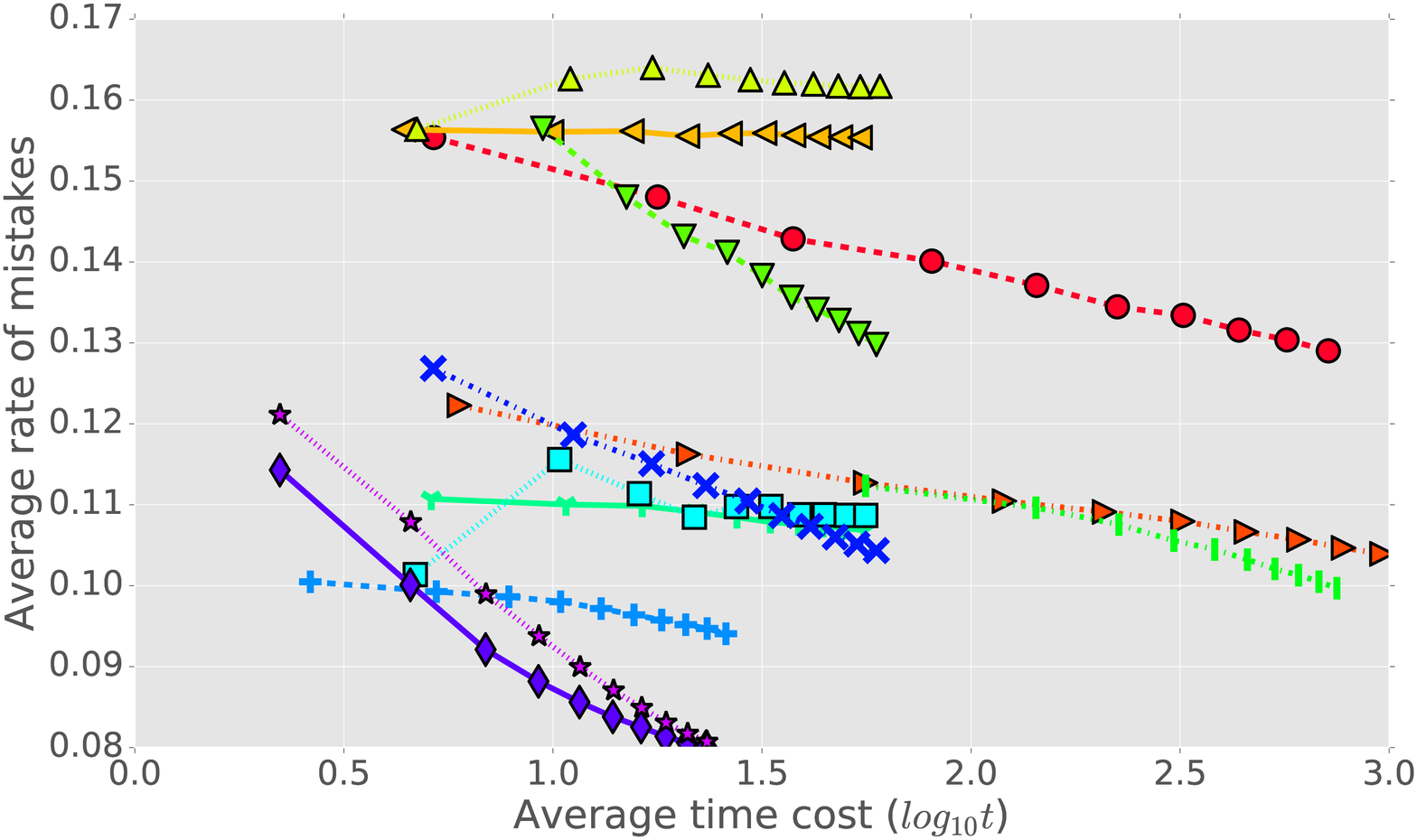}}\vspace{-2mm}
\par\end{centering}
\noindent \begin{centering}
\subfloat[KDDCup99]{\noindent \centering{}\includegraphics[width=0.45\textwidth]{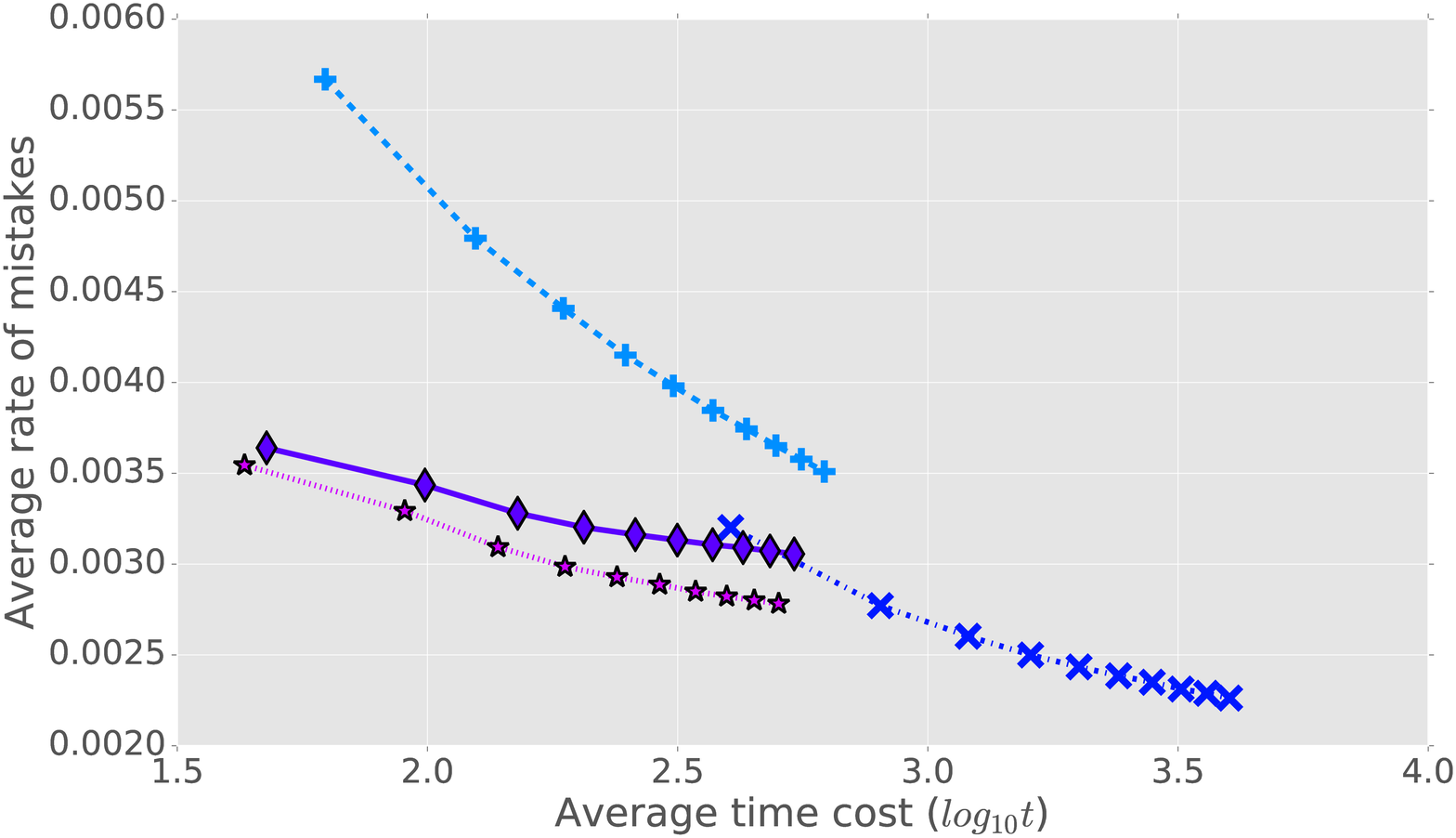}}\hspace{0.04\textwidth}\subfloat[covtype]{\noindent \centering{}\includegraphics[width=0.45\textwidth]{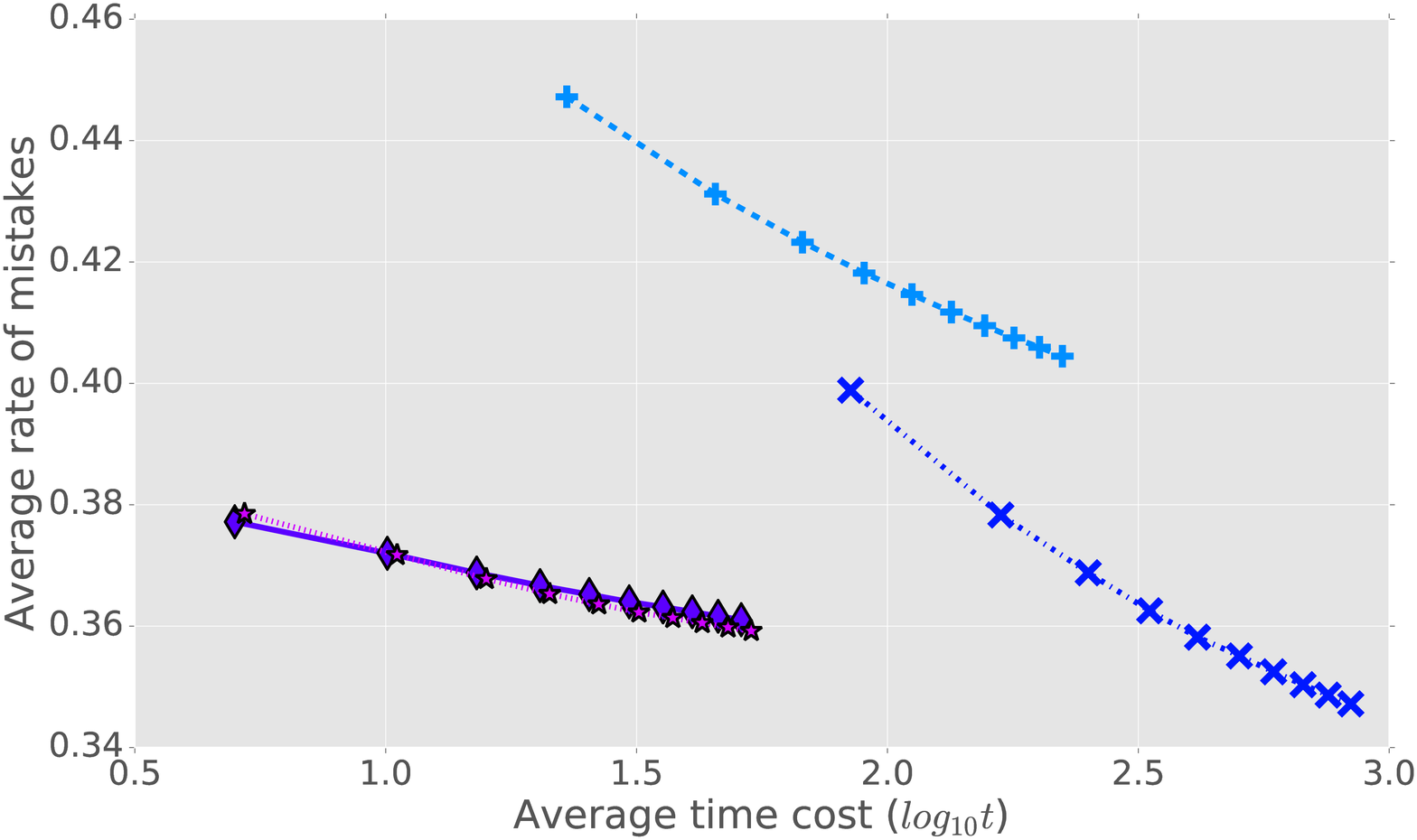}}\vspace{-2mm}
\par\end{centering}
\noindent \begin{centering}
\subfloat[poker]{\noindent \centering{}\includegraphics[width=0.45\textwidth]{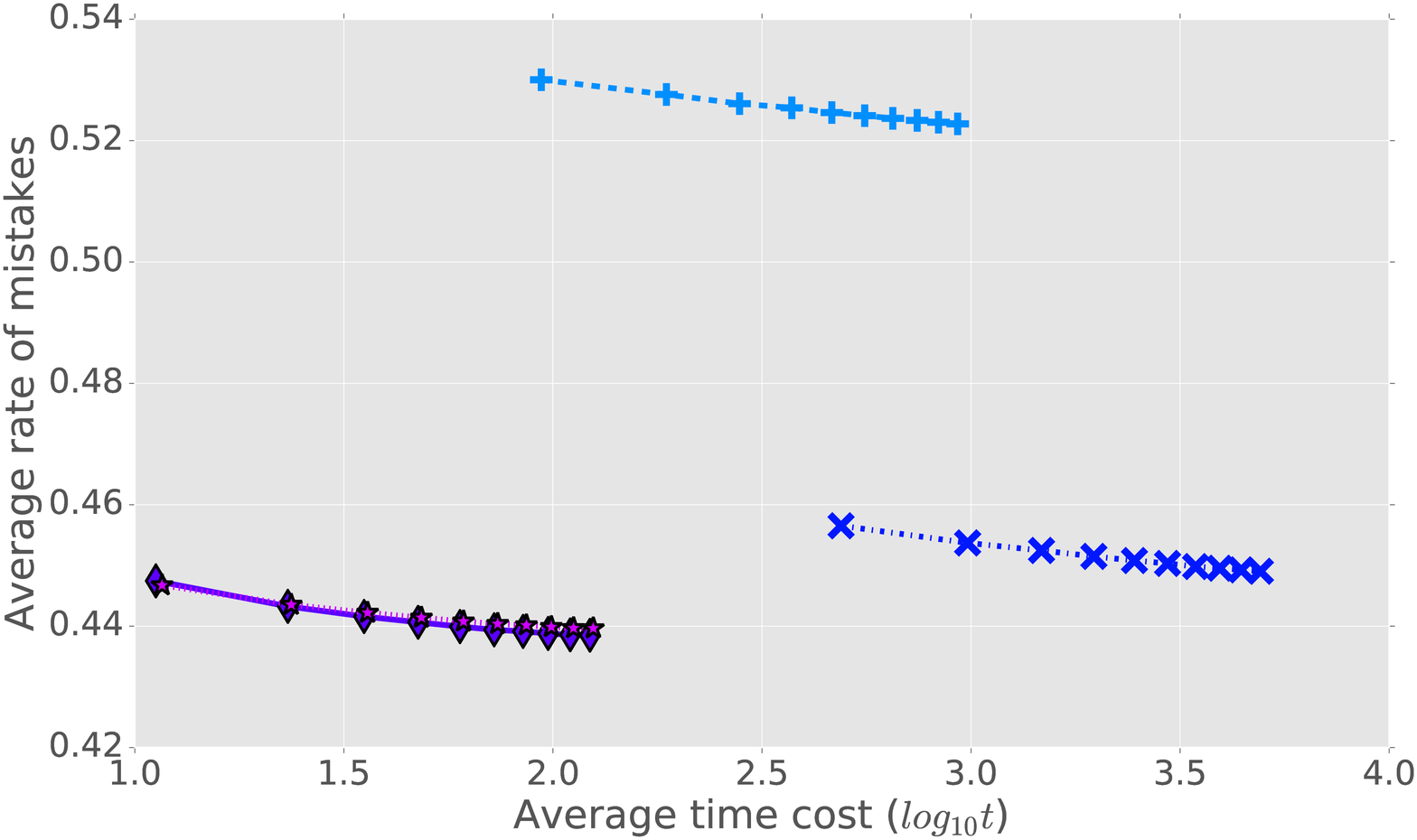}}\hspace{0.04\textwidth}\subfloat[airlines]{\noindent \centering{}\includegraphics[width=0.45\textwidth]{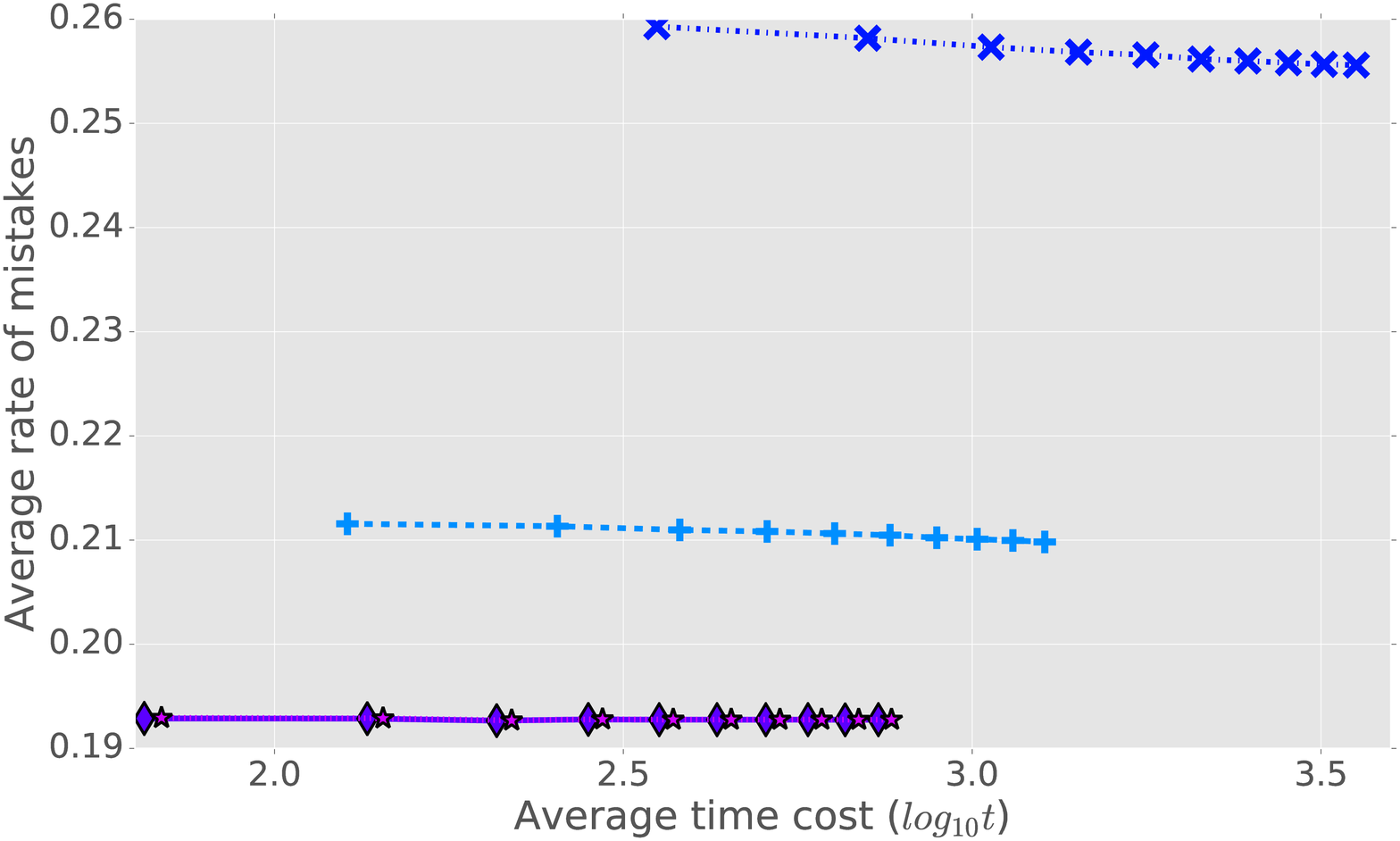}}
\par\end{centering}
\noindent \centering{}\caption{Average mistake rate vs. time cost for online classification. The
average time (seconds) is shown in the logarithm with base 10. (Best
viewed in colors).\label{fig:exp_online_classification_mistake_vs_time}}
\end{figure}

\noindent 
\begin{table}[H]
\noindent \begin{centering}
\caption{Classification performance of our proposed methods and the baselines
in online mode. Note that $\delta$, $B$ and $D$ are set to be the
same as in batch classification tasks (cf., Section~\ref{subsec:exp_batch_classification}).
The mistake rate is reported in percent $\left(\%\right)$ and the
execution time is in second. The best performance is in \textbf{bold}.\label{tab:exp_online_classification}}
\par\end{centering}
\noindent \centering{}%
\begin{tabular}{|c|r|r|r|r|}
\hline 
\textbf{\emph{\cellcolor{header_color}Dataset $\left[S\right]$}} & \multicolumn{2}{c|}{\textbf{\emph{\cellcolor{header_color}a9a $\left[142\right]$}}} & \multicolumn{2}{c|}{\textbf{\emph{\cellcolor{header_color}w8a $\left[131\right]$}}}\tabularnewline
\hline 
\textbf{\emph{\cellcolor{header_color}Algorithm}} & Mistake Rate & Time & Mistake Rate & Time\tabularnewline
\hline 
Perceptron & 21.05$\pm$0.12 & 976.79 & 3.51$\pm$0.03 & 691.80\tabularnewline
OGD & \textbf{16.50$\pm$0.06} & 2,539.46 & 2.54$\pm$0.03 & 1,290.13\tabularnewline
\hline 
RBP & 23.76$\pm$0.21 & 118.25 & 4.02$\pm$0.07 & 544.83\tabularnewline
Forgetron & 23.15$\pm$0.34 & 109.71 & 3.96$\pm$0.10 & 557.75\tabularnewline
Projectron & 21.86$\pm$1.73 & 122.08 & 4.76$\pm$1.13 & 572.20\tabularnewline
Projectron++ & 19.47$\pm$2.22 & 449.20 & 3.08$\pm$0.63 & 1321.93\tabularnewline
BPAS & 19.09$\pm$0.17 & 95.81 & \textbf{2.37$\pm$0.02} & 681.46\tabularnewline
BOGD & 22.14$\pm$0.25 & 96.11 & 3.16$\pm$0.08 & 589.47\tabularnewline
FOGD & 20.11$\pm$0.10 & 13.79 & 3.52$\pm$0.05 & 26.40\tabularnewline
NOGD & 16.55$\pm$0.07 & 99.54 & 2.55$\pm$0.05 & 585.23\tabularnewline
\hline 
$\model$-Hinge & 17.46$\pm$0.12 & \textbf{8.74} & 4.62$\pm$0.78 & \textbf{16.89}\tabularnewline
$\model$-Logit & 17.33$\pm$0.16 & 9.31 & 5.80$\pm$0.02 & 17.86\tabularnewline
\hline 
\hline 
\textbf{\emph{\cellcolor{header_color}Dataset $\left[S\right]$}} & \multicolumn{2}{c|}{\textbf{\emph{\cellcolor{header_color}cod-rna $\left[436\right]$}}} & \multicolumn{2}{c|}{\textbf{\emph{\cellcolor{header_color}ijcnn1 $\left[500\right]$}}}\tabularnewline
\hline 
\textbf{\emph{\cellcolor{header_color}Algorithm}} & Mistake Rate & Time & Mistake Rate & Time\tabularnewline
\hline 
Perceptron & 9.79$\pm$0.04 & 1,393.56 & 12.85$\pm$0.09 & 727.90\tabularnewline
OGD & 7.81$\pm$0.03 & 2,804.01 & 10.39$\pm$0.06 & 960.44\tabularnewline
\hline 
RBP & 26.02$\pm$0.39 & 85.84 & 15.54$\pm$0.21 & 54.29\tabularnewline
Forgetron & 28.56$\pm$2.22 & 102.64 & 16.17$\pm$0.26 & 60.54\tabularnewline
Projectron & 11.16$\pm$3.61 & 97.38 & 12.98$\pm$0.23 & 59.37\tabularnewline
Projectron++ & 17.97$\pm$15.60 & 1,799.93 & 9.97$\pm$0.09 & 749.70\tabularnewline
BPAS & 11.97$\pm$0.09 & 92.08 & 10.68$\pm$0.05 & 55.44\tabularnewline
BOGD & 38.13$\pm$0.11 & 104.60 & 10.87$\pm$0.18 & 55.99\tabularnewline
FOGD & 7.15$\pm$0.03 & 53.45 & 9.41$\pm$0.03 & 25.93\tabularnewline
NOGD & 7.83$\pm$0.06 & 105.18 & 10.43$\pm$0.08 & 59.36\tabularnewline
\hline 
$\model$-Hinge & \textbf{5.61$\pm$0.17} & \textbf{40.89} & \textbf{8.01$\pm$0.18} & \textbf{23.26}\tabularnewline
$\model$-Logit & 6.01$\pm$0.20 & 45.67 & 8.07$\pm$0.20 & 23.36\tabularnewline
\hline 
\hline 
\textbf{\emph{\cellcolor{header_color}Dataset $\left[S\right]$}} & \multicolumn{2}{c|}{\textbf{\emph{\cellcolor{header_color}KDDCup99 $\left[115\right]$}}} & \multicolumn{2}{c|}{\textbf{\emph{\cellcolor{header_color}covtype $\left[59\right]$}}}\tabularnewline
\hline 
\textbf{\emph{\cellcolor{header_color}Algorithm}} & Mistake rate & Time & Mistake rate & Time\tabularnewline
\hline 
FOGD & 0.35$\pm$0.00 & 620.95 & 40.45$\pm$0.05 & 223.20\tabularnewline
NOGD & \textbf{0.23$\pm$0.00} & 4,009.03 & \textbf{34.72}$\pm$\textbf{0.07} & 838.47\tabularnewline
\hline 
$\model$-Hinge & 0.31$\pm$0.07 & 540.65 & 36.11$\pm$0.16 & \textbf{51.12}\tabularnewline
$\model$-Logit & 0.28$\pm$0.03 & \textbf{503.34} & 35.92$\pm$0.16 & 53.51\tabularnewline
\hline 
\hline 
\textbf{\emph{\cellcolor{header_color}Dataset $\left[S\right]$}} & \multicolumn{2}{c|}{\textbf{\emph{\cellcolor{header_color}poker $\left[393\right]$}}} & \multicolumn{2}{c|}{\textbf{\emph{\cellcolor{header_color}airlines $\left[388\right]$}}}\tabularnewline
\hline 
\textbf{\emph{\cellcolor{header_color}Algorithm}} & Mistake Rate & Time & Mistake Rate & Time\tabularnewline
\hline 
FOGD & 52.28$\pm$0.04 & 928.89 & 20.98$\pm$0.01 & 1,270.75\tabularnewline
NOGD & 44.90$\pm$0.16 & 4,920.33 & 25.56$\pm$0.01 & 3,553.50\tabularnewline
\hline 
$\model$-Hinge & \textbf{43.85$\pm$0.09} & \textbf{122.59} & \textbf{19.28}$\pm$\textbf{0.00} & \textbf{733.72}\tabularnewline
$\model$-Logit & 43.97\textbf{$\pm$}0.07 & 124.86 & \textbf{19.28}$\pm$\textbf{0.00} & 766.19\tabularnewline
\hline 
\end{tabular}
\end{table}

For classification capability, the non-budgeted methods only surpass
the budgeted ones for the smallest dataset, that is, the OGD obtains
the best performance for \emph{a9a} data. This again demonstrates
the importance of exploring budget online kernel learning algorithms.
Between the two non-budgeted algorithms, the OGD achieves considerably
better error rates than the Perceptron. The method, however, must
perform much more expensive updates, resulting in a significantly
larger number of support vectors and significantly higher computational
time costs. This represents the trade-off between classification accuracy
and computational complexity of the OGD.

Furthermore, comparing the performance of different existing budgeted
online kernel learning algorithms, the $\model$-Hinge and $\model$-Logit
outperform others in both discriminative performance and computation
efficiency for almost all datasets. In particular, the $\model$-based
methods achieve the best mistake rates \textendash{} 5.61$\pm$0.17,
8.01$\pm$0.18, 43.85$\pm$0.09, 19.28$\pm$0.00 for the \emph{cod-rna},
\emph{ijcnn1}, \emph{poker} and \emph{airlines} data, that are, respectively,
$27.5\%$, $17.5\%$, $2.4\%$, $8.8\%$ lower than the error rates
of the second best models \textendash{} two recent approaches FOGD
and NOGD. On the other hand, the computation costs of the $\model$s
are significantly lower with large margins of hundreds of percents
for large-scale databases \emph{covtype}, \emph{poker}, and \emph{airlines}
as shown in Table~\ref{tab:exp_online_classification}.

In all experiments, our proposed method produces the model sizes that
are much smaller than the budget sizes of baseline methods. Thus we
further investigate the performance of the budgeted baselines by varying
the budget size $B$, and compare with our $\model$ with Hinge loss.
Fig.~\ref{fig:exp_online_classification_mistake_time} shows our
analysis on two datasets \emph{a9a} and \emph{cod-rna}. It can be
seen that the larger $B$ helps model obtain better classification
results, but hurts their running speed. For both datasets, the budgeted
baselines with larger budget sizes still fail to beat the predictive
performance of $\model$. On the other hand, the baselines with smaller
budget sizes run faster than the $\model$ on \emph{cod-rna} dataset,
but slower on \emph{a9a} dataset.

\noindent 
\begin{figure}[H]
\noindent \begin{centering}
\subfloat[a9a]{\noindent \centering{}\includegraphics[width=0.48\textwidth]{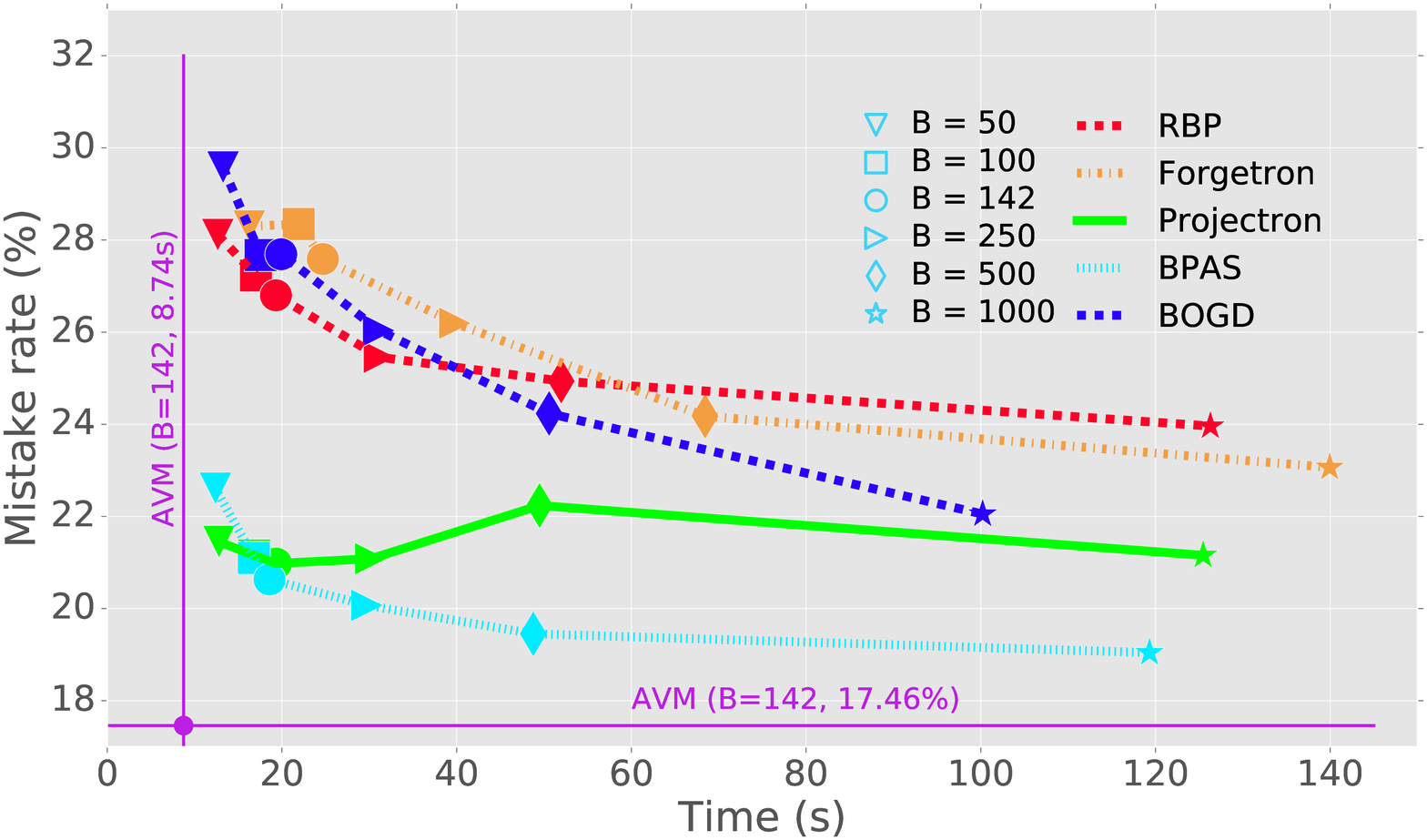}}\hspace{0.02\textwidth}\subfloat[cod-rna]{\noindent \centering{}\includegraphics[width=0.48\textwidth]{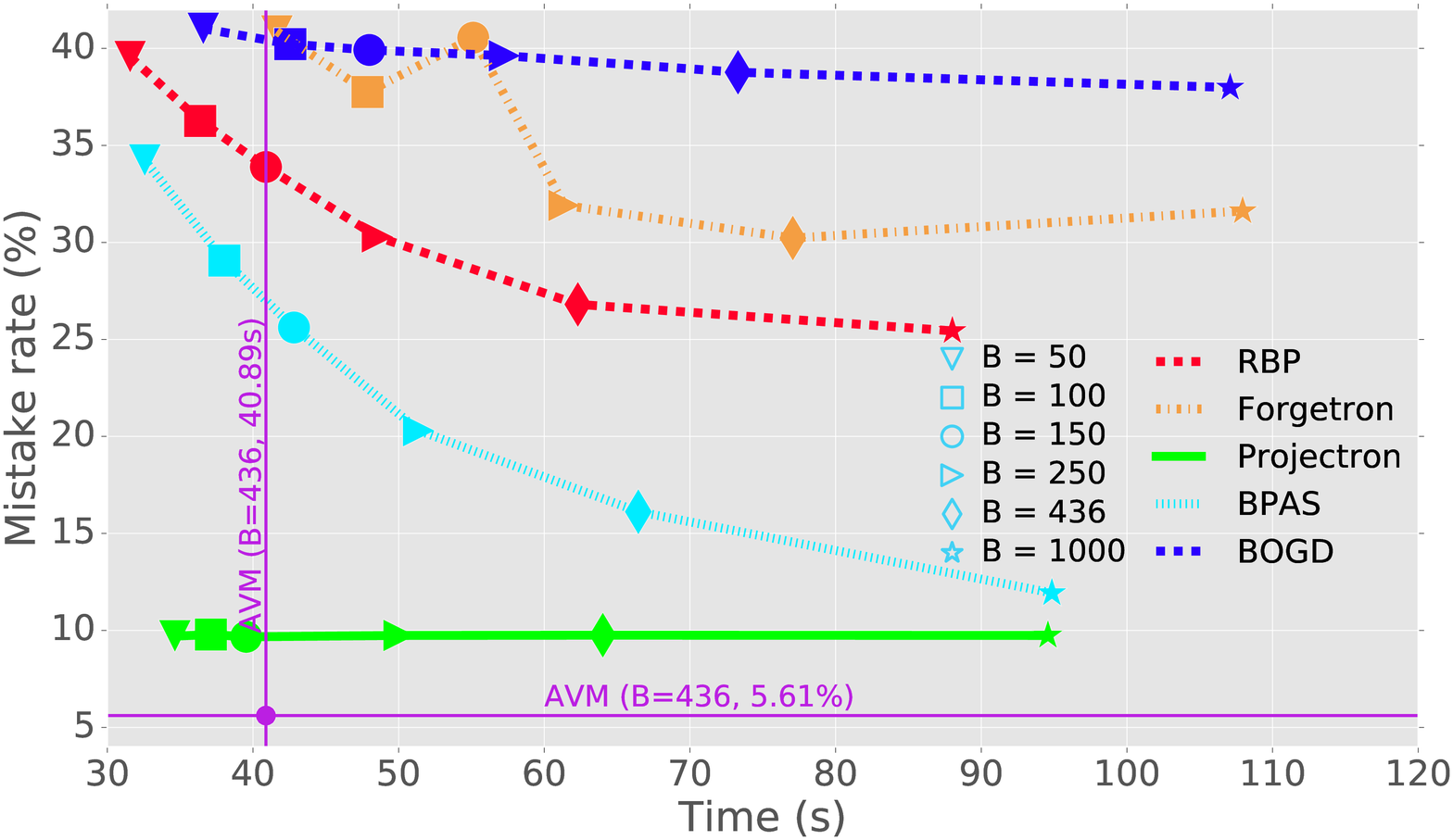}}
\par\end{centering}
\noindent \centering{}\caption{Predictive and wall-clock performance on two datasets: \emph{a9a}
and \emph{cod-rna} of budgeted methods when the budget size $B$ is
varied. (Best viewed in colors).\label{fig:exp_online_classification_mistake_time}}
\end{figure}

Finally, two versions of $\model$s demonstrate similar discriminative
performances and computational complexities wherein the $\model$-Logit
is slightly slower due to the additional exponential operators as
also seen in batch classification task. All aforementioned observations
validate the effectiveness and efficiency of our proposed technique.
Thus, we believe that our approximation machine is a promising technique
for building scalable online kernel learning algorithms for large-scale
classification tasks.

\subsection{Online regression}

The last experiment addresses the online regression problem to evaluate
the capabilities of our approach with three proposed loss functions
\textendash{} $\ell_{1}$,$\ell_{2}$ and $\boldsymbol{\varepsilon}$-insensitive
losses as described in Section~\ref{sec:Loss-Function}. Incorporating
these loss functions creates three versions: $\model$-$\boldsymbol{\varepsilon}$,
$\model$-$\ell_{1}$ and $\model$-$\ell_{2}$. We use four datasets:
\emph{casp}, \emph{slice}, \emph{year} and \emph{airlines} (delay
minutes) with a wide range of sizes for this task. We recruit six
baselines: RBP, Forgetron, Projectron, BOGD, FOGD and NOGD (cf. more
detailed description in Section~\ref{subsec:exp_online_classification}).

\paragraph{Hyperparameters setting.}

We adopt the same hyperparameter searching procedure for batch classification
task as in Section~\ref{subsec:exp_batch_classification}. Furthermore,
for the budget size $B$ and the feature dimension $D$ in FOGD, we
follow the same strategy used in Section~7.1.1 of \citep{Lu_2015large}.
More specifically, these hyperparameters are separately set for different
datasets as reported in Table~\ref{tab:exp_online_regression}. They
are chosen such that they are roughly proportional to the number of
support vectors produced by the batch SVM algorithm in LIBSVM running
on a small subset. The aim is to achieve competitive accuracy using
a relatively larger budget size for tackling more challenging regression
tasks.

\paragraph{Results.}

Fig.~\ref{fig:exp_online_regression} and Fig.~\ref{fig:exp_online_regression_time_cost}
shows the relative performance convergence w.r.t regression error
(root mean square root - RMSE) and computation cost (seconds) of the
$\model$s in comparison with those of the baselines. Combining these
two figures, we compare the average error and running time in Fig.~\ref{fig:exp_online_reg_rmse_time}.
Table~\ref{tab:exp_online_regression} reports the final average
results in detailed numbers after the methods traverse all data samples.
From these results, we can draw some observations as follows.

First of all, as can be seen from Fig.~\ref{fig:exp_online_regression},
there are several different learning behaviors w.r.t regression loss,
of the methods training on individual datasets. All algorithms, in
general, reach their regression error plateaus very quickly as observed
in the datasets \emph{year} and \emph{airlines} where they converge
at certain points from the initiation of the learning. On the other
hand, for \emph{casp} and \emph{slice} databases, the $\model$-based
models regularly obtain better performance, that is, their average
RMSE scores keep reducing when receiving more data, except in \emph{slice}
data, the regression performance of $\model$-$\ell_{2}$ are almost
unchanged during the learning. Note that, for these two datasets,
the learning curve of $\model$-$\boldsymbol{\varepsilon}$ coincides,
thus is overplotted by that of $\model$-$\ell_{1}$, resulting in
its no-show in the figure. Interestingly, the errors of RBP and Forgetron
slightly increase throughout their online learning in these two cases.

Second, Fig.~\ref{fig:exp_online_reg_rmse_time} plots average error
against computational cost, which shows similar learning behaviors
as in the our first observation. The computational cost progresses
are simple and more obvious to comprehend than the regression progresses.
As illustrated in Fig.~\ref{fig:exp_online_regression_time_cost},
all algorithms have nicely plausible execution time curves in which
the time is accumulated over the learning procedure.

\begin{figure}[H]
\noindent \begin{centering}
\includegraphics[width=0.8\textwidth]{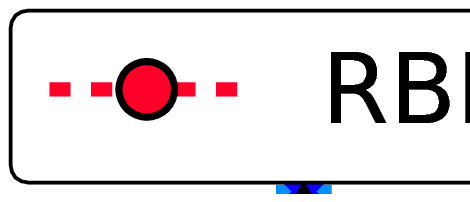}
\par\end{centering}
\noindent \begin{centering}
\subfloat[The average RMSE as a function of the number of samples seen by the
models.\label{fig:exp_online_regression}]{\noindent \begin{centering}
\begin{tabular}{cc}
\includegraphics[width=0.45\textwidth]{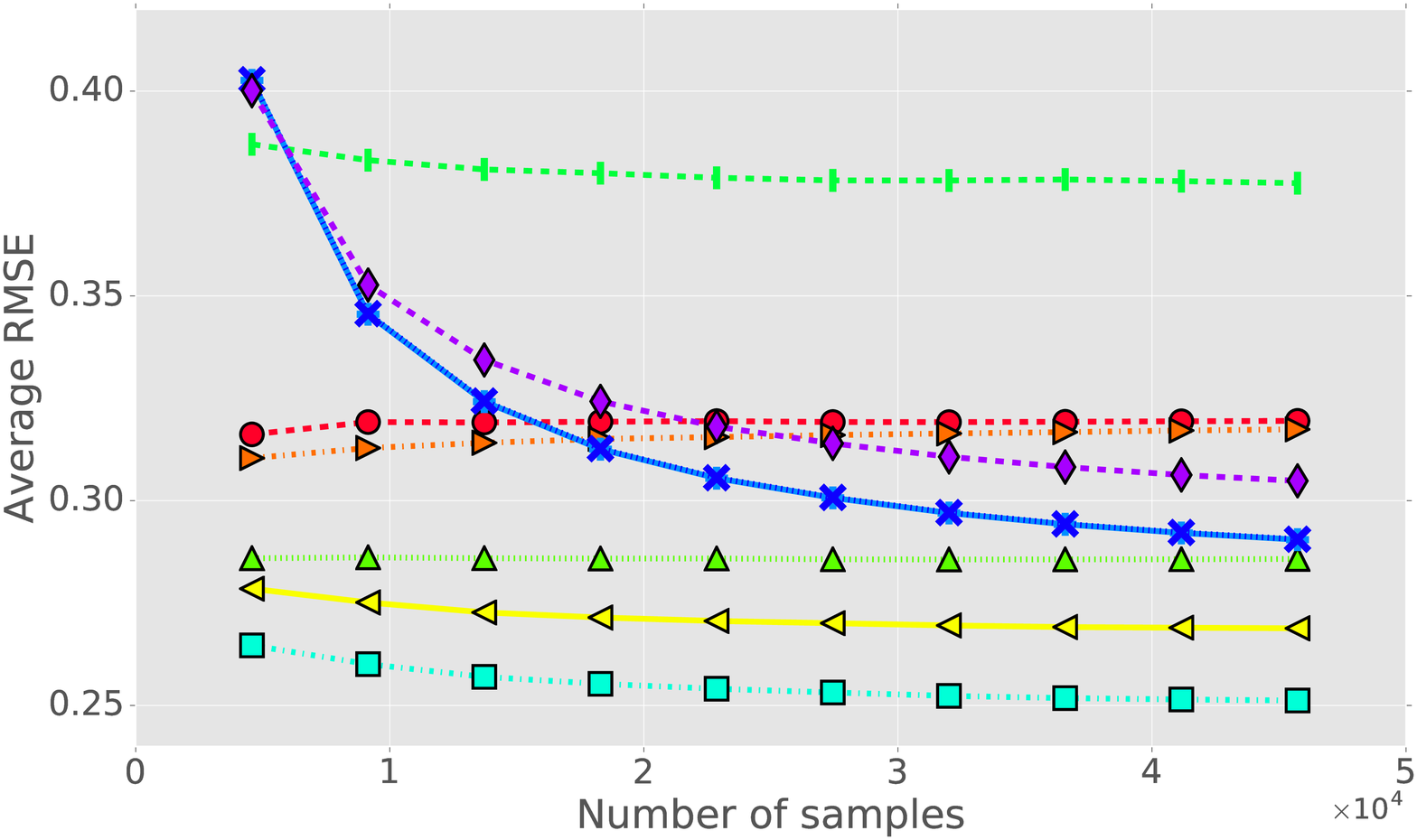} & \includegraphics[width=0.45\textwidth]{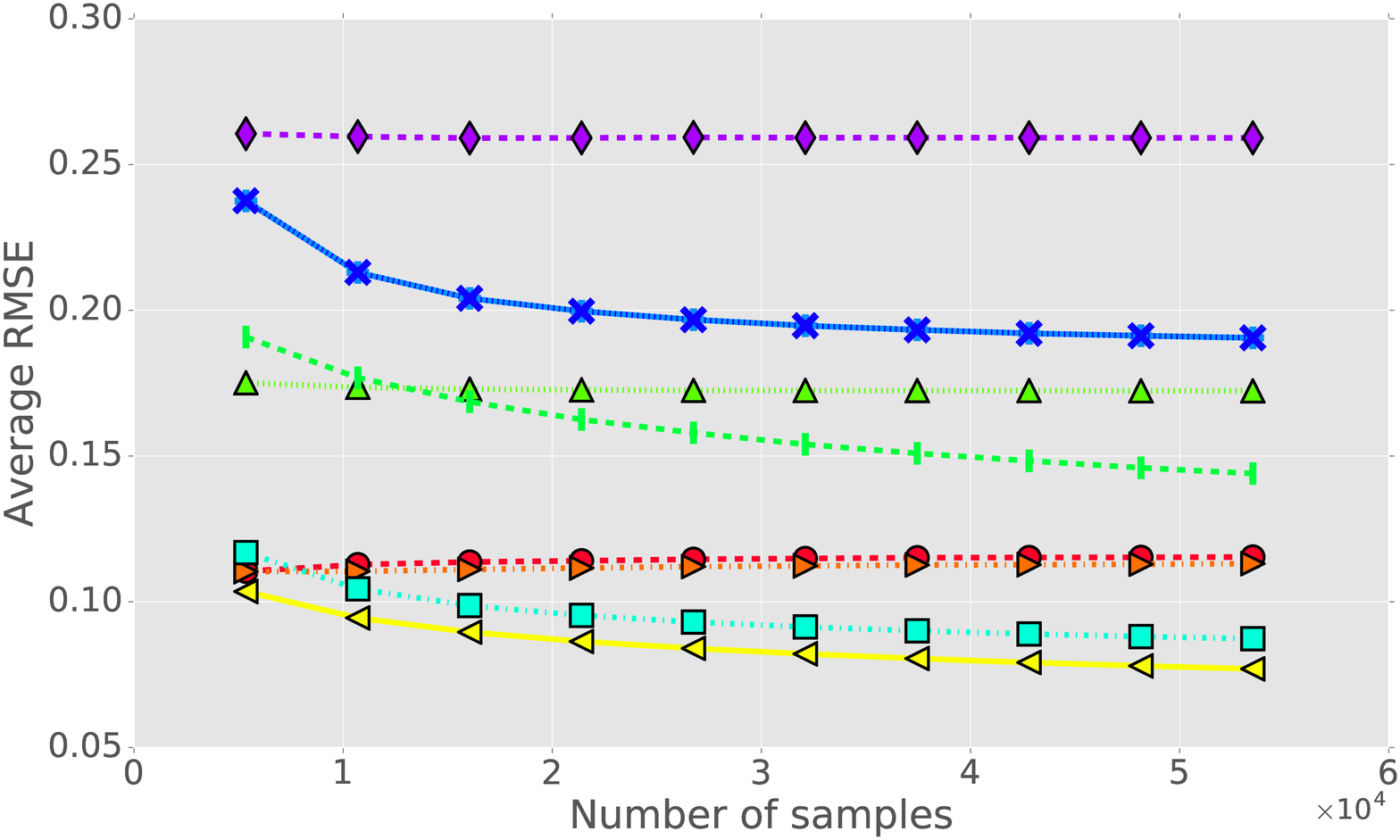}\tabularnewline
casp & slice\tabularnewline
\includegraphics[width=0.45\textwidth]{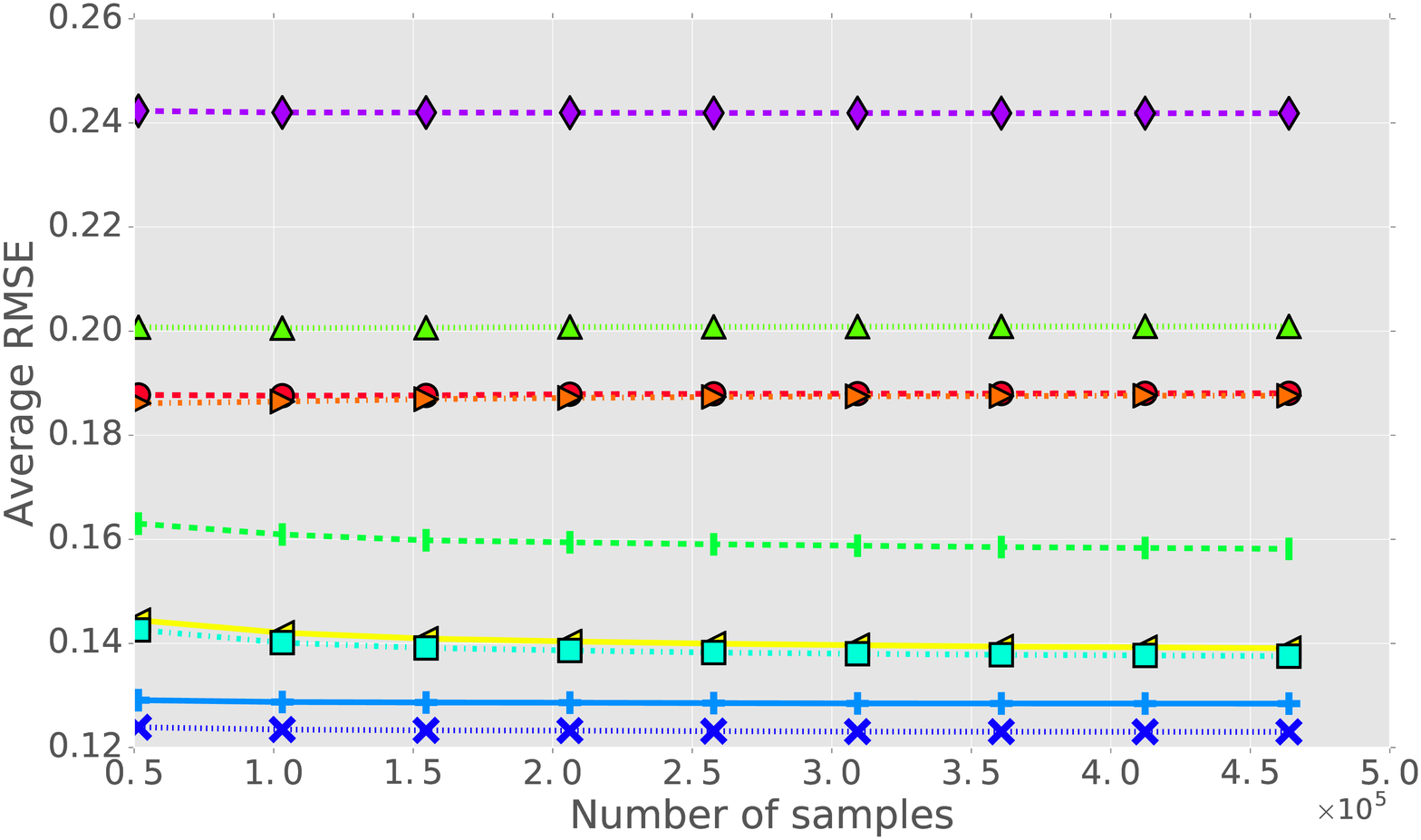} & \includegraphics[width=0.45\textwidth]{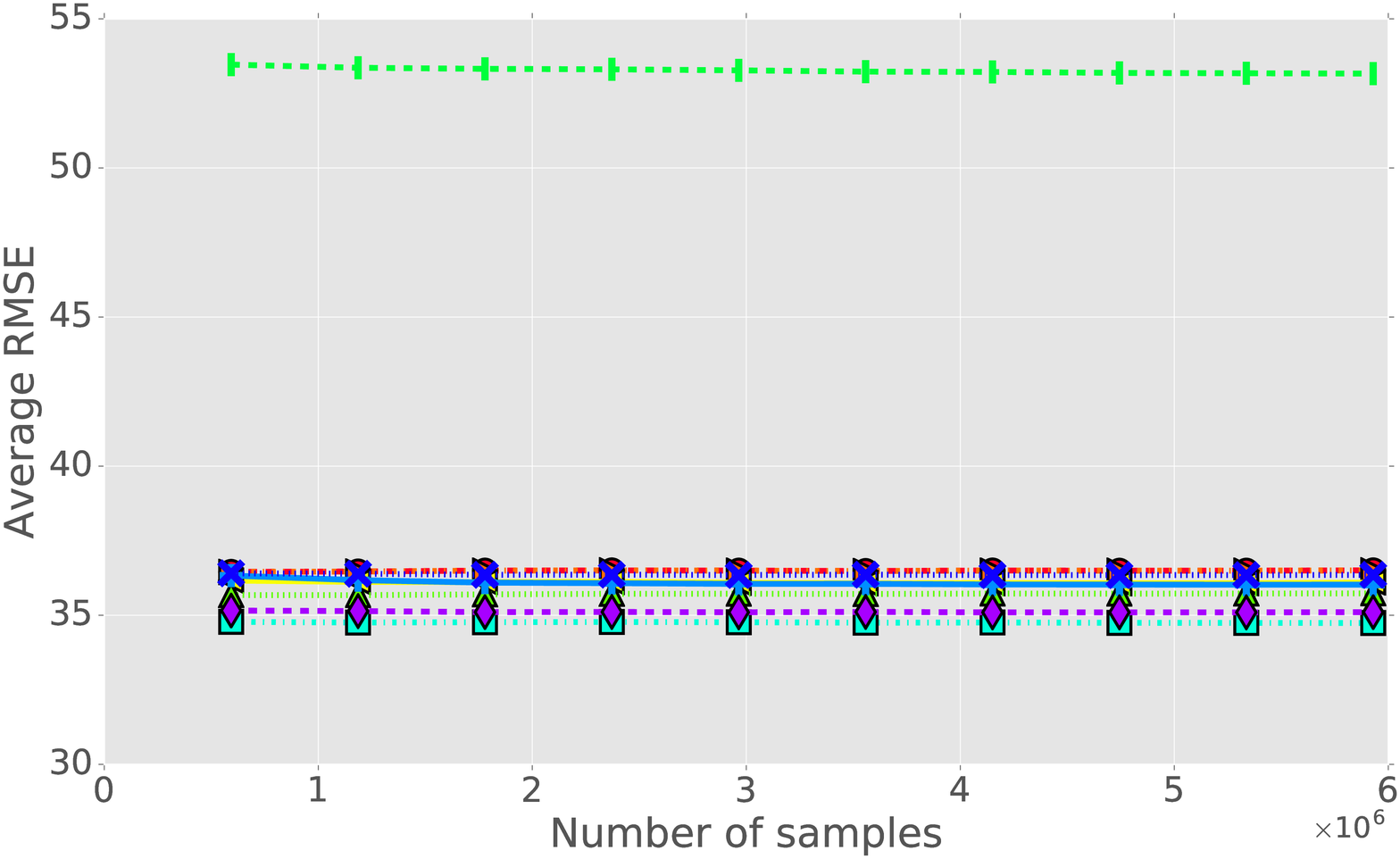}\tabularnewline
year & airlines\tabularnewline
\end{tabular}
\par\end{centering}
\noindent \centering{}}\vspace{-2mm}
\par\end{centering}
\noindent \begin{centering}
\subfloat[The average time costs (seconds in logarithm of 10) as a function
of the number of samples seen by the models.\label{fig:exp_online_regression_time_cost}]{\noindent \centering{}%
\begin{tabular}{cc}
\includegraphics[width=0.45\textwidth]{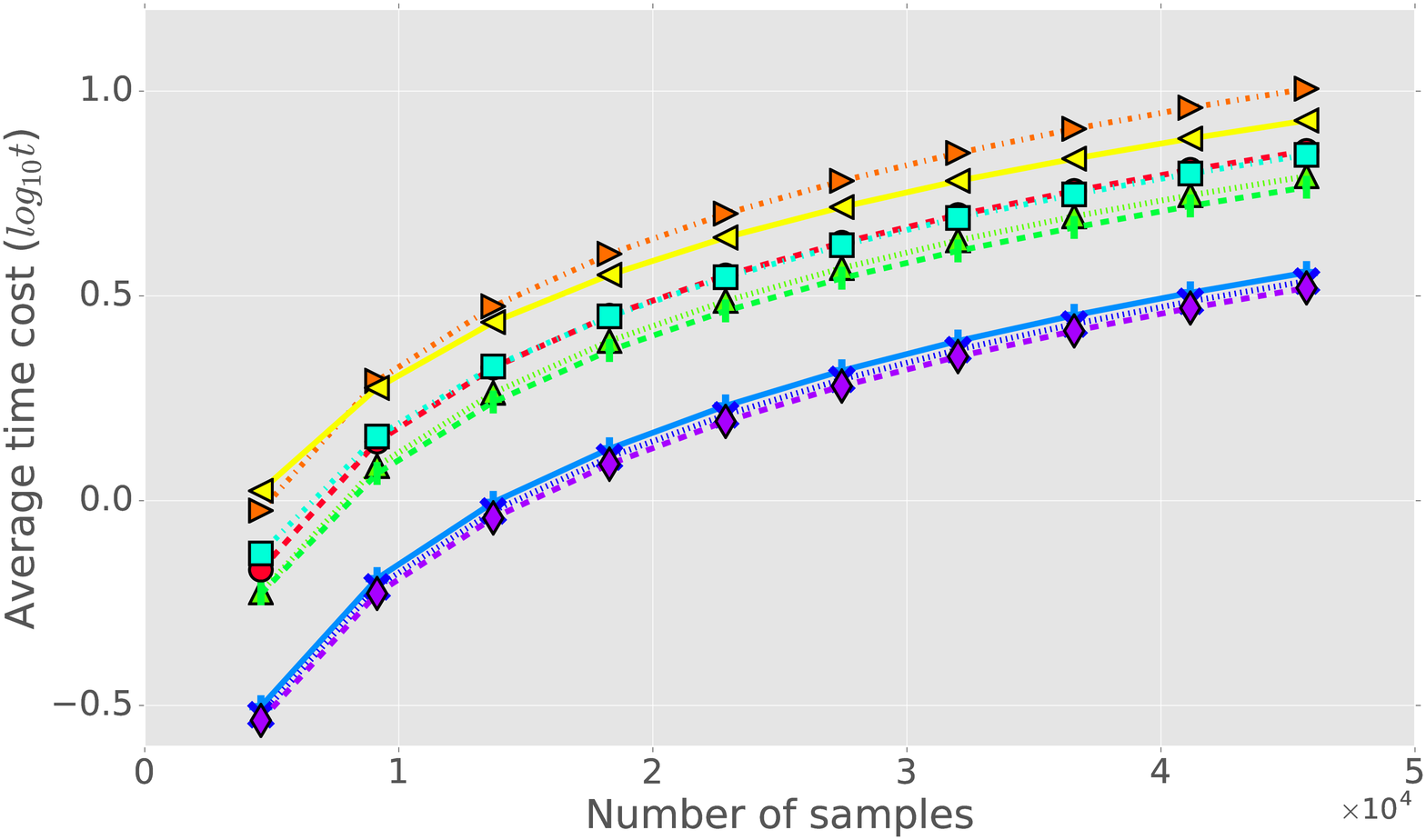} & \includegraphics[width=0.45\textwidth]{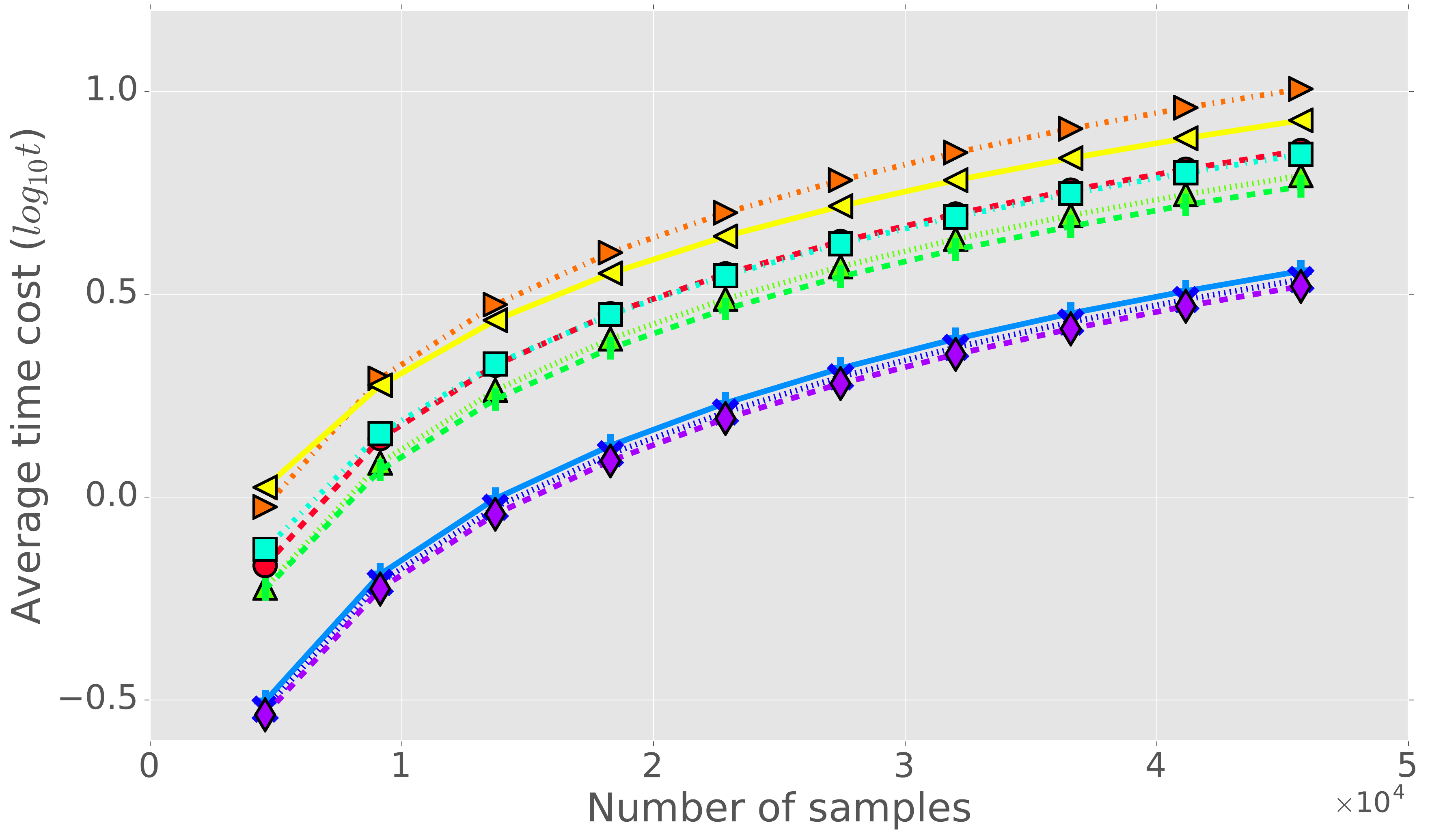}\tabularnewline
casp & slice\tabularnewline
\includegraphics[width=0.45\textwidth]{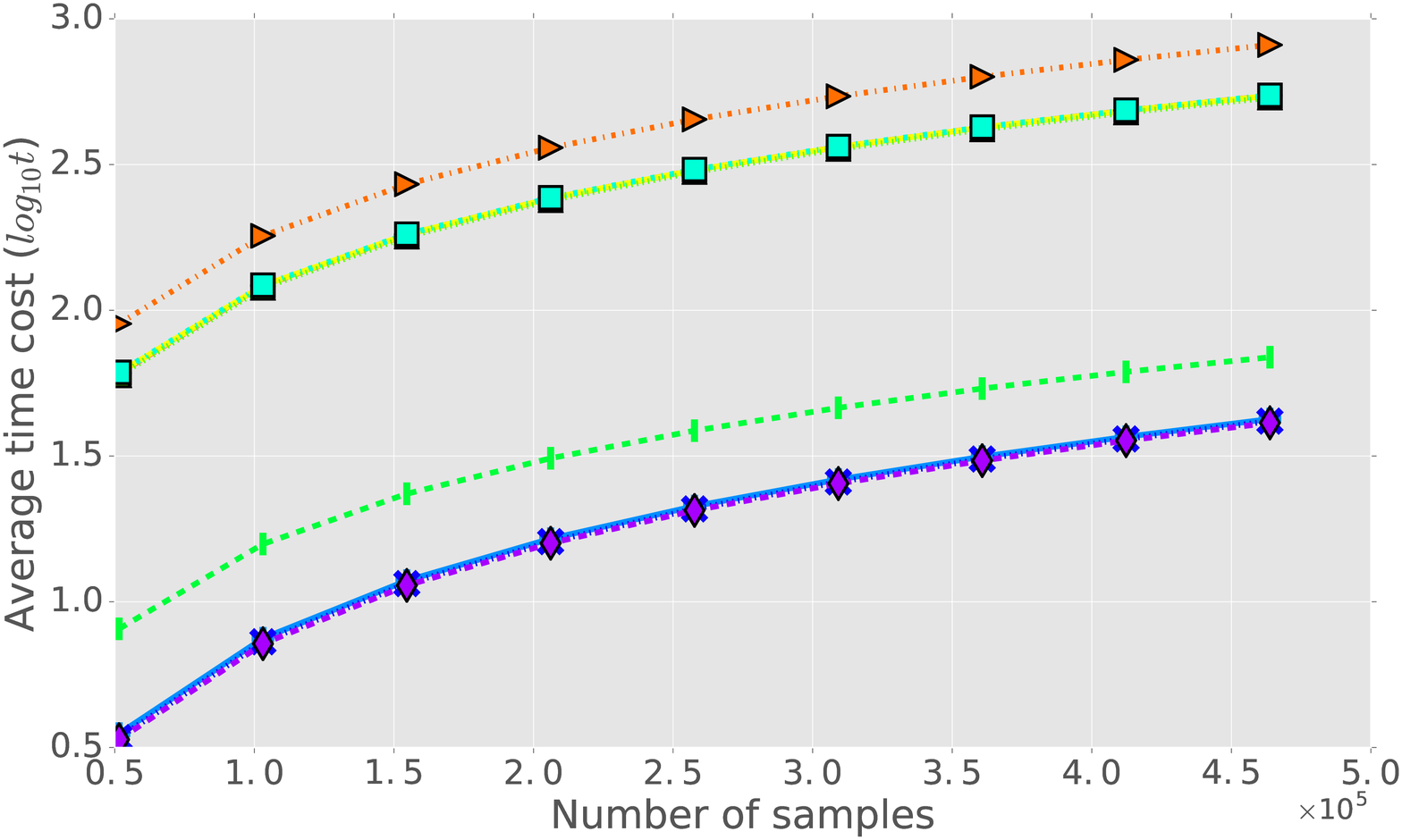} & \includegraphics[width=0.45\textwidth]{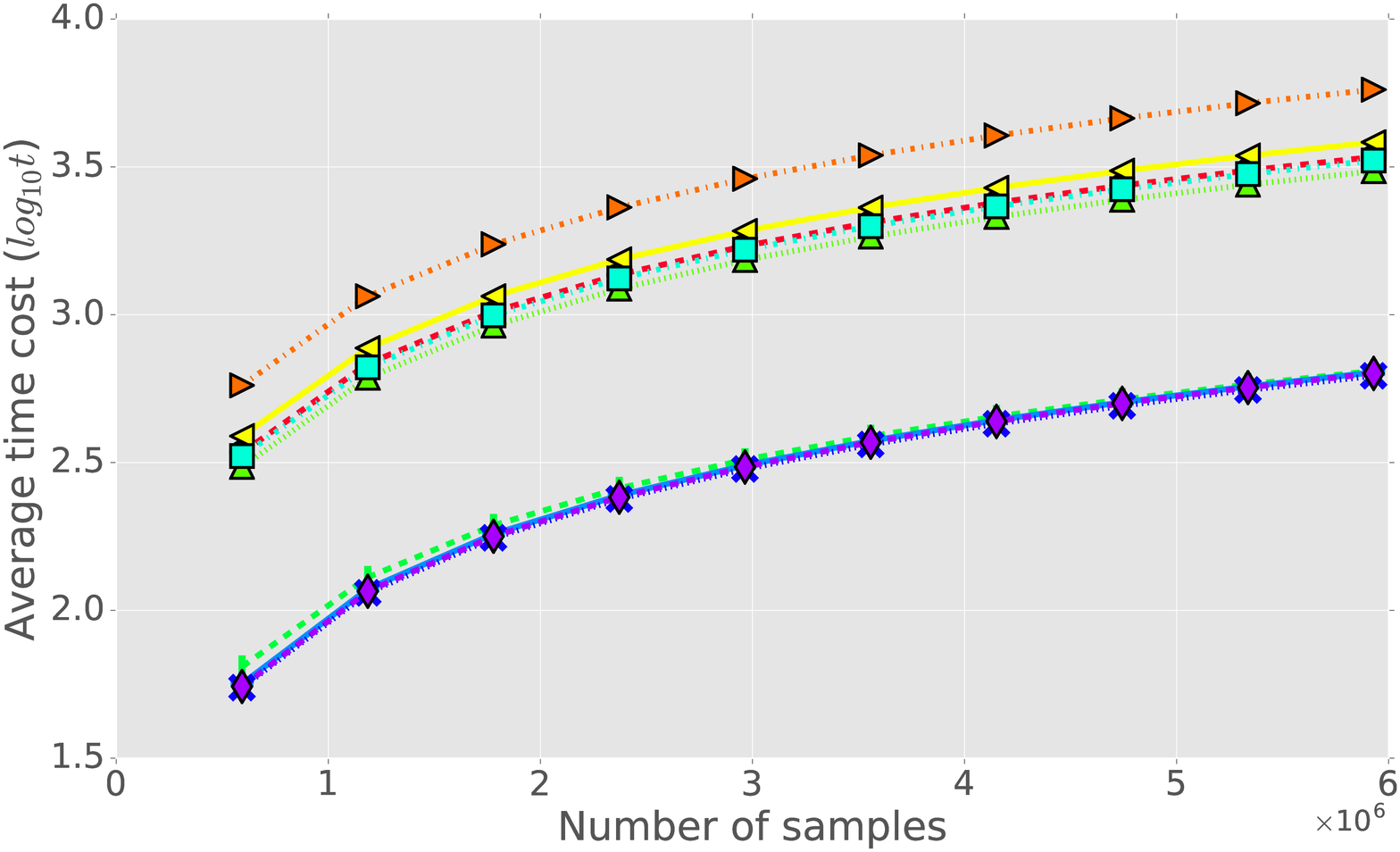}\tabularnewline
year & airlines\tabularnewline
\end{tabular}}\vspace{-2mm}
\par\end{centering}
\noindent \centering{}\caption{Convergence evaluation of online regresion. (Best viewed in colors).}
\end{figure}
\begin{table}[H]
\noindent \centering{}\caption{Online regression results of 6 baselines and 3 versions of our $\protect\model$s.
The notation $\left[\delta;S;B;D\right]$ denotes the same meanings
as those in Table~\ref{tab:exp_batch_classification}. The regression
loss is measured using root mean squared error (RMSE) and the execution
time is reported in second. The best performance is in \textbf{bold}.\label{tab:exp_online_regression}}
\begin{tabular}{|c|r|r|r|r|}
\hline 
\textbf{\emph{\cellcolor{header_color}Dataset }} & \multicolumn{2}{c|}{\textbf{\emph{\cellcolor{header_color}casp }}} & \multicolumn{2}{c|}{\textbf{\emph{\cellcolor{header_color}slice }}}\tabularnewline
\textbf{\emph{\cellcolor{header_color}}}$\left[\delta\mid S\mid B\mid D\right]$ & \multicolumn{2}{c|}{\textbf{\emph{\cellcolor{header_color}$\left[4.0\mid166\mid400\mid2,000\right]$}}} & \multicolumn{2}{c|}{\textbf{\emph{\cellcolor{header_color}$\left[16.0\mid27\mid1,000\mid3,000\right]$}}}\tabularnewline
\hline 
\textbf{\emph{\cellcolor{header_color}Algorithm}} & RMSE & Time & RMSE & Time\tabularnewline
\hline 
RBP & 0.3195$\pm$0.0012 & 7.15 & 0.1154$\pm$0.0006 & 810.14\tabularnewline
Forgetron & 0.3174$\pm$0.0008 & 10.14 & 0.1131$\pm$0.0004 & 1,069.15\tabularnewline
Projectron & 0.2688$\pm$0.0002 & 8.48 & \textbf{0.0770}$\pm$\textbf{0.0002} & 814.37\tabularnewline
BOGD & 0.2858$\pm$0.0002 & 6.20 & 0.1723$\pm$0.0001 & 816.16\tabularnewline
FOGD & 0.3775$\pm$0.0014 & 5.83 & 0.1440$\pm$0.0009 & 20.65\tabularnewline
NOGD & \textbf{0.2512}$\pm$\textbf{0.0001} & 6.99 & 0.0873$\pm$0.0002 & 812.69\tabularnewline
\hline 
$\model$-$\boldsymbol{\varepsilon}$ & 0.3165$\pm$0.0329 & 3.53 & 0.2013$\pm$0.0137 & 7.07\tabularnewline
$\model$-$\ell_{1}$ & 0.3166$\pm$0.0330 & 3.44 & 0.2013$\pm$0.0138 & 7.13\tabularnewline
$\model$-$\ell_{2}$ & 0.3274$\pm$0.0280 & \textbf{3.31} & 0.2590$\pm$0.0002 & \textbf{6.88}\tabularnewline
\hline 
\hline 
\textbf{\emph{\cellcolor{header_color}Dataset}} & \multicolumn{2}{c|}{\textbf{\emph{\cellcolor{header_color}year }}} & \multicolumn{2}{c|}{\textbf{\emph{\cellcolor{header_color}airlines }}}\tabularnewline
\textbf{\emph{\cellcolor{header_color} }}$\left[\delta\mid S\mid B\mid D\right]$ & \multicolumn{2}{c|}{\textbf{\emph{\cellcolor{header_color}$\left[60.0\mid67\mid400\mid1,600\right]$}}} & \multicolumn{2}{c|}{\textbf{\emph{\cellcolor{header_color}$\left[1.0\mid388\mid1,000\mid2,000\right]$}}}\tabularnewline
\hline 
\textbf{\emph{\cellcolor{header_color}Algorithm}} & RMSE & Time & RMSE & Time\tabularnewline
\hline 
RBP & 0.1881$\pm$0.0002 & 605.42 & 36.5068$\pm$0.0010 & 3,418.89\tabularnewline
Forgetron & 0.1877$\pm$0.0004 & 904.09 & 36.5065$\pm$0.0003 & 5,774.47\tabularnewline
Projectron & 0.1390$\pm$0.0003 & 605.19 & 36.1365$\pm$0.0009 & 3,834.19\tabularnewline
BOGD & 0.2009$\pm$0.0000 & 596.10 & 35.7346$\pm$0.0010 & 3,058.96\tabularnewline
FOGD & 0.1581$\pm$0.0002 & 76.70 & 53.1638$\pm$0.0120 & 646.15\tabularnewline
NOGD & 0.1375$\pm$0.0005 & 607.37 & \textbf{34.7421}$\pm$\textbf{0.0013} & 3,324.38\tabularnewline
\hline 
$\model$-$\boldsymbol{\varepsilon}$ & 0.1286$\pm$0.0002 & 48.01 & 36.0901$\pm$0.0914 & 638.60\tabularnewline
$\model$-$\ell_{1}$ & \textbf{0.1232}$\pm$\textbf{0.0003} & 47.29 & 36.3632$\pm$0.0192 & \textbf{621.57}\tabularnewline
$\model$-$\ell_{2}$ & 0.2420$\pm$0.0001 & \textbf{46.63} & 35.1128$\pm$0.0192 & 633.27\tabularnewline
\hline 
\end{tabular}
\end{table}

According to final results summarized in Table~\ref{tab:exp_online_regression},
our proposed models enjoy a significant advantage in computational
efficacy whilst achieve better (for \emph{year} dataset) or competitive
regression results with other methods. The $\model$, again, secures
the best performance in terms of model sparsity. Among the baselines,
the FOGD is the fastest, that is, its time costs can be considered
to compare with those of our methods, but its regression performances
are worse. The remaining algorithms usually obtain better results,
but is traded off by the sacrifice of scalability. This, once again,
verifies the effectiveness and efficiency of our proposed techniques.
We believe that the $\model$ is a promising machine to perform online
regression task for large-scale datasets.

Finally, comparing the capability of three $\model$'s variants, all
models demonstrate similar computational complexities wherein the
$\model$-$\ell_{2}$ is slightly faster due to its simpler operator
in computing the gradient as derived in Section~\ref{sec:Loss-Function}.
However, its regression errors are higher than two other methods \textendash{}
$\model$-$\boldsymbol{\varepsilon}$ and $\model$-$\ell_{1}$.

\begin{figure}[H]
\noindent \begin{centering}
\includegraphics[width=0.8\textwidth]{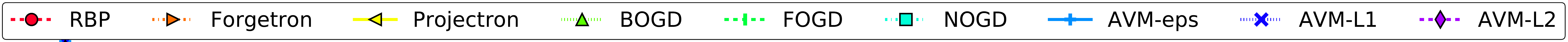}
\par\end{centering}
\noindent \begin{centering}
\begin{tabular}{cc}
\includegraphics[width=0.45\textwidth]{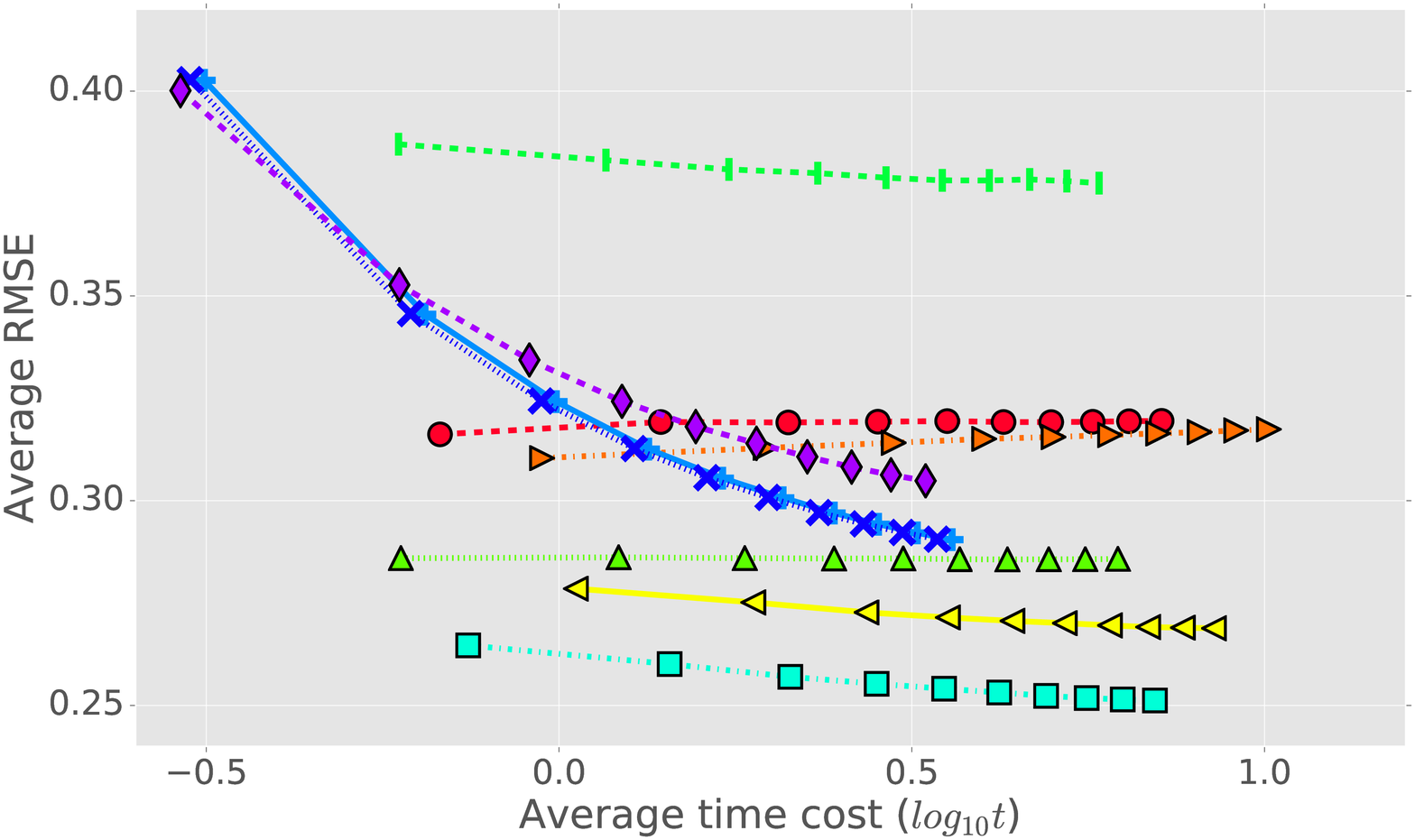} & \includegraphics[width=0.45\textwidth]{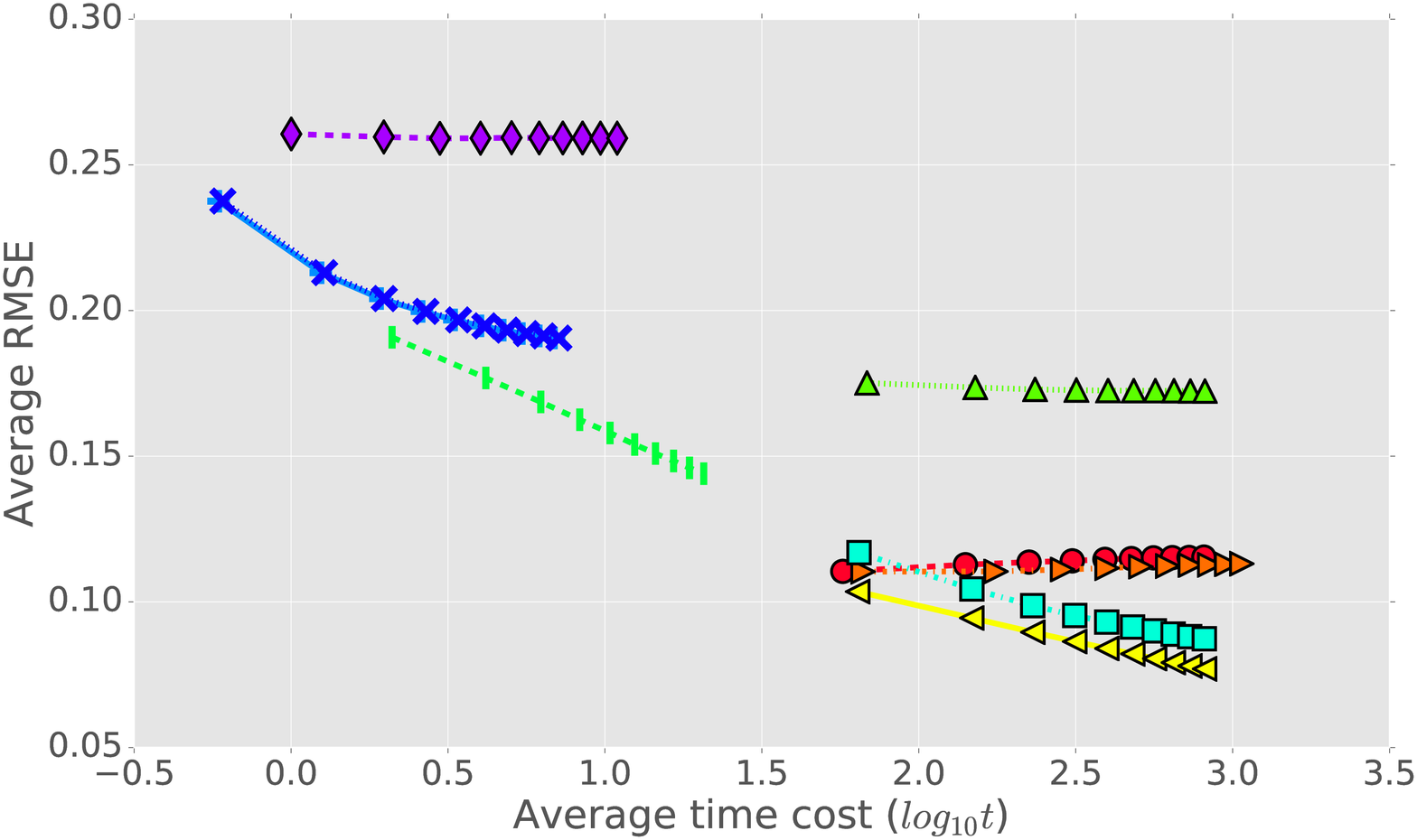}\tabularnewline
casp & slice\tabularnewline
\includegraphics[width=0.45\textwidth]{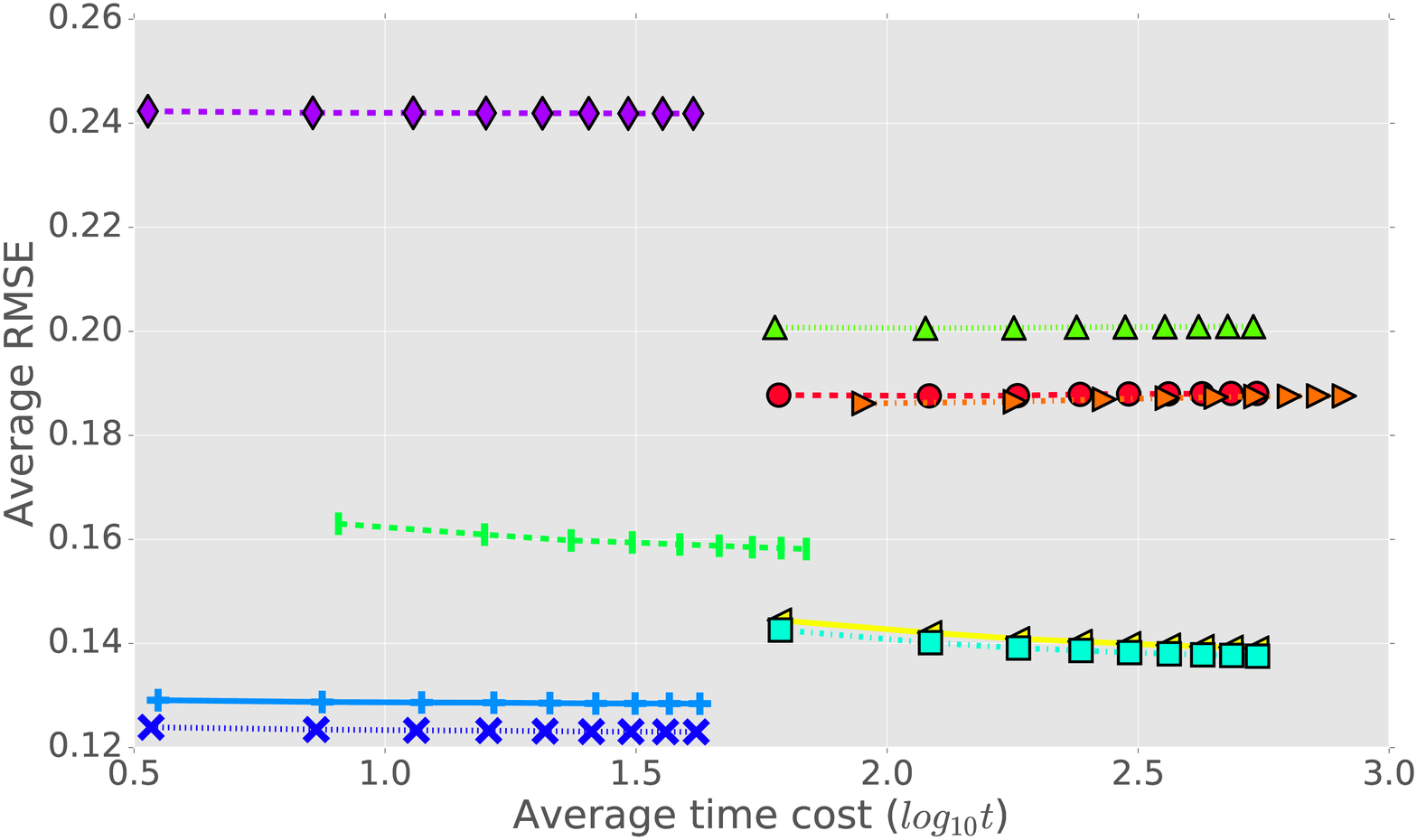} & \includegraphics[width=0.45\textwidth]{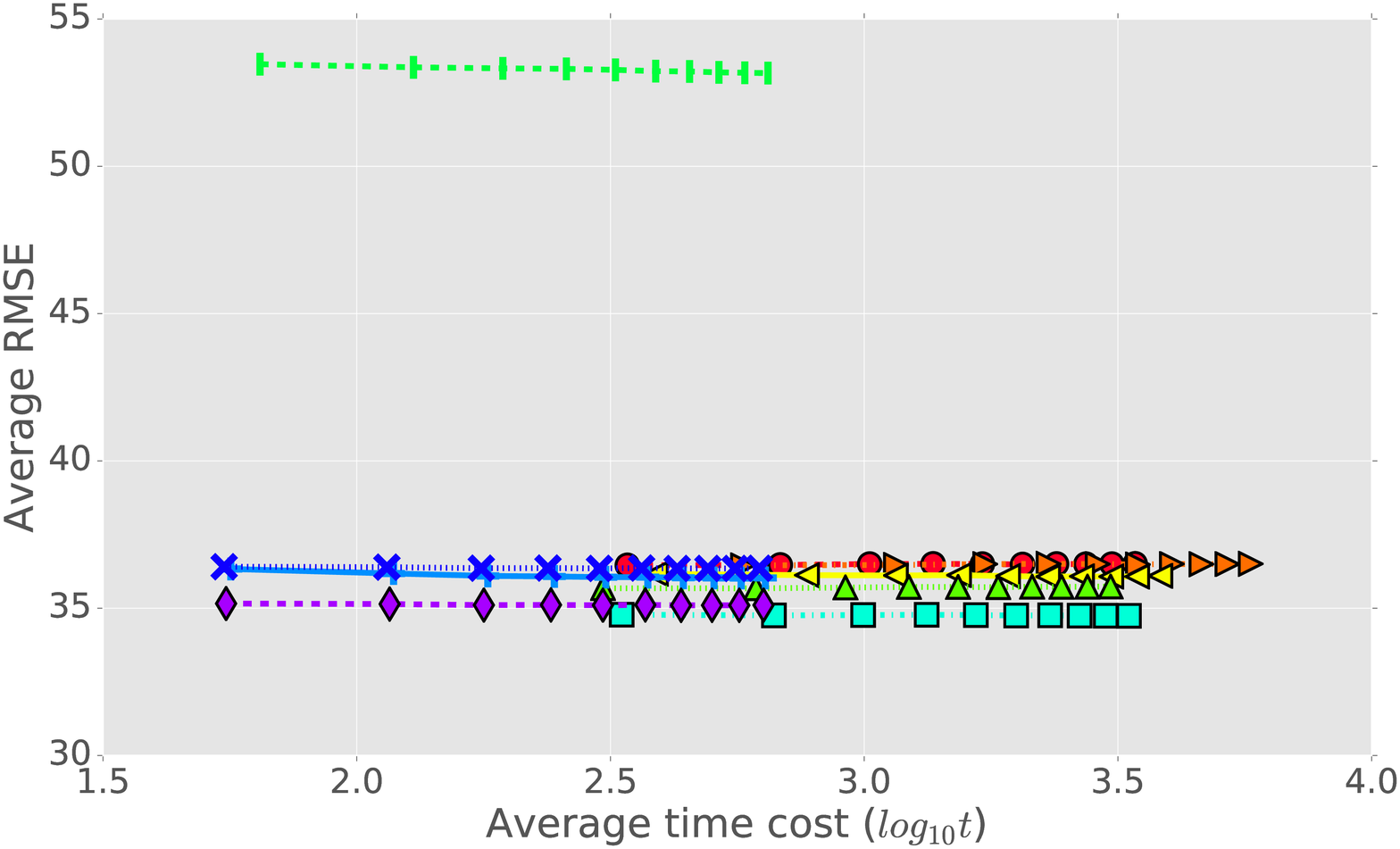}\tabularnewline
year & airlines\tabularnewline
\end{tabular}
\par\end{centering}
\noindent \centering{}\caption{Average RMSE vs. time cost for online regression. The average time
(seconds) is shown in the logarithm with base 10. (Best viewed in
colors).\label{fig:exp_online_reg_rmse_time}}
\end{figure}

\noindent

\section{Conclusion\label{sec:Conclusion}}

In this paper, we have proposed Approximation Vector Machine ($\model$)
for large-scale online learning. The $\model$ is theoretically proven
to have bounded and sparse model size while not hurting the predictive
performance. We have validated our proposed method on several benchmark
datasets. The experimental results show that the proposed $\model$
obtains a comparable predictive performance while simultaneously achieving
an impressive model size and a computational speed-up compared with
those of the baselines. Our future works are to apply AVM to the context
of semi-supervised learning, anomaly detection, and support vector
clustering.

\section*{Acknowledgment}

We gratefully thank the editor and the anonymous reviewers for their
valuable comments and thorough inspection of the article. This work
was partially supported by the Australian Research Council (ARC) under
the Discovery Project DP160109394.

\newpage{}

\appendix

\section{Proofs Regarding $\delta$-Coverage\label{sec:appxA}}
\begin{proof}
\textbf{of Theorem \ref{thm:coverage_image}}

Assume that $\norm{x-x^{'}}\leq\delta$, then we have
\begin{flalign*}
\norm{\Phi\left(x\right)-\Phi\left(x^{'}\right)}^{2} & =K\left(x,x\right)+K\left(x^{'},x^{'}\right)-2K\left(x,x^{'}\right)=2\left(1-k\left(\norm{x-x^{'}}^{2}\right)\right)\\
\leq & 2\left(1-k\left(\delta^{2}\right)\right)=\delta_{\Phi}^{2}
\end{flalign*}

Furthermore, we have
\[
\underset{\delta\goto0}{\lim}\delta_{\Phi}=2^{1/2}\underset{\delta\goto0}{\lim}\left(1-k\left(\delta^{2}\right)\right)^{1/2}=2^{1/2}\left(1-k\left(0\right)\right)^{1/2}=0
\]

Finally, since Gaussian kernel is a special radial kernel with $k\left(t\right)=\exp\left(-\gamma t\right)$,
we obtain the final conclusion.
\end{proof}

\begin{proof}
\textbf{of Theorem \ref{thm:finite_coverage}}

Since the proof is similar for the hyperrectangle cell case, we present
the proof for the hypersphere case. Let us consider the open coverage
$\mathcal{U}=\left\{ \mathcal{B}\left(z,\frac{\delta}{2}\right)\right\} _{z\in\mathcal{X}}$.
From the compactness of the data domain $\mathcal{X}$, it apparent
that from $\mathcal{U}$ we must be able to extract a finite subcoverage
of size $m$, that is, $\mathcal{U}_{m}=\left\{ \mathcal{B}\left(z_{i},\frac{\delta}{2}\right)\right\} _{i=1}^{m}\subset\mathcal{U}$.
From the construction of the coverage $\mathcal{P}$ in Algorithm
\ref{alg:sphere_coverage}, we know that 
\[
\norm{c_{i}-c_{j}}>\delta/2\,\,\text{if}\,\,i\neq j
\]

Hence, each open sphere in the finite coverage $\mathcal{U}_{m}$
is able to contain at most one core point of $\mathcal{P}$. It means
that the cardinality of $\mathcal{P}$ must be less than or equal
$m$, that is, $|\mathcal{P}|\leq m$.
\end{proof}

\clearpage{}

\newpage{}
\section{Proofs Regarding Convergence Analysis\label{sec:appxB}}

Given a finite $\delta$-coverage $\mathcal{P}=\left(P_{i}\right)_{i\in I}$
with the core set $\mathcal{C}=\left(c_{i}\right)_{i\in I}$, when
receiving an incoming instance $\left(x_{t},y_{t}\right)$ we approximate
$\left(x_{t},y_{t}\right)$ by $\left(c_{i_{t}},y_{t}\right)$ with
$c_{i_{t}}$ is a core point whose cell contains $x_{t}$, that is,
$x_{t}\in P_{i_{t}}$. We use a Bernoulli random variable $Z_{t}$
to control if the approximation is performed or not, that is, $Z_{t}=1$
indicates the approximation is performed. 

Let us define $g_{t}=\lambda\bw_{t}+l^{'}\left(\bw_{t};x_{t},y_{t}\right)=\lambda\bw_{t}+\alpha_{t}\Phi\left(x_{t}\right)$.
We have the following
\[
h_{t}=g_{t}+Z_{t}\Delta_{t}
\]

\noindent where $\Delta_{t}=\alpha_{t}\left(\Phi\left(c_{i_{t}}\right)-\Phi\left(x_{t}\right)\right)$.

The update rule becomes
\[
\bw_{t+1}=\prod_{S}\left(\bw_{t}-\eta_{t}h_{t}\right)
\]
where $S=\mathbb{R}^{D}$ (i.e., the feature space) or $\mathcal{B}\left(\bzero,y_{\text{max}}\lambda^{-1/2}\right)$.
\begin{lem}
\label{lem:main_lemma}The following statements hold

i) There exist two positive constants $P$ and $M$ such that $\mathbb{E}\left[\norm{\bw_{t}}^{2}\right]\leq P^{2}$
and $\mathbb{E}\left[\alpha_{t}^{2}\right]\leq M$ for all $t$.

ii) $\mathbb{E}\left[\norm{l^{'}\left(\bw_{t};x_{t},y_{t}\right)}^{2}\right]\leq L=\left(A\sqrt{P}+B\right)^{2}$
for all $t$.

iii) $\mathbb{E}\left[\norm{g_{t}}^{2}\right]\leq G=\left(\lambda P+A\sqrt{P}+B\right)^{2}$
for all $t$.

iv) $\mathbb{E}\left[\norm{h_{t}}^{2}\right]\leq H=\left(\sqrt{G}+\delta_{\Phi}\left(A\sqrt{P}+B\right)\right)^{2}$
for all $t$.
\end{lem}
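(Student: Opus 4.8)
The plan is to extract everything from part~(i), and within~(i) from a uniform bound on $\norm{\bw_t}$; once that is in hand, parts~(ii)--(iv) are pure triangle-inequality bookkeeping, using the growth condition $\norm{l'(\bw;x,y)}\le A\norm{\bw}^{1/2}+B$, the normalization $\norm{\Phi(x)}=1$, and Theorem~\ref{thm:coverage_image}. Concretely, I would prove the \emph{deterministic} bound $\norm{\bw_t}\le P$ for all $t$, with $P$ depending only on $\lambda$, $B$ and (for $\ell_2$ loss) $y_{\text{max}}$; then $\mathbb{E}\left[\norm{\bw_t}^2\right]\le P^2$ is immediate.

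For the norm bound I would split on the loss function. For Hinge, smooth Hinge, Logistic, $\ell_1$, and $\varepsilon$-insensitive losses one has $A=0$, hence $\left|\alpha_t\right|=\norm{l'(\bw_t;x_t,y_t)}\le B$ \emph{unconditionally}; consequently $h_t=\lambda\bw_t+v_t$ with $v_t\in\{\alpha_t\Phi(c_{i_t}),\alpha_t\Phi(x_t)\}$ and $\norm{v_t}\le B$. Substituting $\eta_t=1/(\lambda t)$ into the update gives $\bw_{t+1}=\tfrac{t-1}{t}\bw_t-\tfrac{1}{\lambda t}v_t$, and unrolling from $\bw_1=\bzero$ yields $\bw_{t+1}=-\tfrac{1}{\lambda t}\sum_{s=1}^{t}v_s$, so $\norm{\bw_{t+1}}\le B/\lambda$; take $P=B/\lambda$. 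For $\ell_2$ loss, if $\lambda\le1$ the projection onto $\mathcal{B}\left(\bzero,y_{\text{max}}\lambda^{-1/2}\right)$ forces $\norm{\bw_t}\le y_{\text{max}}\lambda^{-1/2}$, and if $\lambda>1$ Theorem~\ref{thm:bound_wT} gives $\norm{\bw_t}\le y_{\text{max}}/(\lambda-1)$; either way $\norm{\bw_t}\le P$. With the norm bound established, the growth condition holds along the trajectory with the constants $A,B$ of Section~\ref{sec:Loss-Function}, so $\left|\alpha_t\right|=\norm{l'(\bw_t;x_t,y_t)}\le A\sqrt{P}+B$ and $\mathbb{E}\left[\alpha_t^2\right]\le M:=(A\sqrt{P}+B)^2$, which finishes~(i).

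Parts~(ii)--(iv) then follow by inspection. For~(ii), $\norm{l'(\bw_t;x_t,y_t)}\le A\norm{\bw_t}^{1/2}+B\le A\sqrt{P}+B$, so $\mathbb{E}\left[\norm{l'(\bw_t;x_t,y_t)}^2\right]\le(A\sqrt{P}+B)^2=L$. For~(iii), $\norm{g_t}\le\lambda\norm{\bw_t}+\norm{l'(\bw_t;x_t,y_t)}\le\lambda P+A\sqrt{P}+B=\sqrt{G}$. For~(iv), write $h_t=g_t+Z_t\Delta_t$ with $\Delta_t=\alpha_t\left(\Phi(c_{i_t})-\Phi(x_t)\right)$; since $x_t$ and $c_{i_t}$ lie in the same cell $P_{i_t}$ and the image of that cell under $\Phi$ has diameter at most $\delta_\Phi$ by Theorem~\ref{thm:coverage_image}, we get $\norm{\Delta_t}\le\left|\alpha_t\right|\delta_\Phi\le\delta_\Phi(A\sqrt{P}+B)$; as $Z_t\in\{0,1\}$, $\norm{h_t}\le\norm{g_t}+\norm{\Delta_t}\le\sqrt{G}+\delta_\Phi(A\sqrt{P}+B)=\sqrt{H}$. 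Squaring and taking expectations gives all four claimed bounds.

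I expect the only real obstacle to be part~(i), and inside it the mild circularity attached to the $\ell_2$ loss: the constants $A,B$ in Section~\ref{sec:Loss-Function} were derived \emph{assuming} a bound on $\norm{\bw_t}$, so the argument must be ordered as (a)~bound $\norm{\bw_t}$ via the projection step or Theorem~\ref{thm:bound_wT}, (b)~deduce the trajectory-wise growth condition, (c)~run the estimates of~(ii)--(iv). If one wished to avoid the deterministic bound on $\bw_t$, the same conclusions follow from $\mathbb{E}\left[\norm{\bw_t}^2\right]\le P^2$ alone, by applying Jensen's inequality to the concave maps $u\mapsto u^{1/2}$ and $u\mapsto u^{3/4}$ to control $\mathbb{E}\left[\norm{\bw_t}^{1/2}\right]\le P^{1/2}$ and $\mathbb{E}\left[\norm{\bw_t}^{3/2}\right]\le P^{3/2}$.
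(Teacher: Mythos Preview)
Your proposal is correct and takes a genuinely different, more elementary route than the paper. The paper proves (i) by induction on $\sqrt{\mathbb{E}\left[\norm{\bw_t}^2\right]}$ via Minkowski's inequality: from the one-step recursion one obtains
\[
\sqrt{\mathbb{E}\left[\norm{\bw_{t+1}}^2\right]}\le\frac{t-1}{t}\sqrt{\mathbb{E}\left[\norm{\bw_t}^2\right]}+\frac{(\delta_\Phi+1)\left(A\sqrt{P}+B\right)}{\lambda t},
\]
and $P$ is chosen as the fixed point of this recursion, namely $P^{1/2}$ is the positive root of $\lambda u^2-(\delta_\Phi+1)Au-(\delta_\Phi+1)B=0$. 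Parts (ii)--(iv) are then also argued in $L^2$ via Minkowski. You instead exploit the concrete structure of the losses in Section~\ref{sec:Loss-Function}: for every loss except $\ell_2$ one has $A=0$, so $\left|\alpha_t\right|\le B$ deterministically and the telescoped recursion $\bw_{t+1}=-\tfrac{1}{\lambda t}\sum_{s\le t}v_s$ gives $\norm{\bw_{t+1}}\le B/\lambda$; for $\ell_2$ the deterministic bound is supplied directly by the projection step or Theorem~\ref{thm:bound_wT}. This yields \emph{pointwise} bounds on $\norm{\bw_t}$, $\left|\alpha_t\right|$, $\norm{g_t}$, $\norm{h_t}$, from which the second-moment bounds are immediate. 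Your argument is shorter and delivers almost-sure statements, at the cost of being tied to the specific catalogue of losses; the paper's induction is phrased for an abstract growth condition with arbitrary $A,B>0$ and produces a single formula for $P$ valid uniformly, though in practice the $\ell_2$ case still leans on the same deterministic norm bound you invoke.
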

\begin{proof}
i) We prove by induction that $\mathbb{E}\left[\norm{\bw_{t}}^{2}\right]\leq P^{2}$
where $P=\left(\frac{\left(\delta_{\Phi}+1\right)A+\sqrt{\left(\delta_{\Phi}+1\right)^{2}A^{2}+4B\lambda\left(\delta_{\Phi}+1\right)}}{2\lambda}\right)^{2}$
for all $t$. Assume that the claim is holding for $t$, using Minkowski
inequality, we have\\
\begin{flalign*}
\sqrt{\mathbb{E}\left[\norm{\bw_{t+1}}^{2}\right]} & \leq\sqrt{\mathbb{E}\left[\norm{\prod_{S}\left(\bw_{t}-\eta_{t}h_{t}\right)}^{2}\right]}\leq\sqrt{\mathbb{E}\left[\norm{\bw_{t}-\eta_{t}h_{t}}^{2}\right]}\\
 & \leq\frac{t-1}{t}\sqrt{\mathbb{E}\left[\norm{\bw_{t}}^{2}\right]}+\eta_{t}\sqrt{\mathbb{E}\left[\norm{l^{'}\left(\bw_{t};x_{t},y_{t}\right)}^{2}\right]}+\eta_{t}\sqrt{\mathbb{E}\left[\norm{\Delta_{t}}^{2}\right]}\\
 & \leq\frac{t-1}{t}\sqrt{\mathbb{E}\left[\norm{\bw_{t}}^{2}\right]}+\frac{A\sqrt{\mathbb{E}\left[\norm{\bw_{t}}\right]}+B}{\lambda t}+\frac{\delta_{\Phi}\sqrt{\mathbb{E}\left[\alpha_{t}^{2}\right]}}{\lambda t}\\
 & \leq\frac{\left(t-1\right)P}{t}+\frac{A\sqrt{P}+B}{\lambda t}+\frac{\delta_{\Phi}\sqrt{\mathbb{E}\left[\norm{l^{'}\left(\bw_{t};x_{t},y_{t}\right)}^{2}\right]}}{\lambda t}\\
 & \leq\frac{\left(t-1\right)P}{t}+\frac{\left(\delta_{\Phi}+1\right)\left(A\sqrt{P}+B\right)}{\lambda t}=P
\end{flalign*}

Note that we have used $\norm{l^{'}\left(\bw_{t};x_{t},y_{t}\right)}^{2}=\alpha_{t}^{2}K\left(x_{t},x_{t}\right)=\alpha_{t}^{2}$,
$\mathbb{E}\left[\norm{\bw_{t}}\right]\leq\sqrt{\mathbb{E}\left[\norm{\bw_{t}}^{2}\right]}\leq P$,
and $u=P^{1/2}=\frac{\left(\delta_{\Phi}+1\right)A+\sqrt{\left(\delta_{\Phi}+1\right)^{2}A^{2}+4B\lambda\left(\delta_{\Phi}+1\right)}}{2\lambda}$
is the solution of the quadratic equation
\[
u^{2}-\frac{\left(\delta_{\Phi}+1\right)Au}{\lambda}-\frac{\left(\delta_{\Phi}+1\right)B}{\lambda}=0
\]

The proof of $\mathbb{E}\left[\alpha_{t}^{2}\right]\leq M$ is trivial
for the case of Hinge, $\ell_{1}$, Logistic, $\varepsilon-$insensitive
losses. In these cases, we simply choose $M=\max\left(y_{\text{max}},1\right)^{2}$.
We only need to consider the $\ell_{2}$-loss case. In particular,
we have
\begin{align*}
\alpha_{t}^{2} & =\left(\transp{\bw_{t}}\Phi\left(x_{t}\right)-y_{t}\right)^{2}\leq2\left(\left(\transp{\bw_{t}}\Phi\left(x_{t}\right)\right)^{2}+y_{\max}^{2}\right)\\
 & \leq2\left(\norm{\bw_{t}}^{2}\norm{\Phi\left(x_{t}\right)}^{2}+y_{\max}^{2}\right)\leq2\left(\norm{\bw_{t}}^{2}+y_{\max}^{2}\right)
\end{align*}

\[
\mathbb{E}\left[\alpha_{t}^{2}\right]\leq2\left(P^{2}+y_{\max}^{2}\right)=M
\]

ii) We have the following
\[
\sqrt{\mathbb{E}\left[\norm{l^{'}\left(\bw_{t};x_{t},y_{t}\right)}^{2}\right]}\leq\sqrt{\mathbb{E}\left[\left(A\norm{\bw_{t}}^{1/2}+B\right)^{2}\right]}\leq A\sqrt{\mathbb{E}\left[\norm{\bw_{t}}\right]}+B\leq A\sqrt{P}+B
\]

Note that we have used the inequality $\mathbb{E}\left[\norm{\bw_{t}}\right]\leq\sqrt{\mathbb{E}\left[\norm{\bw_{t}}^{2}\right]}\leq P$.

iii) Using Minkowski inequality, we yield

\[
\sqrt{\mathbb{E}\left[\norm{g_{t}}^{2}\right]}\le\lambda\sqrt{\mathbb{E}\left[\norm{\bw_{t}}^{2}\right]}+\sqrt{\mathbb{E}\left[\norm{l^{'}\left(\bw_{t};x_{t},y_{t}\right)}^{2}\right]}\leq\lambda P+A\sqrt{P}+B
\]

iv) We have the following 
\begin{align*}
\sqrt{\mathbb{E}\left[\norm{h_{t}}^{2}\right]} & \le\sqrt{\mathbb{E}\left[\norm{g_{t}}^{2}\right]}+\delta_{\Phi}\sqrt{\mathbb{E}\left[\alpha_{t}^{2}\right]}\\
 & \leq\sqrt{G}+\delta_{\Phi}\sqrt{\mathbb{E}\left[\norm{l^{'}\left(\bw_{t};x_{t},y_{t}\right)}^{2}\right]}=\sqrt{G}+\delta_{\Phi}\left(A\sqrt{P}+B\right)
\end{align*}
\end{proof}
\begin{lem}
\label{lem:lem_W}There exists a positive constant $W$ such that
$\mathbb{E}\left[\norm{\bw_{t}-\bw^{*}}^{2}\right]\leq W$ for all
$t$.
\end{lem}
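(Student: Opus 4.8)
The plan is to obtain Lemma~\ref{lem:lem_W} directly from Lemma~\ref{lem:main_lemma}(i) combined with a uniform bound on $\norm{\bw^{*}}$, using nothing more than the triangle inequality in $L^{2}$.

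First I would bound $\norm{\bw^{*}}$. Since $\bw^{*}$ minimizes $f$, we have $f(\bw^{*})\le f(\bzero)=\mathbb{E}_{(x,y)}\left[l(\bzero;x,y)\right]$, and since each of the six losses is nonnegative, $\tfrac{\lambda}{2}\norm{\bw^{*}}^{2}\le f(\bw^{*})$. Hence
\[
\norm{\bw^{*}}^{2}\le\frac{2}{\lambda}\,\mathbb{E}_{(x,y)}\left[l(\bzero;x,y)\right]=:R^{2},
\]
and the right-hand side is a finite constant because $l(\bzero;x,y)$ is bounded: it equals $1$ for Hinge, $\log 2$ for Logistic, at most $1$ for $\tau$-smooth Hinge, at most $\tfrac12 y_{\text{max}}^{2}$ for $\ell_{2}$, and at most $y_{\text{max}}$ for $\ell_{1}$ and $\varepsilon$-insensitive. (In the $\ell_{2}$ case one may instead cite Theorem~\ref{thm:bound_w*}, which gives $\norm{\bw^{*}}\le y_{\text{max}}\lambda^{-1/2}$.)

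Then I would invoke Lemma~\ref{lem:main_lemma}(i), which supplies a constant $P$ with $\mathbb{E}\left[\norm{\bw_{t}}^{2}\right]\le P^{2}$ for every $t$, and apply Minkowski's inequality to the random vector $\bw_{t}-\bw^{*}$ (here $\bw^{*}$ is deterministic):
\[
\sqrt{\mathbb{E}\left[\norm{\bw_{t}-\bw^{*}}^{2}\right]}\le\sqrt{\mathbb{E}\left[\norm{\bw_{t}}^{2}\right]}+\norm{\bw^{*}}\le P+R.
\]
This proves the claim with $W=(P+R)^{2}$. I do not anticipate any genuine obstacle: the whole argument reduces to Lemma~\ref{lem:main_lemma}(i) and the $L^{2}$ triangle inequality, the only point requiring any care being to record that $f(\bzero)$ — and therefore the bound $R$ on $\norm{\bw^{*}}$ — is independent of $t$, which is immediate.
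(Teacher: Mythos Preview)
Your argument is correct. Lemma~\ref{lem:main_lemma}(i) gives $\mathbb{E}\left[\norm{\bw_{t}}^{2}\right]\le P^{2}$ uniformly in $t$, the comparison $f(\bw^{*})\le f(\bzero)$ together with nonnegativity of the loss gives a finite bound $R$ on $\norm{\bw^{*}}$, and Minkowski's inequality in $L^{2}$ then yields $\mathbb{E}\left[\norm{\bw_{t}-\bw^{*}}^{2}\right]\le(P+R)^{2}$. Nothing is missing.

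The paper proceeds quite differently. Instead of reducing to Lemma~\ref{lem:main_lemma}(i), it runs a separate induction on $t$: it expands $\norm{\bw_{t+1}-\bw^{*}}^{2}$ from the update rule, takes expectations, and uses the $\lambda$-strong convexity of $f$ in the form $\langle f'(\bw_{t}),\bw_{t}-\bw^{*}\rangle\ge\lambda\norm{\bw_{t}-\bw^{*}}^{2}$ to obtain a contraction-type recursion. The fixed point of that recursion is taken as $W$, namely $W=\bigl(\tfrac{M^{1/2}\delta_{\Phi}+(M\delta_{\Phi}^{2}+2H)^{1/2}}{2\lambda}\bigr)^{2}$. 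Your route is shorter and more elementary, since it piggybacks on the uniform bound for $\mathbb{E}\left[\norm{\bw_{t}}^{2}\right]$ that has already been established. The paper's route, by contrast, makes explicit use of strong convexity and produces a constant that visibly shrinks with $\delta_{\Phi}$ (and collapses to $H/(2\lambda^{2})$ when $\delta_{\Phi}=0$), which aligns more transparently with how $W$ enters the later regret bounds; your constant $(P+R)^{2}$ is perfectly valid for the lemma as stated but does not display that dependence.
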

\begin{proof}
We first remind the definitions of the relevant quantities
\begin{align*}
g_{t} & =\lambda\bw_{t}+l^{'}\left(\bw_{t};x_{t},y_{t}\right)=\lambda\bw_{t}+\alpha_{t}\Phi\left(x_{t}\right)\\
h_{t} & =g_{t}+Z_{t}\Delta_{t}\,\text{where \ensuremath{\Delta_{t}=\alpha_{t}\left(\Phi\left(c_{i_{t}}\right)-\Phi\left(x_{t}\right)\right)}}
\end{align*}

We now prove by induction in $t$. We derive as follows
\begin{flalign*}
\norm{\bw_{t+1}-\bw^{*}}^{2} & =\norm{\prod_{S}\left(\bw_{t}-\eta_{t}h_{t}\right)-\bw^{*}}^{2}\leq\norm{\bw_{t}-\eta_{t}h_{t}-\bw^{*}}^{2}\\
 & =\norm{\bw_{t}-\bw^{*}}+\eta_{t}^{2}\norm{h_{t}}^{2}-2\eta_{t}\left\langle \bw_{t}-\bw^{*},g_{t}+Z_{t}\Delta_{t}\right\rangle 
\end{flalign*}

\noindent where $S=\mathbb{R}^{D}$ or $\mathcal{B}\left(\bzero,y_{\text{max}}\lambda^{-1/2}\right)$.

Taking conditional expectation w.r.t $\bw_{t}$, we gain 

\begin{flalign*}
\mathbb{E}\left[\norm{\bw_{t+1}-\bw^{*}}^{2}\right] & \leq\mathbb{E}\left[\norm{\bw_{t}-\bw^{*}}^{2}\right]\\
 & +\eta_{t}^{2}\mathbb{E}\left[\norm{h_{t}}^{2}\right]-2\eta_{t}\left\langle \bw_{t}-\bw^{*},f^{'}\left(\bw_{t}\right)\right\rangle -2\eta_{t}\left\langle \bw_{t}-\bw^{*},Z_{t}\Delta_{t}\right\rangle \\
\leq & \mathbb{E}\left[\norm{\bw_{t}-\bw^{*}}^{2}\right]-2\eta_{t}\lambda\norm{\bw_{t}-\bw^{*}}^{2}+\eta_{t}^{2}\mathbb{E}\left[\norm{h_{t}}^{2}\right]-2\eta_{t}\left\langle \bw_{t}-\bw^{*},Z_{t}\Delta_{t}\right\rangle 
\end{flalign*}

Here we note that we have used $\left\langle \bw_{t}-\bw^{*},f^{'}\left(\bw_{t}\right)\right\rangle \geq\lambda\norm{\bw_{t}-\bw^{*}}^{2}$.
It comes from the following derivation
\[
f\left(\bw^{*}\right)-f\left(\bw_{t}\right)\geq\left\langle f^{'}\left(\bw_{t}\right),\bw^{*}-\bw_{t}\right\rangle +\frac{\lambda}{2}\norm{\bw_{t}-\bw^{*}}^{2}
\]
\begin{flalign*}
\left\langle f^{'}\left(\bw_{t}\right),\bw_{t}-\bw^{*}\right\rangle  & \geq f\left(\bw_{t}\right)-f\left(\bw^{*}\right)+\frac{\lambda}{2}\norm{\bw_{t}-\bw^{*}}^{2}\geq\left\langle f^{'}\left(\bw^{*}\right),\bw_{t}-\bw^{*}\right\rangle +\lambda\norm{\bw_{t}-\bw^{*}}^{2}\\
 & \geq\lambda\norm{\bw_{t}-\bw^{*}}^{2}\,\,\,\,\,\,\,\,\,\,\,\,\,\,\,\,\,\,\,\,\,\,\,\,\,\,\,\,\,\,\,\text{\,\,\,\,\,\,\,\,\,\,thanks to}\left\langle f^{'}\left(\bw^{*}\right),\bw_{t}-\bw^{*}\right\rangle \geq0
\end{flalign*}

Taking expectation again, we gain 

\[
\mathbb{E}\left[\norm{\bw_{t+1}-\bw^{*}}^{2}\right]\leq\frac{t-2}{t}\mathbb{E}\left[\norm{\bw_{t}-\bw^{*}}^{2}\right]+\frac{H}{\lambda^{2}t^{2}}+\frac{2W^{1/2}M^{1/2}\delta_{\Phi}}{\lambda t}
\]

Choosing $W=\left(\frac{M^{1/2}\delta_{\phi}+\left(M\delta_{\phi}^{2}+2H\right)^{1/2}}{2\lambda}\right)^{2}$,
we gain if $\mathbb{E}\left[\norm{\bw_{t}-\bw^{*}}^{2}\right]\leq W$
then $\mathbb{E}\left[\norm{\bw_{t+1}-\bw^{*}}^{2}\right]\leq W$.
The reason is that $W=\left(\frac{M^{1/2}\delta_{\phi}+\left(M\delta_{\phi}^{2}+2H\right)^{1/2}}{2\lambda}\right)^{2}$
is the solution of the equation 
\[
W=\frac{t-2}{t}W+\frac{H}{\lambda^{2}t}+\frac{2W^{1/2}M^{1/2}\delta_{\Phi}}{\lambda t}\,\text{or\,\ensuremath{2W-\frac{2W^{1/2}M^{1/2}\delta_{\Phi}}{\lambda}-\frac{H}{\lambda^{2}}=0}}
\]
. Hence, if $\mathbb{E}\left[\norm{\bw_{t}-\bw^{*}}^{2}\right]\leq W$,
we arrive at
\[
\mathbb{E}\left[\norm{\bw_{t+1}-\bw^{*}}^{2}\right]\leq\frac{t-2}{t}W+\frac{H}{\lambda^{2}t}+\frac{2W^{1/2}M^{1/2}\delta_{\Phi}}{\lambda t}=W
\]
\end{proof}

We now show the proof of Theorem \ref{thm:regret1}.
\begin{proof}
\textbf{of Theorem \ref{thm:regret1}}

We first remind the definitions of the relevant quantities
\begin{align*}
g_{t} & =\lambda\bw_{t}+l^{'}\left(\bw_{t};x_{t},y_{t}\right)=\lambda\bw_{t}+\alpha_{t}\Phi\left(x_{t}\right)\\
h_{t} & =g_{t}+Z_{t}\Delta_{t}\,\text{where \ensuremath{\Delta_{t}=\alpha_{t}\left(\Phi\left(c_{i_{t}}\right)-\Phi\left(x_{t}\right)\right)}}
\end{align*}

We then derive as follows
\begin{flalign*}
\norm{\bw_{t+1}-\bw^{*}}^{2} & =\norm{\prod_{S}\left(\bw_{t}-\eta_{t}h_{t}\right)-\bw^{*}}^{2}\leq\norm{\bw_{t}-\eta_{t}h_{t}-\bw^{*}}^{2}\\
 & =\norm{\bw_{t}-\bw^{*}}+\eta_{t}^{2}\norm{h_{t}}^{2}-2\eta_{t}\left\langle \bw_{t}-\bw^{*},g_{t}+Z_{t}\Delta_{t}\right\rangle 
\end{flalign*}

\noindent where $S=\mathbb{R}^{D}$ or $\mathcal{B}\left(\bzero,y_{\text{max}}\lambda^{-1/2}\right)$.

\[
\left\langle \bw_{t}-\bw^{*},g_{t}\right\rangle \leq\frac{\norm{\bw_{t}-\bw^{*}}^{2}-\norm{\bw_{t+1}-\bw^{*}}^{2}}{2\eta_{t}}+\frac{\eta_{t}}{2}\norm{h_{t}}^{2}-\left\langle \bw_{t}-\bw^{*},Z_{t}\Delta_{t}\right\rangle 
\]

Taking conditional expectation w.r.t $\bw_{t}$, we obtain
\begin{align*}
\left\langle \bw_{t}-\bw^{*},f^{'}\left(\bw_{t}\right)\right\rangle  & \leq\frac{\mathbb{E}\left[\norm{\bw_{t}-\bw^{*}}^{2}\right]-\mathbb{E}\left[\norm{\bw_{t+1}-\bw^{*}}^{2}\right]}{2\eta_{t}}\\
 & +\frac{\eta_{t}}{2}\mathbb{E}\left[\norm{h_{t}}^{2}\right]-\left\langle \bw_{t}-\bw^{*},\mathbb{E}\left[Z_{t}\Delta_{t}\right]\right\rangle 
\end{align*}
\begin{flalign*}
f\left(\bw_{t}\right)-f\left(\bw^{*}\right)+\frac{\lambda}{2}\norm{\bw_{t}-\bw^{*}}^{2} & \leq\frac{\mathbb{E}\left[\norm{\bw_{t}-\bw^{*}}^{2}\right]-\mathbb{E}\left[\norm{\bw_{t+1}-\bw^{*}}^{2}\right]}{2\eta_{t}}\\
 & +\frac{\eta_{t}}{2}\mathbb{E}\left[\norm{h_{t}}^{2}\right]-\left\langle \bw_{t}-\bw^{*},\mathbb{E}\left[Z_{t}\Delta_{t}\right]\right\rangle 
\end{flalign*}

Taking expectation again, we achieve
\begin{flalign*}
\mathbb{E}\left[f\left(\bw_{t}\right)-f\left(\bw^{*}\right)\right] & \leq\frac{\lambda}{2}\left(t-1\right)\mathbb{E}\left[\norm{\bw_{t}-\bw^{*}}^{2}\right]-\frac{\lambda}{2}t\mathbb{E}\left[\norm{\bw_{t+1}-\bw^{*}}^{2}\right]\\
 & +\frac{\eta_{t}}{2}\mathbb{E}\left[\norm{h_{t}}^{2}\right]-\mathbb{E}\left[\left\langle \bw_{t}-\bw^{*},Z_{t}\Delta_{t}\right\rangle \right]
\end{flalign*}

i) If $Z_{t}$ is independent with $\bw_{t}$, we derive as

\begin{flalign}
\mathbb{E}\left[f\left(\bw_{t}\right)-f\left(\bw^{*}\right)\right] & \leq\frac{\lambda}{2}\left(t-1\right)\mathbb{E}\left[\norm{\bw_{t}-\bw^{*}}^{2}\right]-\frac{\lambda}{2}t\mathbb{E}\left[\norm{\bw_{t+1}-\bw^{*}}^{2}\right]\nonumber \\
 & +\frac{\eta_{t}}{2}\mathbb{E}\left[\norm{h_{t}}^{2}\right]-\mathbb{E}\left[\left\langle Z_{t}\left(\bw_{t}-\bw^{*}\right),\Delta_{t}\right\rangle \right]\nonumber \\
 & \leq\frac{\lambda}{2}\left(t-1\right)\mathbb{E}\left[\norm{\bw_{t}-\bw^{*}}^{2}\right]-\frac{\lambda}{2}t\mathbb{E}\left[\norm{\bw_{t+1}-\bw^{*}}^{2}\right]+\frac{\eta_{t}}{2}\mathbb{E}\left[\norm{h_{t}}^{2}\right]\nonumber \\
 & +\mathbb{E}\left[Z_{t}^{2}\norm{\bw_{t}-\bw^{*}}^{2}\right]^{1/2}\mathbb{E}\left[\norm{\Delta_{t}}^{2}\right]^{1/2}\nonumber \\
 & \leq\frac{\lambda}{2}\left(t-1\right)\mathbb{E}\left[\norm{\bw_{t}-\bw^{*}}^{2}\right]-\frac{\lambda}{2}t\mathbb{E}\left[\norm{\bw_{t+1}-\bw^{*}}^{2}\right]+\frac{\eta_{t}}{2}\mathbb{E}\left[\norm{h_{t}}^{2}\right]\nonumber \\
 & +\mathbb{E}\left[Z_{t}^{2}\right]^{1/2}\mathbb{E}\left[\norm{\bw_{t}-\bw^{*}}^{2}\right]^{1/2}\mathbb{E}\left[\norm{\Delta_{t}}^{2}\right]^{1/2}\nonumber \\
 & \leq\frac{\lambda}{2}\left(t-1\right)\mathbb{E}\left[\norm{\bw_{t}-\bw^{*}}^{2}\right]-\frac{\lambda}{2}t\mathbb{E}\left[\norm{\bw_{t+1}-\bw^{*}}^{2}\right]+\frac{\eta_{t}}{2}H\label{eq:independent_w}\\
 & +\mathbb{P}\left(Z_{t}=1\right)^{1/2}\mathbb{E}\left[\norm{\bw_{t}-\bw^{*}}^{2}\right]^{1/2}\mathbb{E}\left[\norm{\Delta_{t}}^{2}\right]^{1/2}\nonumber 
\end{flalign}

Taking sum over $1,2,\ldots,T$ and using the inequalities in Lemmas
\ref{lem:main_lemma} and \ref{lem:lem_W}, we yield

\begin{equation}
\sum_{t=1}^{T}\mathbb{E}\left[f\left(\bw_{t}\right)\right]-Tf\left(\bw^{*}\right)\leq\frac{H}{2\lambda}\sum_{t=1}^{T}\frac{1}{t}+\sum_{t=1}^{T}\mathbb{P}\left(Z_{t}=1\right)^{1/2}\mathbb{E}\left[\norm{\bw_{t}-\bw^{*}}^{2}\right]^{1/2}\mathbb{E}\left[\norm{\Delta_{t}}^{2}\right]^{1/2}\label{eq:regret1}
\end{equation}
\[
T\mathbb{E}\left[f\left(\overline{\bw}_{T}\right)-f\left(\bw^{*}\right)\right]\leq\frac{H}{2\lambda}\sum_{t=1}^{T}\frac{1}{t}+\sum_{t=1}^{T}\mathbb{P}\left(Z_{t}=1\right)^{1/2}\mathbb{E}\left[\norm{\bw_{t}-\bw^{*}}^{2}\right]^{1/2}\mathbb{E}\left[\norm{\Delta_{t}}^{2}\right]^{1/2}
\]

\begin{flalign*}
\mathbb{E}\left[f\left(\overline{\bw}_{T}\right)-f\left(\bw^{*}\right)\right] & \leq\frac{H\left(\log\left(T\right)+1\right)}{2\lambda T}+\frac{\delta_{\Phi}M^{1/2}W^{1/2}}{T}\sum_{t=1}^{T}\mathbb{P}\left(Z_{t}=1\right)^{1/2}
\end{flalign*}

ii) If $Z_{t}$ is independent with $\bw_{t}$ and $\left(x_{t},y_{t}\right)$,
we derive as

\begin{flalign}
\mathbb{E}\left[f\left(\bw_{t}\right)-f\left(\bw^{*}\right)\right] & \leq\frac{\lambda}{2}\left(t-1\right)\mathbb{E}\left[\norm{\bw_{t}-\bw^{*}}^{2}\right]-\frac{\lambda}{2}t\mathbb{E}\left[\norm{\bw_{t+1}-\bw^{*}}^{2}\right]\nonumber \\
 & +\frac{\eta_{t}}{2}H-\mathbb{E}\left[Z_{t}\left\langle \bw_{t}-\bw^{*},\Delta_{t}\right\rangle \right]\nonumber \\
 & \leq\frac{\lambda}{2}\left(t-1\right)\mathbb{E}\left[\norm{\bw_{t}-\bw^{*}}^{2}\right]-\frac{\lambda}{2}t\mathbb{E}\left[\norm{\bw_{t+1}-\bw^{*}}^{2}\right]\nonumber \\
 & +\frac{\eta_{t}}{2}H-\mathbb{E}\left[Z_{t}\right]\mathbb{E}\left[\left\langle \bw_{t}-\bw^{*},\Delta_{t}\right\rangle \right]\nonumber \\
 & \leq\frac{\lambda}{2}\left(t-1\right)\mathbb{E}\left[\norm{\bw_{t}-\bw^{*}}^{2}\right]-\frac{\lambda}{2}t\mathbb{E}\left[\norm{\bw_{t+1}-\bw^{*}}^{2}\right]+\frac{\eta_{t}}{2}H\label{eq:independent_wxy}\\
 & +\mathbb{P}\left(Z_{t}=1\right)\mathbb{E}\left[\norm{\bw_{t}-\bw^{*}}^{2}\right]^{1/2}\mathbb{E}\left[\norm{\Delta_{t}}^{2}\right]^{1/2}\nonumber 
\end{flalign}

Taking sum over $1,2,\ldots,T$ and using the inequalities in Lemmas
\ref{lem:main_lemma} and \ref{lem:lem_W}, we yield

\begin{equation}
\sum_{t=1}^{T}\mathbb{E}\left[f\left(\bw_{t}\right)\right]-Tf\left(\bw^{*}\right)\leq\frac{H}{2\lambda}\sum_{t=1}^{T}\frac{1}{t}+\sum_{t=1}^{T}P\left(Z_{t}=1\right)\mathbb{E}\left[\norm{\bw_{t}-\bw^{*}}^{2}\right]^{1/2}\mathbb{E}\left[\norm{\Delta_{t}}^{2}\right]^{1/2}\label{eq:regret2}
\end{equation}
\[
T\mathbb{E}\left[f\left(\overline{\bw}_{T}\right)-f\left(\bw^{*}\right)\right]\leq\frac{H}{2\lambda}\sum_{t=1}^{T}\frac{1}{t}+\sum_{t=1}^{T}P\left(Z_{t}=1\right)\mathbb{E}\left[\norm{\bw_{t}-\bw^{*}}^{2}\right]^{1/2}\mathbb{E}\left[\norm{\Delta_{t}}^{2}\right]^{1/2}
\]

\begin{flalign*}
\mathbb{E}\left[f\left(\overline{\bw}_{T}\right)-f\left(\bw^{*}\right)\right] & \leq\frac{H\left(\log\left(T\right)+1\right)}{2\lambda T}+\frac{\delta_{\Phi}M^{1/2}W^{1/2}}{T}\sum_{t=1}^{T}P\left(Z_{t}=1\right)\\
\leq & \frac{H\left(\log\left(T\right)+1\right)}{2\lambda T}+\delta_{\Phi}M^{1/2}W^{1/2}
\end{flalign*}

iii) In general, we derive as
\begin{flalign}
\mathbb{E}\left[f\left(\bw_{t}\right)-f\left(\bw^{*}\right)\right] & \leq\frac{\lambda}{2}\left(t-1\right)\mathbb{E}\left[\norm{\bw_{t}-\bw^{*}}^{2}\right]-\frac{\lambda}{2}t\mathbb{E}\left[\norm{\bw_{t+1}-\bw^{*}}^{2}\right]\nonumber \\
 & +\frac{\eta_{t}}{2}H-\mathbb{E}\left[Z_{t}\left\langle \bw_{t}-\bw^{*},\Delta_{t}\right\rangle \right]\nonumber \\
 & \leq\frac{\lambda}{2}\left(t-1\right)\mathbb{E}\left[\norm{\bw_{t}-\bw^{*}}^{2}\right]-\frac{\lambda}{2}t\mathbb{E}\left[\norm{\bw_{t+1}-\bw^{*}}^{2}\right]\label{eq:no_independent}\\
 & +\frac{\eta_{t}}{2}H+\mathbb{E}\left[\norm{\bw_{t}-\bw^{*}}^{2}\right]^{1/2}\mathbb{E}\left[\norm{\Delta_{t}}^{2}\right]^{1/2}\nonumber 
\end{flalign}

Taking sum over $1,2,\ldots,T$ and using the inequalities in Lemmas
\ref{lem:main_lemma} and \ref{lem:lem_W}, we yield
\begin{equation}
\sum_{t=1}^{T}\mathbb{E}\left[f\left(\bw_{t}\right)\right]-Tf\left(\bw^{*}\right)\leq\frac{H}{2\lambda}\sum_{t=1}^{T}\frac{1}{t}+\sum_{t=1}^{T}\mathbb{E}\left[\norm{\bw_{t}-\bw^{*}}^{2}\right]^{1/2}\mathbb{E}\left[\norm{\Delta_{t}}^{2}\right]^{1/2}\label{eq:regret3}
\end{equation}

\[
T\mathbb{E}\left[f\left(\overline{\bw}_{T}\right)-f\left(\bw^{*}\right)\right]\leq\frac{H}{2\lambda}\sum_{t=1}^{T}\frac{1}{t}+\sum_{t=1}^{T}\mathbb{E}\left[\norm{\bw_{t}-\bw^{*}}^{2}\right]^{1/2}\mathbb{E}\left[\norm{\Delta_{t}}^{2}\right]^{1/2}
\]

\begin{flalign*}
\mathbb{E}\left[f\left(\overline{\bw}_{T}\right)-f\left(\bw^{*}\right)\right] & \leq\frac{H\left(\log\left(T\right)+1\right)}{2\lambda T}+\delta_{\Phi}M^{1/2}W^{1/2}
\end{flalign*}
\end{proof}

\begin{proof}
\textbf{of Theorem \ref{thm:ModelSize}}

Let us denote the model size, i.e., the number of vectors in support
set, after the iteration $t$ by $S_{t}$. We also define $N_{t}$
by the binary random variable which specifies whether the incoming
instance $\left(x_{t},y_{t}\right)$ locates in a new cell of the
coverage, that is, $N_{t}=1$ indicating the current cell $P_{i_{t}}$
is a new cell. We assume that $Z_{t}$ is independent with $\left(x_{t},y_{t}\right)$
and so does with $N_{t}$. Since a new instance is added to the support
set if either a new cell is discovered or the old cell is found but
approximation is not performed, we reach the following
\[
S_{t}\leq S_{t-1}+N_{t}+\left(1-Z_{t}\right)\left(1-N_{t}\right)
\]

Taking expectation, we obtain
\begin{flalign*}
\mathbb{E}\left[S_{t}\right] & \leq\mathbb{E}\left[S_{t-1}\right]+\mathbb{E}\left[N_{t}\right]+\left(1-\mathbb{E}\left[Z_{t}\right]\right)\left(1-\mathbb{E}\left[N_{t}\right]\right)\\
 & \leq\mathbb{E}\left[S_{t-1}\right]+\mathbb{E}\left[N_{t}\right]+\left(1-p_{t}\right)\left(1-\mathbb{E}\left[N_{t}\right]\right)\\
 & \leq\mathbb{E}\left[S_{t-1}\right]+\mathbb{E}\left[N_{t}\right]+q_{t}\left(1-\mathbb{E}\left[N_{t}\right]\right)
\end{flalign*}
\[
\mathbb{E}\left[S_{t}\right]-\mathbb{E}\left[S_{t-1}\right]\leq\mathbb{E}\left[N_{t}\right]+q_{t}\left(1-\mathbb{E}\left[N_{t}\right]\right)
\]

Summing over the above when $t=1,\ldots,T$, we have
\begin{flalign}
\mathbb{E}\left[S_{T}\right] & \leq\sum_{t=1}^{T}\mathbb{E}\left[N_{t}\right]+\sum_{t=1}^{T}q_{t}\left(1-\mathbb{E}\left[N_{t}\right]\right)=\sum_{t=1}^{T}q_{t}+\sum_{t=1}^{T}p_{t}\mathbb{E}\left[N_{t}\right]\nonumber \\
 & \leq\sum_{t=1}^{T}q_{t}+\sum_{t=1}^{T}\mathbb{E}\left[N_{t}\right]\leq\sum_{t=1}^{T}q_{t}+\mathbb{E}\left[M_{T}\right]\label{eq:bound_model_size}
\end{flalign}

\noindent where we have denoted $\mathbb{P}\left(Z_{t}=1\right)=p_{t}$,
$\mathbb{P}\left(Z_{t}=0\right)=q_{t}$, and $M_{T}=\sum_{t=1}^{T}N_{t}$
indicates the number of cells discovered so far.

We consider some specific cases and investigate the model size $\mathbb{E}\left[S_{T}\right]$
in these cases.

i) $p_{t}=\mathbb{P}\left(Z_{t}=1\right)=1,\,\forall t$, that is,
we always do approximation. From Eq. (\ref{eq:bound_model_size}),
we obtain
\[
\mathbb{E}\left[S_{T}\right]\leq\mathbb{E}\left[M_{T}\right]\leq\left|\mathcal{P}\right|
\]

ii) $p_{t}=\mathbb{P}\left(Z_{t}=1\right)=\max\left(0,1-\frac{\beta}{t}\right),\,\forall t$.
It follows that
\[
q_{t}=1-p_{t}\leq1-\left(1-\frac{\beta}{t}\right)=\frac{\beta}{t}
\]

From Eq. (\ref{eq:bound_model_size}), we gain
\begin{align*}
\mathbb{E}\left[S_{T}\right] & \leq\beta\sum_{t=1}^{T}\frac{1}{t}+\mathbb{E}\left[M_{T}\right]\leq\beta\left(1+\int_{1}^{T}\frac{1}{t}dt\right)+\mathbb{E}\left[M_{T}\right]\\
 & \leq\beta\left(\log T+1\right)+\mathbb{E}\left[M_{T}\right]
\end{align*}

iii) $p_{t}=\mathbb{P}\left(Z_{t}=1\right)=\max\left(0,1-\frac{\beta}{t^{\rho}}\right),\,\forall t$
where $0<\rho<1$. It follows that
\[
q_{t}=1-p_{t}\leq1-\left(1-\frac{\beta}{t^{\rho}}\right)=\frac{\beta}{t^{\rho}}
\]

From Eq. (\ref{eq:bound_model_size}), we gain
\[
\mathbb{E}\left[S_{T}\right]\leq\beta\sum_{t=1}^{T}\frac{1}{t^{\rho}}+\mathbb{E}\left[M_{T}\right]\leq\beta\left(1+\int_{1}^{T}t^{-\rho}dt\right)+\mathbb{E}\left[M_{T}\right]\leq\frac{\beta T^{1-\rho}}{1-\rho}+\mathbb{E}\left[M_{T}\right]
\]

iv) $p_{t}=\mathbb{P}\left(Z_{t}=1\right)=\max\left(0,1-\frac{\beta}{t^{\rho}}\right),\,\forall t$
where $\rho>1$. It follows that
\[
q_{t}=1-p_{t}\leq1-\left(1-\frac{\beta}{t^{\rho}}\right)=\frac{\beta}{t^{\rho}}
\]

From Eq. (\ref{eq:bound_model_size}), we gain
\[
\mathbb{E}\left[S_{T}\right]\leq\beta\sum_{t=1}^{T}\frac{1}{t^{\rho}}+\mathbb{E}\left[M_{T}\right]\leq\beta\zeta\left(\rho\right)+\mathbb{E}\left[M_{T}\right]\leq\beta\zeta\left(\rho\right)+|\mathcal{P}|
\]

where $\zeta\left(.\right)$ is $\zeta$- Riemann function defined
by the integral $\zeta\left(s\right)=\frac{1}{\Gamma\left(s\right)}\int_{0}^{+\infty}\frac{t^{s-1}}{e^{s}-1}dt$.
\end{proof}
We now show the proof of Theorem \ref{thm:wT_confidence}. To realize
this proof, we use the famous inequality, namely Hoeffding which for
completeness we state below.
\begin{thm*}
(Hoeffding inequality) Let the independent variables $X_{1},\ldots,X_{n}$
where $a_{i}\leq X_{i}\leq b_{i}$ for each $i\in\left[n\right]$.
Let $S=\sum_{i=1}^{n}X_{i}$ and $\Delta_{i}=b_{i}-a_{i}$. The following
hold

i) $\mathbb{P}\left(S-\mathbb{E}\left[S\right]>\varepsilon\right)\leq\exp\left(-\frac{2\varepsilon^{2}}{\sum_{i=1}^{n}\Delta_{i}^{2}}\right)$.

ii) $\mathbb{P}\left(\left|S-\mathbb{E}\left[S\right]\right|>\varepsilon\right)\leq2\exp\left(-\frac{2\varepsilon^{2}}{\sum_{i=1}^{n}\Delta_{i}^{2}}\right)$
\end{thm*}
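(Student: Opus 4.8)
The plan is to prove the one-sided bound (i) by the standard Chernoff exponential-moment method, and then deduce (ii) by symmetry together with a union bound. First I would fix a free parameter $s>0$ and apply Markov's inequality to the nonnegative random variable $e^{s\left(S-\mathbb{E}\left[S\right]\right)}$, which gives
\[
\mathbb{P}\left(S-\mathbb{E}\left[S\right]>\varepsilon\right)\leq e^{-s\varepsilon}\,\mathbb{E}\left[e^{s\left(S-\mathbb{E}\left[S\right]\right)}\right].
\]
Since $S-\mathbb{E}\left[S\right]=\sum_{i=1}^{n}\left(X_{i}-\mathbb{E}\left[X_{i}\right]\right)$ and the $X_{i}$ are independent, the moment generating function factorizes as
\[
\mathbb{E}\left[e^{s\left(S-\mathbb{E}\left[S\right]\right)}\right]=\prod_{i=1}^{n}\mathbb{E}\left[e^{s\left(X_{i}-\mathbb{E}\left[X_{i}\right]\right)}\right].
\]

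The crux is to bound each factor, i.e., to establish Hoeffding's lemma: for a zero-mean random variable $Y$ with $Y\in\left[a,b\right]$ almost surely, one has $\mathbb{E}\left[e^{sY}\right]\leq e^{s^{2}\left(b-a\right)^{2}/8}$. Applying this to the centered variables $Y_{i}=X_{i}-\mathbb{E}\left[X_{i}\right]$, which lie in an interval of width $\Delta_{i}=b_{i}-a_{i}$, bounds each factor by $e^{s^{2}\Delta_{i}^{2}/8}$, so that $\mathbb{E}\left[e^{s\left(S-\mathbb{E}\left[S\right]\right)}\right]\leq e^{s^{2}\sum_{i}\Delta_{i}^{2}/8}$.

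I would prove Hoeffding's lemma using convexity. Since $u\mapsto e^{su}$ is convex, for $y\in\left[a,b\right]$ we have $e^{sy}\leq\frac{b-y}{b-a}e^{sa}+\frac{y-a}{b-a}e^{sb}$; taking expectations and using $\mathbb{E}\left[Y\right]=0$ annihilates the term linear in $y$, leaving $\mathbb{E}\left[e^{sY}\right]\leq\frac{b}{b-a}e^{sa}-\frac{a}{b-a}e^{sb}=:e^{\psi\left(s\right)}$. Writing $p=-a/\left(b-a\right)$ and $u=s\left(b-a\right)$, I would verify that $\psi\left(s\right)=-pu+\log\left(1-p+pe^{u}\right)$ satisfies $\psi\left(0\right)=\psi'\left(0\right)=0$ and $\psi''\leq1/4$; the key point is that $\psi''\left(s\right)$ equals the variance of a two-point (Bernoulli-type) random variable and is therefore uniformly bounded by $1/4$. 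Taylor's theorem then yields $\psi\left(s\right)\leq s^{2}\left(b-a\right)^{2}/8$, which is exactly the lemma.

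Finally, combining the two bounds gives $\mathbb{P}\left(S-\mathbb{E}\left[S\right]>\varepsilon\right)\leq\exp\left(-s\varepsilon+s^{2}\sum_{i}\Delta_{i}^{2}/8\right)$ for every $s>0$; minimizing the exponent over $s$, attained at $s=4\varepsilon/\sum_{i}\Delta_{i}^{2}$, produces the claimed bound $\exp\left(-2\varepsilon^{2}/\sum_{i}\Delta_{i}^{2}\right)$, establishing (i). For (ii), I would apply (i) to the independent variables $-X_{i}$, whose ranges have the same widths $\Delta_{i}$, to bound the lower tail $\mathbb{P}\left(S-\mathbb{E}\left[S\right]<-\varepsilon\right)$ by the same quantity, and then union-bound the upper and lower tail events to obtain the factor $2$. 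The main obstacle is the proof of Hoeffding's lemma, specifically the uniform estimate $\psi''\leq1/4$; the Chernoff step, the factorization, and the optimization over $s$ are all routine.
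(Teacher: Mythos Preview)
Your argument is the standard textbook proof of Hoeffding's inequality and is correct as written. However, there is nothing to compare: the paper does not prove this statement at all. It merely states the inequality ``for completeness'' as a known result and then invokes it in the proof of Theorem~\ref{thm:wT_confidence}. So your proposal goes well beyond what the paper does, supplying a full proof where the authors simply cite a classical fact.
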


\begin{proof}
\textbf{of Theorem \ref{thm:wT_confidence}}

From Eqs. (\ref{eq:regret1}, \ref{eq:regret2}, \ref{eq:regret3}),
we achieve

\[
\frac{1}{T}\sum_{t=1}^{T}\mathbb{E}\left[f\left(\bw_{t}\right)\right]-f\left(\bw^{*}\right)\leq\frac{H\left(\log\left(T\right)+1\right)}{2\lambda T}+d_{T}
\]

Let us denote $X=f\left(\bw_{r}\right)-f\left(\bw^{*}\right)$, where
$r$ is uniformly sampled from $\left\{ 1,2,\ldots,T\right\} $. We
have
\[
\mathbb{E}_{r}\left[X\right]=\frac{1}{T}\sum_{t=1}^{T}\mathbb{E}\left[f\left(\bw_{t}\right)\right]-f\left(\bw^{*}\right)\leq\frac{H\left(\log\left(T\right)+1\right)}{2\lambda T}+d_{T}
\]

It follows that
\[
\mathbb{E}\left[X\right]=\mathbb{E}_{\left(x_{t},y_{t}\right)_{t=1}^{T}}\left[\mathbb{E}_{r}\left[X\right]\right]\leq\frac{H\left(\log\left(T\right)+1\right)}{2\lambda T}+d_{T}
\]

Let us denote $\Delta_{T}=\underset{1\leq t\leq T}{\max}\left(f\left(\bw_{t}\right)-f\left(\bw^{*}\right)\right)$
which implies that $0<f\left(\bw_{r}\right)-f\left(\bw^{*}\right)<\Delta_{T}$.
Applying Hoeffding inequality for the random variable $X$, we gain
\[
\mathbb{P}\left(X-\mathbb{E}\left[X\right]>\varepsilon\right)\leq\exp\left(-\frac{2\varepsilon^{2}}{\Delta_{T}^{2}}\right)
\]
\[
\mathbb{P}\left(X-\frac{H\left(\log\left(T\right)+1\right)}{2\lambda T}-d_{T}>\varepsilon\right)\leq\exp\left(-\frac{2\varepsilon^{2}}{\Delta_{T}^{2}}\right)
\]
\[
\mathbb{P}\left(X\leq\frac{H\left(\log\left(T\right)+1\right)}{2\lambda T}+d_{T}+\varepsilon\right)>1-\exp\left(-\frac{2\varepsilon^{2}}{\Delta_{T}^{2}}\right)
\]

Choosing $\delta=\exp\left(-\frac{2\varepsilon^{2}}{\Delta_{T}^{2}}\right)$
or $\varepsilon=\Delta_{T}\sqrt{\frac{1}{2}\log\frac{1}{\delta}}$,
then with the probability at least $1-\delta$, we have
\[
f\left(\bw_{r}\right)-f\left(\bw^{*}\right)\leq\frac{H\left(\log\left(T\right)+1\right)}{2\lambda T}+d_{T}+\Delta_{T}\sqrt{\frac{1}{2}\log\frac{1}{\delta}}
\]
\end{proof}

\begin{proof}
\textbf{of Theorem \ref{thm:regret_alpha}}

We denote $W_{T}^{\alpha}=\mathbb{E}\left[\norm{\bw_{\left(1-\alpha\right)T+1}-\bw^{*}}^{2}\right]$.
Our proof proceeds as follows.

i) If $Z_{t}$ is independent with $\bw_{t}$, taking sum in Eq. (\ref{eq:independent_w})
when $t=\left(1-\alpha\right)T+1,\ldots,T$, we gain
\begin{flalign}
\sum_{t=\left(1-\alpha\right)T+1}^{T}\mathbb{E}\left[f\left(\bw_{t}\right)\right]-\alpha Tf\left(\bw^{*}\right) & \leq\frac{\lambda\left(1-\alpha\right)T}{2}W_{T}^{\alpha}+\frac{H}{2\lambda}\sum_{t=\left(1-\alpha\right)T+1}^{T}\frac{1}{t}\nonumber \\
 & +\sum_{t=1}^{T}\mathbb{P}\left(Z_{t}=1\right)^{1/2}\mathbb{E}\left[\norm{\bw_{t}-\bw^{*}}^{2}\right]^{1/2}\mathbb{E}\left[\norm{\Delta_{t}}^{2}\right]^{1/2}\label{eq:regret4}\\
\leq & \frac{\lambda\left(1-\alpha\right)T}{2}W_{T}^{\alpha}+\frac{H\log\left(1/\left(1-\alpha\right)\right)}{2\lambda}\nonumber \\
 & +\delta_{\Phi}M^{1/2}W^{1/2}\sum_{t=\left(1-\alpha\right)T+1}^{T}\mathbb{P}\left(Z_{t}=1\right)^{1/2}\nonumber 
\end{flalign}

\noindent where we have used the inequality $\sum_{t=\left(1-\alpha\right)T+1}^{T}\frac{1}{t}\leq\log\left(1/\left(1-\alpha\right)\right)$.

\begin{align*}
\mathbb{E}\left[f\left(\overline{\bw}_{T}^{\alpha}\right)-f\left(\bw^{*}\right)\right] & \leq\frac{\lambda\left(1-\alpha\right)}{2\alpha}W_{T}^{\alpha}+\frac{\delta_{\Phi}M^{1/2}W^{1/2}}{\alpha T}\sum_{t=\left(1-\alpha\right)T+1}^{T}\mathbb{P}\left(Z_{t}=1\right)^{1/2}\\
 & +\frac{H\log\left(1/\left(1-\alpha\right)\right)}{2\lambda\alpha T}
\end{align*}

ii) If $Z_{t}$ is independent with $\bw_{t}$ and $\left(x_{t},y_{t}\right)$,
taking sum in Eq. (\ref{eq:independent_wxy}) when $t=\left(1-\alpha\right)T+1,\ldots,T$,
we gain
\begin{flalign}
\sum_{t=\left(1-\alpha\right)T+1}^{T}\mathbb{E}\left[f\left(\bw_{t}\right)\right]-\alpha Tf\left(\bw^{*}\right) & \leq\frac{\lambda\left(1-\alpha\right)T}{2}W_{T}^{\alpha}+\frac{H}{2\lambda}\sum_{t=\left(1-\alpha\right)T+1}^{T}\frac{1}{t}\nonumber \\
 & +\sum_{t=\left(1-\alpha\right)T+1}^{T}\mathbb{P}\left(Z_{t}=1\right)\mathbb{E}\left[\norm{\bw_{t}-\bw^{*}}^{2}\right]^{1/2}\mathbb{E}\left[\norm{\Delta_{t}}^{2}\right]^{1/2}\label{eq:regret5}\\
\leq & \frac{\lambda\left(1-\alpha\right)T}{2}W_{T}^{\alpha}+\frac{H\log\left(1/\left(1-\alpha\right)\right)}{2\lambda}\nonumber \\
 & +\delta_{\Phi}M^{1/2}W^{1/2}\sum_{t=\left(1-\alpha\right)T+1}^{T}\mathbb{P}\left(Z_{t}=1\right)\nonumber 
\end{flalign}

\begin{align*}
\mathbb{E}\left[f\left(\overline{\bw}_{T}^{\alpha}\right)-f\left(\bw^{*}\right)\right] & \leq\frac{\lambda\left(1-\alpha\right)}{2\alpha}W_{T}^{\alpha}+\frac{\delta_{\Phi}M^{1/2}W^{1/2}}{\alpha T}\sum_{t=\left(1-\alpha\right)T+1}^{T}\mathbb{P}\left(Z_{t}=1\right)\\
 & +\frac{H\log\left(1/\left(1-\alpha\right)\right)}{2\lambda\alpha T}
\end{align*}

iii) In general, taking sum in Eq. (\ref{eq:no_independent}) when
$t=\left(1-\alpha\right)T+1,\ldots,T$, we gain

\begin{flalign}
\sum_{t=\left(1-\alpha\right)T+1}^{T}\mathbb{E}\left[f\left(\bw_{t}\right)\right]-\alpha Tf\left(\bw^{*}\right) & \leq\frac{\lambda\left(1-\alpha\right)T}{2}W_{T}^{\alpha}+\frac{H}{2\lambda}\sum_{t=\left(1-\alpha\right)T+1}^{T}\frac{1}{t}\nonumber \\
 & +\sum_{t=\left(1-\alpha\right)T+1}^{T}\mathbb{E}\left[\norm{\bw_{t}-\bw^{*}}^{2}\right]^{1/2}\mathbb{E}\left[\norm{\Delta_{t}}^{2}\right]^{1/2}\nonumber \\
\leq & \frac{\lambda\left(1-\alpha\right)T}{2}W_{T}^{\alpha}+\frac{H\log\left(1/\left(1-\alpha\right)\right)}{2\lambda}+\delta_{\Phi}M^{1/2}W^{1/2}\alpha T\label{eq:regret6}
\end{flalign}

\[
\mathbb{E}\left[f\left(\overline{\bw}_{T}^{\alpha}\right)-f\left(\bw^{*}\right)\right]\leq\frac{\lambda\left(1-\alpha\right)}{2\alpha}W_{T}^{\alpha}+\delta_{\Phi}M^{1/2}W^{1/2}+\frac{H\log\left(1/\left(1-\alpha\right)\right)}{2\lambda\alpha T}
\]
\end{proof}

\begin{proof}
\textbf{of Theorem \ref{thm:walphaT_confidence}}

The proof of this theorem is similar to that of Theorem \ref{thm:wT_confidence}
which relies on Hoeffding inequality. 

From Eqs. (\ref{eq:regret4}, \ref{eq:regret5}, \ref{eq:regret6}),
we achieve

\[
\frac{1}{\alpha T}\sum_{t=\left(1-\alpha\right)T+1}^{T}\mathbb{E}\left[f\left(\bw_{t}\right)\right]-f\left(\bw^{*}\right)\leq\frac{H\log\left(1/\left(1-\alpha\right)\right)}{2\lambda\alpha T}+d_{T}
\]

Let us denote $X=f\left(\bw_{r}\right)-f\left(\bw^{*}\right)$, where
$r$ is uniformly sampled from $\left\{ \left(1-\alpha\right)T+1,2,\ldots,T\right\} $.
We have
\[
\mathbb{E}_{r}\left[X\right]=\frac{1}{\alpha T}\sum_{t=\left(1-\alpha\right)T+1}^{T}\mathbb{E}\left[f\left(\bw_{t}\right)\right]-f\left(\bw^{*}\right)\leq\frac{H\log\left(1/\left(1-\alpha\right)\right)}{2\lambda\alpha T}+d_{T}
\]

It follows that
\[
\mathbb{E}\left[X\right]=\mathbb{E}_{\left(x_{t},y_{t}\right)_{t=1}^{T}}\left[\mathbb{E}_{r}\left[X\right]\right]\leq\frac{H\log\left(1/\left(1-\alpha\right)\right)}{2\lambda\alpha T}+d_{T}
\]

Let us denote $\Delta_{T}^{\alpha}=\underset{\left(1-\alpha\right)T+1\leq t\leq T}{\max}\left(f\left(\bw_{t}\right)-f\left(\bw^{*}\right)\right)$
which implies that $0<f\left(\bw_{r}\right)-f\left(\bw^{*}\right)<\Delta_{T}^{\alpha}$.
Applying Hoeffding inequality for the random variable $X$, we gain
\[
\mathbb{P}\left(X-\mathbb{E}\left[X\right]>\varepsilon\right)\leq\exp\left(-\frac{2\varepsilon^{2}}{\left(\Delta_{T}^{\alpha}\right)^{2}}\right)
\]
\[
\mathbb{P}\left(X-\frac{H\log\left(1/\left(1-\alpha\right)\right)}{2\lambda\alpha T}-d_{T}>\varepsilon\right)\leq\exp\left(-\frac{2\varepsilon^{2}}{\left(\Delta_{T}^{\alpha}\right)^{2}}\right)
\]
\[
\mathbb{P}\left(X\leq\frac{H\log\left(1/\left(1-\alpha\right)\right)}{2\lambda\alpha T}+d_{T}+\varepsilon\right)>1-\exp\left(-\frac{2\varepsilon^{2}}{\left(\Delta_{T}^{\alpha}\right)^{2}}\right)
\]

Choosing $\delta=\exp\left(-\frac{2\varepsilon^{2}}{\left(\Delta_{T}^{\alpha}\right)^{2}}\right)$
or $\varepsilon=\Delta_{T}^{\alpha}\sqrt{\frac{1}{2}\log\frac{1}{\delta}}$,
then with the probability at least $1-\delta$, we have
\[
f\left(\bw_{r}\right)-f\left(\bw^{*}\right)\leq\frac{H\log\left(1/\left(1-\alpha\right)\right)}{2\lambda\alpha T}+d_{T}+\Delta_{T}^{\alpha}\sqrt{\frac{1}{2}\log\frac{1}{\delta}}
\]
\end{proof}

\begin{proof}
\textbf{of Theorem \ref{thm:smooth_loss}}

It is apparent that $f\left(\bw\right)$ is $L$-strongly smooth w.r.t
$\norm ._{2}$. Therefore, we have
\[
f\left(\bw_{r}\right)-f\left(\bw^{*}\right)\leq\transp{f^{'}\left(\bw^{*}\right)}\left(\bw_{r}-\bw^{*}\right)+\frac{L}{2}\norm{\bw_{r}-\bw^{*}}^{2}\leq\frac{L}{2}\norm{\bw_{r}-\bw^{*}}^{2}
\]

It follows that $\Delta_{T}^{\alpha}\leq\frac{1}{2}LM_{T}^{\alpha}$.
Hence we gain the conclusion.
\end{proof}

\clearpage{}\newpage{}
\section{Proofs of Bound for L2 Loss\label{sec:appxC}}

We now consider the upper bound of $\norm{\bw^{*}}$ in the case that
$\ell_{2}$ loss is being used for regression problem. Concretely,
we have the following theorem whose proof is similar to that of Theorem
1 in \citep{Shalev-Shwartz:2007,shalev2011pegasos}.
\begin{thm}
\label{thm:bound_w*}If $\bw^{*}=\text{argmi\ensuremath{n_{\bw}}}\left(\frac{\lambda}{2}\norm{\bw}^{2}+\frac{1}{N}\sum_{i=1}^{N}\left(y_{i}-\transp{\bw}\Phi\left(x_{i}\right)\right)^{2}\right)$
then $\norm{\bw^{*}}\leq y_{\text{max}}\lambda^{-1/2}$.
\end{thm}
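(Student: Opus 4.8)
The plan is to exploit the $\lambda$-strong convexity of the objective $f\left(\bw\right)=\frac{\lambda}{2}\norm{\bw}^{2}+R\left(\bw\right)$, where $R\left(\bw\right)=\frac{1}{N}\sum_{i=1}^{N}\left(y_{i}-\transp{\bw}\Phi\left(x_{i}\right)\right)^{2}$ is a non-negative convex quadratic. First I would record the two elementary facts needed downstream: (i) $R$ is differentiable everywhere with $\nabla R\left(\bw\right)=\frac{2}{N}\sum_{i=1}^{N}\left(\transp{\bw}\Phi\left(x_{i}\right)-y_{i}\right)\Phi\left(x_{i}\right)$, so no subgradient subtleties arise; and (ii) $R\left(\bzero\right)=\frac{1}{N}\sum_{i=1}^{N}y_{i}^{2}\leq\max_{1\leq i\leq N}y_{i}^{2}\leq y_{\text{max}}^{2}$, directly from $y_{\text{max}}=\max_{y\in\mathcal{Y}}\left|y\right|$.

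Next I would invoke first-order optimality at the minimizer: since $f$ is convex and differentiable, $\nabla f\left(\bw^{*}\right)=\lambda\bw^{*}+\nabla R\left(\bw^{*}\right)=\bzero$, hence $\lambda\bw^{*}=-\nabla R\left(\bw^{*}\right)$. Taking the inner product with $\bw^{*}$ rewrites this as $\lambda\norm{\bw^{*}}^{2}=\left\langle \nabla R\left(\bw^{*}\right),\bzero-\bw^{*}\right\rangle $. I would then apply the tangent-line inequality for the convex function $R$ between $\bw^{*}$ and $\bzero$, i.e. $\left\langle \nabla R\left(\bw^{*}\right),\bzero-\bw^{*}\right\rangle \leq R\left(\bzero\right)-R\left(\bw^{*}\right)$, and discard the term $-R\left(\bw^{*}\right)\leq0$. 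Chaining the three estimates gives $\lambda\norm{\bw^{*}}^{2}\leq R\left(\bzero\right)\leq y_{\text{max}}^{2}$, which is precisely $\norm{\bw^{*}}\leq y_{\text{max}}\lambda^{-1/2}$.

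The step I expect to need the most care is the constant, not the argument. The crude route $\frac{\lambda}{2}\norm{\bw^{*}}^{2}\leq f\left(\bw^{*}\right)\leq f\left(\bzero\right)=R\left(\bzero\right)\leq y_{\text{max}}^{2}$ only yields the weaker bound $\norm{\bw^{*}}\leq\sqrt{2}\,y_{\text{max}}\lambda^{-1/2}$; the sharp constant comes from additionally using that $\bzero$ lies on the far side of the tangent hyperplane of $R$ at $\bw^{*}$ — equivalently, from combining $f\left(\bzero\right)\geq f\left(\bw^{*}\right)+\frac{\lambda}{2}\norm{\bw^{*}}^{2}$ (strong convexity, using $\nabla f\left(\bw^{*}\right)=\bzero$) with $f\left(\bw^{*}\right)\geq\frac{\lambda}{2}\norm{\bw^{*}}^{2}$ — which is exactly what the computation above encodes. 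Everything else is routine; and the argument is insensitive to whether the squared loss carries the factor $\tfrac12$ used in Section~\ref{sec:Loss-Function}, since with that normalization $R\left(\bzero\right)\leq\tfrac12 y_{\text{max}}^{2}$ and the stated bound only improves.
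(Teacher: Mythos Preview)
Your argument is correct. The key chain $\lambda\norm{\bw^{*}}^{2}=\left\langle \nabla R(\bw^{*}),\,\bzero-\bw^{*}\right\rangle \leq R(\bzero)-R(\bw^{*})\leq R(\bzero)\leq y_{\text{max}}^{2}$ is valid, and your remark that this is equivalent to combining $f(\bzero)\geq f(\bw^{*})+\tfrac{\lambda}{2}\norm{\bw^{*}}^{2}$ (strong convexity at the stationary point) with $f(\bw^{*})\geq\tfrac{\lambda}{2}\norm{\bw^{*}}^{2}$ is exactly right and recovers the sharp constant.

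The paper, however, proves the same bound by a genuinely different route: it rewrites the problem as a constrained minimization with slacks $\xi_{i}=y_{i}-\transp{\bw}\Phi(x_{i})$, forms the Lagrangian, eliminates $\bw$ and $\xi$ to obtain the dual $\mathcal{W}(\boldsymbol{\alpha})=-\tfrac{1}{2\lambda}\norm{\sum_{i}\alpha_{i}\Phi(x_{i})}^{2}+\sum_{i}y_{i}\alpha_{i}-\tfrac{N}{4}\sum_{i}\alpha_{i}^{2}$, and invokes strong duality to get $\lambda\norm{\bw^{*}}^{2}=\sum_{i}\!\left(y_{i}\alpha_{i}^{*}-\tfrac{N}{4}\alpha_{i}^{*2}\right)-\tfrac{1}{N}\sum_{i}\xi_{i}^{*2}$; each summand $y_{i}\alpha_{i}^{*}-\tfrac{N}{4}\alpha_{i}^{*2}$ is then bounded by its unconstrained maximum $y_{i}^{2}/N$, yielding $\lambda\norm{\bw^{*}}^{2}\leq\tfrac{1}{N}\sum_{i}y_{i}^{2}\leq y_{\text{max}}^{2}$. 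Your primal approach is shorter and more elementary, needing only differentiability and convexity of $R$ together with $R\geq 0$; the paper's dual approach is heavier machinery but mirrors the Pegasos-style argument it cites and makes the connection to the SVM dual explicit. Both land on the identical inequality $\lambda\norm{\bw^{*}}^{2}\leq\tfrac{1}{N}\sum_{i}y_{i}^{2}$ before the final $y_{\text{max}}$ bound.
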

\begin{proof}
Let us consider the equivalent constrains optimization problem
\begin{flalign*}
 & \underset{\bw,\boldsymbol{\xi}}{\min}\left(\frac{\lambda}{2}\norm{\bw}^{2}+\frac{1}{N}\sum_{i=1}^{N}\xi_{i}^{2}\right)\\
\text{s.t.:} & \,\xi_{i}=y_{i}-\transp{\bw}\Phi\left(x_{i}\right),\,\forall i
\end{flalign*}

The Lagrange function is of the following form
\[
\mathcal{L}\left(\bw,\boldsymbol{\xi,\alpha}\right)=\frac{\lambda}{2}\norm{\bw^{2}}+\frac{1}{N}\sum_{i=1}^{N}\xi_{i}^{2}+\sum_{i=1}^{N}\alpha_{i}\left(y_{i}-\transp{\bw}\Phi\left(x_{i}\right)-\xi_{i}\right)
\]

Setting the derivatives to $0$, we gain
\[
\nabla_{\bw}\mathcal{L}=\lambda\bw-\sum_{i=1}^{N}\alpha_{i}\Phi\left(x_{i}\right)=0\goto\bw=\lambda^{-1}\sum_{i=1}^{N}\alpha_{i}\Phi\left(x_{i}\right)
\]

\[
\nabla_{\xi_{i}}\mathcal{L}=\frac{2}{N}\xi_{i}-\alpha_{i}=0\goto\xi_{i}=\frac{N\alpha_{i}}{2}
\]

Substituting the above to the Lagrange function, we gain the dual
form
\begin{flalign*}
\mathcal{W}\left(\boldsymbol{\alpha}\right)= & -\frac{\lambda}{2}\norm{\bw}^{2}+\sum_{i=1}^{N}y_{i}\alpha_{i}-\frac{N}{4}\sum_{i=1}^{N}\alpha_{i}^{2}\\
 & =-\frac{1}{2\lambda}\norm{\sum_{i=1}\alpha_{i}\Phi\left(x_{i}\right)}^{2}+\sum_{i=1}^{N}y_{i}\alpha_{i}-\frac{N}{4}\sum_{i=1}^{N}\alpha_{i}^{2}
\end{flalign*}

Let us denote $\left(\bw^{*},\boldsymbol{\xi^{*}}\right)$ and $\boldsymbol{\alpha^{*}}$
be the primal and dual solutions, respectively. Since the strong duality
holds, we have
\[
\frac{\lambda}{2}\norm{\bw^{*}}^{2}+\frac{1}{N}\sum_{i=1}^{N}\xi_{i}^{*2}=-\frac{\lambda}{2}\norm{\bw^{*}}^{2}+\sum_{i=1}^{N}y_{i}\alpha_{i}^{*}-\frac{N}{4}\sum_{i=1}^{N}\alpha_{i}^{*2}
\]
\begin{flalign*}
\lambda\norm{\bw^{*}}^{2} & =\sum_{i=1}^{N}y_{i}\alpha_{i}^{*}-\frac{N}{4}\sum_{i=1}^{N}\alpha_{i}^{*2}-\frac{1}{N}\sum_{i=1}^{N}\xi_{i}^{*2}\\
 & \leq\sum_{i=1}^{N}\left(y_{i}\alpha_{i}^{*}-\frac{N}{4}\alpha_{i}^{*2}\right)\leq\sum_{i=1}^{N}\frac{y_{i}^{2}}{N}\leq y_{\text{max}}^{2}
\end{flalign*}

We note that we have used $g\left(\alpha_{i}^{*}\right)=y_{i}\alpha_{i}^{*}-\frac{N}{4}\alpha_{i}^{*2}\leq g\left(\frac{2y_{i}}{N}\right)=\frac{y_{i}^{2}}{N}$.
Hence, we gain the conclusion. 
\end{proof}
\begin{lem}
Assume that $\ell_{2}$ loss is using, the following statement holds
\[
\norm{\bw_{T+1}}\leq\lambda^{-1}\left(y_{\text{max}}+\frac{1}{T}\sum_{t=1}^{T}\norm{\bw_{t}}\right)
\]

\noindent where $y_{\text{max}}=\underset{y\in\mathcal{Y}}{\max}\left|y\right|$.
\end{lem}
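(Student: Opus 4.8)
The plan is to unroll the AVM update and telescope. In Algorithm~\ref{alg:AVM} the step size is $\eta_{t}=\frac{1}{\lambda t}$ and the update direction is $h_{t}=\lambda\bw_{t}+\alpha_{t}\Phi\left(c_{i_{t}}\right)$ when $Z_{t}=1$ and $h_{t}=\lambda\bw_{t}+\alpha_{t}\Phi\left(x_{t}\right)$ otherwise; in both cases I will write $h_{t}=\lambda\bw_{t}+\alpha_{t}\Phi\left(u_{t}\right)$, where $u_{t}$ is either $c_{i_{t}}$ or $x_{t}$, and the normalization assumption on the isotropic kernel gives $\norm{\Phi\left(u_{t}\right)}^{2}=K\left(u_{t},u_{t}\right)=1$. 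First I would treat the case $\lambda>1$, in which no projection is applied, so that $\bw_{t+1}=\bw_{t}-\eta_{t}h_{t}=\frac{t-1}{t}\bw_{t}-\frac{\alpha_{t}}{\lambda t}\Phi\left(u_{t}\right)$. Multiplying by $t$ yields $t\,\bw_{t+1}=\left(t-1\right)\bw_{t}-\frac{\alpha_{t}}{\lambda}\Phi\left(u_{t}\right)$; summing this identity for $t=1,\ldots,T$, using $\bw_{1}=\bzero$, the left-hand side telescopes and I obtain $T\,\bw_{T+1}=-\frac{1}{\lambda}\sum_{t=1}^{T}\alpha_{t}\Phi\left(u_{t}\right)$.

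Second, I would take norms and apply the triangle inequality together with $\norm{\Phi\left(u_{t}\right)}=1$ to get $\norm{\bw_{T+1}}\leq\frac{1}{\lambda T}\sum_{t=1}^{T}\left|\alpha_{t}\right|$. For the $\ell_{2}$ loss the scalar is $\alpha_{t}=\transp{\bw_{t}}\Phi\left(x_{t}\right)-y_{t}$ (cf. Section~\ref{sec:Loss-Function}), hence by Cauchy--Schwarz, $\norm{\Phi\left(x_{t}\right)}=1$, and $\left|y_{t}\right|\leq y_{\text{max}}$ one has $\left|\alpha_{t}\right|\leq\norm{\bw_{t}}+y_{\text{max}}$. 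Substituting gives $\norm{\bw_{T+1}}\leq\frac{1}{\lambda T}\sum_{t=1}^{T}\left(\norm{\bw_{t}}+y_{\text{max}}\right)=\lambda^{-1}\left(y_{\text{max}}+\frac{1}{T}\sum_{t=1}^{T}\norm{\bw_{t}}\right)$, which is exactly the claimed bound.

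Finally, I would dispatch the remaining case $\lambda\leq1$: here $\bw_{T+1}$ is the projection of $\bw_{t}-\eta_{t}h_{t}$ onto $\mathcal{B}\left(\bzero,y_{\text{max}}\lambda^{-1/2}\right)$, so $\norm{\bw_{T+1}}\leq y_{\text{max}}\lambda^{-1/2}\leq y_{\text{max}}\lambda^{-1}$ since $\lambda\leq1$ implies $\lambda^{-1/2}\leq\lambda^{-1}$; as $\sum_{t=1}^{T}\norm{\bw_{t}}\geq0$, the stated inequality holds a fortiori. There is no substantial obstacle: the only points requiring care are the bookkeeping in the telescoping sum and the observation that the approximation choice $u_{t}$ is immaterial because $\Phi$ sends every point of $\mathcal{X}$ to the unit sphere, so $\norm{\Phi\left(c_{i_{t}}\right)}=\norm{\Phi\left(x_{t}\right)}=1$.
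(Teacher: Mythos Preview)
Your argument is correct and follows essentially the same skeleton as the paper: bound $\left|\alpha_{t}\right|\leq y_{\text{max}}+\norm{\bw_{t}}$ and telescope the recursion $t\,\bw_{t+1}=(t-1)\bw_{t}-\lambda^{-1}\alpha_{t}\Phi(u_{t})$. The only organizational difference is that the paper takes norms \emph{before} telescoping, obtaining the scalar inequality $\norm{\bw_{t+1}}\leq\frac{t-1}{t}\norm{\bw_{t}}+\frac{1}{\lambda t}\left|\alpha_{t}\right|$ directly from $\norm{\prod_{S}(v)}\leq\norm{v}$, which handles both the projected and non-projected cases uniformly; you instead telescope the vector identity first (valid only when no projection occurs) and then treat $\lambda\leq1$ separately via the explicit projection radius. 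Both routes are valid, but the paper's version avoids the case split.
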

\begin{proof}
We have the following
\[
\bw_{t+1}=\begin{cases}
\prod_{S}\left(\frac{t-1}{t}\bw_{t}-\eta_{t}\alpha_{t}\Phi\left(x_{t}\right)\right) & \text{if}\,\,Z_{t}=0\\
\prod_{S}\left(\frac{t-1}{t}\bw_{t}-\eta_{t}\alpha_{t}\Phi\left(c_{i_{t}}\right)\right) & \text{otherwise}
\end{cases}
\]

It follows that
\[
\norm{\bw_{t+1}}\leq\frac{t-1}{t}\norm{\bw_{t}}+\frac{1}{\lambda t}\left|\alpha_{t}\right|\,\,\,\,\,\,\,\,\text{since}\,\,\norm{\Phi\left(x_{t}\right)}=\norm{\Phi\left(c_{i_{t}}\right)}=1
\]

It happens that $l^{'}\left(\bw_{t};x_{t},y_{t}\right)=\alpha_{t}\Phi\left(x_{t}\right)$.
Hence, we gain 
\[
\left|\alpha_{t}\right|=\left|y_{t}-\transp{\bw_{t}}\Phi\left(x_{t}\right)\right|\leq y_{\text{max}}+\norm{\bw_{t}}\norm{\Phi\left(x_{t}\right)}\le y_{\text{max}}+\norm{\bw_{t}}
\]

It implies that
\[
t\norm{\bw_{t+1}}\leq\left(t-1\right)\norm{\bw_{t}}+\lambda^{-1}\left(y_{\text{max}}+\norm{\bw_{t}}\right)
\]

Taking sum when $t=1,2,\ldots,T$, we achieve
\[
T\norm{\bw_{T+1}}\leq\lambda^{-1}\left(Ty_{\text{max}}+\sum_{t=1}^{T}\norm{\bw_{t}}\right)
\]
\begin{equation}
\norm{\bw_{T+1}}\leq\lambda^{-1}\left(y_{\text{max}}+\frac{1}{T}\sum_{t=1}^{T}\norm{\bw_{t}}\right)\label{eq:bound_w_L2}
\end{equation}
\end{proof}
\begin{thm}
\label{thm:bound_wT}If $\lambda>1$ then $\norm{\bw_{T+1}}\leq\frac{y_{\text{max}}}{\lambda-1}\left(1-\frac{1}{\lambda^{T}}\right)<\frac{y_{\text{max}}}{\lambda-1}$
for all $T$.
\end{thm}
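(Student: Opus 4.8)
The plan is a short induction on $T$ resting on the one-step estimate established just above. From the preceding lemma we have, for every $t\ge 1$,
\[
\norm{\bw_{t+1}}\le\frac{t-1}{t}\norm{\bw_{t}}+\frac{1}{\lambda t}\left|\alpha_{t}\right|,\qquad\left|\alpha_{t}\right|\le y_{\text{max}}+\norm{\bw_{t}},
\]
and combining the two gives the recursion
\[
\norm{\bw_{t+1}}\le\left(\frac{t-1}{t}+\frac{1}{\lambda t}\right)\norm{\bw_{t}}+\frac{y_{\text{max}}}{\lambda t}=\frac{\lambda\left(t-1\right)+1}{\lambda t}\norm{\bw_{t}}+\frac{y_{\text{max}}}{\lambda t}.
\]
Since $\bw_{1}=\bzero$, evaluating this at $t=1$ yields $\norm{\bw_{2}}\le y_{\text{max}}\lambda^{-1}=\frac{y_{\text{max}}}{\lambda-1}\left(1-\lambda^{-1}\right)$, which is the asserted bound for $T=1$ and serves as the base case.

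For the inductive step I would assume $\norm{\bw_{t}}\le\frac{y_{\text{max}}}{\lambda-1}\left(1-\lambda^{-(t-1)}\right)$, substitute it into the recursion, and verify the target inequality $\norm{\bw_{t+1}}\le\frac{y_{\text{max}}}{\lambda-1}\left(1-\lambda^{-t}\right)$. Writing $c=y_{\text{max}}/(\lambda-1)$ so that $y_{\text{max}}=c\left(\lambda-1\right)$, and clearing the denominator $\lambda t$, the claim collapses --- after expanding and collecting the terms carrying the factor $\lambda^{-(t-1)}$ --- to $\lambda t-\lambda+1\ge t$, i.e.\ to $(\lambda-1)t\ge\lambda-1$, which holds because $\lambda>1$ and $t\ge1$. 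This closes the induction. The strict bound $\norm{\bw_{T+1}}<\frac{y_{\text{max}}}{\lambda-1}$ is then immediate from $1-\lambda^{-T}<1$.

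I expect no genuine difficulty here; the one nontrivial piece is the algebraic bookkeeping in the inductive step, where it matters that the one-step estimate is used in the combined form above (bounding $\left|\alpha_{t}\right|$ by $y_{\text{max}}+\norm{\bw_{t}}$ directly), since it is precisely the coefficient $\frac{\lambda(t-1)+1}{\lambda t}$ that makes the geometric term $\lambda^{-(t-1)}$ telescope into $\lambda^{-t}$ with the exact constant $\frac{y_{\text{max}}}{\lambda-1}$; feeding back the averaged estimate of Eq.~(\ref{eq:bound_w_L2}) instead would not produce the sharp factor $1-\lambda^{-T}$.
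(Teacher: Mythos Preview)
Your proof is correct, but it differs from the paper's argument in a way worth noting. You work directly with the one-step recursion
\[
\norm{\bw_{t+1}}\le\frac{\lambda(t-1)+1}{\lambda t}\norm{\bw_{t}}+\frac{y_{\text{max}}}{\lambda t}
\]
and carry out a straight induction; the algebra you sketch does close, reducing to $(\lambda-1)t\ge\lambda-1$ exactly as you say. The paper instead starts from the \emph{averaged} bound of Eq.~(\ref{eq:bound_w_L2}), introduces the auxiliary sequence $s_{1}=0$, $s_{T+1}=\lambda^{-1}(y_{\text{max}}+s_{T})$, solves it in closed form as $s_{T+1}=\frac{y_{\text{max}}}{\lambda-1}(1-\lambda^{-T})$, and then proves $\norm{\bw_{T}}\le s_{T}$ by strong induction: the key step is that $\{s_{t}\}$ is increasing, so $\frac{1}{T}\sum_{t=1}^{T}s_{t}\le s_{T}$, whence $\norm{\bw_{T+1}}\le\lambda^{-1}(y_{\text{max}}+s_{T})=s_{T+1}$.

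Your approach is arguably more direct since it avoids the comparison sequence, while the paper's has the virtue of isolating the limiting recursion cleanly. One correction to your closing remark: you assert that feeding back the averaged estimate of Eq.~(\ref{eq:bound_w_L2}) ``would not produce the sharp factor $1-\lambda^{-T}$,'' but the paper does precisely that and obtains the same sharp bound, via the monotonicity of $s_{t}$. So the averaged route is not actually lossy here; it just requires the extra observation that the comparison sequence is increasing.
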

\begin{proof}
First we consider the sequence $\left\{ s_{T}\right\} _{T}$ which
is identified as $s_{T+1}=\lambda^{-1}\left(y_{\text{max}}+s_{T}\right)$
and $s_{1}=0$. It is easy to find the formula of this sequence as
\[
s_{T+1}-\frac{y_{\text{max}}}{\lambda-1}=\lambda^{-1}\left(s_{T}-\frac{y_{\text{max}}}{\lambda-1}\right)=\ldots=\lambda^{-T}\left(s_{1}-\frac{y_{\text{max}}}{\lambda-1}\right)=\frac{-\lambda^{-T}y_{\text{max}}}{\lambda-1}
\]

\[
s_{T+1}=\frac{y_{\text{max}}}{\lambda-1}\left(1-\frac{1}{\lambda^{T}}\right)
\]

We prove by induction by $T$ that $\norm{\bw_{T}}\leq s_{T}$ for
all $T$. It is obvious that $\norm{\bw_{1}}=s_{1}=0$. Assume that
$\norm{\bw_{t}}\leq s_{t}$ for $t\leq T$, we verify it for $T+1$.
Indeed, we have
\begin{flalign*}
\norm{\bw_{T+1}} & \leq\lambda^{-1}\left(y_{\text{max}}+\frac{1}{T}\sum_{t=1}^{T}\norm{\bw_{t}}\right)\leq\lambda^{-1}\left(y_{\text{max}}+\frac{1}{T}\sum_{t=1}^{T}s_{t}\right)\\
 & \leq\lambda^{-1}\left(y_{\text{max}}+s_{T}\right)=s_{T+1}
\end{flalign*}
\end{proof}
In addition, from Eq. (\ref{eq:bound_w_L2}) in case that $\lambda\leq1$
we cannot bound $\norm{\bw_{T+1}}$. Concretely, we have the following
theorem.
\begin{thm}
If $\left\{ z_{T}\right\} _{T}$ is a sequence such that $z_{T+1}=\lambda^{-1}\left(y_{\text{max}}+\frac{1}{T}\sum_{t=1}^{T}z_{t}\right)$
with $z_{1}=0$ then in case that $\lambda\leq1$ this sequence is
not upper-bounded.
\end{thm}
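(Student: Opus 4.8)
The plan is to track the running average of the sequence rather than its individual terms. Set $S_T = \sum_{t=1}^T z_t$ and $a_T = S_T/T$, so that the recursion becomes $z_{T+1} = \lambda^{-1}\left(y_{\text{max}} + a_T\right)$. (We tacitly assume $y_{\text{max}} > 0$, since otherwise $z_T \equiv 0$ and the claim is vacuous.) First I would note by a trivial induction that $z_t \geq 0$ for all $t$ — indeed $z_1 = 0$ and $z_{t+1} \geq \lambda^{-1} y_{\text{max}} > 0$ for $t \geq 1$ — hence $a_T \geq 0$ for every $T$.

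Next I would derive a closed recursion for $a_T$ alone. From $S_{T+1} = S_T + z_{T+1} = S_T + \lambda^{-1}\left(y_{\text{max}} + a_T\right)$ together with $S_T = T a_T$ and $S_{T+1} = (T+1) a_{T+1}$ one obtains
\[
(T+1)\, a_{T+1} = T a_T + \lambda^{-1} a_T + \lambda^{-1} y_{\text{max}} = \left(T + \lambda^{-1}\right) a_T + \lambda^{-1} y_{\text{max}},
\]
that is, $a_{T+1} = \frac{T + \lambda^{-1}}{T+1}\, a_T + \frac{\lambda^{-1} y_{\text{max}}}{T+1}$.

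The key step is the hypothesis $\lambda \leq 1$, which gives $\lambda^{-1} \geq 1$ and therefore $\frac{T + \lambda^{-1}}{T+1} \geq 1$; combining this with $a_T \geq 0$ yields the one-sided estimate $a_{T+1} \geq a_T + \frac{y_{\text{max}}}{T+1}$. Telescoping from $T = 1$, where $a_1 = z_1 = 0$, gives $a_{T+1} \geq y_{\text{max}} \sum_{s=2}^{T+1} \frac{1}{s}$, and since the harmonic series diverges we conclude $a_T \to +\infty$. Finally, invoking $\lambda^{-1} \geq 1$ once more, $z_{T+1} = \lambda^{-1}\left(y_{\text{max}} + a_T\right) \geq a_T \to +\infty$, so $\{z_T\}_T$ is not upper-bounded (in fact it diverges to infinity).

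There is no genuine obstacle here: the only real choice is the decision to pass to the running average $a_T$, after which the argument reduces to the divergence of the harmonic tail. A direct attempt to control $z_T$ in terms of $z_{T-1}$ obscures the mechanism, whereas the $a_T$-recursion exposes it cleanly. I would remark in passing that this is the exact mirror image of Theorem~\ref{thm:bound_wT}: there $\lambda > 1$ makes the analogous multiplier strictly less than $1$ and produces geometric decay toward $y_{\text{max}}/(\lambda-1)$, while here $\lambda \leq 1$ makes the multiplier at least $1$ and the harmonic increments accumulate without bound.
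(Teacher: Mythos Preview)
Your proof is correct. Both arguments ultimately reduce the claim to the divergence of the harmonic series, but the routes differ. The paper normalizes $s_T=y_{\text{max}}^{-1}z_T$ and proves by strong induction that $s_T\ge\sum_{t=1}^{T-1}1/t$; the inductive step requires averaging all previous harmonic partial sums and uses the combinatorial identity $\frac{1}{T}\sum_{t=1}^{T}\sum_{n=1}^{t-1}\frac{1}{n}=\sum_{t=1}^{T}\bigl(\frac{1}{t}-\frac{1}{T}\bigr)$. Your decision to track the running average $a_T$ instead collapses the recursion to a single-step relation $a_{T+1}=\frac{T+\lambda^{-1}}{T+1}a_T+\frac{\lambda^{-1}y_{\text{max}}}{T+1}$, from which the harmonic increment $a_{T+1}\ge a_T+\frac{y_{\text{max}}}{T+1}$ falls out immediately once $\lambda^{-1}\ge1$ and $a_T\ge0$. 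This buys you a clean telescoping sum in place of the double-sum manipulation, and it makes the role of the hypothesis $\lambda\le1$ transparent (the multiplicative coefficient on $a_T$ is at least $1$). The paper's approach has the minor advantage of giving the lower bound directly on $z_T$ rather than on its average, but your final step $z_{T+1}\ge a_T$ closes that gap with no extra work.
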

\begin{proof}
Let us denote $s_{t}=y_{\text{max}}^{-1}z_{t}$. It is obvious that
$s_{1}=0$ and $s_{T+1}=\lambda^{-1}\left(1+\frac{1}{T}\sum_{t=1}^{T}s_{t}\right)$.
We now prove by induction by $T$ that
\[
s_{T}\geq\sum_{t=1}^{T-1}\frac{1}{t}\,\,\,\text{for all \ensuremath{T\geq2}}
\]

With $T=2$, we have $s_{2}=\lambda^{-1}\geq1$. Assume that this
holds for all $2\leq t\leq T$, we verify it for $T+1$.
\begin{flalign*}
s_{T+1} & =\lambda^{-1}\left(1+\frac{1}{T}\sum_{t=1}^{T}s_{t}\right)\geq1+\frac{1}{T}\sum_{t=1}^{T}s_{t}\\
 & \geq1+\frac{1}{T}\sum_{t=1}^{T}\sum_{n=1}^{t-1}\frac{1}{n}\geq1+\sum_{t=1}^{T-1}\frac{T-t}{Tt}\\
 & \geq1+\sum_{t=1}^{T}\left(\frac{1}{t}-\frac{1}{T}\right)=\sum_{t=1}^{T}\frac{1}{t}
\end{flalign*}

The final conclusion is obvious from the fact $\sum_{t=1}^{T}\frac{1}{t}>\log\left(T+1\right)$
and hence can exceed any positive number.
\end{proof}

\clearpage{}

\newpage{}

\bibliographystyle{plainnat}
\bibliography{AVM}

\begin{thebibliography}{37}
\providecommand{\natexlab}[1]{#1}
\providecommand{\url}[1]{\texttt{#1}}
\expandafter\ifx\csname urlstyle\endcsname\relax
  \providecommand{\doi}[1]{doi: #1}\else
  \providecommand{\doi}{doi: \begingroup \urlstyle{rm}\Url}\fi

\bibitem[Badoiu and Clarkson(2002)]{Badoiu02optimalcore-sets}
M.~Badoiu and K.~L. Clarkson.
\newblock Optimal core-sets for balls.
\newblock In \emph{In Proc. of DIMACS Workshop on Computational Geometry},
  2002.

\bibitem[Cavallanti et~al.(2007)Cavallanti, Cesa-Bianchi, and
  Gentile]{CavallantiCG07}
G.~Cavallanti, N.~Cesa-Bianchi, and C.~Gentile.
\newblock Tracking the best hyperplane with a simple budget perceptron.
\newblock \emph{Machine Learning}, 69\penalty0 (2-3):\penalty0 143--167, 2007.

\bibitem[Chang and Lin(2011)]{libsvm}
C.-C. Chang and C.-J. Lin.
\newblock Libsvm: A library for support vector machines.
\newblock \emph{ACM Trans. Intell. Syst. Technol.}, 2\penalty0 (3):\penalty0
  27:1--27:27, May 2011.
\newblock ISSN 2157-6904.

\bibitem[Cortes and Vapnik(1995)]{cortes1995support}
C.~Cortes and V.~Vapnik.
\newblock Support-vector networks.
\newblock \emph{Machine learning}, 20\penalty0 (3):\penalty0 273--297, 1995.

\bibitem[Crammer and Singer(2002)]{Crammer2002}
K.~Crammer and Y.~Singer.
\newblock On the algorithmic implementation of multiclass kernel-based vector
  machines.
\newblock \emph{J. Mach. Learn. Res.}, 2:\penalty0 265--292, March 2002.
\newblock ISSN 1532-4435.

\bibitem[Crammer et~al.(2004)Crammer, Kandola, and
  Singer]{Crammer04onlineclassification}
K.~Crammer, J.~Kandola, and Y.~Singer.
\newblock Online classification on a budget.
\newblock In \emph{Advances in Neural Information Processing Systems 16}. MIT
  Press, 2004.

\bibitem[Crammer et~al.(2006)Crammer, Dekel, Keshet, Shalev-Shwartz, and
  Singer]{Crammer06onlinepassive-aggressive}
K.~Crammer, O.~Dekel, J.~Keshet, S.~Shalev-Shwartz, and Y.~Singer.
\newblock Online passive-aggressive algorithms.
\newblock \emph{J. Mach. Learn. Res.}, 7:\penalty0 551--585, 2006.

\bibitem[Cucker and Smale(2002)]{Cucker02onthe}
F.~Cucker and S.~Smale.
\newblock On the mathematical foundations of learning.
\newblock \emph{Bulletin of the American Mathematical Society}, 39:\penalty0
  1--49, 2002.

\bibitem[Dekel et~al.(2005)Dekel, Shalev-Shwartz, and Singer]{DekelSS05}
O.~Dekel, S.~Shalev-Shwartz, and Y.~Singer.
\newblock The forgetron: A kernel-based perceptron on a fixed budget.
\newblock In \emph{Advances in Neural Information Processing Systems}, pages
  259--266, 2005.

\bibitem[Dredze et~al.(2008)Dredze, Crammer, and Pereira]{Dredze2008}
M.~Dredze, K.~Crammer, and F.~Pereira.
\newblock Confidence-weighted linear classification.
\newblock In \emph{Proceedings of the 25th International Conference on Machine
  Learning}, ICML '08, pages 264--271, New York, NY, USA, 2008. ACM.

\bibitem[Freund and Schapire(1999)]{Freund1999:LMC:337859.337869}
Y.~Freund and R.~E. Schapire.
\newblock Large margin classification using the perceptron algorithm.
\newblock \emph{Mach. Learn.}, 37\penalty0 (3):\penalty0 277--296, December
  1999.

\bibitem[Hensman et~al.(2013)Hensman, Fusi, and
  Lawrence]{hensman_etal_uai13_Gaussian}
J.~Hensman, N.~Fusi, and N.~D Lawrence.
\newblock Gaussian processes for big data.
\newblock In \emph{Uncertainty in Artificial Intelligence}, page 282. Citeseer,
  2013.

\bibitem[Joachims(1999)]{joachims1999}
T.~Joachims.
\newblock Advances in kernel methods.
\newblock chapter Making Large-scale Support Vector Machine Learning Practical,
  pages 169--184. MIT Press, Cambridge, MA, USA, 1999.
\newblock ISBN 0-262-19416-3.

\bibitem[Kivinen et~al.(2004)Kivinen, Smola, and Williamson]{Kivinen2004}
J.~Kivinen, A.~J. Smola, and R.~C. Williamson.
\newblock {Online Learning with Kernels}.
\newblock \emph{IEEE Transactions on Signal Processing}, 52:\penalty0
  2165--2176, August 2004.

\bibitem[Le et~al.(2016{\natexlab{a}})Le, Duong, Dinh, Nguyen, Nguyen, and
  Phung]{le_duong_dinh_nguyen_nguyen_phung_uai16budgeted}
T.~Le, P.~Duong, M.~Dinh, D.~T Nguyen, V.~Nguyen, and D.~Phung.
\newblock Budgeted semi-supervised support vector machine.
\newblock In \emph{The 32th Conference on Uncertainty in Artificial
  Intelligence}, June 2016{\natexlab{a}}.

\bibitem[Le et~al.(2016{\natexlab{b}})Le, Nguyen, Nguyen, and
  Phung]{le_nips_2016}
T.~Le, T.~D. Nguyen, V.~Nguyen, and D.~Phung.
\newblock Dual space gradient descent for online learning.
\newblock In \emph{Advances in Neural Information Processing (NIPS)}, pages
  4583--4591, December 2016{\natexlab{b}}.

\bibitem[Le et~al.(2016{\natexlab{c}})Le, Nguyen, Nguyen, and
  Phung]{le_nguyen_phung_aistats16nonparametric}
T.~Le, V.~Nguyen, T.~D. Nguyen, and Dinh Phung.
\newblock Nonparametric budgeted stochastic gradient descent.
\newblock In \emph{The 19th International Conference on Artificial Intelligence
  and Statistics}, pages 654--572, May 2016{\natexlab{c}}.

\bibitem[Lu et~al.(2015)Lu, Hoi, Wang, Zhao, and Liu]{Lu_2015large}
J.~Lu, S.~C.H. Hoi, J.~Wang, P.~Zhao, and Z.-Y. Liu.
\newblock Large scale online kernel learning.
\newblock \emph{J. Mach. Learn. Res.}, 2015.

\bibitem[Ming et~al.(2014)Ming, Shifeng, and Changshui]{Lin2014}
L.~Ming, W.~Shifeng, and Z.~Changshui.
\newblock On the sample complexity of random fourier features for online
  learning: How many random fourier features do we need?
\newblock \emph{ACM Trans. Knowl. Discov. Data}, 8\penalty0 (3):\penalty0
  13:1--13:19, June 2014.
\newblock ISSN 1556-4681.

\bibitem[Orabona et~al.(2009)Orabona, Keshet, and Caputo]{Orabona2009}
F.~Orabona, J.~Keshet, and B.~Caputo.
\newblock Bounded kernel-based online learning.
\newblock \emph{J. Mach. Learn. Res.}, 10:\penalty0 2643--2666, December 2009.
\newblock ISSN 1532-4435.

\bibitem[Rahimi and Recht(2007)]{Rahimi07randomfeatures}
A.~Rahimi and B.~Recht.
\newblock Random features for large-scale kernel machines.
\newblock In \emph{In Neural Infomration Processing Systems}, 2007.

\bibitem[Rakhlin et~al.(2012)Rakhlin, Shamir, and Sridharan]{RakhlinSS12}
A.~Rakhlin, O.~Shamir, and K.~Sridharan.
\newblock Making gradient descent optimal for strongly convex stochastic
  optimization.
\newblock In \emph{International Conference on Machine Learning (ICML-12)},
  pages 449--456, 2012.

\bibitem[Rasmussen and Williams(2005)]{Rasmussen:2005}
C.~E. Rasmussen and C.~K.~I. Williams.
\newblock \emph{Gaussian Processes for Machine Learning (Adaptive Computation
  and Machine Learning)}.
\newblock The MIT Press, 2005.
\newblock ISBN 026218253X.

\bibitem[Rosenblatt(1958)]{rosenblatt58a}
F.~Rosenblatt.
\newblock The perceptron: A probabilistic model for information storage and
  organization in the brain.
\newblock \emph{Psychological Review}, 65\penalty0 (6):\penalty0 386--408,
  1958.

\bibitem[Shalev-Shwartz and Srebro(2008)]{shalev2008svm}
S.~Shalev-Shwartz and N.~Srebro.
\newblock Svm optimization: inverse dependence on training set size.
\newblock In \emph{Proceedings of the 25th international conference on Machine
  learning}, pages 928--935. ACM, 2008.

\bibitem[Shalev{-}Shwartz and
  Zhang(2013)]{DBLP:journals/jmlr/Shalev-Shwartz013}
S.~Shalev{-}Shwartz and T.~Zhang.
\newblock Stochastic dual coordinate ascent methods for regularized loss.
\newblock \emph{Journal of Machine Learning Research}, 14\penalty0
  (1):\penalty0 567--599, 2013.

\bibitem[Shalev-Shwartz et~al.(2007)Shalev-Shwartz, Singer, and
  Srebro]{Shalev-Shwartz:2007}
S.~Shalev-Shwartz, Y.~Singer, and N.~Srebro.
\newblock Pegasos: Primal estimated sub-gradient solver for svm.
\newblock In \emph{Proceedings of the 24th International Conference on Machine
  Learning}, ICML '07, pages 807--814, New York, NY, USA, 2007. ACM.
\newblock ISBN 978-1-59593-793-3.

\bibitem[Shalev-Shwartz et~al.(2011)Shalev-Shwartz, Singer, Srebro, and
  Cotter]{shalev2011pegasos}
S.~Shalev-Shwartz, Y.~Singer, N.~Srebro, and A.~Cotter.
\newblock Pegasos: Primal estimated sub-gradient solver for svm.
\newblock \emph{Mathematical programming}, 127\penalty0 (1):\penalty0 3--30,
  2011.

\bibitem[Steinwart(2003)]{Steinwart:2003}
I.~Steinwart.
\newblock Sparseness of support vector machines.
\newblock \emph{J. Mach. Learn. Res.}, 4:\penalty0 1071--1105, December 2003.
\newblock ISSN 1532-4435.

\bibitem[Tsang et~al.(2005)Tsang, Kwok, Cheung, and
  Cristianini]{Tsang05corevector}
I.~W. Tsang, J.~T. Kwok, P.~Cheung, and N.~Cristianini.
\newblock Core vector machines: Fast svm training on very large data sets.
\newblock \emph{Journal of Machine Learning Research}, 6:\penalty0 363--392,
  2005.

\bibitem[Tsang et~al.(2007)Tsang, Kocsor, and Kwok]{Tsang2007}
I.~W. Tsang, A.~Kocsor, and J.~T. Kwok.
\newblock Simpler core vector machines with enclosing balls.
\newblock In \emph{Proceedings of the 24th International Conference on Machine
  Learning}, ICML '07, pages 911--918, 2007.

\bibitem[Wang and Vucetic(2009)]{Wang09}
Z.~Wang and S.~Vucetic.
\newblock Twin vector machines for online learning on a budget.
\newblock In \emph{Proceedings of the {SIAM} International Conference on Data
  Mining}, pages 906--917, 2009.

\bibitem[Wang and Vucetic(2010)]{Wang2010}
Z.~Wang and S.~Vucetic.
\newblock Online passive-aggressive algorithms on a budget.
\newblock In \emph{AISTATS}, volume~9, pages 908--915, 2010.

\bibitem[Wang et~al.(2012)Wang, Crammer, and Vucetic]{wang2012}
Z.~Wang, K.~Crammer, and S.~Vucetic.
\newblock Breaking the curse of kernelization: Budgeted stochastic gradient
  descent for large-scale svm training.
\newblock \emph{J. Mach. Learn. Res.}, 13\penalty0 (1):\penalty0 3103--3131,
  2012.

\bibitem[Zhang et~al.(2012)Zhang, Lan, Wang, and
  Moerchen]{zhang_etal_aistats12_scaling}
K.~Zhang, L.~Lan, Z.~Wang, and F.~Moerchen.
\newblock Scaling up kernel svm on limited resources: A low-rank linearization
  approach.
\newblock In \emph{International Conference on Artificial Intelligence and
  Statistics}, pages 1425--1434, 2012.

\bibitem[Zhao et~al.(2012)Zhao, Wang, Wu, Jin, and Hoi]{Zhao2012}
P.~Zhao, J.~Wang, P.~Wu, R.~Jin, and S.~C.~H. Hoi.
\newblock Fast bounded online gradient descent algorithms for scalable
  kernel-based online learning.
\newblock \emph{CoRR}, 2012.

\bibitem[Zinkevich(2003)]{Zinkevich03}
M.~Zinkevich.
\newblock Online convex programming and generalized infinitesimal gradient
  ascent.
\newblock In \emph{Machine Learning, Proceedings of the Twentieth International
  Conference {(ICML} 2003}, pages 928--936, 2003.

\end{thebibliography}

\end{document}